\newcommand{\makeappendixtitle}{%
    \vbox{%
    \hsize\textwidth
    \linewidth\hsize
    \vskip 0.1in
    \hrule height 4pt
    \vskip 0.25in
    \vskip -\parskip%
    \centering
    {\LARGE\bf {Supplementary Material\\ On Provable Benefits of Depth in Training \\ Graph Convolutional Networks} \par}
    \vskip 0.29in
    \vskip -\parskip
    \hrule height 1pt
    \vskip 0.09in%
  }
}
\newtheorem{theorem}{Theorem}
\newtheorem{proposition}{Proposition}
\newtheorem{lemma}{Lemma}
\newtheorem{definition}{Definition}
\newtheorem{remark}{Remark}
\newtheorem{assumption}{Assumption}
\definecolor{links}{HTML}{0078b0} 
\definecolor{files}{HTML}{fc6160}
\title{On Provable Benefits of Depth in Training \\ Graph Convolutional Networks}
\author{%
  Weilin Cong \\
  Penn State\\
  \texttt{wxc272@psu.edu} \\
   \And
   Morteza Ramezani \\
    Penn State \\
  \texttt{morteza@cse.psu.edu} \\
   \And
   Mehrdad Mahdavi \\
  Penn State \\
  \texttt{mzm616@psu.edu}
}
\begin{document}

\maketitle

% Deep GCNs do not create bad local minima, but it cause noisy gradient, which requires more training iterations with smaller learning rate to achieve the very high training error.

\begin{abstract}
Graph Convolutional Networks (GCNs) are known to suffer from performance degradation as the number of layers increases, which is usually attributed to over-smoothing. Despite the apparent consensus, we observe that there exists a discrepancy between the theoretical understanding of over-smoothing and the practical capabilities of GCNs. Specifically, we argue that over-smoothing does not necessarily happen in practice, a deeper model is provably expressive, can converge to global optimum with linear convergence rate, and achieve very high training accuracy as long as properly trained. Despite being capable of achieving high training accuracy, empirical results show that the deeper models generalize poorly on the testing stage and existing theoretical understanding of such behavior remains elusive. To achieve better understanding, we carefully analyze the \emph{generalization capability} of GCNs, and show that the training strategies to achieve high training accuracy significantly deteriorate the \emph{generalization capability} of GCNs. Motivated by these findings, we propose a decoupled structure for GCNs that detaches weight matrices from feature propagation to preserve the expressive power and ensure good generalization performance. We conduct empirical evaluations on various synthetic and real-world datasets to validate the correctness of our theory.
\end{abstract}

\section{Introduction}\label{section:introduction}

% 1.  We need deep GCNs
In recent years, Graph Convolutional Networks (GCNs) have achieved state-of-the-art performance in dealing with graph-structured applications, including social networks~\cite{kipf2016semi,hamilton2017inductive,wang2019mcne,deng2019learning,qiu2018deepinf}, traffic prediction~\cite{cui2019traffic,rahimi2018semi,li2019predicting,kumar2019predicting}, knowledge graphs~\cite{wang2019knowledge,wang2019kgat,park2019estimating}, drug reaction~\cite{do2019graph,duvenaud2015convolutional} and recommendation system~\cite{berg2017graph,ying2018graph}.
Despite the success of GCNs, applying a shallow GCN model that only uses the information of a very limited neighborhood on a large sparse graph has shown to be not effective~\cite{he2016deep,chen2020simple,gong2020geometrically,cong2020minimal,ramezani2020gcn}.
As a result, a deeper GCN model would be desirable to reach and aggregate information from farther neighbors.
The inefficiency of shallow GCNs is exacerbated even further when the labeled nodes compared to graph size is negligible, as a shallow GCN cannot sufficiently propagate the label information to the entire graph with only a few available labels~\cite{li2018deeper}.

Although a deeper GCN is preferred to perceive more graph structure information, unlike traditional deep neural networks, it has been pointed out that deeper GCNs potentially suffer from over-smoothing~\cite{li2018deeper,oono2019graph,huang2020tackling,cai2020note,yan2021two}, vanishing/exploding gradients~\cite{li2019deepgcns}, over-squashing~\cite{alon2020bottleneck}, and training difficulties~\cite{zhou2020effective,luan2020training}, which significantly affect the performance of GCNs as the depth increases.
Among these, the most widely accepted reason is ``over-smoothing'', which is referred to as a phenomenon due to applying multiple graph convolutions such that all node embeddings converge to a single subspace (or vector) and leads to indistinguishable node representations. 

\begin{figure}
    \centering
    \includegraphics[width=0.75\textwidth]{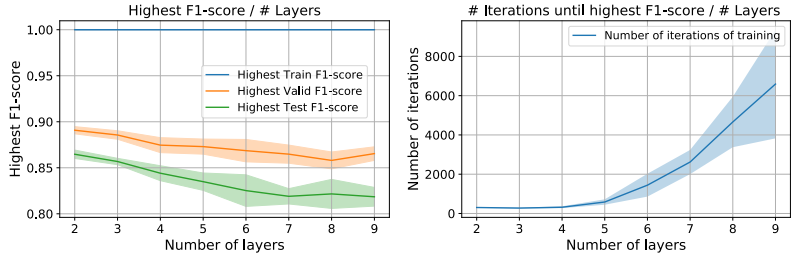}
    \vspace{-2mm}
    \caption{Comparison of F1-score for GCN with different depth on \emph{Cora} dataset, where deeper models can achieve high training accuracy, but complicate the training by requiring more iterations to converge and suffer from poor generalization.}
    \label{fig:train_acc_gcn_resgcn_appnp_gcnii}
    % \vspace{-20pt}
\end{figure}

The conventional wisdom is that adding to the number of layers causes over-smoothing, which impairs the \emph{expressiveness power} of GCNs and consequently leads to a poor \emph{training accuracy}. However,  we observe that there exists a discrepancy between theoretical understanding of their inherent capabilities and practical performances. According to the definition of over-smoothing that the node representation becomes indistinguishable as GCNs goes deeper, the classifier has difficulty assigning the correct label for each node if over-smoothing happens. As a result, the  training accuracy is expected to be decreasing as the number of layers increases. 
% \begin{wrapfigure}{r}{0.5\textwidth}
%     \centering
%     \includegraphics[width=0.5\textwidth]{figures/train_acc_gcn_resgcn_appnp_gcnii_2.pdf}
%     \caption{Comparison of F1-score of GCN with different depth on \emph{Cora} dataset, where deeper model can achieve high training accuracy, but complicate the training by requiring more iterations to converge and suffer from poor generalization.}
%     \label{fig:train_acc_gcn_resgcn_appnp_gcnii}
% \end{wrapfigure}
However, as shown in Figure~\ref{fig:train_acc_gcn_resgcn_appnp_gcnii}, GCNs are capable of achieving high \emph{training accuracy} regardless of the number of layers. But, as it can be observed, deeper GCNs require more training iterations to reach a high training accuracy, and its\emph{ generalization performance} on evaluation set decreases as the number of layers increases. 
This observation suggests that the performance degradation is  likely due to  inappropriate training rather than the low expressive power caused by over-smoothing.
Otherwise, a low expressiveness model cannot achieve almost perfect training accuracy simply by proper  training tricks alone.\footnote{The results in Figure~\ref{fig:train_acc_gcn_resgcn_appnp_gcnii} can be reproduced by removing both the dropout and weight decay operation. These two augmentations are designed to improve the generalization ability (i.e., validation/testing accuracy) of the model but might hurt the training accuracy, because of the randomness and also an extra regularization term on the model parameters which are introduced during training. A simple experiment using DGL can be found \href{https://github.com/CongWeilin/DGCN/blob/master/DGL_code.ipynb}{here}.}
Indeed, recent years significant advances have been witnessed on tweaking the model architecture to overcome the training difficulties in deeper GCN models and achieve good generalization performance~\cite{luan2020training,chen2020simple,li2019deepgcns,zhou2020effective}.

\noindent\textbf{Contributions.}~
Motivated by aforementioned observation, i.e., still achieving high training accuracy when trained properly but poor generalization performance, we aim at  answering two fundamental questions in this paper:

% 3.Do depth really less expressiveness? Optimization. It is the artifact of existing analysis. Indeed, we show that the expressive power increases
{\sffamily{Q1:}}~\emph{Does increasing depth really impair  the expressiveness power of GCNs?\\}
In Section~\ref{section:limitation_over_smoothing}, we argue that there exists a discrepancy between over-smoothing based theoretical results and the practical capabilities of deep GCN models, demonstrating that  over-smoothing is not the key factor that leads to the performance degradation in deeper GCNs.
In particular, we mathematically show that over-smoothing~\cite{oono2019graph,huang2020tackling,li2018deeper,cai2020note} is mainly an artifact of theoretical analysis and simplifications made in analysis.
Indeed, by characterizing the representational capacity of GCNs via Weisfeiler-Lehman (WL) graph isomorphism test~\cite{morris2019weisfeiler,xu2018powerful}, we show that deeper GCN model is at least as expressive as the shallow GCN model, the deeper GCN models can distinguish nodes with a different neighborhood that the shallow GCN cannot distinguish, 
as long as the GCNs are properly trained. 
Besides, we theoretically show that more training iterations is sufficient (but \textit{not necessary} due to the assumptions made in our theoretical analysis) for a deeper model to achieve the same training error as the shallow ones, which further suggests the poor training error in deep GCN training is most likely due to inappropriate training.

{\sffamily{Q2:}}~\emph{If expressive, why then deep GCNs generalize poorly?\\} 
In Section~\ref{section:from_stability}, in order to understand the performance degradation phenomenon in deep GCNs during the evaluation phase, we give a novel generalization analysis on GCNs and its variants (e.g., ResGCN, APPNP, and GCNII) under the semi-supervised setting for the node classification task.
We show that the generalization gap of GCNs is governed by the number of training iterations, largest node degree, the largest singular value of weight matrices, and the number of layers.
In particular, our result suggests that a deeper GCN model requires more iterations of training and optimization tricks to converge (e.g., adding skip-connections), which leads to a poor generalization.
More interestingly, our generalization analysis shows that most of the so-called methods to solve over-smoothing~\cite{rong2019dropedge,zhao2019pairnorm,chen2020simple,klicpera2018predict} can greatly improve the generalization ability of the model, therefore results in a deeper model.
% Notice that our results does not require strong assumptions on the GCN and graph structure as existing expressive power analysis does.

The aforementioned findings naturally lead to the algorithmic contribution of this paper.
In Section~\ref{section:method}, we present a novel framework, \emph{\textbf{D}ecoupled \textbf{GCN}} (DGCN), that is capable of  training deeper GCNs and can significantly improve the generalization performance. 
The main idea is to isolate the expressive power from generalization ability by decoupling the weight parameters from feature propagation. 
In Section~\ref{section:experiment}, we conduct experiments on the synthetic and real-world datasets to validate the correctness of the theoretical analysis and the advantages of DGCN over baseline methods.

\section{Related works}\label{section:related_works}

\noindent\textbf{Expressivity of GCNs.} 
Existing results on expressive power of GCNs are mixed, \cite{morris2019weisfeiler,loukas2019graph,chen2020can,chen2020graph} argue that deeper model has higher expressive power but \cite{oono2019graph,huang2020tackling,huang2021wide} have completely opposite result.
On the one hand, \cite{morris2019weisfeiler} shows a deeper GCN is as least as powerful as the shallow one in terms of distinguishing non-isomorphic graphs.
\cite{loukas2019graph} shows that deep and wide GCNs is Turing universal, however, GCNs lose power when their depth and width are restricted.
\cite{chen2020can} and \cite{chen2020graph} measure the expressive power of GCNs via its subgraph counting capability and attribute walks, and both show the expressive power of GCNs grows exponentially with the GCN depth.
On the other hand, \cite{huang2021wide} studies the infinity wide GCNs and shows that the covariance matrix of its outputs converges to a constant matrix at an exponential rate. 
\cite{oono2019graph,huang2020tackling} characterize the expressive power using the distance between node embeddings to a node feature agnostic subspace and show the distance is decreasing as the number of layers increases. 
Details are deferred to Section~\ref{section:limitation_over_smoothing} and Appendix~\ref{supp:weakness_oversmoothing}.
Such contradictory results motivate us to rethink the role of over-smoothing on expressiveness.

% \cite{xu2018powerful} shows that GCNs are no more discriminative than Weisfeiler-Leman (WL) test and propose Graph Isomorphic Network that is approximately as powerful as the WL test. Later on, ~
% that there exist a sequence of parameters such that GCN has exactly the same power in terms of distinguishing non-isomorphic graphs as the WL test, i.e., 
% Another direction of works characterizes the expressive power by the capability of solving classical graph problems.
% under sufficient conditions on the depth, width, node attributes, and layer expressiveness. 
% study the expressive power of GCNs via their ability to count the number of times a given pattern appears as a subgraph, 
% \cite{chen2020graph} measures the expressive power using attribute work and shows that the expressive power graph neural network grows exponentially with respect to the number of layers.
% (a more stable status but not necessary low expressiveness) 

\noindent\textbf{Generalization analysis of GCNs.} 
In recent years, many papers are working on the generalization of GCNs using uniform stability~\cite{verma2019stability,zhou2021generalization}, Neural Tangent Kernel~\cite{du2019graph}, VC-dimension~\cite{scarselli2018vapnik}, Rademacher complexity~\cite{garg2020generalization,oono2020optimization}, algorithm alignment~\cite{xu2019can,xu2021how} and PAC-Bayesian~\cite{liao2020pac}.
Existing works only focus on a specific GCN structure, which cannot be used to understand the impact of GCN structures on its generalization ability.
The most closely related to ours is~\cite{verma2019stability}, where they analyze the stability of the single-layer GCN model, and show that the stability of GCN depends on the largest absolute eigenvalue of its Laplacian matrix. 
However, their result is under the inductive learning setting and extending the results to the multi-layer GCNs with different structures is non-trivial.

\noindent\textbf{Literature with similar observations.}
Most recently, several works have similar observations on the over-smoothing issue to ours.  
\cite{zhou2020effective} argues that the main factors to performance degradation are vanishing gradient, training instability, and over-fitting, rather than over-smoothing, and proposes a node embedding normalization heuristic to alleviate the aforementioned issues. 
\cite{luan2020training} argues that the performance degradation is mainly due to the training difficulty, and proposes a different graph Laplacian formulation, weight parameter initialization, and skip-connections to improve the training difficulty.
\cite{yang2020revisiting} argues that deep GCNs can learn to overcome the over-smoothing issue during training, and the key factor of performance degradation is over-fitting, and proposes a node embedding normalization method to help deep GCNs overcome the over-smoothing issue.
\cite{kong2020flag} improves the generalization ability of GCNs by using adversarial training and results in a consistent improvement than all GCNs baseline models without adversarial training.
% give them credits
All aforementioned literature only gives heuristic explanations based on the empirical results, and do not provide theoretical arguments.
\section{Preliminaries}\label{section:preliminaries}

%############################################################
%############################################################
%############################################################
\noindent\textbf{Notations and setup.}~
We consider the semi-supervised node classification problem, where a self-connected graph $\mathcal{G}=(\mathcal{V},\mathcal{E})$ with $N=|\mathcal{V}|$ nodes is given in which each node $i\in\mathcal{V}$ is associated with a feature vector $\mathbf{x}_i \in \mathbb{R}^{d_0}$, and only a subset of nodes $\mathcal{V}_\text{train} \subset \mathcal{V}$ are labeled, i.e., $y_i\in\{1,\ldots,|\mathcal{C}|\}$ for each $i\in\mathcal{V}_\text{train}$ and $\mathcal{C}$ is the set of all candidate classes. Let $\mathbf{A}\in\mathbb{R}^{N\times N}$ denote the adjacency matrix and $\mathbf{D}\in\mathbb{R}^{N\times N}$ denote the corresponding degree matrix with $D_{i,i} = \deg(i)$ and $D_{i,j}=0$ if $i\neq j$. Then, the propagation matrix (using the Laplacian matrix defined in~\cite{kipf2016semi}) is computed as $\mathbf{P} = \mathbf{D}^{-1/2} \mathbf{A} \mathbf{D}^{-1/2}$.
Our goal is to learn a GCN model using the node features for all nodes $\{\mathbf{x}_i\}_{i\in\mathcal{V}}$ and node labels for the training set nodes $\{y_i\}_{i\in\mathcal{V}_\text{train}}$, and expect it generalizes well on the unlabeled node set $\mathcal{V}_\text{test} = \mathcal{V} \setminus \mathcal{V}_\text{train}$.

\noindent\textbf{GCN architectures.}~
In this paper, we consider the following architectures for training GCNs:
\begin{itemize} [noitemsep,topsep=0pt,leftmargin=5mm ]
    \item \textbf{\textit{Vanilla GCN}}~\cite{kipf2016semi} computes node embeddings by $\mathbf{H}^{(\ell)} = \sigma(\mathbf{P} \mathbf{H}^{(\ell-1)} \mathbf{W}^{(\ell)})$, where $\mathbf{H}^{(\ell)} = \{ \mathbf{h}_i^{(\ell)}\}_{i=1}^N$ is the $\ell$th layer node embedding matrix, $\mathbf{h}_i^{(\ell)} \in \mathbb{R}^{d_\ell}$ is the embedding of $i$th node, $\mathbf{W}^{(\ell)} \in \mathbb{R}^{d_{\ell}\times d_{\ell-1}}$ is the $\ell$th layer weight matrix, and $\sigma(\cdot)$ is the ReLU activation. 
    \item \textbf{\textit{ResGCN}}~\cite{li2019deepgcns} solves the vanishing gradient issue by adding skip-connections between adjacency layers. 
    More specifically, ResGCN computes node embeddings by $\mathbf{H}^{(\ell)} = \sigma(\mathbf{P} \mathbf{H}^{(\ell-1)} \mathbf{W}^{(\ell)}) + \mathbf{H}^{(\ell-1)},$ where node embeddings of the previous layer is added to the output of the current layer to facilitate the training of deeper GCN models.
    \item \textbf{\textit{APPNP}}~\cite{klicpera2018predict} adds skip-connections from the input layer to each hidden layer to preserve the feature information. 
    APPNP computes node embeddings by $\mathbf{H}^{(\ell)} = \alpha_\ell \mathbf{P} \mathbf{H}^{(\ell-1)} + (1-\alpha_\ell) \mathbf{H}^{(0)} \mathbf{W},$ where $\alpha_\ell\in[0,1]$ balances the amount of information preserved at each layer. 
    By decoupling feature transformation and propagation, APPNP can aggregate information from multi-hop neighbors without significantly increasing the computation complexity.
    \item \textbf{\textit{GCNII}}~\cite{chen2020simple} improves the capacity of APPNP by adding non-linearty and weight matrix at each individual layer. 
    GCNII computes node embeddings by $\mathbf{H}^{(\ell)} = \sigma\big((\alpha_\ell \mathbf{P} \mathbf{H}^{(\ell-1)} + (1-\alpha_\ell) \mathbf{H}^{(0)}) \bar{\mathbf{W}}^{(\ell)}\big)$, where $\bar{\mathbf{W}}^{(\ell)} = \beta_\ell \mathbf{W}^{(\ell)} + (1-\beta_\ell) \mathbf{I}$, constant $\alpha_\ell$ same as APPNP, and constant $\beta_\ell \in[0,1]$ restricts the power of $\ell$th layer parameters. 
\end{itemize}

\section{On true expressiveness and optimization landscape of deep GCNs}\label{section:limitation_over_smoothing}

% In this section we argue that .... and no bad local minima and deeper models requires more iterations to converge.

\begin{figure}
    \centering
    \includegraphics[width=0.78\textwidth]{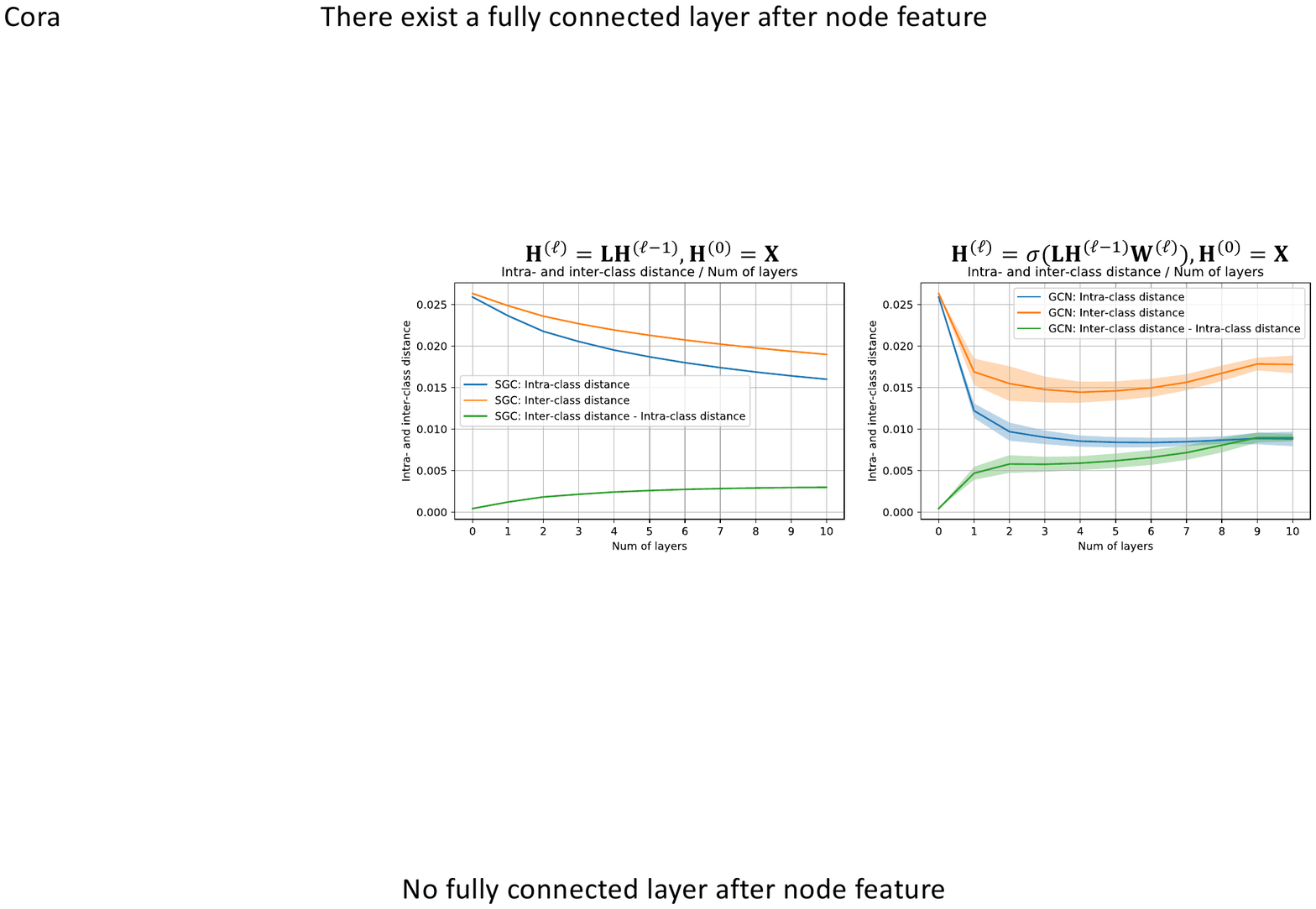}
    \vspace{-2mm}
    \caption{Comparison of intra- and inter-class normalized node embeddings $\mathbf{H}^{(\ell)}/ \|\mathbf{H}^{(\ell)}\|_\mathrm{F}$ pairwise distance on \emph{Cora} dataset. See Appendix~\ref{supp:more_result_on_linear_model} for more evidences.}
    \label{fig:pairwise_dist_num_layers}
\end{figure}

\noindent\textbf{Empirical validation of over-smoothing.}~
In this section, we aim at answering the following fundamental question: 
\emph{``Does over-smoothing really cause the performance degradation in deeper GCNs?''}
As first defined in~\cite{li2018deeper}, \emph{over-smoothing} is referred to as a phenomenon where all node embeddings converge to a single vector after applying multiple graph convolution operations to the node features. 
However, \cite{li2018deeper} only considers the graph convolution operation \emph{without} non-linearity and the per-layer weight matrices. 
To verify whether over-smoothing exists in normal GCNs, we measure the pairwise distance between the normalized node embeddings with varying the model depth.\footnote{The pairwise distances are computed on the normalized node embeddings $\mathbf{H}^{(\ell)}/\| \mathbf{H}^{(\ell)} \|_\mathrm{F}^2$ to eliminate the effect of node embedding norms.}
As shown in Figure~\ref{fig:pairwise_dist_num_layers}, without the weight matrices and non-linear activation functions, the pairwise distance between node embeddings indeed decreases as the number of layers increases. 
However, by considering the weight matrices and non-linearity, the pairwise distances are actually increasing after a certain depth which \emph{contradicts} the definition of over-smoothing that node embeddings become indistinguishable when the model becomes deeper. 
That is, graph convolution makes adjacent node embeddings get closer, then non-linearity and weight matrices help node embeddings preserve distinguishing-ability after convolution.
% \begin{wrapfigure}{r}{0.5\textwidth}
%     \centering
%     \includegraphics[width=0.5\textwidth]{figures/pairwise_dist_num_layers.pdf}
%     \caption{Comparison of intra- and inter-class normalized node embeddings $\mathbf{H}^{(\ell)}/ \|\mathbf{H}^{(\ell)}\|_\mathrm{F}$ pairwise distance on \emph{Cora} dataset. See Appendix~\ref{supp:more_result_on_linear_model} for more evidences.}
%     \label{fig:pairwise_dist_num_layers}
%     \vspace{-2mm}
% \end{wrapfigure}

\begin{figure}
    \centering
    \includegraphics[width=0.8\textwidth]{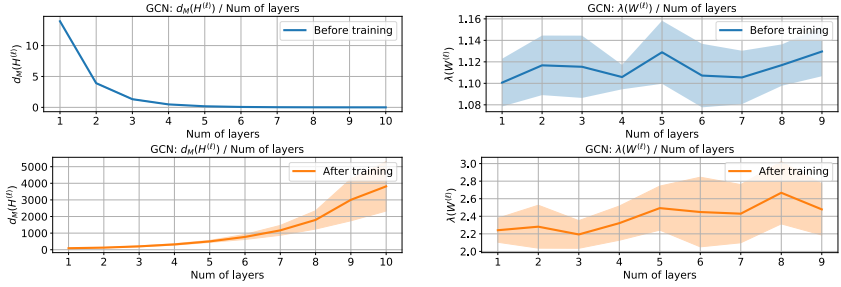}
    \caption{Compare $d_\mathcal{M}(\mathbf{H}^{(\ell)})$ and $\lambda_W(\mathbf{W}^{(\ell)})$ on both \emph{trained-} (using $500$ gradient update) and \emph{untrained-}GCN models on \emph{Cora} dataset.
    }
    \label{fig:d_m_and_weight_norm}
\end{figure}

Most recently, \cite{oono2019graph,huang2020tackling} generalize the idea of over-smoothing by taking both the non-linearity and weight matrices into considerations.
More specifically, the expressive power of the $\ell$th layer node embeddings $\mathbf{H}^{(\ell)}$ is measured using $d_\mathcal{M}(\mathbf{H}^{(\ell)})$, which is defined as the distance of node embeddings to a subspace $\mathcal{M}$ that only has node degree information.
Let denote $\lambda_L$ as the second largest eigenvalue of graph Laplacian and $\lambda_W$ as the largest singular value of weight matrices.~\cite{oono2019graph,huang2020tackling} show that the expressive power $d_\mathcal{M}(\mathbf{H}^{(\ell)})$ is bounded by $d_\mathcal{M}(\mathbf{H}^{(\ell)}) \leq (\lambda_W \lambda_L)^\ell \cdot d_\mathcal{M}(\mathbf{X})$, i.e., the expressive power of node embeddings will be exponentially decreasing or increasing as the number of layers increases, depending on whether $\lambda_W \lambda_L < 1$ or $ \lambda_W \lambda_L > 1$.
They conclude that \emph{deeper GCN exponentially loss expressive power} by assuming $\lambda_W \lambda_L < 1$. 
However, we argue that this assumption does not always hold.
To see this, let suppose $\mathbf{W}^{(\ell)}\in \mathbb{R}^{d_{\ell-1} \times d_\ell}$ is initialized by uniform distribution $\mathcal{N}(0, \sqrt{1/d_{\ell-1}})$. 
By the Gordon’s theorem for Gaussian matrices~\cite{davidson2001local}, we know its expected largest singular value is bounded by $\mathbb{E}[ \lambda_W ] \leq 1+\sqrt{d_\ell/d_{\ell-1}}$, which is strictly greater than $1$ and $\lambda_W$ usually increases during training. 
The above discussion also holds for other commonly used initialization methods~\cite{glorot2010understanding,he2015delving}.
Furthermore, since most real world graphs are sparse with $\lambda_L$ close to $1$, e.g., Cora has $\lambda_L=0.9964$, Citeseer has $\lambda_L= 0.9987$, and Pubmed has $\lambda_L= 0.9905$, making assumption on $\lambda_W \lambda_L<1$ is not realistic.
As shown in Figure~\ref{fig:d_m_and_weight_norm}, when increasing the number of layers, we observe that the distance $d_\mathcal{M}(\mathbf{H}^{(\ell)})$ is decreasing on \emph{untrained}-GCN models, however, the distance $d_\mathcal{M}(\mathbf{H}^{(\ell)})$ is increasing on \emph{trained}-GCN models, which contradicts the conclusion in~\cite{oono2019graph}.
Due to the space limit, we defer the more empirical evidence to Appendix~\ref{supp:weakness_oversmoothing}.
Through these findings, we cast doubt on the power of over-smoothing based analysis to provide a complete picture of why deep GCNs perform badly.

\begin{definition} \label{def:computation_tree}
    Let $\mathcal{T}_i^{L}$ denote the $L$-layer computation tree of node $i$, which represents the structured $L$-hop neighbors of node $i$, where the children of any node $j$ in the tree are the nodes in $\mathcal{N}(i)$.
\end{definition}

\noindent\textbf{Exponentially growing expressieness without strong assumptions.}~
Indeed, we argue that \emph{deeper GCNs have stronger expressive power than the shallow GCNs.}
To prove this, we employ the connection between WL test\footnote{WL test is a recursive algorithm where the label of a node depends on its own label and neighbors from the previous iterations, i.e., $c^{(\ell)}_i = \textrm{Hash}(c^{(\ell-1)}_i, \{c^{(\ell-1)}_j|j\in\mathcal{N}(i)\})$, where $\textrm{Hash}(\cdot)$ bijectively maps a set of values to a unique value that has not been used in the previous iterations. 
After $L$-iterations, the WL test will assign two nodes with a different label if the $L$-hop neighborhood of two nodes are non-isomorphic. }~\cite{leman1968reduction} and GCNs.
Recently, \cite{morris2019weisfeiler} shows that GCNs have the same expressiveness as the WL-test for graph isomorphism if they are appropriately trained, i.e., a properly trained $L$-layer GCN computes different node representations for two nodes if their $L$-layer \emph{computation tree} (Definition~\ref{def:computation_tree}) have different structure or different features on the corresponding nodes.
Since $L\text{-GCN}$ can encode any different computation tree into different representations, it is natural to characterize the expressiveness of $L\text{-GCN}$ by the number of computation graph it can encode.

% By leveraging Definition~\ref{def:computation_tree}, we can formulate the node classification problem using $L$-layer GCN on graph $\mathcal{G}(\mathcal{V},\mathcal{E})$ as a computation tree classification problem, where any node $i\in\mathcal{V}$ has a corresponding $\mathcal{T}_i^L$ rooted on node $i$.
% Although the aggregation functions (e.g., mean-average, weighted-average) in vanilla GCNs are not injective by its nature, it has been shown in~\cite{morris2019weisfeiler} that there exists a sequence of parameter matrices that guarantees the injectiveness of the aggregation functions.
% Therefore, a properly  trained $L$-layer GCN model ($L\text{-GCN}$) can encode two non-isomorphic computation trees into two different representations, i.e., we have $L\text{-GCN}(\mathcal{T}_i^L) \neq L\text{-GCN}(\mathcal{T}_j^L)$ if and only if $\mathcal{T}_i^L \neq \mathcal{T}_j^L$. 
% As a result, we show in Theorem~\ref{theorem:gcn_deeper_stronger} that $(L+1)$-layer GCN model can be at least as powerful as $L$-layer GCN model if they are properly trained.

% \begin{theorem} \label{theorem:gcn_deeper_stronger} 
% For any two nodes $i,j\in\mathcal{V}$, if $L\text{-GCN}(\mathcal{T}_i^L) \neq L\text{-GCN}(\mathcal{T}_j^L)$ then we have $(L+1)\text{-GCN}(\mathcal{T}_i^{L+1}) \neq (L+1)\text{-GCN}(\mathcal{T}_j^{L+1})$, but not the other way around.
% \end{theorem}
% The proof is deferred to Appendix~\ref{supp:omitted_proof_section_3}.

\begin{theorem}\label{theorem:expressive_power_exp}
Suppose $\mathcal{T}^{L}$ is a computation tree with binary node features and node degree at least $d$. 
Then, by assuming the computation tree of two nodes are disjoint,
the richness (i.e., the number of computation graphs a model can encode) of the output of $L\text{-GCN}$ defined on $\mathcal{T}^{L}$ is at least 
$| L\text{-GCN} (\mathcal{T}^{L}) | \geq 2(d-1)^{L-1}$.
\end{theorem}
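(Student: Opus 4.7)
The plan is to reduce the statement to a purely combinatorial count of non-isomorphic computation trees and then build an explicit family large enough to meet the bound. By the 1-WL characterization of GCN expressiveness due to Morris et al., cited just above the theorem, a properly trained $L$-layer GCN maps any two nodes whose $L$-hop computation trees are non-isomorphic as rooted labeled trees to different embeddings. Consequently, the richness $|L\text{-GCN}(\mathcal{T}^{L})|$ is at least the number of pairwise non-isomorphic computation trees of depth $L$ with binary node features in which every internal node has at least $d$ children, and the disjointness assumption lets each node's tree be analyzed independently.

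To lower bound this combinatorial count, I would exhibit a family of trees indexed by tuples $(s_0, k_1, \ldots, k_{L-1}) \in \{0,1\} \times \{1, \ldots, d-1\}^{L-1}$, which yields exactly $2(d-1)^{L-1}$ candidates. Define a default depth-$\ell$ tree $\mathrm{Def}_\ell$ as the complete $d$-ary tree with every node labeled $1$, and a special depth-$\ell$ tree $T_\ell(s, k_1, \ldots, k_{\ell-1})$ whose root carries feature $s$, has exactly $d$ children, and where $k_1$ of those children are copies of $T_{\ell-1}(0, k_2, \ldots, k_{\ell-1})$ while the remaining $d-k_1$ are copies of $\mathrm{Def}_{\ell-1}$. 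Each such tree meets the minimum-degree requirement while using only the allotted binary alphabet.

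The remaining task is to argue that distinct parameter tuples produce non-isomorphic rooted labeled trees, which I would prove by induction on $\ell$. The base case $\ell=1$ is immediate from the root feature. In the inductive step, two trees that share their root feature must have identical multisets of children; since every special subtree has root feature $0$ while every default subtree has root feature $1$, the multiset decomposes uniquely into $k_1$ special copies and $d-k_1$ default copies, pinning down $k_1$. The $k_1$ special children are then copies of the same recursive tree, so applying the inductive hypothesis to one of them recovers $(k_2, \ldots, k_{L-1})$. Multiplying by the factor of $2$ from $s_0$ yields the claimed bound $|L\text{-GCN}(\mathcal{T}^{L})| \ge 2(d-1)^{L-1}$.

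The main obstacle is the distinctness induction, specifically keeping the recursive decomposition unambiguous despite the unordered nature of children in rooted trees. The labeling trick of forcing every special subtree to have root feature $0$ and every default subtree to have all features equal to $1$ makes the two subtypes instantly separable at every level of recursion; without it, spurious isomorphisms between distinct parameter choices could arise and the count would not cleanly multiply to $(d-1)^{L-1}$.
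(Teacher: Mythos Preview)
Your proposal is correct but takes a genuinely different route from the paper's own argument.

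The paper's proof is a four-line counting heuristic. It observes that, by the definition of a computation tree, every non-root node $v_i^\ell$ has its parent $P(v_i^\ell)$ among its $\geq d$ children, so only $d-1$ of the children are ``free''; it then asserts without further justification that this yields at least $(d-1)$ distinct choices at each of the $L-1$ sub-root levels, multiplies across levels, and picks up the factor $2$ from the binary feature at the root. No explicit family is built and no non-isomorphism is verified.

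Your approach is more constructive and more rigorous on the combinatorial side: you parametrize an explicit family of $2(d-1)^{L-1}$ rooted labeled trees, introduce the default/special labeling trick to make the child multisets decompose unambiguously, and prove pairwise non-isomorphism by a clean induction. This actually fills the gap the paper leaves open (why the ``choices'' at different levels combine multiplicatively into genuinely non-isomorphic trees).

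One point of contrast worth noting: the paper's $(d-1)$ arises from the structural constraint that the parent reappears among the children in a bona fide computation tree, whereas your $(d-1)$ comes from the index range $k_\ell\in\{1,\dots,d-1\}$ you chose to hit the stated bound. Your abstract $d$-ary trees are not explicitly checked to be realizable as computation trees of some underlying graph satisfying the parent-in-children constraint; the paper's argument, though far sketchier overall, at least invokes that constraint as the source of the $(d-1)$. In practice this is a minor point---the theorem statement and the paper's own proof are informal enough that neither side fully closes the realizability question---but if you want the construction to sit squarely inside the class of genuine computation trees you could remark that one of the $d-k_1$ default children (or one of the $k_1$ special children, depending on $s_0$) can play the role of the reappearing parent at each level.
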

The proof is deferred to Appendix~\ref{supp:omitted_proof_section_3}. The above theorem implies that the richness  of $L\text{-GCN}$ grows at least exponentially with respect to the number of layers.

\noindent\textbf{Comparison of expressiveness metrics.}~
Although distance-based expressiveness metric~\cite{oono2019graph,huang2020tackling} is strong than WL-based metric in the sense that node embeddings can be \emph{distinct} but \emph{close to each other}, the distance-based metric requires explicit assumptions on the GCN structures, weight matrices, and graph structures comparing to the WL-based metric, which has been shown that are not likely hold. 
On the other hand, WL-based metric has been widely used in characterizing the expressive power of GCNs in graph-level task~\cite{morris2019weisfeiler,loukas2019graph,chen2020can,chen2020graph}. 
More details are deferred to related works (Section~\ref{section:related_works}).

% \textcolor{red}{
% \begin{remark} 
% \sout{
% Although the distance-based expressive power metric used in~\cite{oono2019graph} seems a stronger expressive power metric than our WL-based expressive power metric, when comparing the two metrices, 
% we also need to consider the assumptions required by each of them.
% In particular, the distance-based metric requires assumptions on the GCN structure, weight matrices, and graph structure. 
% However, as we pointed out in Section~4 and Appendix~B, these assumptions are not likely to hold in practice.
% We also provided detailed empirical results showing that the distance-based expressive power is actually increasing by stacking more layers, which implies over-smoothing not necessarily happens.
% Meanwhile, the WL-based expressive power does not require any of the aforementioned assumptions, and is also widely used in characterizing the expressive power of GCNs with a graph-level underlying task. Details please refer to our discussions in the related work. }
% \end{remark}
% }

Although expressive, it is still unclear why the deeper GCN requires more training iterations to achieve small training error and reach the properly trained status.
To understand this, we show in Theorem~\ref{thm:deep_wide_local_global_minima2} that under assumptions on the width of the final layer, the deeper GCN can converge to its global optimal with linear convergence rate. 
More specifically, the theorem claims that if the dimension of the last layer of GCN $d_L$ is larger than the number of data $N$\footnote{This type of over-parameterization assumptions are required and commonly used in other neural network convergence analysis to guarantee that the model parameters do not change significantly during training.}, then we can guarantee the loss $\mathcal{L}(\boldsymbol{\theta}_T) \leq \varepsilon$ after $T=\mathcal{O}(1/\varepsilon)$ iterations of the gradient updates. 
Besides, more training iterations is sufficient (but not necessary due to the assumptions made in our theoretical analysis) for a deeper model to achieve the same training error as the shallow ones.

\begin{theorem} \label{thm:deep_wide_local_global_minima2}
Let $\bm{\theta}_t = \{\mathbf{W}_t^{(\ell)} \in \mathbb{R}^{d_{\ell-1} \times d_\ell} \}_{\ell=1}^{L+1}$ be the model parameter at the $t$-th iteration and using square loss $\mathcal{L}(\bm{\theta}) = \frac{1}{2} \| \mathbf{H}^{(L)}\mathbf{W}^{(L+1)} - \mathbf{Y} \|_\mathrm{F}^2,~\mathbf{H}^{(\ell)} = \sigma(\mathbf{P} \mathbf{H}^{(\ell-1)} \mathbf{W}^{(\ell)})$ as objective function. Then, under the condition that $d_L \geq N$ we can obtain $\mathcal{L}(\bm{\theta}_T) \leq \epsilon$ if $T \geq C(L) \log( \mathcal{L}(\bm{\theta}_0)/\epsilon )$, where $\epsilon$ is the desired error and $C(L)$ is a function of GCN depth $L$ that grows as GCN becomes deeper.
\end{theorem}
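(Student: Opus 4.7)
The plan is to follow the by-now standard over-parameterized ``NTK/lazy-training'' recipe adapted to GCNs, exploiting the condition $d_L \geq N$ in the width of the penultimate layer. The key observation is that the loss $\mathcal{L}(\bm{\theta}) = \tfrac{1}{2}\|\mathbf{H}^{(L)}\mathbf{W}^{(L+1)} - \mathbf{Y}\|_\mathrm{F}^2$ is a quadratic in $\mathbf{W}^{(L+1)}$ once $\mathbf{H}^{(L)}$ is fixed; since $\mathbf{H}^{(L)} \in \mathbb{R}^{N\times d_L}$ with $d_L \geq N$, a lower bound $\sigma_{\min}(\mathbf{H}^{(L)}) \geq \lambda_0 > 0$ would immediately imply a Polyak--\L ojasiewicz (PL) inequality $\|\nabla_{\mathbf{W}^{(L+1)}}\mathcal{L}(\bm{\theta})\|_\mathrm{F}^2 \geq 2\lambda_0^2\,\mathcal{L}(\bm{\theta})$, hence linear convergence via the standard descent lemma.

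First I would handle the \emph{initialization}: with the usual scaling of $\mathbf{W}_0^{(\ell)}$ (e.g.\ Gaussian with variance $1/d_{\ell-1}$), standard concentration for products of random matrices acting on the propagated features $\mathbf{P}^{\ell}\mathbf{X}$ gives, with high probability, $\sigma_{\min}(\mathbf{H}_0^{(L)}) \geq \lambda_0$ for some $\lambda_0 = \lambda_0(\mathbf{X},\mathbf{P},L) > 0$ whenever $d_L \geq N$ (so $\mathbf{H}_0^{(L)}$ can have full row rank). Simultaneously, I would record the initial bounds on $\|\mathbf{W}_0^{(\ell)}\|_{\mathrm{op}}$ for every layer. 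Next, I would establish two quantitative geometric lemmas valid in a neighborhood $\mathcal{B} = \{\bm{\theta}: \|\mathbf{W}^{(\ell)} - \mathbf{W}_0^{(\ell)}\|_\mathrm{F} \leq R_\ell\}$ of initialization: (i) a \emph{smoothness} bound $\|\nabla\mathcal{L}(\bm{\theta}) - \nabla\mathcal{L}(\bm{\theta}')\|_\mathrm{F} \leq \beta(L)\|\bm{\theta}-\bm{\theta}'\|_\mathrm{F}$, and (ii) a \emph{perturbation} bound showing that for $\bm{\theta}\in\mathcal{B}$, $\sigma_{\min}(\mathbf{H}^{(L)}) \geq \lambda_0/2$. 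Both constants are obtained by expanding $\mathbf{H}^{(L)} - \mathbf{H}_0^{(L)}$ layer by layer using $\sigma$ being $1$-Lipschitz and the telescoping identity, producing a product of operator norms $\prod_{\ell}\|\mathbf{P}\|_{\mathrm{op}}\|\mathbf{W}^{(\ell)}\|_{\mathrm{op}}$; this is where the $L$-dependence of $C(L)$ enters through a compounding factor that grows (at worst exponentially) in $L$.

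Given (i) and (ii), I would run the usual inductive argument: assuming $\bm{\theta}_t\in\mathcal{B}$ for all $s \leq t$, the PL inequality at $\bm{\theta}_t$ combined with the descent lemma with step size $\eta \asymp 1/\beta(L)$ yields $\mathcal{L}(\bm{\theta}_{t+1}) \leq (1 - \lambda_0^2/\beta(L))\,\mathcal{L}(\bm{\theta}_t)$. Summing the gradient norms via the Cauchy--Schwarz trick $\sum_s \|\nabla\mathcal{L}(\bm{\theta}_s)\|_\mathrm{F} \leq \sum_s \sqrt{2\mathcal{L}(\bm{\theta}_s)/\lambda_0^2} \leq \tfrac{1}{\lambda_0}\sum_s (1-c/\beta(L))^{s/2}\sqrt{2\mathcal{L}(\bm{\theta}_0)}$, which is a geometric series, gives a uniform bound on $\|\bm{\theta}_t - \bm{\theta}_0\|_\mathrm{F}$ in terms of $\sqrt{\mathcal{L}(\bm{\theta}_0)}/\lambda_0$ that is independent of $t$; choosing the radii $R_\ell$ to dominate this sum closes the induction. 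Unrolling the contraction then gives $\mathcal{L}(\bm{\theta}_T) \leq \epsilon$ after $T \geq C(L)\log(\mathcal{L}(\bm{\theta}_0)/\epsilon)$ iterations with $C(L) = \beta(L)/\lambda_0^2$, and by construction $\beta(L)$ is an increasing function of $L$, matching the theorem's claim.

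\textbf{Main obstacle.} The delicate part is \emph{maintaining} the lower bound $\sigma_{\min}(\mathbf{H}^{(L)}) \geq \lambda_0/2$ along the entire optimization trajectory through $L$ nonlinear layers with the graph propagation $\mathbf{P}$ interleaved. Because perturbations propagate multiplicatively through depth, the admissible radii $R_\ell$ shrink with $L$, and the smoothness constant $\beta(L)$ inflates with $L$; balancing these so that the induction closes (and so that $C(L)$ is a valid---albeit depth-increasing---constant) is the core technical step. ReLU nondifferentiability on a measure-zero set is handled as usual by working with a subgradient selection or, equivalently, the NTK-style fact that the activation pattern changes on only a negligible fraction of coordinates inside $\mathcal{B}$.
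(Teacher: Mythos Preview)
Your proposal is correct and follows essentially the same route as the paper: the paper's formal version (Theorem~\ref{thm:wide_deep_gcn_convergence}) adapts Nguyen's over-parameterized MLP argument to GCNs, using the induction that (i) the weights remain in a bounded neighborhood of initialization, (ii) $\lambda_{\min}(\mathbf{H}^{(L)}_t)\geq \alpha_0/2$ is preserved along the trajectory via a layer-wise Lipschitz perturbation bound, and (iii) the loss contracts linearly because the gradient in $\mathbf{W}^{(L+1)}$ yields a term $-\eta\lambda_{\min}^2(\mathbf{H}^{(L)}_t)\|\widehat{\mathbf{Y}}_t-\mathbf{Y}\|_\mathrm{F}^2$, which is exactly your PL observation. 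The only minor difference is that the paper \emph{assumes} the initialization conditions on $\alpha_0=\lambda_{\min}(\mathbf{H}^{(L)}_0)$ (Eq.~\ref{eq:wide_deep_gcn_convergence_assump}) rather than deriving them from random-matrix concentration as you propose, and it carries out the descent via an explicit decomposition of $\mathcal{L}(\bm{\theta}_{t+1})-\mathcal{L}(\bm{\theta}_t)$ instead of invoking the abstract descent lemma, but the skeleton and the $L$-dependent constants arise in the same way.
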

A formal statement of Theorem~\ref{thm:deep_wide_local_global_minima2} and its proof are deferred to Appendix~\ref{supp:linear_convergence_rate}.
Besides, gradient stability also provides an alternative way of empirically understanding why deeper GCN requires more iterations: deeper neural networks are prone to exploding/vanishing gradient, which results in a very noisy gradient and requires small learning rate to stabilize the training. This issue can be significantly alleviated by adding skip-connections (Appendix~\ref{supp:gradient_instability}). 
When training with adaptive learning rate mechanisms, such as Adam~\cite{kingma2015adam}\footnote{In the Adam optimizer, the contribution of bias correction of moments varies exponentially over epochs completed. Although the learning rate hyper-parameter is a constant, the contribution of gradients to updated weight varies over epochs, hence adaptive. Please refer to the Chapter 8.5 of~\cite{Goodfellow-et-al-2016} for more details.}, noisy gradient will result in a much smaller update on current model compared to a stabilized gradient, therefore more training iterations are required. 

% \textcolor{red}{
% \sout{Besides, optimization landscape can also provide us an alternative way of understanding why deeper GCN is harder to train. As suggested in~\cite{ghorbani2019effect}, the Hessian eigenvalue distribution in deeper networks has substantially heavier tails, which makes the network harder to optimize with gradient descent. Adding skip-connections can substantially mitigate this issue, which leads to a better optimization landscape and better training.}
% }
% However, this does not necessarily implies the deeper the model the better the performance.
% For example, let suppose an $L\text{-GCN}$ is expressive enough to distinguish all nodes and achieves very high training accuracy, as we will show in Section~\ref{section:from_stability}, adding more layers will not further improve its expressive power, but will significantly hurting its generalization power and results in a poor testing accuracy. 

% Notice that comparing with the expressive power metric in~\cite{oono2019graph}, our expressive power metric does not requires explicit assumption on network structure, weight matrices, and graph structure, and is also widely used in characterizing the expressive power on GNNs with a graph-level underlying task. 

\section{A different view from generalization}\label{section:from_stability}
 
% \begin{equation}
%     \begin{aligned}
%     \rho_f &= (\sqrt{d})^L B_x,~ G_f = \frac{2}{\gamma}(\sqrt{d})^L B_x \\
%     L_f &= \frac{2}{\gamma}  (\sqrt{d})^L B_x \Big( (\sqrt{d})^L B_x \max\{1, B_w\} + 1)  \Big)
%     \end{aligned}
% \end{equation}
 
In the previous section, we provided evidence that a well-trained deep GCN  is at least as powerful as a shallow one. 
However, it is still unclear why a deeper GCN has worse performance than a shallow GCN during the evaluation phase. 
To answer this question, we provide a different view by analyzing the impact of GCN structures on the generalization.

\noindent\textbf{Transductive uniform stability.}~
In the following, we study the generalization ability of GCNs via \emph{transductive uniform stability}~\cite{el2006stable}, where the generalization gap is defined as the difference between the training and testing errors for the random partition of a full dataset into training and testing sets.
Transductive uniform stability is defined under the notation that the output of a classifier does not change much if the input is perturbed a bit, which is an extension of uniform stability~\cite{bousquet2002stability} from the inductive to the transductive setting.
The previous analysis on the uniform stability of GCNs~\cite{verma2019stability} only shows the result of GCN with one graph convolutional layer under inductive learning setting, which cannot explain the effect of depth, model structure, and training data size on the generalization, and its extension to multi-layer GCNs and other GCN structures are non-trivial.
% In a semi-supervised node classification setting, nodes in the training set are sampled without replacement, as a result, the samples are dependent, and the i.i.d. sampling assumption no longer holds. 

\noindent\textbf{Problem setup.}~
Let $m = |\mathcal{V}_\text{train}|$ and $u=|\mathcal{V}_\text{test}|$ denote the  training and test dataset sizes, respectively. 
Under the transductive learning setting, we start with a fixed set of points $X_{m+u}=\{x_1,\ldots, x_{m+u}\}$. 
For notational convenience, we assume $X_m$ are the first $m$ data points and $X_u$ are the last $u$ data points of $X_{m+u}$.
We randomly select a subset $X_m \subset X_{m+u}$ uniformly at random and reveal the labels $Y_m$ for the selected subset for training, but the labels for the remaining $u$ data points $Y_u = Y_{m+u}\setminus Y_m$ are not available during the training phase.
Let $S_m=((x_1, y_1),\ldots,(x_m, y_m))$ denotes the labeled set and $X_u = (x_{m+1},\ldots,x_{m+u})$ denotes the unlabeled set.
Our goal is to learn a model to label the remaining unlabeled set as accurately as possible.

For the analysis purpose, we assume a binary classifier is applied to the final layer node representation $f(\mathbf{h}_i^{(L)}) = \Tilde{\sigma}(\mathbf{v}^\top \mathbf{h}_i^{(L)})$ with $\Tilde{\sigma}(\cdot)$ denotes the sigmoid function. We predict $\hat{y}_i=1$ if $f(\mathbf{h}_i^{(L)}) > 1/2$ and $\hat{y}_i=0$ otherwise, with ground truth label $y_i\in\{0,1\}$.
Let denote the perturbed dataset as $S_m^{ij} \triangleq (S_m \setminus \{(x_i,y_i)\}) \cup \{(x_j, y_j)\}$ and $X_u^{ij} \triangleq (X_u \setminus \{x_j\}) \cup \{x_i\}$, which is obtained by replacing the $i$th example in training set $S_m$ with the $j$th example from the testing set $X_u$. 
Let $\boldsymbol{\theta}$ and $\boldsymbol{\theta}^{ij}$ denote the weight parameters trained on the original dataset $(S_m, X_u)$ and the perturbed dataset $(S_m^{ij}, X_u^{ij})$, respectively.
Then, we say transductive learner $f$ is $\epsilon$-uniformly stable if the outputs change less than $\epsilon$ when we exchange two examples from the training set and testing set, i.e., for any $S_m \subset S_{m+u}$ and any $i,j \in [m+u]$ it holds that $\sup_{i\in[m+u]} |f(\mathbf{h}_i^{(L)}) - \tilde{f}(\tilde{\mathbf{h}}^{(L)}_i) )| \leq \epsilon,$ where $f(\mathbf{h}_i^{(L)}) $ and $ \tilde{f}(\tilde{\mathbf{h}}^{(L)}_i)$ denote the prediction of node $i$ using parameters $\bm{\theta}$ and $\bm{\theta}^{ij}$ respectively.
% \begin{equation}
%     \sup_{i\in[m+u]} |f(\mathbf{h}_i^{(L)}) - \tilde{f}(\tilde{\mathbf{h}}^{(L)}_i) )| \leq \epsilon,
% \end{equation}

To define testing error and training error in our setting, let us introduce the difference in probability between the correct and the incorrect label as $p(z, y) \triangleq y (2 z - 1) + (1-y) (1-2z)$ with $p(z, y) \leq 0$ if there exists a classification error.
Let denote the $\gamma$-margin loss as $\Phi_\gamma(x) = \min(1, \max(0, 1-x/\gamma))$.
Then, the testing error is defined as $\mathcal{R}_u(f) = \frac{1}{u}\sum_{i=m+1}^{m+u} \mathbf{1}\big\{p(f(\mathbf{h}_i^{(L)}), y_i) \leq 0 \big\}$ and the training loss is defined as $\mathcal{R}_{m}^\gamma(f) = \frac{1}{m}\sum_{i=1}^m \Phi_\gamma( - p(f(\mathbf{h}_i^{(L)}), y_i) )$.
\footnote{Notice that a different loss function is used in convergence analysis (i.e., $\mathcal{L}(\bm{\theta})$ in Theorem~\ref{thm:wide_deep_gcn_convergence}) and generalization analysis ($\mathcal{R}_m^\gamma(f)$ in Theorem~\ref{thm:generalization}).
We do this because current understanding on the convergence of deep neural networks is still mostly limited to the square loss, but the margin loss is a more suitable and widely accepted loss function for generalization analysis.}

% Then, the true risk $\mathcal{R}_u(f)$ and the empirical risk $\mathcal{R}_{m}^\gamma(f) $ are defined as
% \begin{equation}
%     \begin{aligned}
%     \mathcal{R}_u(f) &= \frac{1}{u}\sum_{i=m+1}^{m+u} \mathbf{1}\big\{p(f(\mathbf{h}_i^{(L)}), y_i) \leq 0 \big\} \\
%     \mathcal{R}_{m}^\gamma(f) &= \frac{1}{m}\sum_{i=1}^m \Phi_\gamma( - p(f(\mathbf{h}_i^{(L)}), y_i) )
%     \end{aligned}
% \end{equation}

\begin{theorem} [Transductive uniform stability bound~\cite{el2006stable}]\label{thm:uniform_stability_base}
Let $f$ be a $\epsilon$-uniformly stable transductive learner and $\gamma, \delta >0$, and define $Q=mu/(m+u)$. 
Then, with probability at least $1-\delta$ over all training and testing partitions, we have 
$\mathcal{R}_u(f) \leq \mathcal{R}_{m}^\gamma(f) + \frac{2}{\gamma}\mathcal{O}\left( \epsilon \sqrt{Q\ln (\delta^{-1})} \right) +\mathcal{O}\left(\frac{\ln(\delta^{-1})}{\sqrt{Q}}\right).$
\end{theorem}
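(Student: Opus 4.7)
The plan is to adapt the classical transductive stability argument of El-Yaniv and Pechyony to the $\gamma$-margin surrogate built on top of the sigmoid classifier. First I would replace the $0/1$ testing risk $\mathcal{R}_u(f)$ by its margin surrogate via the inequality $\mathbf{1}\{p(z,y) \leq 0\} \leq \Phi_\gamma(-p(z,y))$, which lets me work uniformly with the Lipschitz function $\Phi_\gamma$ (Lipschitz constant $1/\gamma$). This reduces the problem to controlling the deviation of $\mathcal{R}_u^\gamma(f) - \mathcal{R}_m^\gamma(f)$, where both sides are now $1/\gamma$-Lipschitz functionals of the predictions $f(\mathbf{h}_i^{(L)})$.

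Next I would view the random split into training and test as a uniformly random permutation of the fixed pool $X_{m+u}$ and define $F$ as the above difference, seen as a function of that permutation. The core technical step is to bound the variation of $F$ under a single transposition that exchanges one training index $i$ with one test index $j$, which is exactly the perturbation $(S_m^{ij}, X_u^{ij})$ used in the definition of uniform stability. Combining $\epsilon$-uniform stability with the $1/\gamma$-Lipschitz property of $\Phi_\gamma$ gives a change of at most $\epsilon/\gamma$ in every summand whose prediction is affected, plus a direct contribution of order $1/m + 1/u$ arising from the two points that physically move between $S_m$ and $X_u$. The total bounded-difference coefficient scales like $\epsilon/\gamma + \mathcal{O}(1/m + 1/u)$.

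With those bounded differences in hand, I would invoke a concentration inequality tailored to functions of a uniformly random permutation (the transductive analog of McDiarmid, as in Bobkov-type inequalities or the Serfling-style bound used in El-Yaniv and Pechyony). This produces a deviation of order $\epsilon\sqrt{Q\ln(1/\delta)}/\gamma$ from $\mathbb{E}[F]$, with $Q = mu/(m+u)$ being the natural effective sample size for sampling without replacement, plus a lower-order term of order $\ln(1/\delta)/\sqrt{Q}$ coming from the $1/m + 1/u$ contribution. A short symmetrization argument then bounds $\mathbb{E}[F]$ itself by a constant multiple of $\epsilon/\gamma$, which is absorbed into the same term, yielding the claimed bound.

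The main obstacle is the concentration step: standard McDiarmid is for independent samples and cannot be used directly under sampling without replacement, so the heavy lifting is in choosing a concentration inequality whose constants depend on $m$ and $u$ only through the combined quantity $Q$ and not separately. Handling this dependence cleanly, while keeping the Lipschitz-plus-stability bounded-difference estimate sharp enough that the $\epsilon\sqrt{Q}/\gamma$ term and the $1/\sqrt{Q}$ residual both come out with the correct scaling, is the delicate part; everything else reduces to bookkeeping with the margin loss and the definition of uniform stability.
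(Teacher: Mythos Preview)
The paper does not prove this theorem at all: Theorem~\ref{thm:uniform_stability_base} is quoted directly from El-Yaniv and Pechyony~\cite{el2006stable} and used as a black box throughout the generalization analysis (it is restated verbatim as Theorem~\ref{thm:uniform_stability_base_supp} in the appendix and then simply invoked). So there is no ``paper's own proof'' to compare against.

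Your sketch is a reasonable outline of the argument in the original reference~\cite{el2006stable}: dominate the $0/1$ test error by the margin surrogate, write the train--test gap as a function of the random permutation, bound the effect of a single train/test swap using $\epsilon$-stability and the $1/\gamma$ Lipschitz constant of $\Phi_\gamma$, and apply a permutation-based concentration inequality so that the dependence on $m,u$ enters only through $Q=mu/(m+u)$. The only caveat is that this is a proof plan rather than a proof --- the delicate point you flag (a McDiarmid-type inequality for sampling without replacement with the correct $Q$ scaling) is precisely the nontrivial ingredient that~\cite{el2006stable} supplies, and you would need to either cite it or reproduce it to make the argument complete. For the purposes of this paper, however, simply citing~\cite{el2006stable} (as the authors do) is sufficient.
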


% \begin{equation}
%     \mathcal{R}_u(f) \leq \mathcal{R}_{m}^\gamma(f) + \frac{2}{\gamma}\mathcal{O}\Big( \epsilon \sqrt{Q\ln (\delta^{-1})} \Big) +\mathcal{O}\Big(\frac{\ln(\delta^{-1})}{\sqrt{Q}}\Big).
% \end{equation}
% \textcolor{green}{blue is from rebuttal, I re-write it in red.}
% \textcolor{blue}{Question: Generalization error/gap only reflects the performance gap between training and test. Smaller generalization error does not necessarily indicate better test performance. It is possible that an algorithm performs poorly on both the training set and test set but has small generalization error. Answer: Generalization bounds are more valuable when comparing two models with same training accuracy. In Table 2,
% ResGCN has no restriction on the weight matrices, hence achieving lower training error, but its test performance is mainly
% restricted by its generalization error. However, because GCNII and APPNP have restriction on the weight matrices, their
% performance are mainly restricted by their training error. A model with small generalization error and no restriction on the
% weight (e.g., DGCN) is preferred as it has higher potential to reach a better test accuracy by reducing its training error}

Recall that as we discussed in Section~\ref{section:limitation_over_smoothing}, deeper GCN is provable more expressive and can achieve very low training error $\mathcal{R}_{m}^\gamma(f)$ if properly training.
Then, if the dataset size is sufficiently large, the testing error $\mathcal{R}_u(f)$ will be dominated by the generalization gap, which is mainly controlled by uniformly stable constant $\epsilon$.
In the following, we explore the impact of GCN structures on $\epsilon$.
Our key idea is to decompose the $\epsilon$ into three terms: 
the Lipschitz continuous constant $\rho_f$, upper bound on gradient $G_f$, and the smoothness constant $L_f$ of GCNs. Please refer to Lemma~\ref{lemma:useful_lemma_for_main_theorems} for details.
% $G_f$ and $L_f$ affect the difference between two models after $T$ steps of updates with tiny perturb on the training set, while the $\rho_f$ affects the difference of the output of two model with tiny change on the parameters. 

\begin{lemma}\label{lemma:useful_lemma_for_main_theorems}
Suppose function $f(\mathbf{h}^{(L)})$ is $\rho_f$-Lipschitz continuous, $L_f$-smooth, and the gradient of loss w.r.t. the parameter is bounded by $G_f$.  After $T$ steps of full-batch gradient descent, we have $\epsilon = \frac{2 \eta  \rho_f G_f}{m} \sum_{t=1}^T (1+\eta L_f)^{t-1}$.
\end{lemma}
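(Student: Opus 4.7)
The plan is to bound the output gap $|f(\mathbf{h}_i^{(L)}) - \tilde{f}(\tilde{\mathbf{h}}_i^{(L)})|$ by first converting it to a parameter-space gap via the $\rho_f$-Lipschitz hypothesis on $f$, and then establishing a one-step recursion for the parameter gap between the two gradient-descent trajectories and unrolling it. Since both trajectories share the same initialization $\boldsymbol{\theta}_0 = \boldsymbol{\theta}_0^{ij}$, it is enough to prove
\[ \|\boldsymbol{\theta}_T - \boldsymbol{\theta}_T^{ij}\| \;\leq\; \frac{2\eta G_f}{m}\sum_{t=1}^{T}(1+\eta L_f)^{t-1}, \]
and then multiply by $\rho_f$ to recover the stated $\epsilon$.

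First I would write out the two full-batch updates
\[ \boldsymbol{\theta}_{t+1} = \boldsymbol{\theta}_t - \frac{\eta}{m}\sum_{k=1}^{m}\nabla \ell_k(\boldsymbol{\theta}_t), \qquad \boldsymbol{\theta}_{t+1}^{ij} = \boldsymbol{\theta}_t^{ij} - \frac{\eta}{m}\Big(\sum_{k\neq i}\nabla \ell_k(\boldsymbol{\theta}_t^{ij}) + \nabla\ell_j(\boldsymbol{\theta}_t^{ij})\Big), \]
subtract them, and split the resulting gradient difference into two pieces. For each of the $m-1$ summands shared between the two datasets I would invoke $L_f$-smoothness of $\ell_k$ to bound $\|\nabla\ell_k(\boldsymbol{\theta}_t)-\nabla\ell_k(\boldsymbol{\theta}_t^{ij})\|\leq L_f\|\boldsymbol{\theta}_t-\boldsymbol{\theta}_t^{ij}\|$, and for the single swapped example I would use the uniform gradient bound to obtain $\|\nabla\ell_i(\boldsymbol{\theta}_t)\|+\|\nabla\ell_j(\boldsymbol{\theta}_t^{ij})\|\leq 2G_f$. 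Combining via the triangle inequality yields the inhomogeneous recursion
\[ \|\boldsymbol{\theta}_{t+1} - \boldsymbol{\theta}_{t+1}^{ij}\| \;\leq\; (1+\eta L_f)\,\|\boldsymbol{\theta}_t - \boldsymbol{\theta}_t^{ij}\| \;+\; \frac{2\eta G_f}{m}. \]

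Unrolling this linear recursion from $t=0$ with zero initial gap produces exactly the geometric sum $\frac{2\eta G_f}{m}\sum_{t=1}^{T}(1+\eta L_f)^{t-1}$, after which the $\rho_f$-Lipschitz property of $f$ closes the argument. The main obstacle, I expect, is not this unrolling itself (which follows the Hardt--Recht--Singer template for stability of gradient methods) but rather justifying that the three constants $\rho_f$, $L_f$, and $G_f$ are genuinely finite and uniform in $t$ and in the perturbation index $(i,j)$ for the actual GCN, ResGCN, APPNP, and GCNII architectures: because each per-node loss $\ell_k$ depends on every parameter through the propagation matrix $\mathbf{P}$ and the spectral norms of the weight matrices, these constants inherit layer-wise and degree-wise products that must be controlled architecture by architecture. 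That per-architecture bookkeeping is what I would expect the follow-up lemmas in Section~\ref{section:from_stability} to supply; once it is in hand, the present lemma reduces to the elementary recursion above.
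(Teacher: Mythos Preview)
Your proposal is correct and follows essentially the same route as the paper: convert the output gap to a parameter gap via $\rho_f$-Lipschitzness, derive the one-step recursion $\|\boldsymbol{\theta}_{t+1}-\boldsymbol{\theta}_{t+1}^{ij}\|\le(1+\eta L_f)\|\boldsymbol{\theta}_t-\boldsymbol{\theta}_t^{ij}\|+2\eta G_f/m$ by splitting the gradient difference into the $m-1$ shared terms (controlled by $L_f$-smoothness) and the single swapped term (controlled by $2G_f$), then unroll from the common initialization. The paper records the slightly sharper intermediate factor $1+(1-\tfrac{1}{m})\eta L_f$ before relaxing it to $1+\eta L_f$, but this is cosmetic; your identification of the Hardt--Recht--Singer template and of the architecture-specific bookkeeping as the real work is exactly how the paper is organized.
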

The proof is deferred to Appendix~\ref{supp:omitted_proof_uniform_stability}.
By using Lemma~\ref{lemma:useful_lemma_for_main_theorems}, we can derive constant $\epsilon$ of different model structures by comparing the Lipschitz continuity, smoothness, and gradient scale.
% \begin{equation}
%     \epsilon = \frac{2 \eta  \rho_f G_f}{m} \sum_{t=1}^T (1+\eta L_f)^{t-1}.
% \end{equation}

Before proceeding to our result, we make the following standard assumption on the node feature vectors and weight matrices,  which are previously used in generalization analysis of GCNs~\cite{garg2020generalization,liao2020pac}.

\begin{assumption} \label{assumption:norm_bound}
We assume the norm of node feature vectors, weight parameters are bounded, i.e., $\|\mathbf{x}_i\|_2 \leq B_x$, $\| \mathbf{W}^{(\ell)} \|_2 \leq B_w$, and $\|\mathbf{v}\|_2 \leq 1$.
% (for notation simplicity, scaling $\mathbf{v}$ does not affect the prediction results). 
\end{assumption}

% The key idea to prove our main result is to formulate $\rho_f, G_f, L_f$ as a function of the maximum node degree $d$, maximum node representation $(\max_i \|\mathbf{h}_i^{(\ell)}\|_2)$ and the maximum change of node embeddings at each layer $(\max_i \| \mathbf{h}_i^{(\ell)} - \Tilde{\mathbf{h}}_i^{(\ell)} \|_2)$, where $\mathbf{h}_i^{(\ell)}, \Tilde{\mathbf{h}}_i^{(\ell)}$ denote the $\ell$th layer node embedding vectors of node $i$ computed with the model trained on the original and perturbed dataset, respectively.

In Theorem~\ref{thm:generalization}, we show that the generalization bounds of GCN and its variants are dominated by the following terms: maximum node degree $d$, model depth $L$, training/validation set size $(m,u)$, training iterations $T$, and spectral norm of the weight matrices $B_w$. 
The larger the aforementioned variables are, the larger the generalization gap is. 
We defer the formal statements and proofs to  Appendices~\ref{supp:proof_gcn},~\ref{supp:proof_resgcn},~\ref{supp:proof_appnp}, and ~\ref{supp:proof_gcnii}.

% Details please defer to Appendices~\ref{supp:proof_gcn},~\ref{supp:proof_resgcn},~\ref{supp:proof_appnp},~\ref{supp:proof_gcnii}.

\begin{theorem} [Informal] \label{thm:generalization}
We say model is $\epsilon$-uniformly stable with $\epsilon = \frac{2\eta \rho_f G_f}{m}\sum_{t=1}^T (1+\eta L_f)^{t-1}$ where the result of $\rho_f, G_f, L_f$ are summarized in Table~\ref{table:theorem_results_summary}, and other related constants as 
\begin{equation}
    \begin{aligned}
    B_d^\alpha &= (1-\alpha) \textstyle\sum_{\ell=1}^L (\alpha \sqrt{d})^{\ell-1} + (\alpha \sqrt{d})^L,~
    B_w^\beta = \beta B_w + (1-\beta), \\
    B_{\ell,d}^{\alpha,\beta} &= \max\big\{ \beta \big( (1-\alpha)L + \alpha\sqrt{d} \big), (1-\alpha) L B_w^\beta + 1 \big\}.
    \end{aligned}
\end{equation}
\end{theorem}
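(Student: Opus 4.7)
The plan is to invoke Lemma~\ref{lemma:useful_lemma_for_main_theorems} architecture by architecture, reducing the proof of Theorem~\ref{thm:generalization} to computing, for each of GCN, ResGCN, APPNP, and GCNII, the triple $(\rho_f,G_f,L_f)$: the Lipschitz constant of $f$ in the parameters, the gradient-norm bound, and the smoothness constant. Once each triple is in hand, substitution into Lemma~\ref{lemma:useful_lemma_for_main_theorems} and summation of the geometric series $\sum_{t=1}^T(1+\eta L_f)^{t-1}$ yields the claimed $\epsilon$. Because $f(\mathbf{h}_i^{(L)})=\tilde{\sigma}(\mathbf{v}^\top \mathbf{h}_i^{(L)})$ is a sigmoid of a bounded linear form, its intrinsic Lipschitz and smoothness constants only contribute $O(1)$ factors under Assumption~\ref{assumption:norm_bound}; all of the interesting depth, degree, and weight-norm dependence enters through the $L$-layer recursion.

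\textbf{Forward bounds and the constants $B_d^\alpha$, $B_w^\beta$, $B_{\ell,d}^{\alpha,\beta}$.} First I would establish, by induction on $\ell$, a bound on $\|\mathbf{h}_i^{(\ell)}\|_2$. For vanilla GCN, since ReLU is $1$-Lipschitz and aggregating through $\mathbf{P}=\mathbf{D}^{-1/2}\mathbf{A}\mathbf{D}^{-1/2}$ scales per-node norms by at most $\sqrt{d}$ (with $d$ the maximum degree), one obtains $\|\mathbf{h}_i^{(L)}\|_2 \leq (B_w\sqrt{d})^L B_x$. The variant constants drop out of the analogous recursions: unrolling $\mathbf{H}^{(\ell)}=\alpha\mathbf{P}\mathbf{H}^{(\ell-1)}+(1-\alpha)\mathbf{H}^{(0)}\mathbf{W}$ produces the geometric sum defining $B_d^\alpha$; the affine reparameterization $\bar{\mathbf{W}}^{(\ell)}=\beta\mathbf{W}^{(\ell)}+(1-\beta)\mathbf{I}$ in GCNII contributes $B_w^\beta$; and the combined APPNP-plus-per-layer-weight recursion produces the maximum-type constant $B_{\ell,d}^{\alpha,\beta}$, where the max captures whichever branch (propagation vs.\ identity skip) dominates given the $\alpha,\beta$ trade-off.

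\textbf{Computing $\rho_f$, $G_f$, $L_f$.} Next I would perturb a single $\mathbf{W}^{(\ell)}$ (or the shared $\mathbf{W}$ in APPNP) and track how the perturbation is amplified through the remaining $L-\ell$ layers. Since every layer is a $1$-Lipschitz activation composed with a linear map of operator norm at most $B_w\sqrt{d}$ (or the $\alpha,\beta$-adjusted variant), $\rho_f$ ends up as a geometric product in the appropriate amplification factor. The gradient bound $G_f$ follows from the chain rule applied to the margin loss: the derivative with respect to $\mathbf{W}^{(\ell)}$ factors as the outer product of a back-propagated error signal with $\mathbf{P}\mathbf{H}^{(\ell-1)}$, and both factors are already controlled by the forward analysis. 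The smoothness $L_f$ comes from one further differentiation; it contributes at most one extra $B_w\sqrt{d}$ factor per layer together with a constant from $\tilde{\sigma}''$, so $L_f$ lives in the same family of expressions as $\rho_f$.

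\textbf{Main obstacle: skip connections and weight sharing.} The delicate part is handling the multiple gradient paths created by the skip connections. For ResGCN, unrolling the residual gives $\mathbf{H}^{(L)}=\mathbf{H}^{(0)}+\sum_{\ell=1}^L \sigma(\mathbf{P}\mathbf{H}^{(\ell-1)}\mathbf{W}^{(\ell)})$, so perturbing any $\mathbf{W}^{(\ell)}$ influences all subsequent outputs through multiple branches; bounding each branch separately and summing contributes an additive $L$-factor rather than an exponential blow-up. For APPNP the single shared $\mathbf{W}$ appears in every layer, so $G_f$ picks up a sum over $L$ copies that telescopes into $B_d^\alpha$ via the $\alpha$-geometric series. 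GCNII is the hardest case because it combines both mechanisms; I would treat $\bar{\mathbf{W}}^{(\ell)}$ as a $(1-\beta)$-perturbation of the identity, unroll as in APPNP while retaining the per-layer gradient contributions, and let the dominant branch of the recursion define $B_{\ell,d}^{\alpha,\beta}$. Assembling the four per-architecture triples and plugging into Lemma~\ref{lemma:useful_lemma_for_main_theorems} then produces the stated $\epsilon$, after which Theorem~\ref{thm:uniform_stability_base} converts each stability constant into a generalization-gap bound whose dominant factors scale with $T$, $L$, $d$, $B_w$, and $1/m$.
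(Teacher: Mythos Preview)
Your overall strategy matches the paper's approach closely: invoke Lemma~\ref{lemma:useful_lemma_for_main_theorems} and, for each architecture, compute the triple $(\rho_f,G_f,L_f)$ via layer-wise forward bounds on $\max_i\|\mathbf{h}_i^{(\ell)}\|_2$ (using the row-sum estimate $\max_i\sum_j P_{ij}\le\sqrt{d}$) together with backward bounds on the chain-rule gradients. The paper organizes this into a common scaffold: for each model it proves separate lemmas controlling $h_{\max}^{(\ell)}$, $\Delta h_{\max}^{(\ell)}$, $\Delta z_{\max}^{(\ell)}$ (pre-activation perturbation), $d_{\max}^{(\ell)}$ and $\Delta d_{\max}^{(\ell)}$ (backward signal), and finally $\|\mathbf{G}^{(\ell)}\|_2$ and $\|\Delta\mathbf{G}^{(\ell)}\|_2$; but conceptually this is exactly what you outline.

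One substantive correction concerns ResGCN. You write that unrolling the residual and ``bounding each branch separately and summing contributes an additive $L$-factor rather than an exponential blow-up.'' This is not what happens. The recursion $\mathbf{H}^{(\ell)}=\sigma(\mathbf{P}\mathbf{H}^{(\ell-1)}\mathbf{W}^{(\ell)})+\mathbf{H}^{(\ell-1)}$ yields $h_{\max}^{(\ell)}\le(1+\sqrt{d}B_w)\,h_{\max}^{(\ell-1)}$, so the exponential base becomes $C_1=1+\sqrt{d}B_w$, strictly larger than GCN's $C_1=\max\{1,\sqrt{d}B_w\}$. The paper's Table~\ref{table:theorem_results_summary} and the surrounding discussion explicitly rank ResGCN as having the \emph{largest} generalization gap among the four models for this reason; the skip connection enlarges, rather than tames, the exponential dependence on depth.

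A second point: your treatment of $L_f$ as ``one further differentiation'' understates the work. Because the ReLU mask $\sigma'(\mathbf{Z}^{(\ell)})$ itself changes under parameter perturbation, the paper must track $\Delta z_{\max}^{(\ell)}$ separately and feed it into both the backward-signal perturbation $\Delta d_{\max}^{(\ell)}$ and the gradient perturbation $\|\Delta\mathbf{G}^{(\ell)}\|_2$ (see Lemmas~\ref{lemma:GCN_delta_z_5}--\ref{lemma:gcn_G_bound_4}). This coupling is what produces the quadratic-in-$C_1^L C_2$ term $(L+2)C_1^L C_2+2$ inside $L_f$, not merely an extra linear factor from $\tilde{\sigma}''$.
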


\begin{table*}[t]
\centering
\caption{Comparison of uniform stability constant $\epsilon$ of GCN variants, where $\mathcal{O}(\cdot)$ is used to hide constants that shared between all bounds.} \label{table:theorem_results_summary}
\vspace{5pt}
\resizebox{0.96\linewidth}{!}{
\begin{tabular}{@{}llll@{}}
\toprule
                         & $\rho_f$ and $G_f$            & $L_f$                                                        & $C_1$ and $C_2$  \\ \midrule%\midrule
$\epsilon_\text{GCN}$    & $\mathcal{O}(C_1^L C_2)$ & $\mathcal{O}\big(C_1^L C_2  \big( (L+2) C_1^L C_2 + 2 \big)\big)$ & $C_1 = \max\{ 1, \sqrt{d} B_w \},~ C_2 = \sqrt{d} (1+B_x)$  \\ %\midrule
$\epsilon_\text{ResGCN}$ & $\mathcal{O}(C_1^L C_2)$ & $\mathcal{O}\big(C_1^L C_2  \big( (L+2) C_1^L C_2 + 2 \big)\big)$ & $C_1 = 1 + \sqrt{d} B_w,~ C_2 = \sqrt{d} (1+B_x)$            \\ %\midrule
$\epsilon_\text{APPNP}$  & $\mathcal{O}(C_1)$ & $\mathcal{O}\big( C_1 \big( C_1 C_2 \big) + 1\big) $                              & $C_1 = B_d^\alpha B_x, C_2 = \max\{1, B_w\} $  \\ %\midrule
$\epsilon_\text{GCNII}$  & $\mathcal{O}(\beta C_1^L C_2)$                                                               & $\mathcal{O}\big( \alpha \beta C_1^L C_2 \big( (\alpha \beta L + 2) C_1^L C_2 + 2\beta \big) \big) $ & $C_1 = \max\{ 1, \alpha \sqrt{d} B_w^\beta \},~C_2 = \sqrt{d} + B_{\ell,d}^{\alpha,\beta} B_x$  \\ \midrule%\midrule
$\epsilon_\text{DGCN}$ & $\mathcal{O}(C_1)$ & $\mathcal{O}(C_1(C_1 C_2) + 1)$ & $C_1 = (\sqrt{d})^L B_x, C_2 = \max\{1, B_w\} $ \\ 
\bottomrule
\end{tabular}
}
\end{table*}

In the following, we provide intuitions and discussions on the generalization bound of each algorithm: 
\begin{itemize} [noitemsep,topsep=0pt,leftmargin=5mm]
    \item  Deep \textbf{GCN} requires iterations $T$ to achieve small training error. 
    Since the generalization bound increases with $T$, more iterations significantly hurt its generalization power. 
    Note that our results considers both $B_w \leq 1$ and $B_w > 1$, where increasing model depth will not hurt the generalization if $B_w \leq 1$, and the generalization gap becomes sensitive to the model depth if $B_w > 1$. 
    Notice that $B_w > 1$ is more likely to happen during training as we discussed in Section~\ref{section:limitation_over_smoothing}.
    \item \textbf{ResGCN} resolves the training difficulties by adding skip-connections between hidden layers. 
    Although it requires less training iterations $T$, adding skip-connections enlarges the dependency on the number of layers $L$ and the spectral norm of weight matrices $B_w$, therefore results in a larger generalization gap and a poor generalization performance.. 
    \item \textbf{APPNP} alleviates the aforementioned dependency by decoupling the weight parameters and feature propagation. 
    As a result, its generalization gap does not significantly change as $L$ and $B_w$ increase. 
    The optimal $\alpha$ that minimizes the generalization gap can be obtained by finding the $\alpha$ that minimize the term $B_d^\alpha$. 
    Although APPNP can significantly reduce the generalization gap, because a single weight matrix is shared between all layers, its expressive power is not enough for large-scale challenging graph datasets~\cite{hu2020open}.
    \item To gain expressiveness, \textbf{GCNII} proposes to add the weight matrices back and add another hyper-parameter that explicitly controls the dependency on $B_w$.
    Although GCNII achieves the state-of-the-art performances on several graph datasets, the selection of hyper-parameters is non-trivial compared to APPNP because $\alpha,\beta$ are coupled with $L, B_w$, and $d$.
    In practice, \cite{chen2020simple} builds a very deep GCNII by choosing $\beta$ dynamically decreases as the number of layers and different $\alpha$ values for different datasets.
    \item By property chosen hyper-parameters, we have the following order on the generalization gap given the same training iteration $T$: APPNP $\leq$ GCNII $\leq$ GCN $\leq$ ResGCN, which exactly match our empirical evaluation on the generalization gap in Section~\ref{section:experiment} and Appendix~\ref{supp:more_empirical}.
\end{itemize}

\begin{remark} \label{remark:dropedge_pairnorm}
It is worthy to provide an alternative view of DropEdge~\cite{rong2019dropedge} and PairNorm~\cite{zhao2019pairnorm} algorithms from a generalization perspective.  To improve the generalization power of standard GCNs, DropEdge randomly drops edges in the training phase, which leads to a smaller maximum node degree $d_s < d$. PairNorm applies normalization on intermediate node embeddings to ensure that the total pairwise feature distances remain constant across layers, which leads to less dependency on $d$ and $B_w$. 
However, since deep GCN requires significantly more iterations to achieve low training error than shallow one, the performance of applying DropEdge and PairNorm on GCNs is still degrading as the number of layers increases. 
Most importantly, our empirical results in Appendix~\ref{supp:dropegde} and~\ref{supp:pairnorm} suggest that applying Dropout and PairNorm is hurting the training accuracy (i.e., not alleviating over-smoothing) but reducing the generalization gap.
\end{remark}

\section{ Decoupled GCN}\label{section:method}

We propose to decouple the expressive power from generalization ability with \textbf{D}ecoupled \textbf{GCN} (DGCN).
The DGCN model can be mathematically formulated as $\mathbf{Z} = \sum_{\ell=1}^L \alpha_\ell f^{(\ell)}(\mathbf{X})$ and $f^{(\ell)}(\mathbf{X}) = \mathbf{P}^{\ell} \mathbf{X} \big( \beta_\ell \mathbf{W}^{(\ell)} + (1-\beta_\ell) \mathbf{I} \big)$,
where $\mathbf{W}^{(\ell)}$, $\alpha_\ell$ and $\beta_\ell$ are the learnable weights for $\ell$th layer function $f^{(\ell)}(\mathbf{X})$.
The design of DGCN has the following key ingredients:
\begin{itemize} [noitemsep,nolistsep,leftmargin=5mm]
    \item (Decoupling) The generalization gap in GCN grows exponentially with the number of layers. 
    To overcome this issue, we propose to decouple the weight matrices from propagation by assigning weight $\mathbf{W}^{(\ell)}$ to each individual layerwise function $f^{(\ell)}(\mathbf{X})$. 
    DGCN can be thought of as an ensemble of multiple SGCs~\cite{wu2019simplifying} with depth from $1$ to $L$. 
    By doing so, the generalization gap has less dependency on the number of weight matrices, and deep models with large receptive fields can incorporate information of the global graph structure. 
    Please refer to Theorem~\ref{thm:dgcn_result} for the details.
    \item (Learnable $\alpha_\ell$) After decoupling the weight matrices from feature propagation, the layerwise function $f^{(\ell)}(\mathbf{X})$ with more propagation steps can suffer from less expressive power.
    Therefore, we propose to assign a learnable weight $\alpha_\ell$ for each step of feature propagation. 
    Intuitively, DGCN assigns smaller weight $\alpha_\ell$ to each layerwise function $f^{(\ell)}(\mathbf{X})$ with more propagation steps at the beginning of training. 
    Throughout the training, DGCN gradually adjusts the weight to leverage more useful large receptive field information. 
    \item (Learnable $\beta_\ell$) A learnable weight $\beta_\ell\in[0,1]$ is assigned to each weight matrix to balance the expressiveness with model complexity, which guarantees a better generalization ability. 
\end{itemize}

% \begin{equation}\label{eq:dgcn}
%     \mathbf{Z} = \sum_{\ell=1}^L \alpha_\ell f^{(\ell)}(\mathbf{X})
%     ~~\text{and}~~
%     f^{(\ell)}(\mathbf{X}) = 
%     \mathbf{P}^{\ell} \mathbf{X} \big( \beta_\ell \mathbf{W}^{(\ell)} + (1-\beta_\ell) \mathbf{I} \big),
% \end{equation}
    % As shown in Theorem~\ref{theorem:dgcn_stability}, the generalization gap has less dependency on the parameters. As a result, we can use very large receptive field with deep model.
    % \footnote{Recall from Section~\ref{section:limitation_over_smoothing} that linear function without weight matrices in each propagation steps might suffer from over-smoothing at the beginning of the training. However, weight parameters learn to overcome this issue during training.}.  
    % The weight matrices after random initialization has smaller singular values, which can cause low expressive power. However, the weight matrices can learn to make each layerwise function expressive during training. 

\begin{theorem}\label{thm:dgcn_result}
Let suppose $\alpha_\ell$ and $\beta_\ell$ are pre-selected and fixed during training.
We say DGCN is $\epsilon_\texttt{DGCN}$-uniformly stable with $\epsilon_\texttt{DGCN} = \frac{2\eta \rho_f G_f}{m}\sum_{t=1}^T (1+\eta L_f)^{t-1}$ where 
\begin{equation}
    \rho_f = G_f = \mathcal{O}\Big( (\sqrt{d})^L B_x \Big), 
    L_f = \mathcal{O}\Big(( \sqrt{d})^L B_x \big( (\sqrt{d})^L B_x \max\{1, B_w\} + 1)  \big) \Big). 
\end{equation}
\end{theorem}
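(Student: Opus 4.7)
The plan is to apply Lemma~\ref{lemma:useful_lemma_for_main_theorems} after bounding three quantities for the DGCN classifier $f(\mathbf{h}^{(L)}) = \tilde\sigma(\mathbf{v}^\top \mathbf{Z}_i)$: the Lipschitz constant $\rho_f$ and the gradient norm $G_f$ with respect to $\bm\theta = \{\mathbf{W}^{(\ell)}\}_{\ell=1}^L$, and the smoothness constant $L_f$. The decisive structural fact I would exploit is that DGCN is \emph{affine} in each individual weight matrix: since $\mathbf{W}^{(\ell)}$ only appears inside the single term $\alpha_\ell \mathbf{P}^\ell \mathbf{X}(\beta_\ell \mathbf{W}^{(\ell)} + (1-\beta_\ell)\mathbf{I})$ and not composed with any $\mathbf{W}^{(k)}$, $k\neq\ell$, the per-layer Jacobian $\partial \mathbf{Z}/\partial \mathbf{W}^{(\ell)} = \alpha_\ell\beta_\ell \mathbf{P}^\ell \mathbf{X}$ is \emph{independent of the current parameters}. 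This is precisely the mechanism by which the exponent on $B_w$ collapses from $B_w^L$ (as in GCN) to $\max\{1,B_w\}$.

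To bound $\rho_f$ and $G_f$, I would first show $\|\mathbf{P}^\ell \mathbf{X}\|_2 \leq (\sqrt{d})^\ell B_x$ using the per-row degree bound on the normalized Laplacian that is already invoked in the paper for GCN/ResGCN and Assumption~\ref{assumption:norm_bound}. Combining this row-wise bound with $\|\mathbf{v}\|_2\le 1$ and the $1/4$-Lipschitzness of $\tilde\sigma$, and taking the dominant $\ell=L$ term in $\sum_{\ell} \alpha_\ell\beta_\ell \|\mathbf{P}^\ell\mathbf{X}\|_2$, yields $\rho_f, G_f = \mathcal{O}((\sqrt d)^L B_x)$. This gives exactly the $C_1 = (\sqrt d)^L B_x$ entry in the table, and the absence of any $B_w$ factor here is forced by the affine-in-$\mathbf{W}^{(\ell)}$ structure.

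For the smoothness constant $L_f$, I would expand $\nabla_{\mathbf{W}^{(\ell)}} \mathcal{L}(\bm\theta)=\alpha_\ell\beta_\ell\tilde\sigma'(\mathbf{v}^\top\mathbf{Z}_i)(\mathbf{v}^\top\mathbf{Z}_i - y_i)(\mathbf{P}^\ell\mathbf{X})^\top\mathbf{v}$ (or the margin-loss analogue), and estimate $\|\nabla\mathcal{L}(\bm\theta_1)-\nabla\mathcal{L}(\bm\theta_2)\|$. Because the Jacobian factor $(\mathbf{P}^\ell\mathbf{X})^\top\mathbf{v}$ does not depend on $\bm\theta$, the smoothness reduces to Lipschitz control of the scalar factor $\tilde\sigma'(\mathbf{v}^\top\mathbf{Z})(\mathbf{v}^\top\mathbf{Z}-y)$, which is the composition of smooth functions applied to $\mathbf{v}^\top\mathbf{Z}$; that composition contributes one extra factor of $\rho_f = \mathcal{O}((\sqrt d)^L B_x)$ and, through the linear map $\mathbf{W}^{(\ell)}\mapsto \mathbf{Z}$, one factor of $\max\{1,B_w\}$ picked up when passing through Assumption~\ref{assumption:norm_bound}. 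Adding the linear-term constant of $1$ gives $L_f = \mathcal{O}\bigl((\sqrt d)^L B_x ((\sqrt d)^L B_x \max\{1,B_w\}+1)\bigr)$, matching the stated $C_1(C_1 C_2)+1$ expression.

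The main obstacle will be the smoothness step: I must carefully track how the perturbation in $\bm\theta$ propagates through $\mathbf{Z}$, through $\tilde\sigma'$ (whose Lipschitz modulus is nontrivial), and into the outer loss derivative, while ensuring the eventual bound contains only $\max\{1,B_w\}$ rather than a product of spectral norms. Once $\rho_f,G_f,L_f$ are in hand, substitution into Lemma~\ref{lemma:useful_lemma_for_main_theorems} gives the telescoped expression $\epsilon_{\texttt{DGCN}} = \tfrac{2\eta\rho_f G_f}{m}\sum_{t=1}^T(1+\eta L_f)^{t-1}$ immediately, so no further work is needed after the three constants are established.
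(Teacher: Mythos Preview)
Your plan is essentially the paper's own: apply Lemma~\ref{lemma:useful_lemma_for_main_theorems} after establishing the three constants via the row-wise bound $\|[\mathbf{P}^\ell\mathbf{X}]_{i,:}\|_2\le(\sqrt d)^\ell B_x$ and the affine-in-$\mathbf{W}^{(\ell)}$ structure of DGCN. The paper organizes this into a bound on $z_{\max}=\max_i\|\mathbf{z}_i\|_2$, a bound on $\Delta z_{\max}$, and then a gradient/gradient-difference lemma, exactly mirroring the APPNP argument.

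There is one small inconsistency worth fixing. You set $\bm\theta=\{\mathbf{W}^{(\ell)}\}_{\ell=1}^L$ and then assert that the Jacobian factor $(\mathbf{P}^\ell\mathbf{X})^\top\mathbf{v}$ ``does not depend on $\bm\theta$.'' In the paper's setup (and in Lemma~\ref{lemma:useful_lemma_for_main_theorems}) the classifier weight $\mathbf{v}$ is part of $\bm\theta$, so that factor \emph{does} vary under the relevant perturbation. This is exactly where the $\max\{1,B_w\}$ term actually enters: perturbing $\mathbf{v}$ produces, via Lemma~\ref{lemma:universal_boound_on_model_output}, a contribution proportional to $(z_{\max}+1)\|\Delta\mathbf{v}\|_2$, and $z_{\max}\le(\sqrt d)^L B_x\max\{1,B_w\}$ because $\mathbf{Z}$ carries the factor $\beta_\ell\mathbf{W}^{(\ell)}+(1-\beta_\ell)\mathbf{I}$. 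Your attribution of $\max\{1,B_w\}$ to ``the linear map $\mathbf{W}^{(\ell)}\mapsto\mathbf{Z}$'' is not quite right---that map contributes only $(\sqrt d)^L B_x$ to $\Delta z_{\max}$ with no $B_w$. Once you include $\mathbf{v}$ in $\bm\theta$ and route the $\max\{1,B_w\}$ through $z_{\max}$, the smoothness computation goes through as in the paper and your proposal is complete.
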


The details are deferred to Appendix~\ref{supp:proof_dgcn}, and comparison of bound to other GCN variants are summarized in Table~\ref{table:theorem_results_summary}. Depending on the automatic selection of $\alpha_\ell, \beta_\ell$, the generalization bound of DGCN is between APPNP and GCN. In the following, we make connection to many GCN structures:
\begin{itemize} [noitemsep,topsep=0pt,leftmargin=5mm ]
    \item \noindent\textbf{Connections to APPNP:}
    APPNP can be thought of as a variant of DGCN.
    More specifically, the layerwise weight in APPNP is computed as $\alpha_\ell = \alpha(1-\alpha)^\ell$ for $\ell < L$ and $\alpha_\ell = (1-\alpha)^\ell$ for $\ell=L$ given some constant $\alpha\in(0,1)$, and the weight matrix is shared between all layers.
    Although DGCN has $L$ weight matrices, its generalization is independent of the number of weight matrices, and thus enjoys a low generalization error with high expressiveness.
    \item \noindent\textbf{Connections to GCNII:}
    GCNII can be regarded as a variant of DGCN. 
    Compared to GCNII, the decoupled propagation of DGCN significantly reduces the dependency of generalization error to the weight matrices. 
    Besides, the learnable weights $\alpha_\ell$ and $\beta_\ell$ allow DGCN to automatically adapt to challenging large-scale datasets without time-consuming hyper-parameter selection.
    \item \noindent\textbf{Connections to ResGCN:}
    By expanding the forward computation of ResGCN, we know that ResGCN can be think of as training an ensemble of GCNs from $1$ to $L$ layer, i.e., $\smash{\mathbf{H}^{(L)} = \sum_{\ell=1}^L \alpha_\ell \sigma(\mathbf{P} \mathbf{H}^{(\ell-1)} \mathbf{W}^{(\ell)})}$ with $\alpha_\ell=1$. 
    In other word,
    ResNet can be regarded as the ``\emph{summation} of the model complexity'' of $L$-layer. However, DGCN is using $\smash{\sum_{\ell=1}^L \alpha_\ell = 1}$, which can be thought of as a ``\emph{weighted average} of model complexity''. 
    Therefore, ResGCN is a special case of DGCN with equal weights $\alpha_\ell$ on each layerwise function. With just a simple change on the ResNet structure, our model DGCN is both easy to train and good to generalize.
\end{itemize}

% DGCN can be think of as a variant of APPNP with learnable weight and individual weight matrices for each layerwise function. 

% \begin{algorithm}[tb]
%   \caption{Construct $\mathbf{P}^{(\ell)}$}
%   \label{alg:sampling}
% \begin{algorithmic}
%   \STATE {\bfseries Input:} Neighbors of each node $\mathcal{N}(i)$, sampling size $d_s$, layer indices $\ell$
%   \STATE Initialize $\mathbf{P}^{(\ell)} = \mathbf{I}_N$ as identity matrix.
%   \FOR{$i=1$ {\bfseries to} $\ell$}
%   \STATE Sample subset of neighbors $\widetilde{\mathcal{N}}(i) \subset \mathcal{N}(i), \forall i\in\mathcal{V}$
%   \STATE Construct a sparser adjacency matrix $\widetilde{\mathbf{A}}$ by $\widetilde{A}_{i,j} = 1$ if $j\in\widetilde{\mathcal{N}}(i)$ and $\widetilde{A}_{i,j} = 0$ otherwise. Let $\widetilde{\mathbf{D}}$ as its associated degree matrix.
%   \STATE Update $\mathbf{P}^{(\ell)}$ by $\mathbf{P}^{(\ell)} \leftarrow \mathbf{P}^{(\ell)}  \widetilde{\mathbf{D}}^{-1/2} \widetilde{\mathbf{A}} \widetilde{\mathbf{D}}^{-1/2}$
%   \ENDFOR
%   \STATE {\bfseries Output:} $\mathbf{P}^{(\ell)}$
% \end{algorithmic}
% \end{algorithm}

\section{Experiments}\label{section:experiment}

\begin{figure*}[t]
    \centering
    \includegraphics[width=0.99\textwidth]{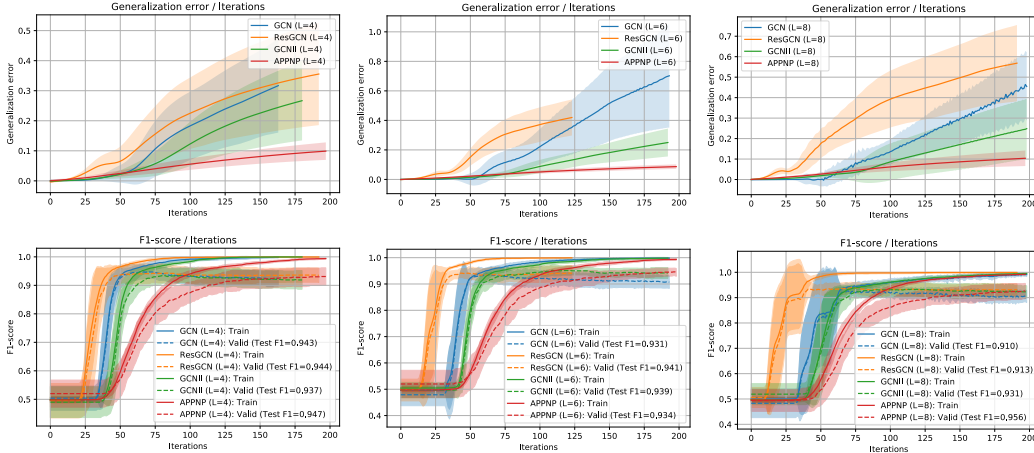}
    \vspace{-4mm}
    \caption{Comparison of generalization error  on synthetic dataset.
    The curve early stopped at the largest training accuracy iteration.  }
    \label{fig:synthetic_dataset}
\end{figure*}

\noindent\textbf{Synthetic dataset.}
We empirically compare the generalization error of different GCN structures on the  synthetic dataset. 
In particular, we create the synthetic dataset by contextual stochastic block model (CSBM)~\cite{deshpande2018contextual} with two equal-size classes. 
CSBM is a graph generation algorithm that adds Gaussian random vectors as node features on top of classical SBM.
CSBM allows for smooth control over the information ratio between node features and graph topology by a pair of hyper-parameter $(\mu,\lambda)$, where $\mu$ controls the diversity of the Gaussian distribution and $\lambda$ controls the number of edges between intra- and inter-class nodes. 
We generate random graphs with $1000$ nodes, average node degree as $5$, and each node has a Gaussian random vector of dimension $1000$ as node features. 
We chose $75\%$ nodes as training set, $15\%$ of nodes as validation set for hyper-parameter tuning, and the remaining nodes as testing set.
We conduct an experiment $20$ times by randomly selecting $(\mu,\lambda)$ such that both node feature and graph topology are equally informative. 

As shown in Figure~\ref{fig:synthetic_dataset}, we have the following order on the generalization gap given the same training iteration $T$: APPNP $\leq$ GCNII $\leq$ GCN $\leq$ ResGCN, which exactly match the theoretical result in Theorem~\ref{thm:generalization}. More specifically,
ResGCN has the largest generalization gap due to the skip-connections, APPNP has the smallest generalization gap by removing the weight matrices in each individual layer. 
GCNII achieves a good balance between GCN and APPNP by balancing the expressive and generalization power.
Finally, DGCN enjoys a small generalization error by using the decoupled GCN structure.

\noindent\textbf{Open graph benchmark dataset.}
As pointed out by Hu et al.~\cite{hu2020open}, the traditional commonly-used graph datasets are unable to provide a reliable evaluation due to various factors including dataset size, leakage of node features and no consensus on data splitting. 
To truly evaluate the expressive and the generalization power of existing methods, we evaluate on the open graph benchmark (OGB) dataset.
Experiment setups are based on the default setting for GCN implementation on the leaderboard. 
We choose the hidden dimension as $128$, learning rate as $0.01$, dropout ratio as $0.5$ for \textit{Arxiv} dataset, and no dropout for \textit{Products} and \textit{Protein} datasets. 
We train $300/1000/500$ epochs for \textit{Products}, \textit{Proteins}, and \textit{Arxiv} dataset respectively. 
Due to limited GPU memory, the number of layers is selected as the one with the best performance between $2$ to $16$ layers for \textit{Arxiv} dataset, $2$ to $8$ layers for \textit{Protein} dataset, and $2$ to $4$ for \textit{Products} dataset. 
We choose $\alpha_\ell$ from $\{0.9, 0.8, 0.5\}$ for APPNP and GCNII, and use $\beta_\ell = 0.5/\ell$ for GCNII, and select the setup with the best validation result for comparison.
% \begin{wraptable}{r}{0.5\textwidth} 
% \caption{Comparison of F1-score on OGB dataset.} \label{table:OBG result}
% \scalebox{0.8}{
% \begin{tabular}{llll}
% \toprule
% \textbf{$\%$}                & \textbf{Products}    & \textbf{Proteins}            & \textbf{Arvix}    \\ \midrule
% \textbf{{GCN}}    & $75.39 \pm 0.21$     & $71.66 \pm 0.48$             & $71.56 \pm 0.19$  \\ \midrule
% \textbf{{ResGCN}} & $75.53 \pm 0.12$     & $74.50 \pm 0.41$             & $72.56 \pm 0.31$  \\ \midrule
% \textbf{{APPNP}}  & $66.35 \pm 0.10$     & $71.78 \pm 0.29$             & $68.02 \pm 0.55$  \\ \midrule
% \textbf{{GCNII}}  & $71.93 \pm 0.35^\dagger$     & $75.60 \pm 0.47$             & $72.57 \pm 0.23^\ddagger$  \\ \midrule
% \textbf{{DGCN}}   & $76.09 \pm 0.29$     & $75.45 \pm 0.24$             & $72.63 \pm 0.12$  \\ \bottomrule
% \end{tabular}}
% \end{wraptable} 

As shown in Table~\ref{table:OBG result}, DGCN achieves a compatible performance to GCNII\footnote{$\dagger$ GCNII underfits with the default hyper-parameters. $\ddagger$ GCNII achieves $72.74 \pm 0.16 $ by using hidden dimension as $256$ and a different design of graph convolution layer. Refer  \hyperlink{https://github.com/chennnM/GCNII/blob/ca91f5686c4cd09cc1c6f98431a5d5b7e36acc92/PyG/ogbn-arxiv/layer.py}{here} for details.} without the need of manually tuning the hyper-parameters for all settings, and it significantly outperform APPNP and ResGCN.
Due to the space limit, the detailed setups and more results can be found in Appendix~\ref{supp:more_empirical}.
Notice that generalization bounds are more valuable when comparing two models with same training accuracy (therefore we first show in Section~\ref{section:limitation_over_smoothing} that deeper model can also achieve low training error before our discussion on generalization in Section~\ref{section:from_stability}). 
\begin{wraptable}[10]{r}{0.56\textwidth} 
\vspace{-12pt}
\caption{Comparison of F1-score on OGB dataset.} \label{table:OBG result}
\scalebox{0.85}{
\begin{tabular}{llll}
\toprule
\textbf{$\%$}                & \textbf{Products}    & \textbf{Proteins}            & \textbf{Arvix}    \\ \midrule
\textbf{{GCN}}    & $75.39 \pm 0.21$     & $71.66 \pm 0.48$             & $71.56 \pm 0.19$  \\ \midrule
\textbf{{ResGCN}} & $75.53 \pm 0.12$     & $74.50 \pm 0.41$             & $72.56 \pm 0.31$  \\ \midrule
\textbf{{APPNP}}  & $66.35 \pm 0.10$     & $71.78 \pm 0.29$             & $68.02 \pm 0.55$  \\ \midrule
\textbf{{GCNII}}  & $71.93 \pm 0.35^\dagger$     & $75.60 \pm 0.47$             & $72.57 \pm 0.23^\ddagger$  \\ \midrule
\textbf{{DGCN}}   & $76.09 \pm 0.29$     & $75.45 \pm 0.24$             & $72.63 \pm 0.12$  \\ \bottomrule
\end{tabular}}
\end{wraptable} 
In Table 2, because ResGCN has no restriction on the weight matrices, it can achieve lower training error and its test performance is mainly restricted by its generalization error. 
However, because GCNII and APPNP have restrictions on the weight matrices, their performance is mainly restricted by their training error. 
A model with small generalization error and no restriction on the weight (e.g., DGCN) is preferred as it has higher potential to reach a better test accuracy by reducing its training error.

% \begin{table}[t]
%     \centering
%     \caption{Comparison of F1-score on OGB dataset.} \label{table:OBG result}
%     \scalebox{0.8}{
%     \begin{tabular}{llll}
%     \toprule
%     \textbf{$\%$}                & \textbf{Products}    & \textbf{Proteins}            & \textbf{Arxiv}    \\ \midrule
%     \textbf{{GCN}}    & $75.39 \pm 0.21$     & $71.66 \pm 0.48$             & $71.56 \pm 0.19$  \\ \midrule
%     \textbf{{ResGCN}} & $75.53 \pm 0.12$     & $74.50 \pm 0.41$             & $72.56 \pm 0.31$  \\ \midrule
%     \textbf{{APPNP}}  & $66.35 \pm 0.10$     & $71.78 \pm 0.29$             & $68.02 \pm 0.55$  \\ \midrule
%     \textbf{{GCNII}}  & $71.93 \pm 0.35^\dagger$     & $75.60 \pm 0.47$             & $72.57 \pm 0.23^\ddagger$  \\ \midrule
%     \textbf{{DGCN}}   & $76.09 \pm 0.29$     & $75.45 \pm 0.24$             & $72.63 \pm 0.12$  \\ \bottomrule
%     \end{tabular}}
%     \vspace{-2mm}
% \end{table}

\section{Conclusion}
In this work, we show that there exists a discrepancy between over-smoothing based theoretical results and the practical behavior of deep GCNs. Our theoretical result shows that a deeper GCN can be as expressive as a shallow GCN, if it is properly trained. 
To truly understand the performance decay issue of deep GCNs, we provide the first transductive uniform stability-based generalization analysis of GCNs and other GCN structures. 
To improve the optimization issue and benefit from depth, we propose DGCN that enjoys a provable high expressive power and generalization power.
We conduct empirical evaluations on various synthetic and real-world datasets to validate the correctness of our theory and advantages over the baselines.

\section*{Acknowledgements}
This work was supported in part by NSF grant 2008398.

%%%%%%%%%%%%%%%%%%%%%%%%%%%%%%%%%%%%%%%%%%%%%%%%%%%%%%%%%%%%
\clearpage
\bibliographystyle{plain}
\bibliography{reference}

\clearpage

%%%%%%%%%%%%%%%%%%%%%%%%%%%%%%%%%%%%%%%%%%%%%%%%%%%%%%%%%%%%

\appendix

\clearpage
\makeappendixtitle
\noindent\textbf{Organization.}
In Section~\ref{supp:more_result_on_linear_model}, we provide additional empirical evaluations on the change of pairwise distances for node embeddings as the number of layers increases.
In Section~\ref{supp:weakness_oversmoothing}, we summarize the existing theoretical results on over-smoothing and provide empirical validation on whether over-smoothing happens in practice.
In Sections~\ref{supp:omitted_proof_section_3} and~\ref{supp:linear_convergence_rate}, we provide the proof of Theorem~\ref{theorem:expressive_power_exp} (expressive power) and Theorem~\ref{thm:deep_wide_local_global_minima2} (convergence to global optimal and characterization of number of iterations), respectively.
In Section~\ref{supp:more_empirical}, we provide more empirical results on the effectiveness of the proposed algorithm.
In Sections~\ref{supp:proof_gcn},~\ref{supp:proof_resgcn},~\ref{supp:proof_appnp},~\ref{supp:proof_gcnii}, and~\ref{supp:proof_dgcn} we provide the generalization analysis of GCN, ResGCN, APPNP, GCNII, and DGCN, respectively. Code can be found at the following repository: 
\begin{center}
\url{https://github.com/CongWeilin/DGCN}.
\end{center}

\section{Empirical results on the pairwise distance of node embeddings}\label{supp:more_result_on_linear_model}

In this section, we provide additional empirical evaluations on the change of pairwise distance for node embeddings. 
First we introduce a concrete definition of the intra- and inter-class pairwise distances, then provide the experimental setups, and finally illustrate the results.
% The source code for all of our evaluation studies are available at \url{https://github.com/CongWeilin/DGCN}.

\noindent\textbf{Definition of pairwise distance.}
To verify whether over-smoothing exists in GCNs, we define the intra-class pairwise distance of $\mathbf{H}^{(\ell)}$ as the average pairwise Euclidean distance of two-node embeddings if they have the same ground truth label. 
Similarly, we define the inter-class pairwise distance of $\mathbf{H}^{(\ell)}$ as the average pairwise distance of two node embeddings if they have different ground truth labels. 
\begin{equation*}
    \begin{aligned}
    \text{intra}(\mathbf{H}^{(\ell)}) &\triangleq \frac{\sum_{i,j=1}^N \mathbf{1}{\{y_i = y_j\}} \cdot \| \bm{h}_i^{(\ell)} - \bm{h}_j^{(\ell)} \|_2^2}{ \|\mathbf{H}^{(\ell)}\|_F \cdot \sum_{i,j=1}^N \mathbf{1}{\{y_i = y_j\}}}, \\
    \text{inter}(\mathbf{H}^{(\ell)}) &\triangleq \frac{\sum_{i,j=1}^N \mathbf{1}{\{y_i \neq y_j\}} \cdot \| \bm{h}_i^{(\ell)} - \bm{h}_j^{(\ell)} \|_2^2}{ \|\mathbf{H}^{(\ell)}\|_F \cdot \sum_{i,j=1}^N \mathbf{1}{\{y_i \neq y_j\}}}.
    \end{aligned}
\end{equation*}
Both intra- and inter-class distances are normalized using the Frobenius norm to eliminate the difference caused by the scale of the node embedding matrix. 
By the definition of over-smoothing in~\cite{li2018deeper}, the distance between intra- and inter-class pairwise distance should be decreasing sharply as the number of GCN layers increases.

\noindent\textbf{Setup.}
In Figure~\ref{fig:cora_pairwise_dist_gcn_sgc_appnp} and Figure~\ref{fig:citeseer_pairwise_dist_gcn_sgc_appnp}, we plot the training error in a $10$-layer GCN~\cite{kipf2016semi}, SGC~\cite{wu2019simplifying}, and APPNP~\cite{klicpera2018predict} models until convergence, and choose the model with the best validation score for pairwise distance computation.
The pairwise distance at the $\ell$th layer is computed by the node embedding matrix $\mathbf{H}^{(\ell)}$ generated by the selected model.
We repeat each experiment $10$ times and plot the mean and standard deviation of the results. 

\noindent\textbf{Results.}
As shown in Figure~\ref{fig:cora_pairwise_dist_gcn_sgc_appnp} and Figure~\ref{fig:citeseer_pairwise_dist_gcn_sgc_appnp}, when ignoring the weight matrices, i.e., sub-figures in box (b), the intra- and inter-class pairwise distance are decreasing but the gap between intra- and inter-class pairwise distance is increasing as the model goes deeper. 
That is, the difference between intra- and inter-class nodes' embeddings is increasing and becoming more discriminative. 
On the other hand, when considering the weight matrices, i.e., sub-figures in box (a), the difference between intra- and inter-class node embeddings is large and increasing as the model goes deeper.
In other words, weight matrices learn to make node embeddings discriminative and generate expressive node embeddings during training.

\begin{figure}[h]
    \centering
    \includegraphics[width=1.0\textwidth]{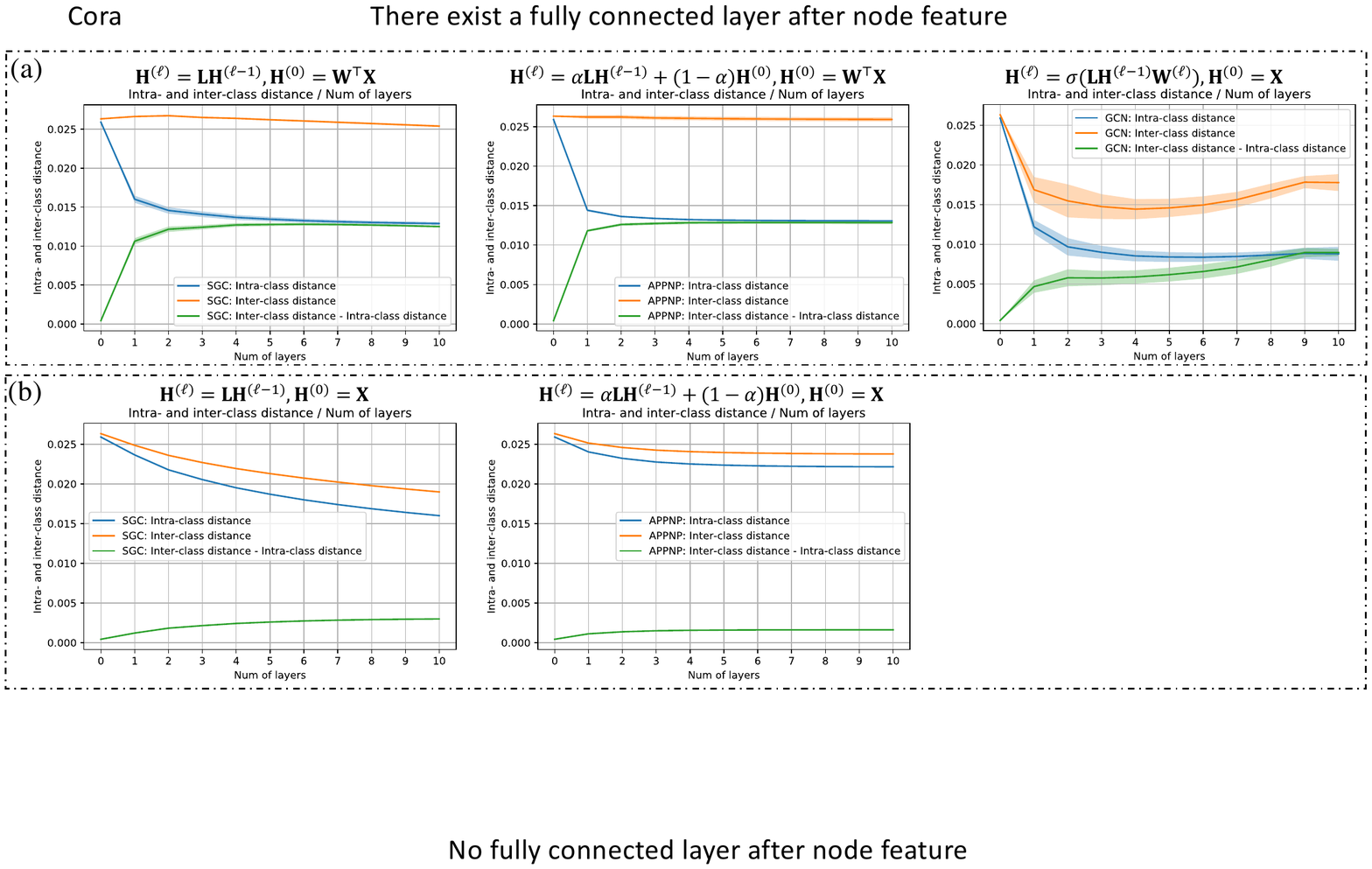}
    \caption{Comparison of the pairwise distance for intra- and inter-class node embeddings on \emph{Cora} dataset for different models by increasing the number of layers in the proposed architecture.}
    \label{fig:cora_pairwise_dist_gcn_sgc_appnp}
\end{figure}

\begin{figure}[h]
    \centering
    \includegraphics[width=1.0\textwidth]{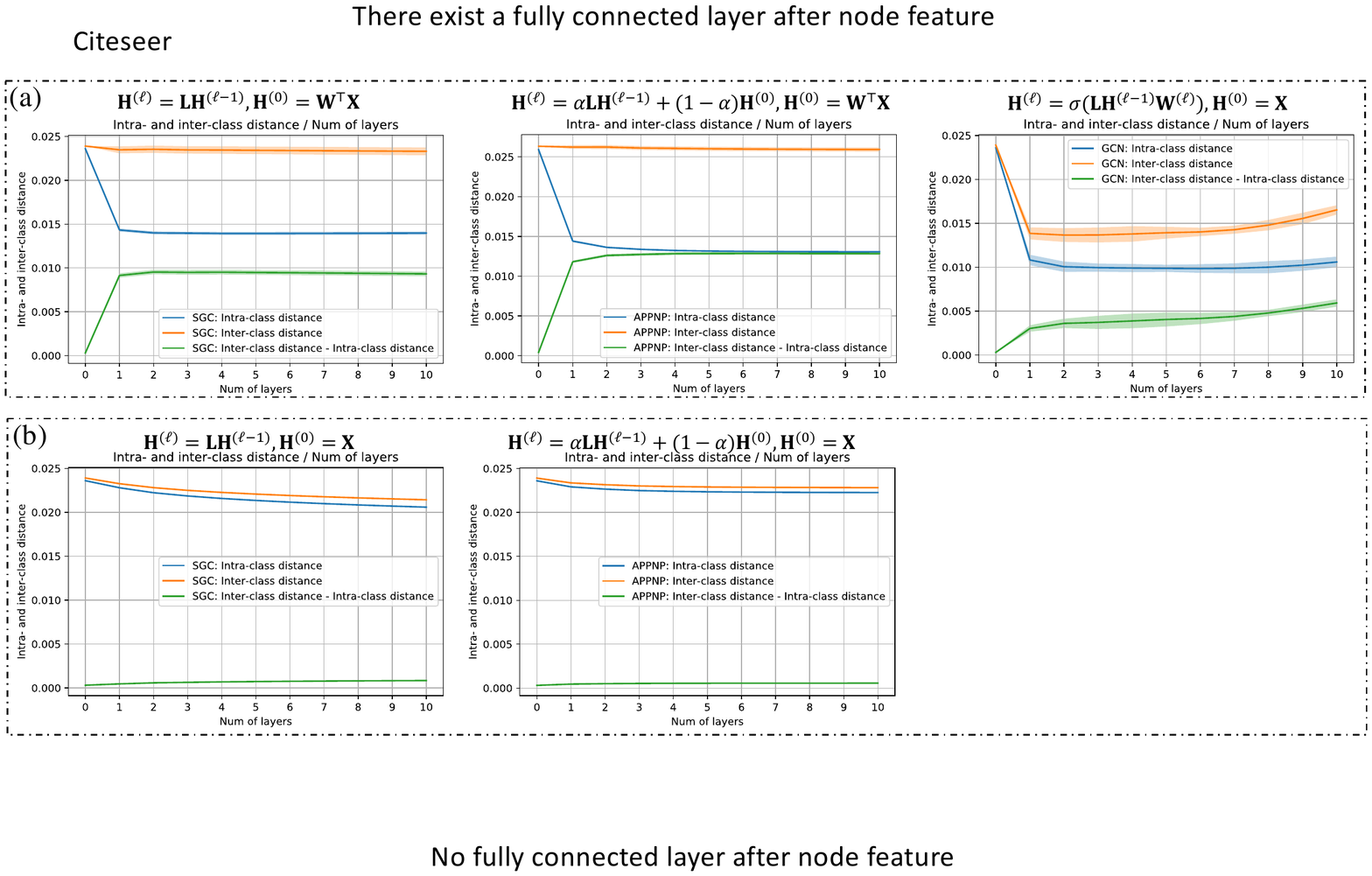}
    \caption{Comparison of the pairwise distance for intra- and inter-class node embeddings on \emph{Citeseer} dataset for different models by increasing the number of layers in the proposed architecture.}
    \label{fig:citeseer_pairwise_dist_gcn_sgc_appnp}
\end{figure}

\section{A summary of theoretical results on over-smoothing}\label{supp:weakness_oversmoothing}

% \textcolor{red}{The following is from~\cite{huang2020tackling}}.

In this section, we survey the existing theories on understanding over-smoothing in GCNs from~\cite{huang2020tackling,oono2019graph,cai2020note} and illustrate why the underlying assumptions may not hold in practice. 
Finally, we  discuss conditions where over-smoothing and over-fitting might  happen simultaneously.

Before processing, let first recall the notation we defined in Section~\ref{section:preliminaries}.
Recall that  $\mathbf{A} \in \mathbb{R}^{N\times N}$ denotes  the adjacency matrix of the self-connected graph,  i.e.,  $A_{i,j} = 1$ if edge $(i,j)\in\mathcal{E}$ and $A_{i,j} = 0$ otherwise,  $\mathbf{D} \in \mathbb{R}^{N\times N}$ denotes the corresponding degree matrix, i.e., $D_{i,i} = \deg(i)$ and $D_{i,j}=0$ if $i\neq j$, and the symmetric Laplacian matrix is computed as $\mathbf{L} = \mathbf{D}^{-1/2} \mathbf{A} \mathbf{D}^{-1/2}$.

\subsection{Expressive power based analysis~\cite{oono2020optimization,huang2020tackling}}

\begin{proposition} [Proposition~1 in \cite{oono2019graph}, Theorem~1 of~\cite{huang2020tackling}]
    Let $\lambda_1 \geq \ldots \geq \lambda_N$ denote the eigenvalues of the Laplacian matrix $\mathbf{L}$ in descending order, and let $\mathbf{e}_i$ denote the eigenvector associated with the eigenvalue $\lambda_i$. 
    Suppose graph has $M\leq N$ connected components, then we have $\lambda_1 = \ldots = \lambda_M = 1$ and $1 > \lambda_{M+1} > \ldots > \lambda_N$. 
    Let $\mathbf{E} = \{ \mathbf{e}_i \}_{i=1}^M \in \mathbb{R}^{M\times N}$ denote the stack of eigenvectors that associated to eigenvalues $\lambda_1,\ldots,\lambda_M$. Then for all $i\in[M]$, the eigenvector $\mathbf{e}_i\in \mathbb{R}^N$ is defined as
    \begin{equation}
        \mathbf{e}_i = \{e_i(j)\}_{j=1}^N,~
        e_i(j) \propto \begin{cases}
        \sqrt{\deg(j)} & \text{ if $j$th node is in the $i$th connected component} \\ 
        0 & \text{ otherwise } 
        \end{cases}.
    \end{equation}
\end{proposition}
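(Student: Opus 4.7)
The plan is to exploit the block structure that connected components induce and then apply Perron--Frobenius to each block. First I would observe that, after a suitable permutation of nodes, the adjacency matrix $\mathbf{A}$ (and hence $\mathbf{D}$ and $\mathbf{L}$) is block diagonal with $M$ blocks $\mathbf{A}^{(1)},\ldots,\mathbf{A}^{(M)}$, one for each connected component. The spectrum of $\mathbf{L}$ is therefore the union of the spectra of the $M$ normalized blocks $\mathbf{L}^{(i)} = (\mathbf{D}^{(i)})^{-1/2} \mathbf{A}^{(i)} (\mathbf{D}^{(i)})^{-1/2}$, and any eigenvector of a block, padded by zeros, gives an eigenvector of $\mathbf{L}$. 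So it suffices to analyze a single connected (self-looped) block and then aggregate.

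For a single connected block, I would next exhibit the candidate eigenvector. A direct computation shows that the random-walk matrix $\mathbf{D}^{-1}\mathbf{A}$ is row stochastic, so $(\mathbf{D}^{-1}\mathbf{A})\mathbf{1} = \mathbf{1}$. Using the similarity $\mathbf{D}^{-1}\mathbf{A} = \mathbf{D}^{-1/2}\,\mathbf{L}\,\mathbf{D}^{1/2}$, this immediately translates into $\mathbf{L}(\mathbf{D}^{1/2}\mathbf{1}) = \mathbf{D}^{1/2}\mathbf{1}$, whose $j$th entry is $\sqrt{\deg(j)}$. Normalizing gives exactly the vectors $\mathbf{e}_i$ in the statement, and because distinct components have disjoint supports the $\{\mathbf{e}_i\}_{i=1}^M$ are automatically orthonormal.

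The main content is showing that $1$ is a \emph{simple} eigenvalue on each block and that all other eigenvalues of a block are strictly less than $1$; once this is proved, summing multiplicities across blocks gives $\lambda_1=\cdots=\lambda_M=1$ and $\lambda_{M+1}<1$. For simplicity, I would transfer the question back to $\mathbf{D}^{-1}\mathbf{A}$ (which has the same spectrum) and invoke Perron--Frobenius: within a connected component, $\mathbf{A}$ is an irreducible nonnegative matrix, and the self-loops guarantee that every diagonal entry is positive, hence $\mathbf{A}$ (and $\mathbf{D}^{-1}\mathbf{A}$) is primitive. Perron--Frobenius for primitive matrices then yields that the spectral radius is a simple eigenvalue and every other eigenvalue is strictly smaller in modulus. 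Since $\mathbf{D}^{-1}\mathbf{A}$ is row stochastic its spectral radius equals $1$, so the eigenvalue $1$ has multiplicity exactly one per block and all other block eigenvalues satisfy $|\lambda|<1$, in particular $\lambda<1$.

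The only real obstacle is the strict inequality: without the self-loop assumption, a bipartite connected component would produce an eigenvalue $-1$, and while this does not violate $\lambda_{M+1}<1$ it would require a slightly different argument than invoking primitivity. The self-connectedness hypothesis in the paper's setup is precisely what makes $\mathbf{A}$ primitive on each component and lets Perron--Frobenius deliver strict spectral-gap-below-one in one shot; I would make sure to highlight this use of the self-loop assumption when writing up the argument.
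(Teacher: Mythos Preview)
Your argument is correct: the block-diagonal reduction to connected components, the verification that $\mathbf{D}^{1/2}\mathbf{1}$ is an eigenvector for eigenvalue $1$ via the similarity with the random-walk matrix, and the appeal to Perron--Frobenius for primitive matrices (using the self-loops to guarantee positive diagonal, hence aperiodicity) together yield exactly the claimed multiplicity and strict gap. You also correctly flag that the self-loop hypothesis is doing real work.

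The paper, however, does not give its own proof of this proposition at all: it simply writes ``The proof can be found in Section~B of~\cite{oono2019graph}.'' So there is no in-paper argument to compare against. Your write-up is essentially the standard spectral-graph-theory proof that the cited reference would contain, and it is more detailed than what the present paper offers. One minor caveat: the statement as written in the paper asserts strict inequalities $\lambda_{M+1} > \cdots > \lambda_N$ among \emph{all} remaining eigenvalues, which is not true in general (repeated eigenvalues below $1$ are possible); your proof correctly establishes only $\lambda_{M+1} < 1$ and $|\lambda_k| < 1$ for $k > M$, which is all that is actually used downstream.
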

\begin{proof}
The proof can be found in Section~B of~\cite{oono2019graph}.
\end{proof}

\cite{oono2019graph} proposes to measure the expressive power of node embeddings using its distance to a subspace $\mathcal{M}$ that only has node degree information, i.e., regardless of the node feature information.

\begin{definition} [Subspace and distance to subspace] 
We define subspace $\mathcal{M}$ as
\begin{equation}
    \mathcal{M} := \{ \mathbf{E}^\top \mathbf{R}\in \mathbb{R}^{N\times d}~|~\mathbf{R} \in \mathbb{R}^{M \times d} \},
\end{equation}
where $\mathbf{E} = \{ \mathbf{e}_i \}_{i=1}^M \in \mathbb{R}^{M\times N}$ is the orthogonal basis of Laplacian matrix $\mathbf{L}$, and $\mathbf{R}$ is any random matrix. 
The distance of a node embedding matrix $\mathbf{H}^{(\ell)}\in\mathbb{R}^{N\times d}$ to the subspace can be computed as $d_{\mathcal{M}}(\mathbf{H}^{(\ell)}) = \inf_{\mathbf{Y} \in \mathcal{M}} \| \mathbf{H}^{(\ell)} - \mathbf{Y}\|_{\mathrm{F}}$, where $\|\cdot\|_{\mathrm{F}}$ denotes the Frobenius norm of the matrix.
\end{definition}

Subspace $\mathcal{M}$ is a $d$-dimensional subspace defined by orthogonal vectors $\{ \mathbf{e}_i \}_{i=1}^M$ that only captures node degree information.
Intuitively, a smaller distance $d_{\mathcal{M}}(\mathbf{H}^{(\ell)})$ means that $\mathbf{H}^{(\ell)}$ is losing its feature information but only carries the degree information. 

In the following, we summarize the theoretical results of over-smoothing in~\cite{huang2020tackling} and extend their results to SGC and GCNII, which will be used in our discussion below.   
\begin{theorem} [Theorem 2 of~\cite{huang2020tackling}] \label{thm:oversmoothing_thm_1}
Let $\lambda=\max_{i\in[M+1, N]} |\lambda_i|$ denote the largest absolute eigenvalue of Laplacian matrix $\mathbf{L}$ which is bounded by $1$, let $s$ denote the largest singular-value of weight parameter $\mathbf{W}^{(\ell)},~\ell\in[L]$, then we have
\begin{equation}
    d_{\mathcal{M}}(\mathbf{H}^{(\ell)}) - \varepsilon \leq \gamma \Big( d_{\mathcal{M}}(\mathbf{H}^{(\ell-1)}) - \varepsilon \Big),
\end{equation}
where $\gamma$ and $\varepsilon$ are functions of $\lambda$ and $s$ that depend on the model structure.
\end{theorem}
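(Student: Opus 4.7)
The plan is to show that each building block of a GCN layer, namely left-multiplication by the Laplacian $\mathbf{L}$, right-multiplication by the weight matrix $\mathbf{W}^{(\ell)}$, and the ReLU activation $\sigma(\cdot)$, contracts or at worst preserves the distance $d_{\mathcal{M}}(\cdot)$ up to a controllable scalar. Combining these three estimates through the recursion $\mathbf{H}^{(\ell)} = \sigma(\mathbf{L}\mathbf{H}^{(\ell-1)}\mathbf{W}^{(\ell)})$ will give a single-step bound of the required form.

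The first and most transparent step is to exploit the spectral decomposition. Because $\mathcal{M}$ is exactly the span (along the row index) of the top eigenvectors $\mathbf{e}_1,\ldots,\mathbf{e}_M$ of $\mathbf{L}$ whose eigenvalues equal $1$, the orthogonal projector $P_{\mathcal{M}}$ onto $\mathcal{M}$ satisfies $\mathbf{L} P_{\mathcal{M}} = P_{\mathcal{M}}$ and $\|\mathbf{L}(I - P_{\mathcal{M}})\|_{2} = \lambda$. Since $d_{\mathcal{M}}(\mathbf{H}) = \|(I-P_{\mathcal{M}})\mathbf{H}\|_{\mathrm{F}}$ and $P_{\mathcal{M}}$ commutes with $\mathbf{L}$, I immediately get $d_{\mathcal{M}}(\mathbf{L}\mathbf{H}) \leq \lambda\, d_{\mathcal{M}}(\mathbf{H})$. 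For the weight matrix, the subspace $\mathcal{M}$ acts on the row index while $\mathbf{W}^{(\ell)}$ acts on the column index, so $P_{\mathcal{M}}$ and right-multiplication by $\mathbf{W}^{(\ell)}$ commute, giving $d_{\mathcal{M}}(\mathbf{H}\mathbf{W}^{(\ell)}) \leq s\, d_{\mathcal{M}}(\mathbf{H})$ by the definition of the spectral norm.

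The delicate step is the non-linearity. The key observation is that each $\mathbf{e}_i$ has non-negative entries (proportional to $\sqrt{\deg(\cdot)}$ on one connected component), so any non-negative element of $\mathcal{M}$ is fixed by $\sigma$, i.e.\ $\sigma$ leaves an appropriate subset of $\mathcal{M}$ invariant. Pick $\mathbf{Y}^\star \in \mathcal{M}$ to be the closest point in $\mathcal{M}$ to $\mathbf{Z} = \mathbf{L}\mathbf{H}^{(\ell-1)}\mathbf{W}^{(\ell)}$; using the fact that $\mathbf{Y}^\star$ can be taken in the non-negative cone of $\mathcal{M}$ (otherwise replace it by its $\sigma$-image, which cannot increase the distance to $\mathbf{Z}$ since $\sigma$ is $1$-Lipschitz and $\sigma(\mathbf{Z})$ is non-negative), I obtain $d_{\mathcal{M}}(\sigma(\mathbf{Z})) \leq \|\sigma(\mathbf{Z}) - \sigma(\mathbf{Y}^\star)\|_{\mathrm{F}} \leq \|\mathbf{Z} - \mathbf{Y}^\star\|_{\mathrm{F}} = d_{\mathcal{M}}(\mathbf{Z})$. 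Chaining these three estimates yields the single-step contraction with $\gamma = \lambda s$ and $\varepsilon = 0$ in the vanilla-GCN case.

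The main obstacle is handling model structures that introduce additive terms, such as the residual branch in ResGCN ($+\,\mathbf{H}^{(\ell-1)}$), the input skip of APPNP ($+\,(1-\alpha)\mathbf{H}^{(0)}$), or the identity blend in GCNII. These terms do not vanish under $I - P_{\mathcal{M}}$ and so cannot be pushed inside a pure contraction; instead, a triangle inequality of the form $d_{\mathcal{M}}(\mathbf{A} + \mathbf{B}) \leq d_{\mathcal{M}}(\mathbf{A}) + \|(I-P_{\mathcal{M}})\mathbf{B}\|_{\mathrm{F}}$ has to be applied, together with a uniform norm bound on the skip term (using Assumption~\ref{assumption:norm_bound} on $\mathbf{X}$ and the $B_w$ bound on prior weight matrices). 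This produces a residual offset, which after rearranging the recursion into the form $d_{\mathcal{M}}(\mathbf{H}^{(\ell)}) - \varepsilon \leq \gamma(d_{\mathcal{M}}(\mathbf{H}^{(\ell-1)}) - \varepsilon)$ pins down the model-specific $\varepsilon$ as the fixed point of the affine recursion and the contraction $\gamma$ as a product of $\lambda$, $s$, and the structural coefficients $\alpha_\ell, \beta_\ell$.
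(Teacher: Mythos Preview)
The paper does not actually prove this theorem: it is quoted verbatim as Theorem~2 of \cite{huang2020tackling}, and the paper only records the model-specific values of $\gamma$ and $\varepsilon$ for GCN, ResGCN, APPNP, GCNII, and SGC without derivation. Your outline therefore reproduces the standard Oono--Suzuki / Huang et al.\ argument rather than anything original to this paper, and at that level it is the right approach: contraction under $\mathbf{L}$ via the spectral gap, contraction under $\mathbf{W}^{(\ell)}$ via the operator norm, non-expansiveness under $\sigma$, and then an affine recursion for the architectures with skip terms.

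There is one genuine imprecision in your ReLU step. You claim the minimizer $\mathbf{Y}^\star$ of $\|\mathbf{Z}-\mathbf{Y}\|_{\mathrm{F}}$ over $\mathcal{M}$ ``can be taken in the non-negative cone of $\mathcal{M}$ (otherwise replace it by its $\sigma$-image, which cannot increase the distance to $\mathbf{Z}$).'' This is circular: $\mathbf{Z}$ need not be non-negative, so $\sigma(\mathbf{Y}^\star)$ can be strictly farther from $\mathbf{Z}$ than $\mathbf{Y}^\star$, and then the last equality $\|\mathbf{Z}-\mathbf{Y}^\star\|_{\mathrm{F}}=d_{\mathcal{M}}(\mathbf{Z})$ in your chain would fail. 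The correct (and simpler) justification is the one you already have the ingredients for: because the $\mathbf{e}_i$ have non-negative entries and pairwise disjoint supports, every $\mathbf{Y}=\mathbf{E}^\top\mathbf{R}\in\mathcal{M}$ satisfies $\sigma(\mathbf{Y})=\mathbf{E}^\top\sigma(\mathbf{R})\in\mathcal{M}$, i.e.\ $\sigma(\mathcal{M})\subset\mathcal{M}$. With this, $\sigma(\mathbf{Y}^\star)\in\mathcal{M}$ automatically and your chain $d_{\mathcal{M}}(\sigma(\mathbf{Z}))\le\|\sigma(\mathbf{Z})-\sigma(\mathbf{Y}^\star)\|_{\mathrm{F}}\le\|\mathbf{Z}-\mathbf{Y}^\star\|_{\mathrm{F}}=d_{\mathcal{M}}(\mathbf{Z})$ goes through cleanly.

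One smaller point on the skip-connection case: you propose bounding the additive term via Assumption~\ref{assumption:norm_bound} (norm bounds on $\mathbf{X}$ and $\mathbf{W}$), but the paper's stated $\varepsilon$ values are expressed directly through $d_{\mathcal{M}}(\mathbf{H}^{(0)})$ or $d_{\mathcal{M}}(\mathbf{B}^{(\ell)})$, i.e.\ through $\|(I-P_{\mathcal{M}})\mathbf{B}\|_{\mathrm{F}}$ in your own notation. No norm assumption is needed here; the triangle inequality you wrote already gives exactly the affine recursion $d_{\mathcal{M}}(\mathbf{H}^{(\ell)})\le\gamma\,d_{\mathcal{M}}(\mathbf{H}^{(\ell-1)})+c$ with $c$ the $d_{\mathcal{M}}$-distance of the skip term, and $\varepsilon=c/(1-\gamma)$ is its fixed point.
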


% \begin{proof}
% The proof can be found in the Section~A of~\cite{huang2020tackling}.
% \end{proof}

\noindent\textbf{GCN.} For vanilla \emph{GCN} model with the graph convolutional operation defined as
\begin{equation}
    \mathbf{H}^{(\ell)} = \sigma(\mathbf{L} \mathbf{H}^{(\ell-1)} \mathbf{W}^{(\ell)}),~
    \mathbf{H}^{(0)} = \mathbf{X},
\end{equation}
we have $\gamma_\mathsf{GCN} = \lambda s $ and $\epsilon_\mathsf{GCN}=0$. 

Under the assumption that $\gamma_\mathsf{GCN} \leq 1$, i.e., the graph is densely connected and the largest singular value of weight matrices is small, we have $d_{\mathcal{M}}(\mathbf{H}^{(\ell+1)}) \leq \gamma_\mathsf{GCN} \cdot d_{\mathcal{M}}(\mathbf{H}^{(\ell)}) $. 
In other words, the vanilla \emph{GCN} model is losing expressive power as the number of layers increases.

However, if we suppose there exist a trainable bias parameter $\mathbf{B}^{(\ell)}=\{\mathbf{b}^{(\ell)} \}_{i=1}^N \in \mathbb{R}^{N\times d_\ell}$ in each graph convolutional layer
\begin{equation}
    \mathbf{H}^{(\ell)} = \sigma(\mathbf{L} \mathbf{H}^{(\ell-1)} \mathbf{W}^{(\ell)} + \mathbf{B}^{(\ell)}),~
    \mathbf{H}^{(0)} = \mathbf{X},
\end{equation}
we have $\gamma_\mathsf{GCN} = \lambda s$ and $\epsilon_\mathsf{GCN}=d_{\mathcal{M}}(\mathbf{B}^{(\ell)})$. 
When $d_{\mathcal{M}}(\mathbf{B}^{(\ell)})$ is considerably large, there is no guarantee that \emph{GCN-bias} suffers from losing expressive power issue. 

\noindent\textbf{ResGCN.} For GCN with residual connection, the graph convolution operation is defined as
\begin{equation}
    \mathbf{H}^{(\ell)} = \sigma(\mathbf{L} \mathbf{H}^{(\ell)} \mathbf{W}^{(\ell)}) + \mathbf{H}^{(\ell-1)},~
    \mathbf{H}^{(0)} = \mathbf{X} \mathbf{W}^{(0)},
\end{equation}
where we have $\gamma_\mathsf{ResGCN} = 1 + \lambda s$ and $\epsilon_\mathsf{ResGCN}=0$. Since $\gamma_\mathsf{ResGCN} \geq 1$, there is no guarantee that \emph{ResGCN} suffers from losing expressive power. 

\noindent\textbf{APPNP.} For \emph{APPNP} with graph convolution operation defined as
\begin{equation}
    \mathbf{H}^{(\ell)} = \alpha \mathbf{L} \mathbf{H}^{(\ell)} + (1-\alpha) \mathbf{H}^{(0)},~
    \mathbf{H}^{(0)} = \mathbf{X} \mathbf{W}^{(0)},
\end{equation}
we have $\gamma_\mathsf{APPNP} = \alpha \lambda$ and $\epsilon_\mathsf{APPNP}=\frac{(1-\alpha) d_{\mathcal{M}}(\mathbf{H}^{(0)})}{1-\gamma_\mathsf{APPNP}}$. 
Although $\gamma_\mathsf{APPNP} < 1$, because $\epsilon_\mathsf{APPNP}$ can be large, there is no guarantee that \emph{APPNP} suffers from losing expressive power. 

\noindent\textbf{GCNII.}
For \emph{GCNII} with graph convolution operation defined as
\begin{equation}
    \mathbf{H}^{(\ell)} = \sigma\Big( \big( \alpha\mathbf{L} \mathbf{H}^{(\ell)} + (1-\alpha) \mathbf{H}^{(0)} \big) \big( \beta \mathbf{W}^{(\ell)} + (1-\beta) \mathbf{I}_N \big)\Big),~
    \mathbf{H}^{(0)} = \mathbf{X} \mathbf{W}^{(0)},
\end{equation}
we have $\gamma_\mathsf{GCNII} = \Big(1-(1-\beta)(1- s) \Big) \alpha \lambda$, and $\epsilon_\mathsf{GCNII} =\frac{(1-\alpha) d_\mathcal{M}(\mathbf{H}^{(0)})}{1-\gamma_\mathsf{GCNII}}$. Although we have $\gamma_\mathsf{GCNII} < 1$, because $\epsilon_\mathsf{GCNII}$ can be considerably large, there is no guarantee that \emph{GCNII} suffers from losing expressive power.

\noindent\textbf{SGC.}
The result can be also extended to the linear model \emph{SGC}~\cite{wu2019simplifying}, where the graph convolution is defined as
\begin{equation}
    \mathbf{H}^{(\ell)} = \mathbf{L} \mathbf{H}^{(\ell-1)},~
    \mathbf{H}^{(0)} = \mathbf{X} \mathbf{W}^{(0)},
\end{equation}
we have $\gamma_\mathsf{SGC} = \lambda$, and $\epsilon_\mathsf{SGC} =0$, which guarantees losing expressive power as the number of layers increases. 

\noindent\textbf{Discussion on the result of Theorem~\ref{thm:oversmoothing_thm_1}.}
In summary, the linear model \emph{SGC} always suffers from losing expressive power without the assumption on the trainable parameters.
Under the assumption that the multiplication of the \emph{largest singular-value of trainable parameter} and \emph{the largest absolute eigenvalue of Laplacian matrix smaller than $1$}, i.e., $\lambda s \leq 1$, we can only guarantee that \emph{GCN} suffers from losing expressive power issue, but cannot have the same guarantee on \emph{GCN-bias}, \emph{ResGCN}, \emph{APPNP}, and \emph{GCNII}.
However, as we show in Figure~\ref{fig:d_m_and_weight_norm}, the assumption is not going to hold in practice, and the distances are not decreasing for most cases.

% \begin{remark}
% The $\lambda$ (largest absolute eigenvalue smaller than $1$) 
% % of Laplacian matrix $\mathbf{L} = \mathbf{D}^{-1/2} \mathbf{A} \mathbf{D}^{-1/2}$ 
% mentioned in Theorem~\ref{thm:oversmoothing_thm_1} is related to the density of the graph, i.e., the sparser the graph the closer $\lambda$ to $1$, the denser the graph the closer to $0$. 
% \end{remark}

\begin{remark} \label{remark:connectivity_eigenvalue}
\cite{oono2019graph} conducts experiments on Erd\H{o}s-R\'enyi graph to show that when the graph is sufficiently dense and large, vanilla GCN suffers from expressive lower loss.
Erd\H{o}s-R\'enyi graph is constructed by connecting nodes randomly. Each edge is included in the graph with probability $p$ independent from every other edge.
To guarantee a small $\lambda$, a dense graph with larger $p$ is required. 
For example, in the Section 6.2 of \cite{oono2019graph}, they choose $p=0.5$ (one node is connected to $50\%$ of other nodes) such that $\lambda=0.063$ and choose $p=0.1$ (one node is connected to $10\%$ of other nodes) such that $\lambda=0.195$. However, real world datasets are sparse and have a $\lambda$ that is closer to $1$. For example, Cora has $\lambda= 0.9964$, Citeseer has $\lambda= 0.9987$, and Pubmed has $\lambda= 0.9905$.
\end{remark}

In Figure~\ref{fig:test_loss_dm_before_after_cora} and Figure~\ref{fig:test_loss_dm_before_after_citeseer}, we compare the testing set F1-score, expressive power metric $d_{\mathcal{M}}(\mathbf{H}^{(\ell)})$  before and after training.  $d_{\mathcal{M}}(\mathbf{H}^{(\ell)})$ is computed on \emph{the final output of a $\ell$-layer GCN model}. 
The ``after training results'' are computed on the model with the best testing score.
We repeat each experiment $10$ times and report the average values.
As shown in Figure~\ref{fig:test_loss_dm_before_after_cora} and Figure~\ref{fig:test_loss_dm_before_after_citeseer}, the expressive power metric $d_{\mathcal{M}}(\mathbf{H}^{(\ell)})$ of the \emph{untrained} vanilla GCN is decreasing as the number of layers increases. 
However, the expressive power metric $d_{\mathcal{M}}(\mathbf{H}^{(\ell)})$ of the \emph{trained} vanilla GCN increases with more layers. 
Besides, we observe that there is no obvious connection between the testing F1-score to $d_{\mathcal{M}}(\mathbf{H}^{(\ell)})$ for other GCN variant. 
For example, the testing F1-score of APPNP is increasing while its $d_{\mathcal{M}}(\mathbf{H}^{(\ell)})$ is not changing much.

\begin{figure}[h]
    \centering
    \includegraphics[width=1.0\textwidth]{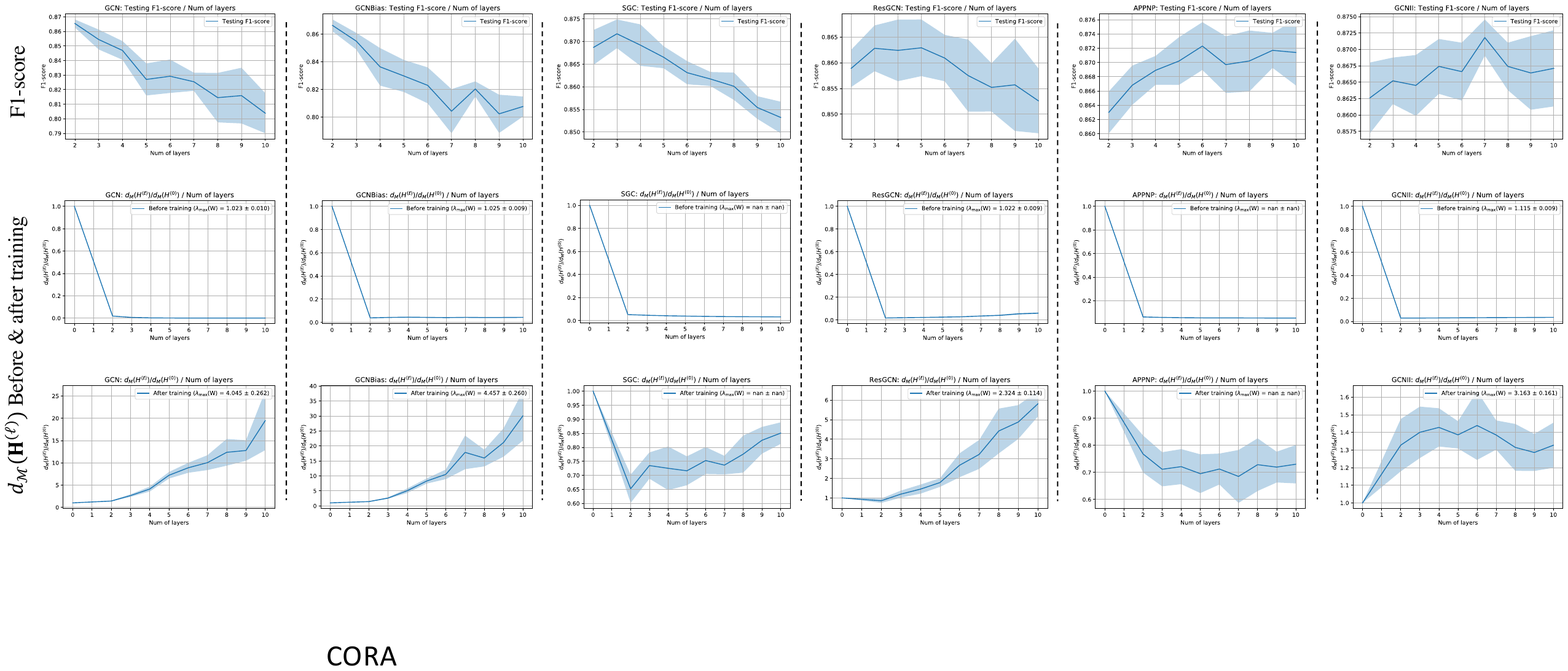}
    \caption{Comparison of F1-score, expressive power metric $d_{\mathcal{M}}(\mathbf{H}^{(\ell)})$ of \emph{GCN}, \emph{GCN-bias}, \emph{SGC}, \emph{ResGCN}, \emph{APPNP}, and \emph{GCNII} before and after training on \textit{Cora} dataset. The average ``largest sigular value'' of weight matrices in graph convolutional layers are reported, where ``nan'' stands for no weight matrices inside graph convolution layers. We only consider GCN model with depth from $2$-layers to $10$-layers. (Better viewed in PDF version)}
    \label{fig:test_loss_dm_before_after_cora}
\end{figure}

\begin{figure}[h]
    \centering
    \includegraphics[width=1.0\textwidth]{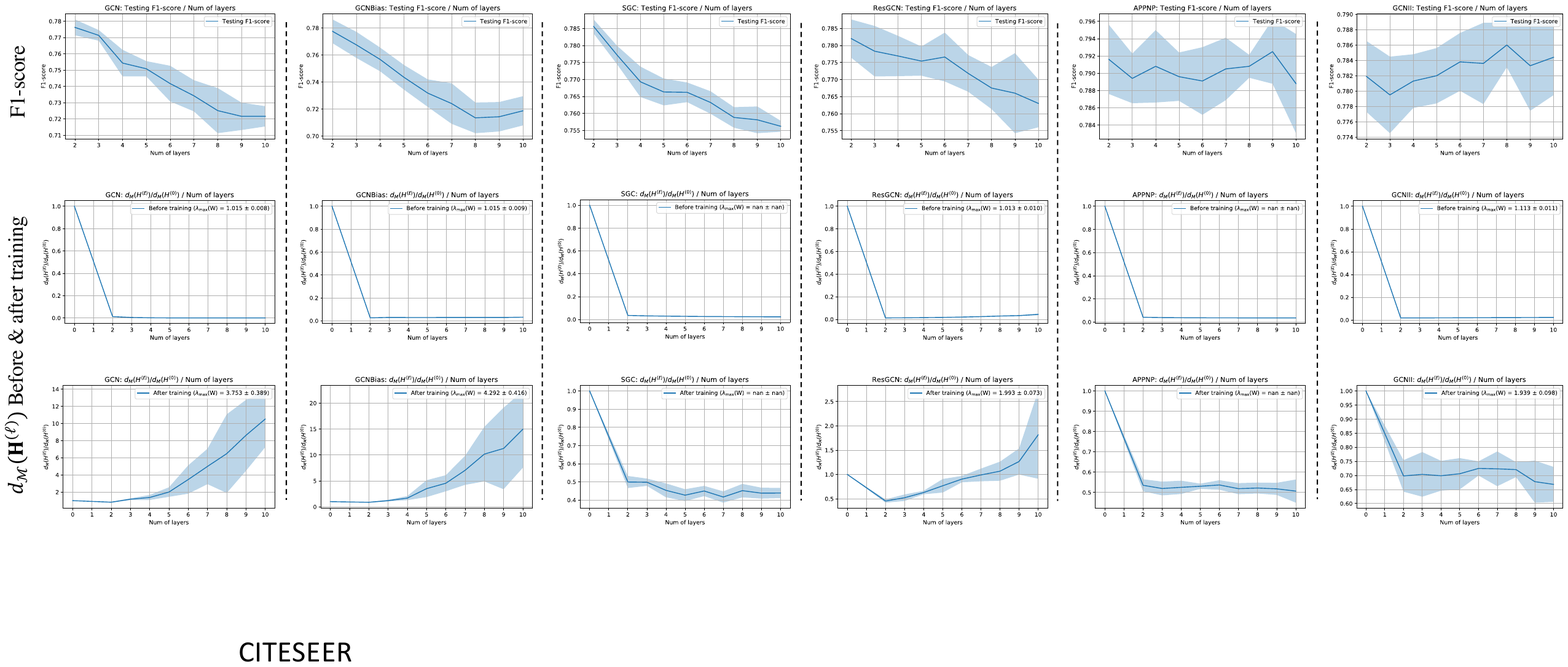}
    \caption{Comparison of F1-score, expressive power metric $d_{\mathcal{M}}(\mathbf{H}^{(\ell)})$ of \emph{GCN}, \emph{GCN-bias}, \emph{SGC}, \emph{ResGCN}, \emph{APPNP}, and \emph{GCNII} before and after training on \textit{Citeseer} dataset. The average ``largest singular value'' of weight matrices in graph convolutional layers are reported, where ``nan'' stands for no weight matrices inside graph convolution layers. We only consider GCN model with depth from $2$-layers to $10$-layers. (Better viewed in PDF version)}
    \label{fig:test_loss_dm_before_after_citeseer}
\end{figure}

% \clearpage

In Figure~\ref{fig:test_loss_dm_before_before_cora_one_model} and Figure~\ref{fig:test_loss_dm_before_before_citeseer_one_model}, we compare the testing set F1-score, expressive power metric $d_{\mathcal{M}}(\mathbf{H}^{(\ell)})$  before and after training. 
$d_{\mathcal{M}}(\mathbf{H}^{(\ell)})$ is computed on \emph{the $\ell$th layer intermediate node embeddings of a $10$-layer GCN model}. 
The ``after training results'' are computed on the model with the best testing score.
We repeat experiment $10$ times and report the average values.
As shown in Figure~\ref{fig:test_loss_dm_before_before_cora_one_model} and Figure~\ref{fig:test_loss_dm_before_before_citeseer_one_model}, the expressive power metric $d_{\mathcal{M}}(\mathbf{H}^{(\ell)})$ of the \emph{untrained} vanilla GCN is decreasing as the number of layers increases. However, the expressive power metric $d_{\mathcal{M}}(\mathbf{H}^{(\ell)})$ of the \emph{trained} vanilla GCN increases as the number of layers increases. 

\begin{figure}[h]
    \centering
    \includegraphics[width=1.0\textwidth]{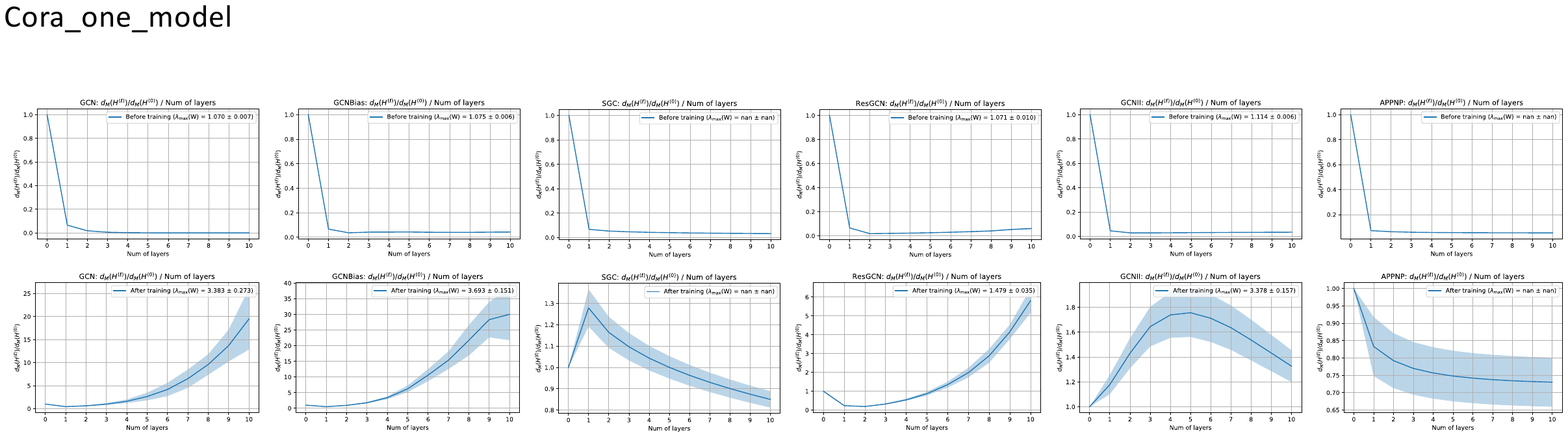}
    \caption{Comparison of F1-score, expressive power metric $d_{\mathcal{M}}(\mathbf{H}^{(\ell)})$ of $10$-layer \emph{GCN}, \emph{GCN-bias}, \emph{SGC}, \emph{ResGCN}, \emph{APPNP}, and \emph{GCNII} before and after training on \textit{Cora} dataset. The average ``largest singular value'' of weight matrices in graph convolutional layers are reported, where ``nan'' stands for no weight matrices inside graph convolution layers. (Better viewed in PDF version)}
    \label{fig:test_loss_dm_before_before_cora_one_model}
\end{figure}

\begin{figure}[h]
    \centering
    \includegraphics[width=1.0\textwidth]{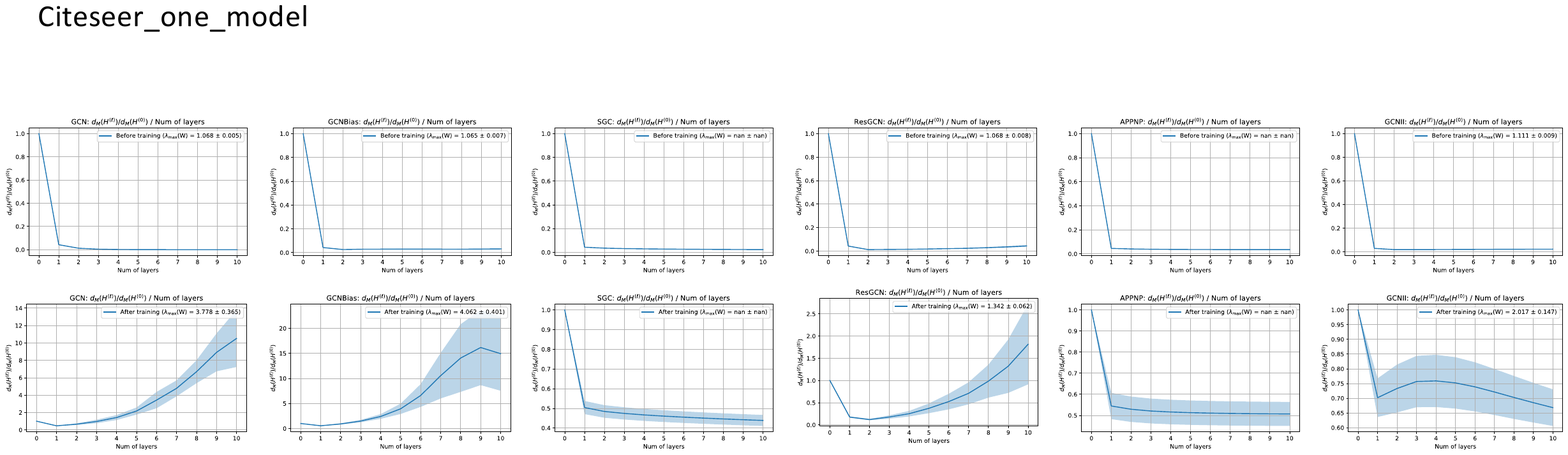}
    \caption{Comparison of F1-score, expressive power metric $d_{\mathcal{M}}(\mathbf{H}^{(\ell)})$ of $10$-layer \emph{GCN}, \emph{GCN-bias}, \emph{SGC}, \emph{ResGCN}, \emph{APPNP}, and \emph{GCNII} before and after training on \textit{Citeseer} dataset. The average ``largest singular value'' of weight matrices in graph convolutional layers are reported, where ``nan'' stands for no weight matrices inside graph convolution layers. (Better viewed in PDF version)}
    \label{fig:test_loss_dm_before_before_citeseer_one_model}
\end{figure}

\subsection{Dirichlet energy based analysis~\cite{cai2020note}}
\begin{definition} [Dirichlet energy]
The Dirichlet energy of node embedding matrix $\mathbf{H}^{(\ell)}$ is defined as
\begin{equation}
    \texttt{DE}(\mathbf{H}^{(\ell)}) = \frac{1}{2} \sum_{i,j=1}^N L_{i,j} \Big\| \frac{\mathbf{h}_i^{(\ell)}}{\sqrt{\deg(i)}} - \frac{\mathbf{h}_j^{(\ell)}}{\sqrt{\deg(j)}} \Big\|^2_2.
\end{equation}
\end{definition}
Intuitively, the Dirichlet energy measures the smoothness of node embeddings. Based on the definition of Dirichlet energy, they obtain a similar result as shown in~\cite{oono2020optimization} that
\begin{equation}
    \texttt{DE}(\mathbf{H}^{(\ell+1)}) \leq \lambda s \cdot \texttt{DE}(\mathbf{H}^{(\ell)}),
\end{equation}
where $\lambda$ is \textit{largest absolute eigenvalue of Laplacian matrix $\mathbf{L}$ that is less than $1$}, and $s$ is \textit{the largest singular-value of trainable parameter $\mathbf{W}^{(\ell)}$}. 
% Notice that the definition of $\lambda$ in~\cite{cai2020note} is different from the one in~\cite{oono2019graph} and~\cite{huang2020tackling}. By the definition of Laplacian matrix, we know that $\lambda=1$ in~\cite{cai2020note}, but  $\lambda < 1$ in \cite{oono2019graph} and~\cite{huang2020tackling}.

\begin{remark}
The Dirichlet energy used in~\cite{cai2020note} is closely related to the pairwise distance as shown in Figure~\ref{fig:pairwise_dist_num_layers}. 
In Figure~\ref{fig:pairwise_dist_num_layers}, we measure the ``smoothness'' or ``expressive power'' of node embeddings using a normalized Dirichlet energy function, i.e., $\frac{\texttt{DE}(\mathbf{H}^{(\ell)})}{\|\mathbf{H}^{(\ell)}\|_F^2}$ for inner- and cross-class nodes respectively, which refers to the real smoothness metric of graph signal as pointed out in the footnote 1 of ~\cite{cai2020note}. 
\end{remark}
Intuitively, under the assumption that the largest singular-value of weight matrices less than $1$, as the number of layers increase, i.e., $\ell\rightarrow \infty$, the Dirichlet energy of node embeddings $\texttt{DE}(\mathbf{H}^{(\ell)})$ is decreasing.
However, recall that this assumption is not likely to hold in practice because the real-world graphs are usually sparse and the largest singular value of weight matrices is usually larger than $1$.

Besides, generalizing the result to other GCN variants (e.g., ResGCN, GCNII) is non-trivial. 
For example in ResGCN, we have
\begin{equation}
    \begin{aligned}
    \texttt{DE}(\mathbf{H}^{(\ell)}) &= \texttt{DE}(\sigma(\mathbf{L} \mathbf{H}^{(\ell-1)} \mathbf{W}^{(\ell)}) + \mathbf{H}^{(\ell-1)}) \\
    &\leq 2\texttt{DE}(\sigma(\mathbf{L} \mathbf{H}^{(\ell-1)} \mathbf{W}^{(\ell)})) + 2\texttt{DE}(\mathbf{H}^{(\ell-1)}) \\
    &\leq 2(\lambda s + 1) \texttt{DE}(\mathbf{H}^{(\ell-1)}).
    \end{aligned}
\end{equation}
The coefficient $2$ on the right hand side of the inequality makes the result less meaningful.

\subsection{A condition to have over-smoothing and over-fitting  happen simultaneously}
As it has alluded to before, in this paper, we argue that the performance degradation issue is mainly due to over-fitting, and over-smoothing is not likely to happen in practice.
One might doubt whether over-smoothing can happen simultaneously with over-fitting.
In the following, we show that over-smoothing and over-fitting might happen at the same time if a model can distinguish all nodes by only using graph structure information (e.g., node degree) and ignoring node feature information. 
However, such a condition is not likely in the practice.

For simplicity, let us assume that the graph contains only a single connected component.
Let first consider the over-smoothing notation as defined in \cite{oono2019graph}. 
We can write the node embedding of node $i$ at the $\ell$th layer as $\mathbf{h}_i^{(\ell)} = \mathbf{e} \mathbf{R}_{i,\ell} + \bm{\varepsilon}_{i, \ell} \in \mathbb{R}^{d}$, where $\mathbf{e}$ is the eigen vector associated with the largest eigenvalue, $\mathbf{R}_{i,\ell} \in \mathbb{R}^{N \times d}$ is the random projection matrix, and $\bm{\varepsilon}_{i, \ell} \in \mathbb{R}^d$ is a $d$-dimensional random vector such that $d_\mathcal{M}(\mathbf{h}_i^{(\ell)}) = \| \bm{\varepsilon}_{i, \ell} \|_2$ corresponds to its distance to subspace $\mathcal{M}$. 
According to the notation in \cite{oono2019graph}, we have $\| \bm{\varepsilon}_{i, \ell} \|_2 \rightarrow 0$ as $\ell \rightarrow +\infty$ for any $i \in \mathcal{V}$.
The minimum training loss can be achieved by choosing the final layer weight vector $\mathbf{w} \in \mathbb{R}^d$ such that $\sum_{i\in\mathcal{V}} \text{Loss}\big(\mathbf{w}^\top (\mathbf{e} \mathbf{R}_{i,\ell} + \bm{\varepsilon}_{i, \ell}),  y_i \big) $ is small.
A small training error is achieved when $\mathbf{e}\mathbf{R}_{i,\ell}$ is discriminative. In other words, a model is under both over-smoothing and over-fitting conditions only if it can make predictions based on the graph structure, without leveraging node feature information.
On the other hand, considering the over-smoothing notation as defined in \cite{cai2020note}, we know that over-smoothing happens if $\frac{\mathbf{h}_i^{(\ell)}}{\sqrt{\deg(i)}} \approx \frac{\mathbf{h}_j^{(\ell)}}{\sqrt{\deg(j)}}$ for any $i,j\in\mathcal{V}$, i.e., a classifier can distinguish any nodes based on its node degree (only based on graph information without leveraging its feature information).

\clearpage

\section{Proof of Theorem~\ref{theorem:expressive_power_exp}} \label{supp:omitted_proof_section_3}

Given a $L$-layer computation tree $\mathcal{T}^L$, let $v_i^\ell$ denote the $i$th node in the $\ell$th layer of a $L$-layer computation tree $\mathcal{T}^L$. Let suppose each node has at least $d$ neighbors and each node has binary feature. 

Let $P(v_i^\ell)$ denote the parent of node $v_i^\ell$ and $C(v_i^\ell)$ denote the set of children of node $v_i^\ell$ with $|C(v_i^\ell)| \geq d$.
By the definition of computation tree, we know that $P(v_i^\ell) \in C(v_i^\ell)$ for any $\ell>1$.
As a result, we know that $C(v_i^\ell)$ has at least $(d-1)$ different choices since one of the node in $C(v_i^\ell)$ must be the same as $P(v_i^\ell)$.

Therefore, we know that a $L$-layer computation tree $\mathcal{T}^L$ has at least $2(d-1)^{L-1}$ different choices, where the constant $2$ is because each root node has at least two different choices.

% \clearpage
\section{Proof of Theorem~\ref{thm:deep_wide_local_global_minima2}} \label{supp:linear_convergence_rate}
In this section, we study the convergence of $L$-layer deep GCN model, and show that a deeper model requires more iterations $T$ to achieve $\epsilon$ training error. Before preceding, let first introduce the notations used for the convergence analysis of deep GCN. Let $\mathcal{G}(\mathcal{V}, \mathcal{E})$ as the graph of $N=|\mathcal{V}|$ nodes and each node in the graph is associated with a node feature and target label pair $(\mathbf{x}_i, \mathbf{y}_i)$, where $\mathbf{x}_i \in \mathbb{R}^{d_0}$ and $\mathbf{y}_i \in \mathbb{R}^{d_{L+1}}$.
Let $\mathbf{X} = \{\mathbf{x}_i \}_{i=1}^N \in \mathbb{R}^{N \times d_0}$ and  $\mathbf{Y} = \{\mathbf{y}_i \}_{i=1}^N \in \mathbb{R}^{N \times d_{L+1}}$ be the stack of node vector and target vector of the training data set of size $N$. 
Given a set of $L$ matrices $\bm{\theta} = \{ \mathbf{W}^{(\ell)} \}_{\ell=1}^L$,  we consider the following training objective  with squared loss function to learn the parameters of the model:
\begin{equation}\label{eq:deeper_better_obj}
    \underset{\bm{\theta}}{\text{minimize}}~\mathcal{L}(\bm{\theta}) = \frac{1}{2} \| \widehat{\mathbf{Y}}(\bm{\theta}) - \mathbf{Y}\|_\mathrm{F}^2,~ \widehat{\mathbf{Y}}(\bm{\theta}) = \mathbf{H}^{(L)}\mathbf{W}^{(L+1)},~
    \mathbf{H}^{(\ell)} = \sigma(\mathbf{L} \mathbf{H}^{(\ell-1)} \mathbf{W}^{(\ell)}),
\end{equation}
where $\sigma(\cdot)$ is ReLU activation.

\subsection{Useful lemmas}
The following lemma analyzes the Lipschitz continuity of deep GCNs, which characterizes the change of inner layer representations by slight change of the weight parameters.

\begin{lemma}\label{lemma:convergence_width_lip}

Let $\bm{\theta}_1 = \{ \mathbf{W}^{(\ell)}_1 \}_{\ell \in [L+1]}$ and $\bm{\theta}_2 = \{ \mathbf{W}^{(\ell)}_2 \}_{\ell \in [L+1]}$ be two set of parameters.
Let $\lambda_\ell \geq  \max\{ \| \mathbf{W}_1^{(\ell)}\|_2, \| \mathbf{W}_2^{(\ell)}\|_2 \}$, $s \geq \| \mathbf{L} \|_2$,
and $\lambda_{i\rightarrow j} = \prod_{\ell=i}^j \lambda_\ell$,  we have
\begin{equation}
    \begin{aligned}
    \| \mathbf{H}_1^{(\ell)} - \mathbf{H}_2^{(\ell)} \|_\mathrm{F}  
    &\leq s^\ell \lambda_{1\rightarrow \ell} \| \mathbf{X} \|_\mathrm{F} \sum_{j=1}^\ell \lambda_j^{-1} \| \mathbf{W}^{(j)}_1 - \mathbf{W}^{(j)}_2 \|_2.
    % &\leq s^\ell \sqrt{L}\| \mathbf{X}\|_\mathrm{F} \frac{\lambda_{1\rightarrow \ell}}{\min_{\ell\in[L]} \lambda_\ell} \| \bm{\theta}_1 - \bm{\theta}_2 \|_2
    \end{aligned}
\end{equation}

\end{lemma}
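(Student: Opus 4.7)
The plan is to prove the bound by induction on $\ell$, using a standard ``telescoping over layers'' argument. The base case $\ell=0$ is immediate: $\mathbf{H}_1^{(0)} = \mathbf{H}_2^{(0)} = \mathbf{X}$, so both sides are zero (the sum is empty). Before running the induction I would first establish the auxiliary a priori bound
\begin{equation*}
\| \mathbf{H}_k^{(\ell)} \|_\mathrm{F} \;\leq\; s^\ell \,\lambda_{1 \to \ell}\, \| \mathbf{X} \|_\mathrm{F}, \qquad k \in \{1,2\},
\end{equation*}
which itself follows by a one-line induction using the fact that ReLU is $1$-Lipschitz with $\sigma(\mathbf{0}) = \mathbf{0}$ (so $\|\sigma(\mathbf{A})\|_\mathrm{F} \leq \|\mathbf{A}\|_\mathrm{F}$) together with the sub-multiplicativity $\|\mathbf{A}\mathbf{B}\|_\mathrm{F} \leq \|\mathbf{A}\|_2 \|\mathbf{B}\|_\mathrm{F}$ applied to $\mathbf{L}\mathbf{H}_k^{(\ell-1)}\mathbf{W}_k^{(\ell)}$.

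For the inductive step, I would exploit the $1$-Lipschitz property of ReLU once more and insert the hybrid term $\mathbf{L}\mathbf{H}_1^{(\ell-1)}\mathbf{W}_2^{(\ell)}$ to split the difference into a ``weight change at layer $\ell$'' piece and a ``propagated earlier differences'' piece:
\begin{equation*}
\| \mathbf{H}_1^{(\ell)} - \mathbf{H}_2^{(\ell)} \|_\mathrm{F} \leq \bigl\| \mathbf{L}\mathbf{H}_1^{(\ell-1)}(\mathbf{W}_1^{(\ell)} - \mathbf{W}_2^{(\ell)}) \bigr\|_\mathrm{F} + \bigl\| \mathbf{L}(\mathbf{H}_1^{(\ell-1)} - \mathbf{H}_2^{(\ell-1)}) \mathbf{W}_2^{(\ell)} \bigr\|_\mathrm{F}.
\end{equation*}
Bounding the first piece by sub-multiplicativity, using $\|\mathbf{L}\|_2 \leq s$ and the a priori bound on $\|\mathbf{H}_1^{(\ell-1)}\|_\mathrm{F}$, gives $s^\ell \lambda_{1 \to \ell-1} \|\mathbf{X}\|_\mathrm{F} \|\mathbf{W}_1^{(\ell)} - \mathbf{W}_2^{(\ell)}\|_2$, which is exactly the $j=\ell$ term of the target sum after inserting a factor $\lambda_\ell / \lambda_\ell$. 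Bounding the second piece by $s \lambda_\ell$ times the inductive hypothesis produces precisely the $j = 1, \ldots, \ell-1$ terms of the target sum with the common prefactor $s^\ell \lambda_{1 \to \ell} \|\mathbf{X}\|_\mathrm{F}$. Adding the two pieces completes the induction.

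There is no real obstacle here — the argument is a standard layer-by-layer telescoping controlled by operator-norm bounds. The only things to be careful about are: (i) consistently using the mixed inequality $\|\mathbf{A}\mathbf{B}\|_\mathrm{F} \leq \min\{\|\mathbf{A}\|_2\|\mathbf{B}\|_\mathrm{F}, \|\mathbf{A}\|_\mathrm{F}\|\mathbf{B}\|_2\}$ so that one matrix is measured in spectral norm (matching the $\lambda_\ell$'s) and the other in Frobenius norm (matching $\|\mathbf{X}\|_\mathrm{F}$); (ii) using $1$-Lipschitzness of ReLU entrywise, which transfers to Frobenius norm; and (iii) choosing the hybrid term on the ``$\mathbf{H}_1$ side'' so that the a priori bound is applied to $\mathbf{H}_1^{(\ell-1)}$ and the $\lambda_\ell$ bound is applied to $\mathbf{W}_2^{(\ell)}$, both of which are covered by the assumption $\lambda_\ell \geq \max\{\|\mathbf{W}_1^{(\ell)}\|_2, \|\mathbf{W}_2^{(\ell)}\|_2\}$.
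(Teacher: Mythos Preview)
Your proposal is correct and follows essentially the same approach as the paper: both first establish the a priori bound $\|\mathbf{H}_k^{(\ell)}\|_\mathrm{F} \leq s^\ell \lambda_{1\to\ell}\|\mathbf{X}\|_\mathrm{F}$ via the ReLU contraction and mixed Frobenius/spectral sub-multiplicativity, then use the $1$-Lipschitz property of $\sigma$ and the same hybrid-term split $\mathbf{H}_1^{(\ell-1)}(\mathbf{W}_1^{(\ell)}-\mathbf{W}_2^{(\ell)}) + (\mathbf{H}_1^{(\ell-1)}-\mathbf{H}_2^{(\ell-1)})\mathbf{W}_2^{(\ell)}$ to obtain the recursion and unroll/induct. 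The only cosmetic difference is that you phrase the unrolling as an explicit induction, whereas the paper writes out the one-step recursion and then states the unrolled result directly.
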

\begin{proof} [Proof of Lemma~\ref{lemma:convergence_width_lip}]
Our proof relies on the following inequality
\begin{equation}
    \forall \mathbf{A} \in \mathbb{R}^{a\times b}, \mathbf{B}\in \mathbb{R}^{b\times c},~ \| \mathbf{A} \mathbf{B} \|_\mathrm{F} \leq \| \mathbf{A}\|_2 \| \mathbf{B}\|_\mathrm{F}~\text{and}~\| \mathbf{A} \mathbf{B} \|_\mathrm{F} \leq \| \mathbf{A}\|_\mathrm{F} \| \mathbf{B}\|_2.
\end{equation}

By definition of $\mathbf{H}^{(\ell)}$, we have for $\ell\in[L]$
\begin{equation} \label{eq:convergence_width_lip_eq1}
    \begin{aligned}
    \| \mathbf{H}^{(\ell)} \|_\mathrm{F}
    &= \| \sigma(\mathbf{L} \mathbf{H}^{(\ell-1)} \mathbf{W}^{(\ell)}) \|_\mathrm{F} \\
    &\leq \| \mathbf{L} \mathbf{H}^{(\ell-1)} \mathbf{W}^{(\ell)} \|_\mathrm{F} \\
    &\leq  \| \mathbf{L}\|_2 \| \mathbf{H}^{(\ell-1)} \mathbf{W}^{(\ell)} \|_\mathrm{F} \\
    & \leq  \| \mathbf{L}\|_2 \| \mathbf{W}^{(\ell)} \|_2 \| \mathbf{H}^{(\ell-1)} \|_\mathrm{F} \\
    &\leq \| \mathbf{X} \|_\mathrm{F} \prod_{j=1}^\ell \Big( \| \mathbf{L}\|_2 \| \mathbf{W}^{(\ell)} \|_2 \Big).
    \end{aligned}
\end{equation}

Let $\bm{\theta}_1 = \{ \mathbf{W}^{(\ell)}_1 \}_{\ell \in [L+1]}$ and $\bm{\theta}_2 = \{ \mathbf{W}^{(\ell)}_2 \}_{\ell \in [L+1]}$ be two set of parameters, we have
\begin{equation}
    \begin{aligned}
    \| \mathbf{H}_1^{(\ell)} - \mathbf{H}_2^{(\ell)} \|_\mathrm{F} 
    &= \| \sigma(\mathbf{L} \mathbf{H}^{(\ell-1)}_1 \mathbf{W}^{(\ell)}_1) - \sigma(\mathbf{L} \mathbf{H}^{(\ell-1)}_2 \mathbf{W}^{(\ell)}_2) \|_\mathrm{F} \\
    &\underset{(a)}{\leq} \| \mathbf{L} \mathbf{H}^{(\ell-1)}_1 \mathbf{W}^{(\ell)}_1 - \mathbf{L} \mathbf{H}^{(\ell-1)}_2 \mathbf{W}^{(\ell)}_2 \|_\mathrm{F} \\
    &\leq \| \mathbf{L} \|_2 \| \mathbf{H}^{(\ell-1)}_1 \mathbf{W}^{(\ell)}_1 - \mathbf{H}^{(\ell-1)}_2 \mathbf{W}^{(\ell)}_2 \|_\mathrm{F} \\
    &= \| \mathbf{L} \|_2 \left( \| \mathbf{H}^{(\ell-1)}_1 (\mathbf{W}^{(\ell)}_1 - \mathbf{W}^{(\ell)}_2) + ( \mathbf{H}^{(\ell-1)}_1 - \mathbf{H}^{(\ell-1)}_2) \mathbf{W}^{(\ell)}_2 \|_\mathrm{F} \right) \\
    &\leq \| \mathbf{L} \|_2 \left( \| \mathbf{H}^{(\ell-1)}_1 \|_\mathrm{F} \| \mathbf{W}^{(\ell)}_1 - \mathbf{W}^{(\ell)}_2 \|_2 + \| \mathbf{H}^{(\ell-1)}_1 - \mathbf{H}^{(\ell-1)}_2 \|_\mathrm{F} \| \mathbf{W}^{(\ell)}_2 \|_2 \right)\\
    &\underset{(b)}{\leq} \| \mathbf{L} \|_2 \| \mathbf{X} \|_\mathrm{F} \left[ \prod_{j=1}^{\ell-1} \Big( \| \mathbf{L}\|_2 \| \mathbf{W}^{(\ell)}_1 \|_2 \Big) \right] \| \mathbf{W}^{(\ell)}_1 - \mathbf{W}^{(\ell)}_2 \|_2 \\
    &\quad + \| \mathbf{L} \|_2  \| \mathbf{W}^{(\ell)}_2 \|_2 \| \mathbf{H}^{(\ell-1)}_1 - \mathbf{H}^{(\ell-1)}_2 \|_\mathrm{F},
    \end{aligned}
\end{equation}
where $(a)$ is due to $\sigma(\cdot)$ is $1$-Lipschitz continuous, $(b)$ is due to Eq.~\ref{eq:convergence_width_lip_eq1}.

Let $\lambda_\ell \geq  \max\{ \| \mathbf{W}_1^{(\ell)}\|_2, \| \mathbf{W}_2^{(\ell)}\|_2 \}$, $s \geq \| \mathbf{L} \|_2$, and
$\lambda_{i\rightarrow j} = \prod_{\ell=i}^j \lambda_\ell$, , we have
\begin{equation}
    \begin{aligned}
    \| \mathbf{H}_1^{(\ell)} - \mathbf{H}_2^{(\ell)} \|_\mathrm{F} 
    &\leq s^\ell \lambda_{1\rightarrow (\ell-1)} \| \mathbf{X} \|_\mathrm{F} \| \mathbf{W}^{(\ell)}_1 - \mathbf{W}^{(\ell)}_2 \|_2 + s \lambda_\ell \| \mathbf{H}_1^{(\ell-1)} - \mathbf{H}_2^{(\ell-1)} \|_\mathrm{F} \\
    &\leq s^\ell \lambda_{1\rightarrow \ell} \| \mathbf{X} \|_\mathrm{F} \left(\sum_{j=1}^\ell \lambda_j^{-1} \| \mathbf{W}^{(j)}_1 - \mathbf{W}^{(j)}_2 \|_2 \right).
    % &\leq s^\ell \sqrt{L}\| \mathbf{X}\|_\mathrm{F} \frac{\lambda_{1\rightarrow \ell}}{\min_{\ell\in[L]} \lambda_\ell} \| \bm{\theta}_1 - \bm{\theta}_2 \|_2
    \end{aligned}
\end{equation}

\end{proof}

The following lemma derives the upper bound of the gradient with respect to the weight parameters, which plays an important role in the convergence analysis of deep GCN.

\begin{lemma} \label{lemma:convergence_grad_bound}
Let $\bm{\theta} = \{ \mathbf{W}^{(\ell)} \}_{\ell \in [L+1]}$ be the set of parameters, $\lambda_\ell \geq  \max\{ \| \mathbf{W}_1^{(\ell)}\|_2, \| \mathbf{W}_2^{(\ell)}\|_2 \}$, $s \geq \| \mathbf{L} \|_2$,
and $\lambda_{i\rightarrow j} = \prod_{\ell=i}^j \lambda_\ell$,  we have
\begin{equation}
    \left\| \frac{\partial \mathcal{L}(\bm{\theta})}{\partial \mathbf{W}^{(\ell)}}\right\|_\mathrm{F} \leq \frac{s^L \lambda_{1\rightarrow (L+1)}}{\lambda_\ell} \| \mathbf{X} \|_\mathrm{F} \| \widehat{\mathbf{Y}} - \mathbf{Y} \|_\mathrm{F}.
\end{equation}
\end{lemma}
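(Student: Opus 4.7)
The plan is to expand $\partial\mathcal{L}(\bm{\theta})/\partial\mathbf{W}^{(\ell)}$ by the chain rule and then bound each factor using submultiplicativity of the Frobenius and spectral norms, exactly as in the proof of Lemma~\ref{lemma:convergence_width_lip}. The two cases $\ell = L+1$ and $\ell \le L$ are handled slightly differently but with the same underlying idea.

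For $\ell = L+1$ the derivative is simply $(\mathbf{H}^{(L)})^\top (\widehat{\mathbf{Y}} - \mathbf{Y})$, so I would bound it by $\|\mathbf{H}^{(L)}\|_{\mathrm{F}} \cdot \|\widehat{\mathbf{Y}} - \mathbf{Y}\|_{\mathrm{F}}$ and then apply the inequality $\|\mathbf{H}^{(L)}\|_{\mathrm{F}} \le s^L \lambda_{1\to L} \|\mathbf{X}\|_{\mathrm{F}}$ already established in Eq.~\eqref{eq:convergence_width_lip_eq1} of the proof of Lemma~\ref{lemma:convergence_width_lip}. Since $s^L \lambda_{1\to L} = s^L \lambda_{1\to (L+1)}/\lambda_{L+1}$, the claim follows for the output layer.

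For $1\le \ell \le L$, I would write the chain-rule expression
\[
\frac{\partial \mathcal{L}}{\partial \mathbf{W}^{(\ell)}} = (\mathbf{L}\mathbf{H}^{(\ell-1)})^\top \, \mathbf{D}^{(\ell)} \Big( \prod_{k=\ell+1}^{L} \mathbf{L}^\top \mathbf{D}^{(k)} (\mathbf{W}^{(k)})^\top \Big)^{\!\!\top} \cdot \big(\widehat{\mathbf{Y}}-\mathbf{Y}\big)\big(\mathbf{W}^{(L+1)}\big)^{\!\top},
\]
where each $\mathbf{D}^{(k)}$ is a $\{0,1\}$-diagonal matrix coming from the derivative of the ReLU at layer $k$ and hence has $\|\mathbf{D}^{(k)}\|_2 \le 1$. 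I would then repeatedly apply $\|\mathbf{A}\mathbf{B}\|_{\mathrm{F}} \le \|\mathbf{A}\|_2\|\mathbf{B}\|_{\mathrm{F}}$ and $\|\mathbf{A}\mathbf{B}\|_{\mathrm{F}} \le \|\mathbf{A}\|_{\mathrm{F}}\|\mathbf{B}\|_2$, using $\|\mathbf{L}\|_2 \le s$ and $\|\mathbf{W}^{(k)}\|_2 \le \lambda_k$, to peel off the $L-\ell+1$ spectral-norm factors from the upper layers, yielding a factor of $s^{L-\ell+1} \lambda_{\ell+1\to L+1}$. The remaining Frobenius factor is $\|\mathbf{L}\mathbf{H}^{(\ell-1)}\|_{\mathrm{F}} \le s \cdot s^{\ell-1}\lambda_{1\to \ell-1}\|\mathbf{X}\|_{\mathrm{F}}$ by the same bound from Eq.~\eqref{eq:convergence_width_lip_eq1}, and combining these two contributions produces exactly $s^L \lambda_{1\to (L+1)}/\lambda_\ell \cdot \|\mathbf{X}\|_{\mathrm{F}}\,\|\widehat{\mathbf{Y}}-\mathbf{Y}\|_{\mathrm{F}}$.

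There is no deep obstacle here; the only mildly delicate point is being careful with transposes and the order of matrix multiplications in the chain rule so that the ReLU-derivative diagonals and the Laplacian factors are each bounded against the correct neighboring factor. Once the gradient is written cleanly in matrix form, the bound is a routine application of submultiplicativity together with the already-derived norm bound on $\mathbf{H}^{(\ell-1)}$.
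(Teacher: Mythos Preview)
Your proposal is correct and follows essentially the same route as the paper: chain rule plus repeated submultiplicativity, using $\|\mathbf{L}\|_2\le s$, $\|\mathbf{W}^{(k)}\|_2\le\lambda_k$, the fact that the ReLU Jacobian has operator norm at most~$1$, and the bound $\|\mathbf{H}^{(\ell-1)}\|_\mathrm{F}\le s^{\ell-1}\lambda_{1\to(\ell-1)}\|\mathbf{X}\|_\mathrm{F}$ from Eq.~\eqref{eq:convergence_width_lip_eq1}. The only cosmetic difference is that the paper carries out the chain rule in the vectorized/Kronecker-product form (which makes the ReLU derivative a genuine diagonal operator on $\text{vec}(\mathbf{H}^{(k)})$), whereas your displayed matrix expression treats the ReLU masks $\mathbf{D}^{(k)}$ as if they were ordinary matrix factors---strictly speaking they act by Hadamard product, so your formula does not type-check as written, but since $\|\sigma'(\mathbf{Z})\odot\mathbf{A}\|_\mathrm{F}\le\|\mathbf{A}\|_\mathrm{F}$ the norm bound goes through exactly as you describe once you write the backpropagation recursion carefully.
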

\begin{proof} [Proof of Lemma~\ref{lemma:convergence_grad_bound}]

By the definition of $\frac{\partial \mathcal{L}(\bm{\theta})}{\partial \mathbf{W}^{(\ell)}}$, we have
% \begin{equation}
\begingroup
\allowdisplaybreaks
    \begin{align*}
    \left\| \frac{\partial \mathcal{L}(\bm{\theta})}{\partial \mathbf{W}^{(\ell)}}\right\|_\mathrm{F}
    % &= \left\| \text{vec}\left( \frac{\partial \mathcal{L}(\bm{\theta})}{\partial \mathbf{W}^{(\ell)}} \right) \right\|_\mathrm{F} \\
    % &= \left\|  \frac{\partial \frac{1}{2}\| \widehat{\mathbf{Y}} - \mathbf{Y} \|_\mathrm{F}^2 }{\partial \mathbf{W}^{(\ell)}}  \right\|_\mathrm{F} \\
    &= \left\| \left( \frac{\partial \widehat{\mathbf{Y}}}{\partial \mathbf{W}^{(\ell)}} \right)^\top ( \widehat{\mathbf{Y}} - \mathbf{Y})  \right \|_\mathrm{F} \\
    % &\leq \left\|  \frac{\partial \text{vec}(\mathbf{H}^{(L)} \mathbf{W}^{(L+1)}) }{\partial \text{vec}(\mathbf{W}^{(\ell)}) } \text{vec}( \widehat{\mathbf{Y}} - \mathbf{Y})  \right \|_2 \\
    &= \left\|  \frac{\partial [\mathbf{I}_N \otimes (\mathbf{W}^{(L+1)})^\top ]\text{vec}(\mathbf{H}^{(L)} ) }{\partial \text{vec}(\mathbf{W}^{(\ell)}) } \text{vec}( \widehat{\mathbf{Y}} - \mathbf{Y})  \right \|_2 \\
    &= \left\|  [\mathbf{I}_N \otimes (\mathbf{W}^{(L+1)})^\top] \frac{\partial \text{vec}(\mathbf{H}^{(L)} ) }{\partial \text{vec}(\mathbf{H}^{(\ell)})} \frac{\partial \text{vec}(\mathbf{H}^{(\ell)})}{\partial \text{vec}(\mathbf{W}^{(\ell)}) }  \text{vec}( \widehat{\mathbf{Y}} - \mathbf{Y})  \right \|_2 \\
    &\underset{(a)}{\leq} \left\|  [\mathbf{I}_N \otimes (\mathbf{W}^{(L+1)})^\top] \frac{\partial \text{vec}(\mathbf{L}\mathbf{H}^{(L-1)} \mathbf{W}^{(L)}) }{\partial \text{vec}(\mathbf{H}^{(\ell)})} \frac{\partial \text{vec}(\mathbf{L} \mathbf{H}^{(\ell-1)} \mathbf{W}^{(\ell)}) }{\partial \text{vec}(\mathbf{W}^{(\ell)}) } \text{vec}( \widehat{\mathbf{Y}} - \mathbf{Y})  \right \|_2 \\
    & =  \left\|  [\mathbf{I}_N \otimes (\mathbf{W}^{(L+1)})^\top] \frac{\partial \text{vec}( \mathbf{L} \otimes (\mathbf{W}^{(L)})^\top ) \text{vec}(\mathbf{H}^{(L-1)}) }{\partial \text{vec}(\mathbf{H}^{(\ell)})} \frac{\partial [\mathbf{L} \mathbf{H}^{(\ell-1)} \otimes \mathbf{I}_{d_\ell}] \text{vec}(\mathbf{W}^{(\ell)}) }{\partial \text{vec}(\mathbf{W}^{(\ell)}) }  \text{vec}( \widehat{\mathbf{Y}} - \mathbf{Y})  \right \|_2 \\
    &= \left\|  [\mathbf{I}_N \otimes (\mathbf{W}^{(L+1)})^\top]  \text{vec}\left(\mathbf{L}^{L-\ell} \otimes (\mathbf{W}^{(\ell+1)} \ldots \mathbf{W}^{(L)})^\top \right) [\mathbf{L} \mathbf{H}^{(\ell-1)} \otimes \mathbf{I}_{d_\ell}]  \text{vec}( \widehat{\mathbf{Y}} - \mathbf{Y})  \right \|_2 \\
    % &= \left\| \text{vec}\left(\mathbf{L}^{L-\ell} \otimes ( \mathbf{W}^{(\ell+1)} \ldots \mathbf{W}^{(L+1)})^\top \right) [\mathbf{L} \mathbf{H}^{(\ell-1)} \otimes \mathbf{I}_{d_\ell}]  \text{vec}( \widehat{\mathbf{Y}} - \mathbf{Y})  \right \|_\mathrm{F} \\
    &= \left\| \underbrace{(\mathbf{L}^{L-\ell+1} \mathbf{H}^{(\ell-1)}) \otimes ( \mathbf{W}^{(\ell+1)} \ldots \mathbf{W}^{(L+1)})^\top  }_{\mathbb{R}^{N d_{L+1} \times d_{\ell-1} d_\ell}} \text{vec}( \widehat{\mathbf{Y}} - \mathbf{Y})  \right \|_\mathrm{F} \\
    &= \left\| (\mathbf{W}^{(\ell+1)} \ldots \mathbf{W}^{(L+1)}) ( \widehat{\mathbf{Y}} - \mathbf{Y})^\top \mathbf{L}^{L-\ell+1} \mathbf{H}^{(\ell-1)} \right\|_2 \\
    &\leq s^{L-\ell+1} \lambda_{(\ell+1) \rightarrow (L+1)} \| \widehat{\mathbf{Y}} - \mathbf{Y} \|_\mathrm{F} \| \mathbf{H}^{(\ell-1)} \|_\mathrm{F},
    \end{align*}
\endgroup
% \end{equation}
where $(a)$ is due to $\| \sigma(x) \|_2 \leq \| x \|_2$.

Using similar proof strategy, we can upper bound $\| \mathbf{H}^{(\ell-1)} \|_\mathrm{F}$ by
\begin{equation}
    \begin{aligned}
    \| \mathbf{H}^{(\ell-1)} \|_\mathrm{F} 
    &= \| \sigma( \mathbf{L} \mathbf{H}^{(\ell-2)} \mathbf{W}^{(\ell-1)})\|_\mathrm{F} \\
    &\leq \|  \mathbf{L} \mathbf{H}^{(\ell-2)} \mathbf{W}^{(\ell-1)} \|_\mathrm{F} \\
    &\leq s^{\ell-1} \lambda_{1\rightarrow(\ell-1)} \| \mathbf{X} \|_\mathrm{F}.
    \end{aligned}
\end{equation}

By combining the above two equations, we have
\begin{equation}
    \left\| \frac{\partial \mathcal{L}(\bm{\theta})}{\partial \mathbf{W}^{(\ell)}}\right\|_\mathrm{F} \leq \frac{s^L \lambda_{1\rightarrow (L+1)}}{\lambda_\ell} \| \mathbf{X} \|_\mathrm{F} \| \widehat{\mathbf{Y}} - \mathbf{Y} \|_\mathrm{F}.
\end{equation}

\end{proof}

\subsection{Main result}

In the following, we provide the convergence analysis on deep GCNs. 
In particular, Theorem~\ref{thm:wide_deep_gcn_convergence} shows that under assumption on the weight initialization (Eq.~\ref{eq:wide_deep_gcn_convergence_assump}) and the width of the last layer's input dimension ($d_L \geq N$), the deep GCN enjoys linear convergence rate. 
Recall our discussion in Section~\ref{section:limitation_over_smoothing} that $\lambda_\ell \geq 1$ (by Gordon's Theorem for Gaussian matrices) and $\| \mathbf{L} \|_2 = 1$ for $\mathbf{L} = \mathbf{D}^{-1/2} \mathbf{A} \mathbf{D}^{-1/2}$, we know that $s \lambda_\ell \geq 1$, thus deeper model requires a smaller learning rate $\eta$ according to Eq.~\ref{eq:learning_rate}.
% As a result, by Eq.~\ref{eq:learning_rate}, we know that deeper model requires a smaller learning rate $\eta$.
Since the number of training iteration $T$ is reverse proportional to learning rate $\eta$, which explains why a deeper model requires more training iterations than the shallow one.
% which results in requiring a larger number of training iterations $T$ (because $T$ is reverse proportional to learning rate $\eta$). 
% The proof of Theorem~\ref{thm:wide_deep_gcn_convergence} heavily relies on the intermediate result as we derived in Lemma~\ref{lemma:convergence_width_lip} and Lemma~\ref{lemma:convergence_grad_bound}.

\begin{theorem} \label{thm:wide_deep_gcn_convergence}
Consider a deep GCN with ReLU activation (defined in Eq.~\ref{eq:deeper_better_obj}) where the width of the last hidden layer satisfies $d_L \geq N$. 
Let $\{C_\ell\}_{\ell\in[L+1]}$ be any sequence of positive numbers and define $\alpha_0 = \lambda_{\min}(\mathbf{H}^{(L)}_0)$,~$\lambda_\ell = \| \mathbf{W}^{(\ell)}_0\|_2 + C_\ell $,~
$\lambda_{i \rightarrow j} = \prod_{\ell=i}^j \lambda_\ell$.

Assume the following conditions are satisfied at the initialization:
\begin{equation}\label{eq:wide_deep_gcn_convergence_assump}
    \begin{aligned}
    \alpha_0^2 &\geq 16 s^{2L} \| \mathbf{X}\|_\mathrm{F} \max_{\ell \in [L+1]} \frac{\lambda_{1\rightarrow (L+1)}}{\lambda_\ell C_\ell} \sqrt{2 \mathcal{L}(\bm{\theta}_0)}, \\
    \alpha^3_0 &\geq 32 s^{2L} \| \mathbf{X} \|_\mathrm{F}^2 \lambda_{L+1} \sum_{\ell=1}^{L} \frac{\lambda^2_{1\rightarrow L}}{\lambda^2_\ell} \sqrt{2\mathcal{L}(\bm{\theta}_0)}, \\
    \alpha^2_0 &\geq 16 s^{2L} \| \mathbf{X} \|_\mathrm{F}^2 \lambda_{L+1}^2 \sum_{\ell=1}^{L} \frac{\lambda^2_{1\rightarrow L}}{\lambda^2_\ell} .
    \end{aligned}
\end{equation}

Let the learning rate satisfies
\begin{equation}\label{eq:learning_rate}
    \eta \leq \min\left( \frac{8}{\alpha_0^2}, \frac{\left(\sum_{\ell=1}^L \lambda_\ell^{-2}\right)}{s^{2L} \lambda_{1 \rightarrow (L+1)}^2 \| \mathbf{X}\|_\mathrm{F}^2 \left(\sum_{\ell=1}^{L+1} \lambda_\ell^{-2} \right)^2 } \right).
\end{equation}

Then deep GCN can achieve $\mathcal{L}(\bm{\theta}_T) \leq \epsilon$ for any $T$ satisfied
% \begin{equation}
%     \mathcal{L}(\bm{\theta}_T) \leq \left( 1 - \eta \frac{\alpha_0^2}{8}\right)^T \mathcal{L}(\bm{\theta}_0) \leq \exp\left( - T \eta \frac{\alpha_0^2}{8} \right) \mathcal{L}(\bm{\theta}_0) \leq \epsilon
% \end{equation}
\begin{equation}
    T \geq \frac{8}{\eta \alpha_0^2} \log\left( \frac{\mathcal{L}(\bm{\theta}_0)}{\epsilon}\right).
\end{equation}
\end{theorem}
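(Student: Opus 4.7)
The proof will run by induction on $t$, maintaining three coupled invariants throughout training:
\emph{(I)} geometric loss decay $\mathcal{L}(\bm{\theta}_t) \leq (1-\eta\alpha_0^2/8)^t \mathcal{L}(\bm{\theta}_0)$;
\emph{(II)} weight confinement $\|\mathbf{W}_t^{(\ell)} - \mathbf{W}_0^{(\ell)}\|_2 \leq C_\ell$ for every $\ell \in [L+1]$;
\emph{(III)} conditioning of features $\lambda_{\min}(\mathbf{H}_t^{(L)}) \geq \alpha_0/2$.
The three constants appearing in the hypothesis (Eq.~\ref{eq:wide_deep_gcn_convergence_assump}) are calibrated precisely so that each invariant preserves the others through one gradient step.

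\textbf{Step 1 (PL-type inequality from over-parameterization).} The over-parameterization hypothesis $d_L \geq N$ enters through the gradient with respect to the output layer. Since $\nabla_{\mathbf{W}^{(L+1)}}\mathcal{L}(\bm{\theta}_t) = (\mathbf{H}_t^{(L)})^\top(\widehat{\mathbf{Y}}_t - \mathbf{Y})$, one has
\begin{equation*}
\bigl\|\nabla_{\mathbf{W}^{(L+1)}}\mathcal{L}(\bm{\theta}_t)\bigr\|_\mathrm{F}^2 \;\geq\; \lambda_{\min}\!\bigl(\mathbf{H}_t^{(L)}\bigr)^2\,\bigl\|\widehat{\mathbf{Y}}_t - \mathbf{Y}\bigr\|_\mathrm{F}^2 \;\geq\; 2\,(\alpha_0/2)^2\,\mathcal{L}(\bm{\theta}_t),
\end{equation*}
using invariant \emph{(III)}. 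This plays the role of a Polyak--\L{}ojasiewicz condition confined to the final layer's gradient block.

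\textbf{Step 2 (per-step descent).} For the gradient step $\bm{\theta}_{t+1} = \bm{\theta}_t - \eta \nabla \mathcal{L}(\bm{\theta}_t)$, I would combine Lemma~\ref{lemma:convergence_width_lip} (Lipschitz dependence of $\mathbf{H}^{(\ell)}$ on weights) with the gradient bound of Lemma~\ref{lemma:convergence_grad_bound} to obtain a quadratic upper bound
\begin{equation*}
\mathcal{L}(\bm{\theta}_{t+1}) \;\leq\; \mathcal{L}(\bm{\theta}_t) - \eta\,\bigl\|\nabla\mathcal{L}(\bm{\theta}_t)\bigr\|_\mathrm{F}^2 + \tfrac{\eta^2}{2}\,\beta_t\,\bigl\|\nabla\mathcal{L}(\bm{\theta}_t)\bigr\|_\mathrm{F}^2,
\end{equation*}
where $\beta_t$ is a local Lipschitz constant of $\nabla\mathcal{L}$ expressed in terms of $s$, the $\lambda_\ell$'s, and $\|\mathbf{X}\|_\mathrm{F}$. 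The learning-rate ceiling in Eq.~\eqref{eq:learning_rate} is chosen exactly so that $\eta\beta_t \leq 1$, reducing the recursion to $\mathcal{L}(\bm{\theta}_{t+1}) \leq \mathcal{L}(\bm{\theta}_t) - \tfrac{\eta}{2}\|\nabla_{\mathbf{W}^{(L+1)}}\mathcal{L}\|_\mathrm{F}^2$, which combined with Step 1 yields invariant \emph{(I)}.

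\textbf{Step 3 (closing the induction on \emph{(II)} and \emph{(III)}).} Invariant \emph{(II)} follows by telescoping: using Lemma~\ref{lemma:convergence_grad_bound} and the geometric decay from \emph{(I)},
\begin{equation*}
\bigl\|\mathbf{W}_{t+1}^{(\ell)} - \mathbf{W}_0^{(\ell)}\bigr\|_2 \;\leq\; \eta \sum_{s=0}^{t} \bigl\|\nabla_{\mathbf{W}^{(\ell)}}\mathcal{L}(\bm{\theta}_s)\bigr\|_\mathrm{F} \;\leq\; \eta\,\frac{s^L \lambda_{1\to(L+1)}}{\lambda_\ell}\|\mathbf{X}\|_\mathrm{F}\sum_{s=0}^{\infty}\sqrt{2\mathcal{L}(\bm{\theta}_s)},
\end{equation*}
and the first line of Eq.~\eqref{eq:wide_deep_gcn_convergence_assump} guarantees this geometric series is at most $C_\ell$. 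Invariant \emph{(III)} follows from Lemma~\ref{lemma:convergence_width_lip} applied at depth $L$, giving $\|\mathbf{H}_t^{(L)} - \mathbf{H}_0^{(L)}\|_\mathrm{F} \leq s^L \lambda_{1\to L}\|\mathbf{X}\|_\mathrm{F}\sum_\ell \lambda_\ell^{-1}\|\mathbf{W}_t^{(\ell)} - \mathbf{W}_0^{(\ell)}\|_2$, then Weyl's inequality $\lambda_{\min}(\mathbf{H}_t^{(L)}) \geq \alpha_0 - \|\mathbf{H}_t^{(L)} - \mathbf{H}_0^{(L)}\|_\mathrm{F}$; the second and third lines of Eq.~\eqref{eq:wide_deep_gcn_convergence_assump} are precisely the conditions needed for this difference to stay below $\alpha_0/2$. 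Finally, iterating \emph{(I)} and solving $(1-\eta\alpha_0^2/8)^T\mathcal{L}(\bm{\theta}_0) \leq \epsilon$ using $\log(1-x) \leq -x$ yields $T \geq (8/(\eta\alpha_0^2))\log(\mathcal{L}(\bm{\theta}_0)/\epsilon)$.

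\textbf{Main obstacle.} The genuine difficulty lies in the simultaneous bookkeeping of Step 3: the three scale conditions in Eq.~\eqref{eq:wide_deep_gcn_convergence_assump} are not independent but rather the exact thresholds making the induction close. In particular, the Lipschitz bound of Lemma~\ref{lemma:convergence_width_lip} inflates movements in \emph{every} layer multiplicatively through $\lambda_{1\to L}$, so one must track how perturbations in earlier layers compound when bounding $\lambda_{\min}(\mathbf{H}_t^{(L)})$. A secondary subtlety is that $\lambda_{\min}$ depends non-smoothly on the weights across ReLU kinks; this is handled by going through the perturbation bound on $\mathbf{H}^{(L)}$ (which the ReLU does not expand) rather than through activation-pattern analysis, and the dependence of the iteration count $T$ on depth $L$ is then absorbed into $C(L) = 8/(\eta\alpha_0^2)$ via the $\eta$-ceiling, which shrinks geometrically in $L$ through $s^{2L}\lambda_{1\to(L+1)}^2$.
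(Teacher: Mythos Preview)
Your inductive scheme with the three invariants \emph{(I)}--\emph{(III)} matches the paper's proof exactly, and Steps~1 and~3 are essentially what the paper does. The gap is in Step~2. The inequality you write,
\[
\mathcal{L}(\bm{\theta}_{t+1}) \leq \mathcal{L}(\bm{\theta}_t) - \eta\|\nabla\mathcal{L}(\bm{\theta}_t)\|_\mathrm{F}^2 + \tfrac{\eta^2}{2}\beta_t\|\nabla\mathcal{L}(\bm{\theta}_t)\|_\mathrm{F}^2,
\]
is the standard descent lemma, and it needs $\nabla\mathcal{L}$ to be $\beta_t$-Lipschitz on the segment $[\bm{\theta}_t,\bm{\theta}_{t+1}]$. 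For a ReLU network this fails: the gradient is only piecewise smooth and can jump across activation boundaries, so Lemmas~\ref{lemma:convergence_width_lip} and~\ref{lemma:convergence_grad_bound} (which control the forward map and the gradient \emph{magnitude}, not gradient \emph{continuity}) do not yield such a $\beta_t$. Your remark under ``Main obstacle'' about handling ReLU kinks via the perturbation bound on $\mathbf{H}^{(L)}$ addresses invariant \emph{(III)}, not this issue.

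The paper avoids smoothness entirely by expanding the square loss directly and inserting the intermediate point $\mathbf{G} = \mathbf{H}_t^{(L)}\mathbf{W}_{t+1}^{(L+1)}$ (only the linear last layer updated). The term $\langle \widehat{\mathbf{Y}}_t - \mathbf{Y},\, \mathbf{G} - \widehat{\mathbf{Y}}_t\rangle$ is then an \emph{exact} identity giving $-\eta\,\lambda_{\min}^2(\mathbf{H}_t^{(L)})\|\widehat{\mathbf{Y}}_t - \mathbf{Y}\|^2$, while the residual $\langle \widehat{\mathbf{Y}}_t - \mathbf{Y},\, \widehat{\mathbf{Y}}_{t+1} - \mathbf{G}\rangle$ (the effect of updating hidden layers) is bounded using only forward Lipschitzness. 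Crucially, this residual is $O(\eta)$, not $O(\eta^2)$: the paper's one-step recursion reads $\mathcal{L}(\bm{\theta}_{t+1}) \leq (1 - \eta\alpha_0^2/4 + \eta^2 A^2 + \eta B)\mathcal{L}(\bm{\theta}_t)$, and absorbing the $\eta B$ term is precisely what the \emph{third} line of Eq.~\eqref{eq:wide_deep_gcn_convergence_assump} is for. Your attribution of both the second and third lines to invariant \emph{(III)} is therefore off---only the second line controls $\|\mathbf{H}_t^{(L)} - \mathbf{H}_0^{(L)}\|_\mathrm{F}$; the third is consumed by the descent step, and a pure descent-lemma argument with only $O(\eta^2)$ error would leave that hypothesis unexplained.
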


\begin{proof} [Proof of Theorem~\ref{thm:wide_deep_gcn_convergence}]

The proof follows the proof of Theorem 2.2 in~\cite{nguyen2021proof} by extending result from MLP to GCN. 
The key idea of the proof is to show the followings hold by induction:
\begin{equation}\label{eq:wide_deep_gcn_convergence_eq1}
    \begin{cases}
    \| \mathbf{W}_r^{(\ell)}\|_2 \leq \lambda_\ell & \text{ for all } \ell \in [L],~r \in [0, t], \\ 
    \lambda_{\min}(\mathbf{H}_r^{(L)}) \geq \frac{\alpha_0}{2} & \text{ for all } r \in [0, t], \\ 
    \mathcal{L}(\bm{\theta}_r) \leq (1 - \eta \alpha_0^2 / 8) \mathcal{L}(\bm{\theta}_0)  & \text{ for all } r \in [0, t] .
    \end{cases}
\end{equation}

Clearly, Eq.~\ref{eq:wide_deep_gcn_convergence_eq1} holds for $t=0$. Let assume Eq.~\ref{eq:wide_deep_gcn_convergence_eq1} holds up to iteration $t$, and let show it also holds for $(t+1)$th iteration.

Let first prove the first inequality holds in Eq.~\ref{eq:wide_deep_gcn_convergence_eq1}.
By the triangle inequality, we can decompose the distance between $\mathbf{W}_{t+1}^{(\ell)}$ to $\mathbf{W}_{0}^{(\ell)}$ by
\begin{equation}\label{eq:wide_deep_gcn_convergence_eq2}
    \begin{aligned}
    \| \mathbf{W}_{t+1}^{(\ell)} - \mathbf{W}^{(\ell)}_0 \|_\mathrm{F}
    &\leq \sum_{k=0}^{t} \| \mathbf{W}_{k+1}^{(\ell)} - \mathbf{W}^{(\ell)}_k \|_\mathrm{F} \\
    &= \eta \sum_{k=0}^{t} \left\| \frac{\partial \mathcal{L}(\bm{\theta})}{\partial \mathbf{W}^{(\ell)}_k} \right\|_\mathrm{F} \\
    &\leq \eta s^L \lambda_{1\rightarrow (L+1)}\lambda_\ell^{-1} \| \mathbf{X} \|_\mathrm{F}  \sum_{k=0}^t \| \widehat{\mathbf{Y}}_k - \mathbf{Y} \|_\mathrm{F} \\
    &\leq \eta s^L \lambda_{1\rightarrow (L+1)}\lambda_\ell^{-1} \| \mathbf{X} \|_\mathrm{F}  \sum_{k=0}^t \sqrt{2 \mathcal{L}(\bm{\theta}_k)} \\
    &\underset{(a)}{\leq} \eta s^L \lambda_{1\rightarrow (L+1)}\lambda_\ell^{-1} \| \mathbf{X} \|_\mathrm{F}  \sum_{k=0}^t \left( 1 - \eta \frac{\alpha_0^2}{8}\right)^{k/2} \sqrt{2 \mathcal{L}(\bm{\theta}_0)},
    \end{aligned}
\end{equation}
where inequality $(a)$ follows the induction condition.

Let $u = \sqrt{1 - \eta \alpha_0^2 / 8}$ we have $\eta = 8(1 - u^2) / \alpha_0^2$ and
\begin{equation}
    \sum_{k=0}^t \left( 1 - \eta \frac{\alpha_0^2}{8}\right)^{k/2} = \sum_{k=0}^t u^k = \frac{1-u^{t+1}}{1-u}.
\end{equation}

Then, plugging the result into Eq.~\ref{eq:wide_deep_gcn_convergence_eq2}, we have
\begin{equation} \label{eq:wide_deep_gcn_convergence_eq3}
    \begin{aligned}
    \| \mathbf{W}_{t+1}^{(\ell)} - \mathbf{W}^{(\ell)}_0 \|_\mathrm{F} &\leq 
     s^L \lambda_{1\rightarrow (L+1)}\lambda_\ell^{-1} \| \mathbf{X} \|_\mathrm{F} \frac{8(1 - u^2)}{\alpha_0^2} \frac{1-u^{t+1}}{1-u} \sqrt{2 \mathcal{L}(\bm{\theta}_0)}, \\
     &\underset{(a)}{\leq} \frac{16}{\alpha_0^2} s^L \lambda_{1\rightarrow (L+1)}\lambda_\ell^{-1} \| \mathbf{X} \|_\mathrm{F} \sqrt{2 \mathcal{L}(\bm{\theta}_0)} ,
    \end{aligned}
\end{equation}
where $(a)$ is due to $u \in (0,1)$.
Let $C_\ell$ as a positive constant that
\begin{equation}
    C_\ell \geq \frac{16}{\alpha_0^2} s^L \lambda_{1\rightarrow (L+1)}\lambda_\ell^{-1} \| \mathbf{X} \|_\mathrm{F} \sqrt{2 \mathcal{L}(\bm{\theta}_0)}.
\end{equation}
By Wely's inequality, we have
\begin{equation}
    \| \mathbf{W}_{t+1}^{(\ell)} \|_2 \leq \| \mathbf{W}_{0}^{(\ell)} \|_2 + C_\ell = \lambda_\ell.
\end{equation}

To prove the second inequality holds in Eq.~\ref{eq:wide_deep_gcn_convergence_eq1}, we first upper bound $\| \mathbf{H}_{t+1}^{(L)} - \mathbf{H}_0^{(L)} \|_\mathrm{F} $ by
\begin{equation}
    \begin{aligned}
    \| \mathbf{H}_{t+1}^{(L)} - \mathbf{H}_0^{(L)} \|_\mathrm{F} 
    &\underset{(a)}{\leq} s^L \lambda_{1\rightarrow L} \| \mathbf{X} \|_\mathrm{F} \sum_{\ell=1}^L\lambda_\ell^{-1} \| \mathbf{W}^{(\ell)}_{t+1} - \mathbf{W}^{(\ell)}_0 \|_2 \\
    &\underset{(b)}{\leq} s^L \lambda_{1\rightarrow L} \| \mathbf{X} \|_\mathrm{F} \sum_{\ell=1}^L \lambda_\ell^{-1} \left( \frac{16}{\alpha_0^2} s^L \lambda_{1\rightarrow (L+1)}\lambda_\ell^{-1} \| \mathbf{X} \|_\mathrm{F} \sqrt{2 \mathcal{L}(\bm{\theta}_0)} \right) \\
    &= \frac{16}{\alpha_0^2} s^{2L} \lambda_{1\rightarrow L} \lambda_{1\rightarrow (L+1)}  \| \mathbf{X} \|_\mathrm{F}^2 \sqrt{2\mathcal{L}(\bm{\theta}_0)} \left( \sum_{\ell=1}^L \lambda_\ell^{-2} \right) \\
    &\underset{(c)}{\leq} \frac{\alpha_0}{2},
    \end{aligned}
\end{equation}
where $(a)$ is due to Lemma~\ref{lemma:convergence_width_lip}, $(b)$ is due to Eq.~\ref{eq:wide_deep_gcn_convergence_eq3}, and $(c)$ is due to the second condition in Eq.~\ref{eq:wide_deep_gcn_convergence_assump}.

By Weyl's inequality, this implies $| \lambda_{\min}(\mathbf{H}_{t+1}^{(L)}) - \lambda_{\min}(\mathbf{H}_{0}^{(L)}) | = | \lambda_{\min}(\mathbf{H}_{t+1}^{(L)}) - \alpha_0 | \leq \frac{\alpha_0}{2}$ and
\begin{equation}\label{eq:wide_deep_gcn_convergence_eq4}
    \lambda_{\min}(\mathbf{H}_{t+1}^{(L)}) \geq \frac{\alpha_0}{2}.
\end{equation}

Let $\mathbf{G} = \mathbf{H}_t^{(L)} \mathbf{W}_{t+1}^{(L+1)}$, then we have
\begin{equation} \label{eq:decompose_loss_convergence}
    \begin{aligned}
    2 \mathcal{L}(\bm{\theta}_{t+1}) &= \| \widehat{\mathbf{Y}}_{t+1} - \mathbf{Y} \|_\mathrm{F}^2 \\
    &= \| \widehat{\mathbf{Y}}_{t+1} - \widehat{\mathbf{Y}}_{t} + \widehat{\mathbf{Y}}_{t}   - \mathbf{Y} \|_\mathrm{F}^2 \\
    &= \| \widehat{\mathbf{Y}}_{t}   - \mathbf{Y} \|_\mathrm{F}^2 + \| \widehat{\mathbf{Y}}_{t+1} - \widehat{\mathbf{Y}}_{t} \|_\mathrm{F}^2 + 2 \langle \widehat{\mathbf{Y}}_{t}   - \mathbf{Y}, \widehat{\mathbf{Y}}_{t+1} - \widehat{\mathbf{Y}}_{t} \rangle_\mathrm{F} \\
    &= 2 \mathcal{L}(\bm{\theta}_{t}) + \| \widehat{\mathbf{Y}}_{t+1} - \widehat{\mathbf{Y}}_{t} \|_\mathrm{F}^2 + 2 \langle \widehat{\mathbf{Y}}_{t}   - \mathbf{Y}, \widehat{\mathbf{Y}}_{t+1} - \mathbf{G} \rangle_\mathrm{F} + 2 \langle \widehat{\mathbf{Y}}_{t}   - \mathbf{Y}, \mathbf{G} - \widehat{\mathbf{Y}}_{t} \rangle_\mathrm{F}.
    \end{aligned}
\end{equation}

We can upper bound $\| \widehat{\mathbf{Y}}_{t+1} - \widehat{\mathbf{Y}}_{t} \|_\mathrm{F}^2$ in Eq.~\ref{eq:decompose_loss_convergence} by 
\begin{equation}
    \begin{aligned}
    \| \widehat{\mathbf{Y}}_{t+1} - \widehat{\mathbf{Y}}_{t} \|_\mathrm{F}
    &= \| \mathbf{H}^{(L)}_{t+1} \mathbf{W}^{(L+1)}_{t+1} - \mathbf{H}^{(L)}_{t} \mathbf{W}^{(L+1)}_{t} \|_\mathrm{F} \\
    &= \| \mathbf{H}^{(L)}_{t+1} (\mathbf{W}^{(L+1)}_{t+1} - \mathbf{W}^{(L+1)}_{t} ) + (\mathbf{H}^{(L)}_{t+1} - \mathbf{H}^{(L)}_{t} ) \mathbf{W}^{(L+1)}_{t} \|_\mathrm{F} \\
    &\leq \| \mathbf{H}^{(L)}_{t+1}  \|_\mathrm{F} \| \mathbf{W}^{(L+1)}_{t+1} - \mathbf{W}^{(L+1)}_{t}  \|_2 + \| \mathbf{H}^{(L)}_{t+1} - \mathbf{H}^{(L)}_{t} \|_\mathrm{F} \| \mathbf{W}^{(L+1)}_{t} \|_2 \\
    &\underset{(a)}{\leq} s^L\lambda_{1\rightarrow (L+1)} \| \mathbf{X} \|_\mathrm{F} \sum_{\ell=1}^L \lambda_\ell^{-1} \| \mathbf{W}^{(\ell)}_{t+1} - \mathbf{W}^{(\ell)}_t \|_2 + s^L \lambda_{1\rightarrow L}\| \mathbf{X} \|_\mathrm{F} \| \mathbf{W}^{(L+1)}_{t+1} - \mathbf{W}^{(L+1)}_{t}  \|_2 \\
    &= s^L\lambda_{1\rightarrow (L+1)} \| \mathbf{X} \|_\mathrm{F} \sum_{\ell=1}^{L+1} \lambda_\ell^{-1} \| \mathbf{W}^{(\ell)}_{t+1} - \mathbf{W}^{(\ell)}_t \|_2 \\
    &\underset{(b)}{\leq} \eta s^L\lambda_{1\rightarrow (L+1)} \| \mathbf{X} \|_\mathrm{F} \sum_{\ell=1}^{L+1} \lambda_\ell^{-1} \left( \frac{s^L \lambda_{1\rightarrow (L+1)}}{\lambda_\ell} \| \mathbf{X} \|_\mathrm{F} \| \widehat{\mathbf{Y}}_t - \mathbf{Y} \|_\mathrm{F} \right) \\
    &= \eta s^{2L}\lambda_{1\rightarrow (L+1)}^2 \| \mathbf{X} \|_\mathrm{F}^2 \left( \sum_{\ell=1}^{L+1} \lambda_\ell^{-2} \right) \| \widehat{\mathbf{Y}}_t - \mathbf{Y} \|_\mathrm{F} ,
    \end{aligned}
\end{equation}
where $(a)$ is due to Eq.~\ref{eq:convergence_width_lip_eq1} and Lemma~\ref{lemma:convergence_width_lip}, and $(b)$ follows from Lemma~\ref{lemma:convergence_grad_bound}.

We can upper bound $\langle \widehat{\mathbf{Y}}_{t}   - \mathbf{Y}, \widehat{\mathbf{Y}}_{t+1} - \mathbf{G} \rangle_\mathrm{F}$ in Eq.~\ref{eq:decompose_loss_convergence} by 
\begin{equation}
    \begin{aligned}
    &\langle \widehat{\mathbf{Y}}_{t}   - \mathbf{Y}, \widehat{\mathbf{Y}}_{t+1} - \mathbf{G} \rangle_\mathrm{F} \\
    &\leq \|  \widehat{\mathbf{Y}}_{t}   - \mathbf{Y} \|_\mathrm{F} \| \widehat{\mathbf{Y}}_{t+1} - \mathbf{G} \|_\mathrm{F} \\
    &= \|  \widehat{\mathbf{Y}}_{t}   - \mathbf{Y} \|_\mathrm{F} \| (\mathbf{H}^{(L)}_{t+1} - \mathbf{H}_t^{(L)} ) \mathbf{W}_{t+1}^{(L+1)} \|_\mathrm{F} \\
    &\underset{(a)}{\leq} \lambda_{L+1} \|  \widehat{\mathbf{Y}}_{t}   - \mathbf{Y} \|_\mathrm{F} \left( s^L \lambda_{1\rightarrow L} \| \mathbf{X} \|_\mathrm{F} \sum_{\ell=1}^L \lambda_\ell^{-1} \| \mathbf{W}^{(\ell)}_{t+1} - \mathbf{W}^{(\ell)}_t \|_2 \right) \\
    &= s^L \lambda_{1\rightarrow (L+1)} \|  \widehat{\mathbf{Y}}_{t}   - \mathbf{Y} \|_\mathrm{F} \| \mathbf{X} \|_\mathrm{F} \left( \sum_{\ell=1}^L \lambda_\ell^{-1} \| \mathbf{W}^{(\ell)}_{t+1} - \mathbf{W}^{(\ell)}_t \|_2  \right) \\
    &\underset{(b)}{\leq} \eta s^L \lambda_{1\rightarrow (L+1)} \|  \widehat{\mathbf{Y}}_{t}   - \mathbf{Y} \|_\mathrm{F} \| \mathbf{X} \|_\mathrm{F} \left( \sum_{\ell=1}^L \lambda_\ell^{-1} \frac{s^L \lambda_{1\rightarrow (L+1)}}{\lambda_\ell} \| \mathbf{X} \|_\mathrm{F} \| \widehat{\mathbf{Y}} - \mathbf{Y} \|_\mathrm{F}  \right) \\
    &= \eta s^{2L} \lambda_{1\rightarrow (L+1)}^2 \|  \widehat{\mathbf{Y}}_{t}   - \mathbf{Y} \|_\mathrm{F}^2 \| \mathbf{X} \|_\mathrm{F}^2 \left( \sum_{\ell=1}^L \lambda_\ell^{-2} \right),
    \end{aligned}
\end{equation}
where $(a)$ is due to Lemma~\ref{lemma:convergence_width_lip}, and $(b)$ follows from Lemma~\ref{lemma:convergence_grad_bound}.

We can upper bound $\langle \widehat{\mathbf{Y}}_{t}   - \mathbf{Y}, \mathbf{G} - \widehat{\mathbf{Y}}_{t} \rangle_\mathrm{F}$ by 
\begin{equation}
    \begin{aligned}
    \langle \widehat{\mathbf{Y}}_{t}   - \mathbf{Y}, \mathbf{G} - \widehat{\mathbf{Y}}_{t} \rangle_\mathrm{F}
    &= \langle \widehat{\mathbf{Y}}_{t}   - \mathbf{Y}, \mathbf{H}_t^{(L)} (\mathbf{W}_{t+1}^{(L+1)} - \mathbf{W}_t^{(L+1)} ) \rangle_\mathrm{F} \\
    &= \eta \left\langle \widehat{\mathbf{Y}}_{t}   - \mathbf{Y}, \mathbf{H}_t^{(L)} \frac{\partial \mathcal{L}(\bm{\theta})}{\partial \mathbf{W}^{(L+1)}_t} \right\rangle_\mathrm{F} \\
    &= - \eta \left\langle \widehat{\mathbf{Y}}_{t}   - \mathbf{Y}, \mathbf{H}_t^{(L)} \left((\mathbf{H}_t^{(L)})^\top(\widehat{\mathbf{Y}}_{t}   - \mathbf{Y})\right) \right\rangle_\mathrm{F} \\
    % &\underset{(a)}{\leq} -\eta \lambda_{\min}^2(\mathbf{H}^{(L)}_t) \| \widehat{\mathbf{Y}}_{t}   - \mathbf{Y} \|_\mathrm{F}^2 \\
    % \left\langle \widehat{\mathbf{Y}}_t - \mathbf{Y}, \mathbf{H}_t^{(L)} \left( (\mathbf{H}_t^{(L)})^\top (\widehat{\mathbf{Y}}_t - \mathbf{Y})\right)\right\rangle \\
    &\underset{(a)}{=} -\eta ~\mathrm{tr}\left( (\widehat{\mathbf{Y}}_t - \mathbf{Y})^\top \mathbf{H}_t^{(L)}  (\mathbf{H}_t^{(L)})^\top (\widehat{\mathbf{Y}}_t - \mathbf{Y})\right) \\
    &\underset{(b)}{=} -\eta  ~\mathrm{tr}\left(  \mathbf{H}_t^{(L)}  (\mathbf{H}_t^{(L)})^\top (\widehat{\mathbf{Y}}_t - \mathbf{Y}) (\widehat{\mathbf{Y}}_t - \mathbf{Y})^\top \right) \\
    &\underset{(c)}{\leq} -\eta \lambda_{\min}\left(\mathbf{H}_t^{(L)}  (\mathbf{H}_t^{(L)})^\top\right) \mathrm{tr}\left((\widehat{\mathbf{Y}}_t - \mathbf{Y}) (\widehat{\mathbf{Y}}_t - \mathbf{Y})^\top\right)\\
    &\underset{(d)}{=} -\eta  \lambda_{\min}\left(\mathbf{H}_t^{(L)}  (\mathbf{H}_t^{(L)})^\top\right) \| \widehat{\mathbf{Y}}_t - \mathbf{Y} \|_\mathrm{F}^2 \\
    &\underset{(e)}{=} -\eta  \lambda_{\min}^2 \left(\mathbf{H}_t^{(L)} \right) \| \widehat{\mathbf{Y}}_t - \mathbf{Y} \|_\mathrm{F}^2, \\
    &\underset{(f)}{\leq} - \eta \frac{\alpha_0^2}{4} \| \widehat{\mathbf{Y}}_{t}   - \mathbf{Y} \|_\mathrm{F}^2,
    \end{aligned}
\end{equation}
where $(a)$ is due to $\langle \mathbf{A}, \mathbf{B} \rangle = \mathrm{tr}(\mathbf{A}^\top \mathbf{B})$, $(b)$ is because of the cyclic property of the 
trace, $(c)$ is due to $\mathrm{tr}(\mathbf{A} \mathbf{B}) \geq \lambda_{\min}(\mathbf{A}) \mathrm{tr}( \mathbf{B}) $ for any symmetric matrices $\mathbf{A},\mathbf{B} \in \mathbb{S}$ and $\mathbf{B}$ is positive semi-definite, $(d)$ is due to $\mathrm{tr}(\mathbf{A} \mathbf{A}^\top)=\| \mathbf{A} \|^2_\mathrm{F}$, $(e)$ requires the $d_L \geq N$ assumption\footnote{
The equality $(e)$ follows from the fact that given a fat matrix $\mathbf{A} \in \mathbb{R}^{N\times d}$ with $d > N$, all the eigenvalues of $\mathbf{A} \mathbf{A}^\top$ are just the square of the singular values of $\mathbf{A}$. However, note that this is no longer true when $d < N$ since in this case we have $\mathbf{A}\mathbf{A}^\top$ being a low rank matrix, hence its smallest eigenvalue is zero, whereas the smallest singular value of $\mathbf{A}$ (i.e., the $\min(N,d)$-th singular value of $\mathbf{A}$) can be strictly positive if $\mathbf{A}$ is full rank, hence (e) does not hold.},  and $(f)$ is due to Eq.~\ref{eq:wide_deep_gcn_convergence_eq4}.

Let $A = s^{2L} \lambda_{1\rightarrow (L+1)}^2 \| \mathbf{X} \|_\mathrm{F}^2 \left( \sum_{\ell=1}^{L+1} \lambda_\ell^{-2} \right)$ and $B = s^{2L} \lambda_{1\rightarrow (L+1)}^2 \| \mathbf{X} \|_\mathrm{F}^2 \left( \sum_{\ell=1}^L \lambda_\ell^{-2} \right)$, we have
\begin{equation}
    \begin{aligned}
    \mathcal{L}(\bm{\theta}_{t+1}) &\leq \left( 1 - \eta \frac{\alpha_0^2}{4} + \eta^2 A^2 + \eta B \right)\mathcal{L}(\bm{\theta}_{t}) \\
    &\underset{(a)}{\leq} \left( 1 - \eta \left( \frac{\alpha_0^2}{4} - 2 B \right)\right) \mathcal{L}(\bm{\theta}_{t}) \\
    &\underset{(b)}{\leq} \left( 1 - \eta \frac{\alpha_0^2}{8} \right) \mathcal{L}(\bm{\theta}_{t}),
    \end{aligned}
\end{equation}
where $(a)$ requires choosing $\eta$ such that $\eta \leq B/A^2$,
and $(b)$ holds due to the first condition in Eq.~\ref{eq:wide_deep_gcn_convergence_assump}.

Therefore, we have \begin{equation}
    \mathcal{L}(\bm{\theta}_T) \leq \left( 1 - \eta \frac{\alpha_0^2}{8} \right)^T \mathcal{L}(\bm{\theta}_0) \leq \exp\left( - \eta T \frac{\alpha_0^2}{8} \right) \mathcal{L}(\bm{\theta}_0)  = \epsilon.
\end{equation}
By taking log on both side, we have
\begin{equation}
    T \geq \frac{8}{\eta \alpha_0^2} \log\left( \frac{\mathcal{L}(\bm{\theta}_0)}{\epsilon}\right).
\end{equation}

\end{proof}

\clearpage
\section{Additional empirical results}\label{supp:more_empirical}

In this section, we report additional empirical results to illustrate the correctness of our theoretical analysis and the benefits of decoupled GCN structure.

% ###############################################################
% ###############################################################
% ###############################################################
\subsection{Comparison of generalization error on the real-world datasets.}
We empirically compare the generalization error of different GCN structures on the real-world datasets, where the generalization error is measured by the difference between validation loss and training loss. 

\noindent\textbf{Setups.}
We select hidden dimension as $64$, $\alpha_\ell=0.9$ for APPNP and GCNII, $\beta_\ell \approx \frac{0.5}{\ell}$ for GCNII, and without any weight decay ($\ell_2$ regularization on weight matrices) or dropout operators.
Note that although in our theoretical analysis we suppose $\beta_\ell$ is a constant that does not change with respect to the layers, in this experiment we follow the configuration in~\cite{chen2020simple} by selecting $\beta_\ell = \log( \frac{0.5}{\ell} + 1)$, which guarantees a small generalization error on most small scale datasets. 
% We explore the effect of hyper-parameter $(\alpha_\ell, \beta_\ell)$ on GCNII later in this section. 
The same setup is also used for the results on synthetic dataset in Figure~\ref{fig:synthetic_dataset}. 

\noindent\textbf{Results.}
As shown in Figure~\ref{fig:generalization_gap_cora_468} and Figure~\ref{fig:generalization_gap_citeseer_468}, ResGCN has the largest generalization
gap due to the skip-connections, APPNP has the smallest generalization error by removing the weight matrices in each individual layers. 
On the other hand, GCNII achieves a good balance between GCN and APPNP by balancing the expressive and generalization power. 
Finally, DGCN enjoys a small generalization error by using the decoupled GCN structure.

\begin{figure}[h]
    \centering
    \includegraphics[width=1.0\textwidth]{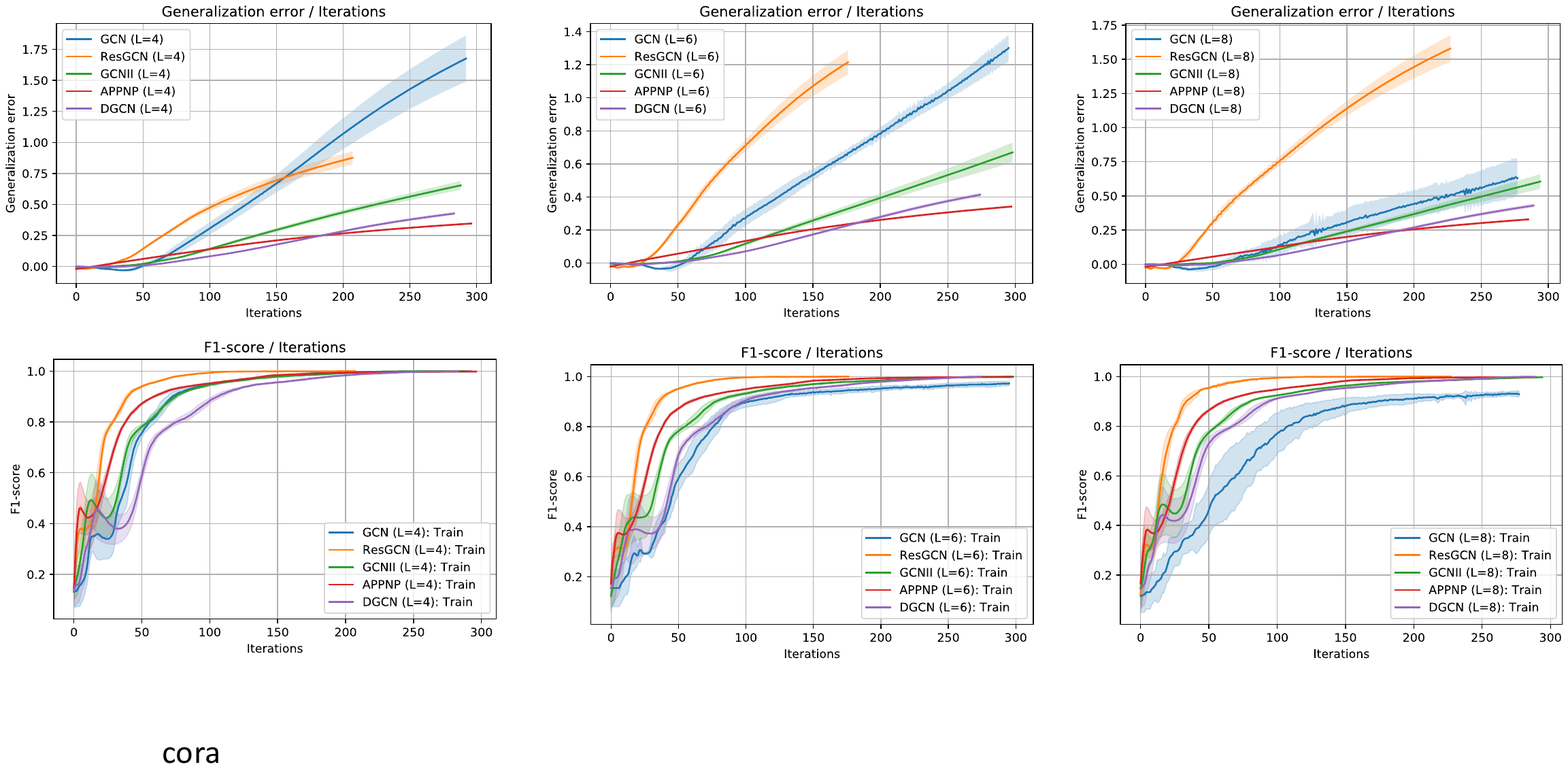}
    \caption{Comparison of generalization error and training F1-score on the \textit{Cora} dataset. The curve stops early at the largest training accuracy iteration.}
    \label{fig:generalization_gap_cora_468}
\end{figure}

\begin{figure}[h]
    \centering
    \includegraphics[width=1.0\textwidth]{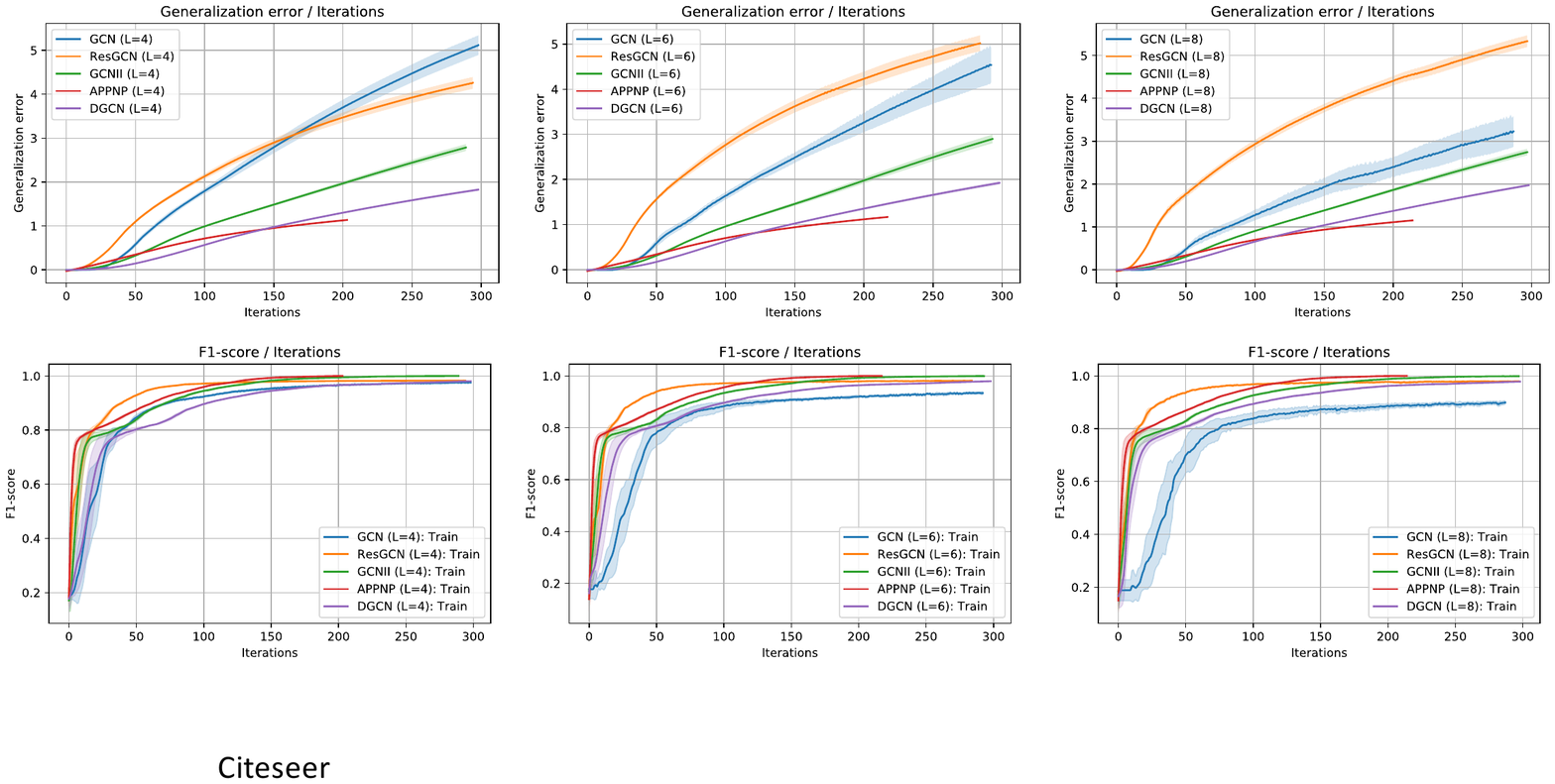}
    \caption{Comparison of generalization error and training F1-score on the \textit{Citeseer} dataset. The curve stops early at the largest training accuracy iteration.}
    \label{fig:generalization_gap_citeseer_468}
\end{figure}

% ###############################################################
% ###############################################################
% ###############################################################
\subsection{Effect of hyper-parameters in GCNII.}

In the following, we first compare the effect of $\alpha_\ell$ on the generalization error of GCNII.
We select the hidden dimension as $64$, and no dropout or weight decay is used for the training.
As shown in Figure~\ref{fig:generalization_gap_GCNII_alpha_4816_cora} and Figure~\ref{fig:generalization_gap_GCNII_alpha_4816_citeseer}, increasing $\alpha_\ell$ leads to a smaller generalization error but a slower convergence speed, i.e., GCNII trades off the expressiveness for generalization power by increasing $\alpha_\ell$ from $0$ to $1$. 
In practice, $\alpha_\ell=0.9$ is utilized in GCNII~\cite{chen2020simple} for empirical evaluations.\footnote{Note that the definition of $\alpha_\ell$ we use in this paper is different from the definition in~\cite{chen2020simple}, where $\alpha_\ell=0.9$ in this paper stands for the selection of alpha as $1-\alpha_\ell=0.1$ in~\cite{chen2020simple}.}

\begin{figure}[h]
    \centering
    \includegraphics[width=1.0\textwidth]{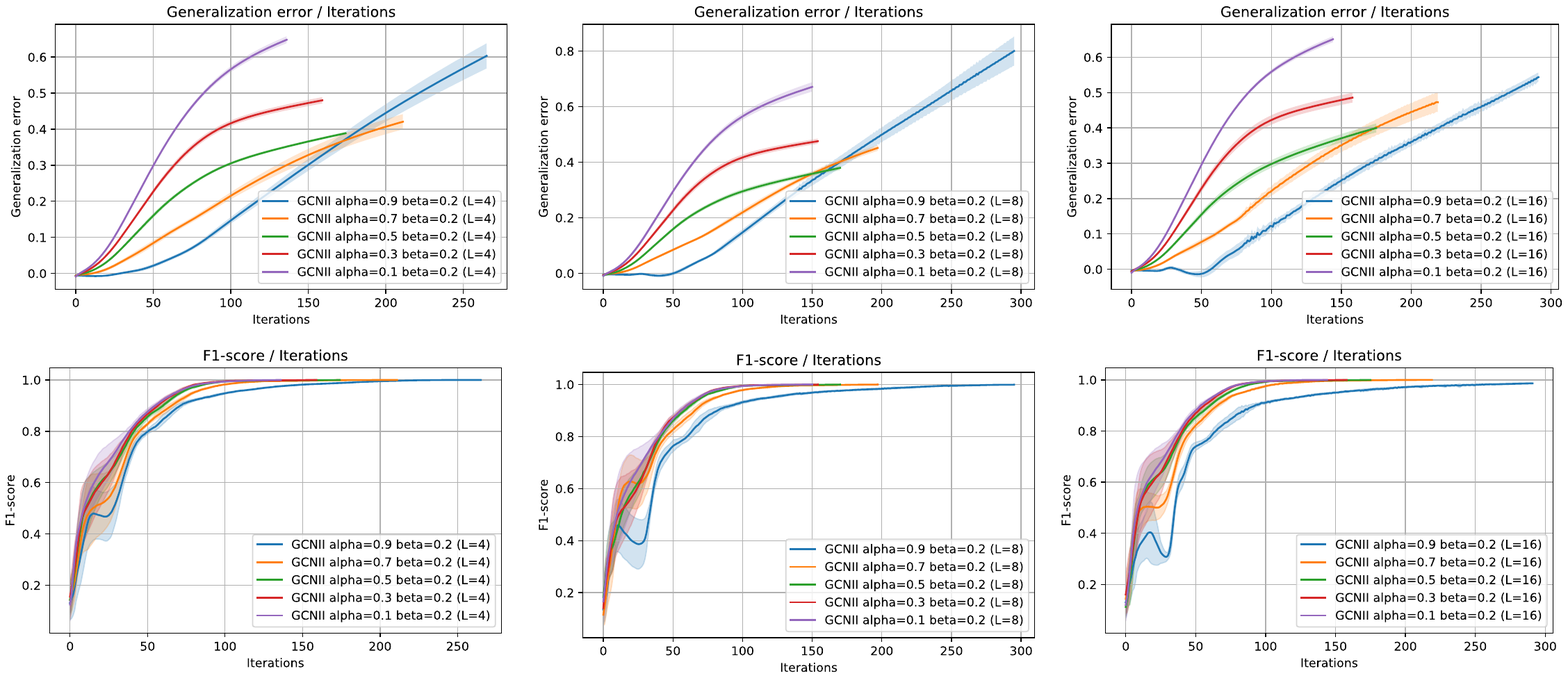}
    \caption{Comparison of $\alpha_\ell$ on the generalization error on \textit{Cora} dataset. The curve stops early at the largest training accuracy iteration.}
    \label{fig:generalization_gap_GCNII_alpha_4816_cora}
\end{figure}

\begin{figure}[h]
    \centering
    \includegraphics[width=1.0\textwidth]{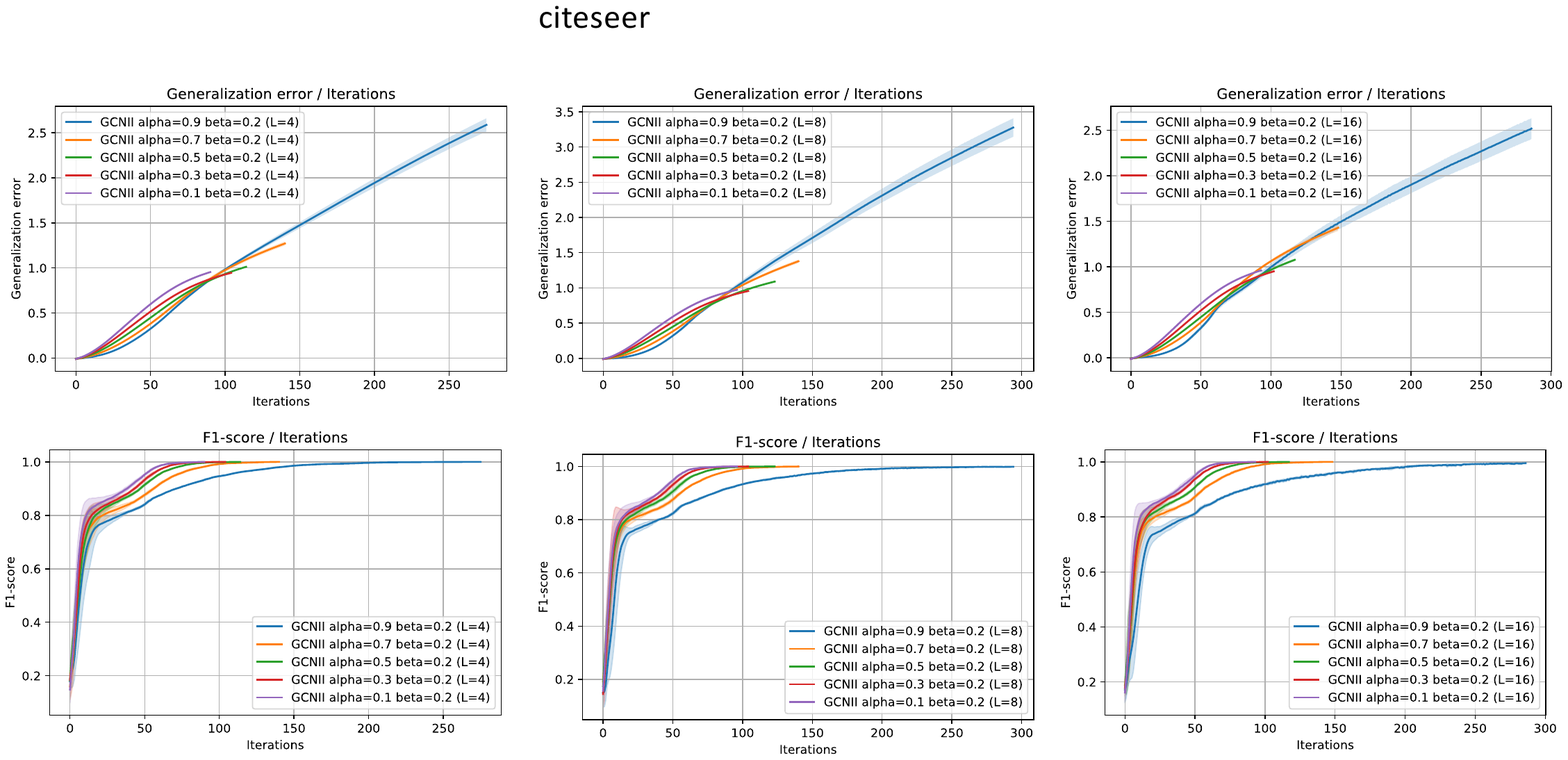}
    \caption{Comparison of $\alpha_\ell$ on the generalization error on \textit{Citeseer} dataset. The curve stops early at the largest training accuracy iteration.}
    \label{fig:generalization_gap_GCNII_alpha_4816_citeseer}
\end{figure}

Then, we compare the effect of $\beta_\ell$ on the generalization error of GCNII. 
Similar to the previous experiments, we choose hidden dimension as $64$, without applying dropout or weight decay during training.
As shown in Figure~\ref{fig:generalization_gap_GCNII_beta_4816_cora} and Figure~\ref{fig:generalization_gap_GCNII_beta_4816_citeseer}, decreasing $\beta_\ell$ leads to a smaller generalization error. 
In practice, $\beta_\ell = \log(\frac{0.5}{\ell} + 1)$ is utilized in GCNII~\cite{chen2020simple} for empirical evaluations, which guarantees a small generalization error.

\begin{figure}[h]
    \centering
    \includegraphics[width=1.0\textwidth]{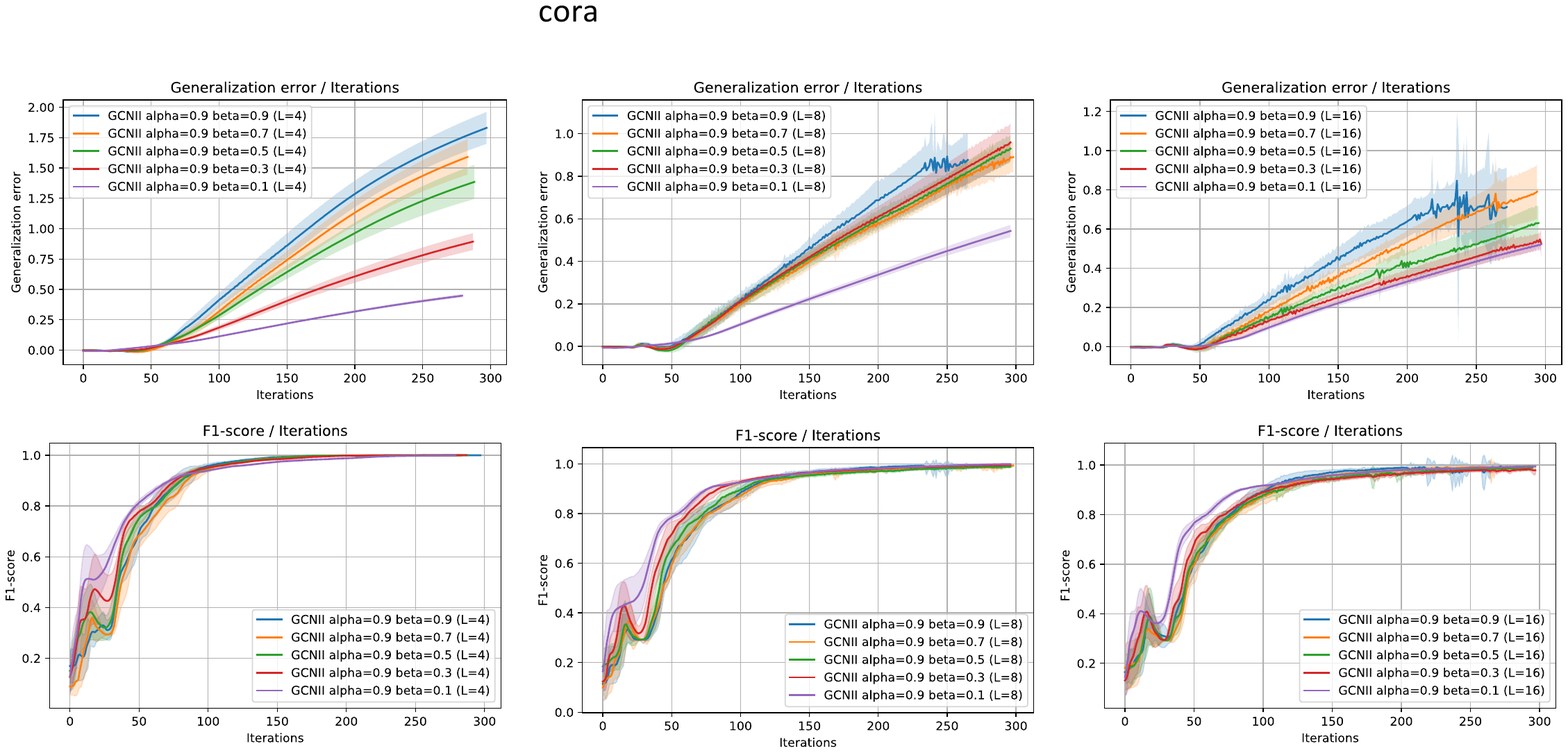}
    \caption{Comparison of $\beta_\ell$ on the generalization error on \textit{Cora} dataset. The curve stops early at the largest training accuracy iteration.}
    \label{fig:generalization_gap_GCNII_beta_4816_cora}
\end{figure}

\begin{figure}[h]
    \centering
    \includegraphics[width=1.0\textwidth]{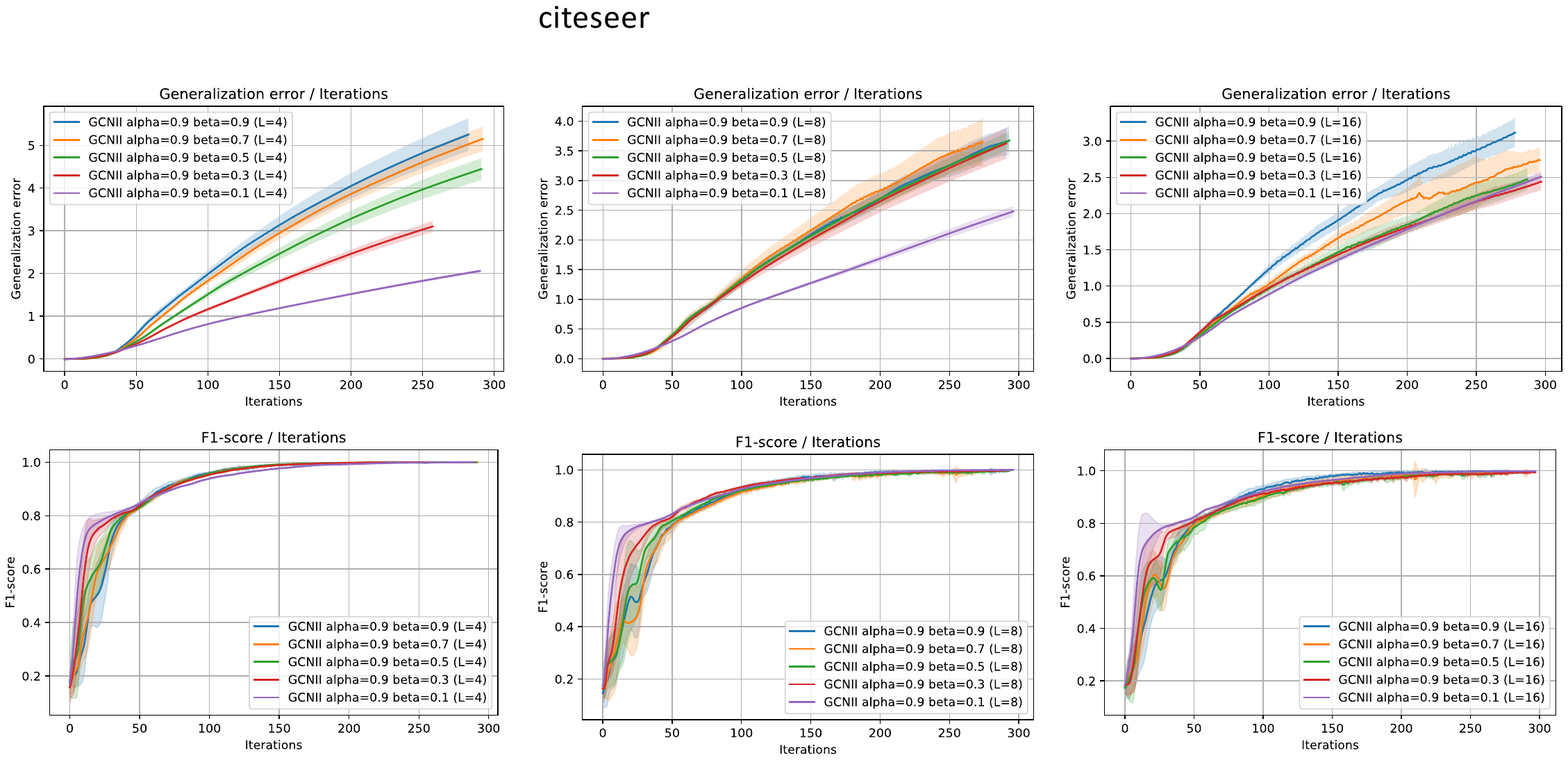}
    \caption{Comparison of $\beta_\ell$ on the generalization error on \textit{Citeseer} dataset. The curve stops early at the largest training accuracy iteration.}
    \label{fig:generalization_gap_GCNII_beta_4816_citeseer}
\end{figure}

\subsection{Effect of DropEdge on generalization} \label{supp:dropegde}
In the following, we explore the effect of node embedding augmentation technique DropEdge~\cite{rong2019dropedge} on the generalization of GCNs.
Recall that DropEdge proposes to randomly drop a certain rate of edges in the input graph at each iteration and compute node embedding based on the sparser graph. 
The forward propagation rule of this technique can be formulated as $\mathbf{H}^{(\ell-1)} = \sigma(\widetilde{\mathbf{L}} \mathbf{H}^{(\ell-1)} \mathbf{W}^{(\ell)}))$ where $\widetilde{\mathbf{L}}$ is constructed by the adjacency matrix of the sparser graph with $\text{supp}(\widetilde{\mathbf{L}}) \ll \text{supp}(\mathbf{L})$. 

Again, we choose hidden dimension as $64$, without applying dropout or weight decay during training.
As shown in Figure~\ref{fig:dropedge_cora_456} and Figure~\ref{fig:dropedge_citeseer_456}, DropEdge reduces the generalization error by restricting the number of nodes used during training.
Besides, we observe that both training accuracy and generalization error decrease when the fraction of remaining edges in the graph decreases, which implies that edge dropping is impacting the generalization of GCN rather than the discriminativeness of node representations.

\begin{figure}[h]
    \centering
    \includegraphics[width=1.0\textwidth]{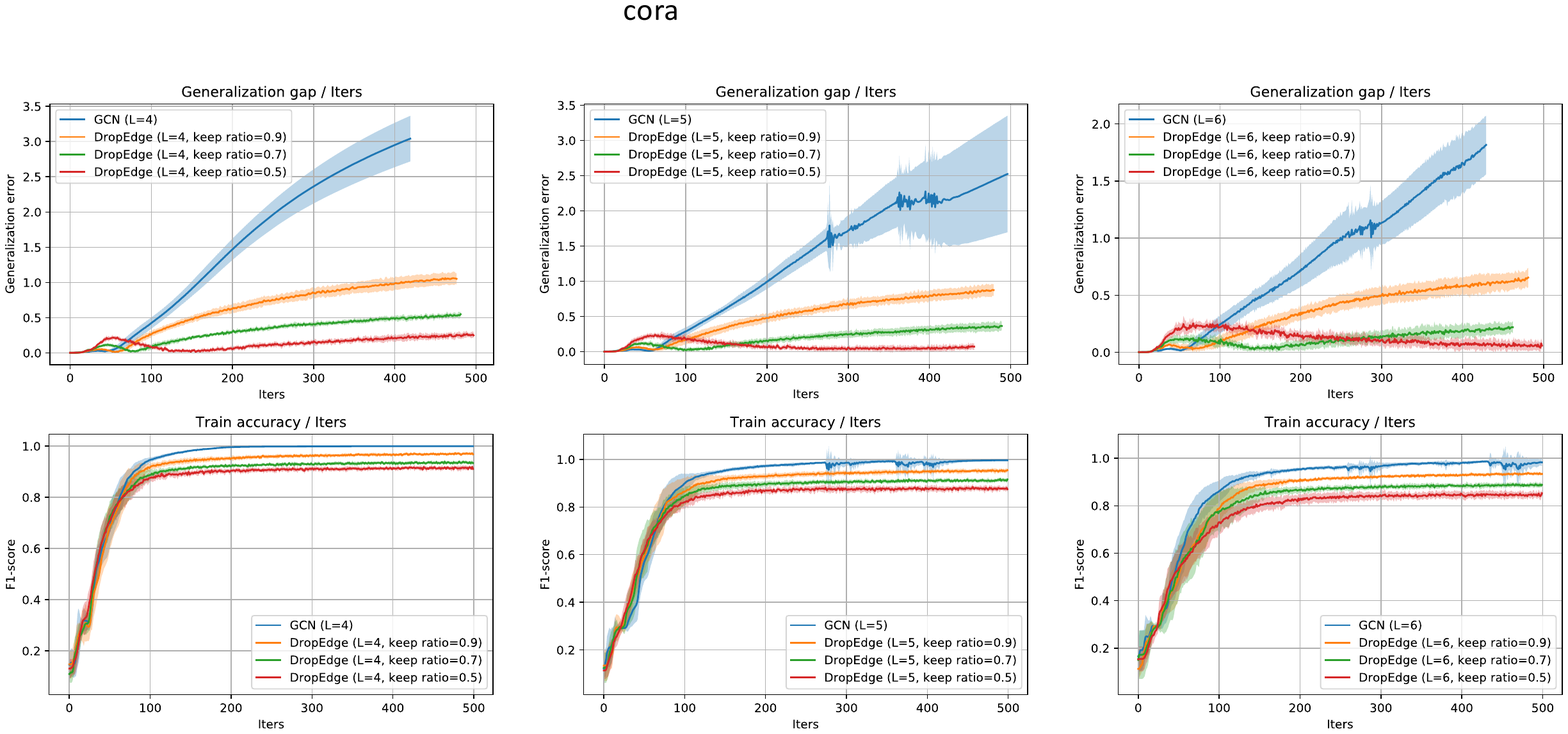}
    \caption{Comparison of generalization error of \textit{DropEdge} on \textit{Cora} dataset. The curve stops early at the largest training accuracy iteration.}
    \label{fig:dropedge_cora_456}
\end{figure}

\begin{figure}[h]
    \centering
    \includegraphics[width=1.0\textwidth]{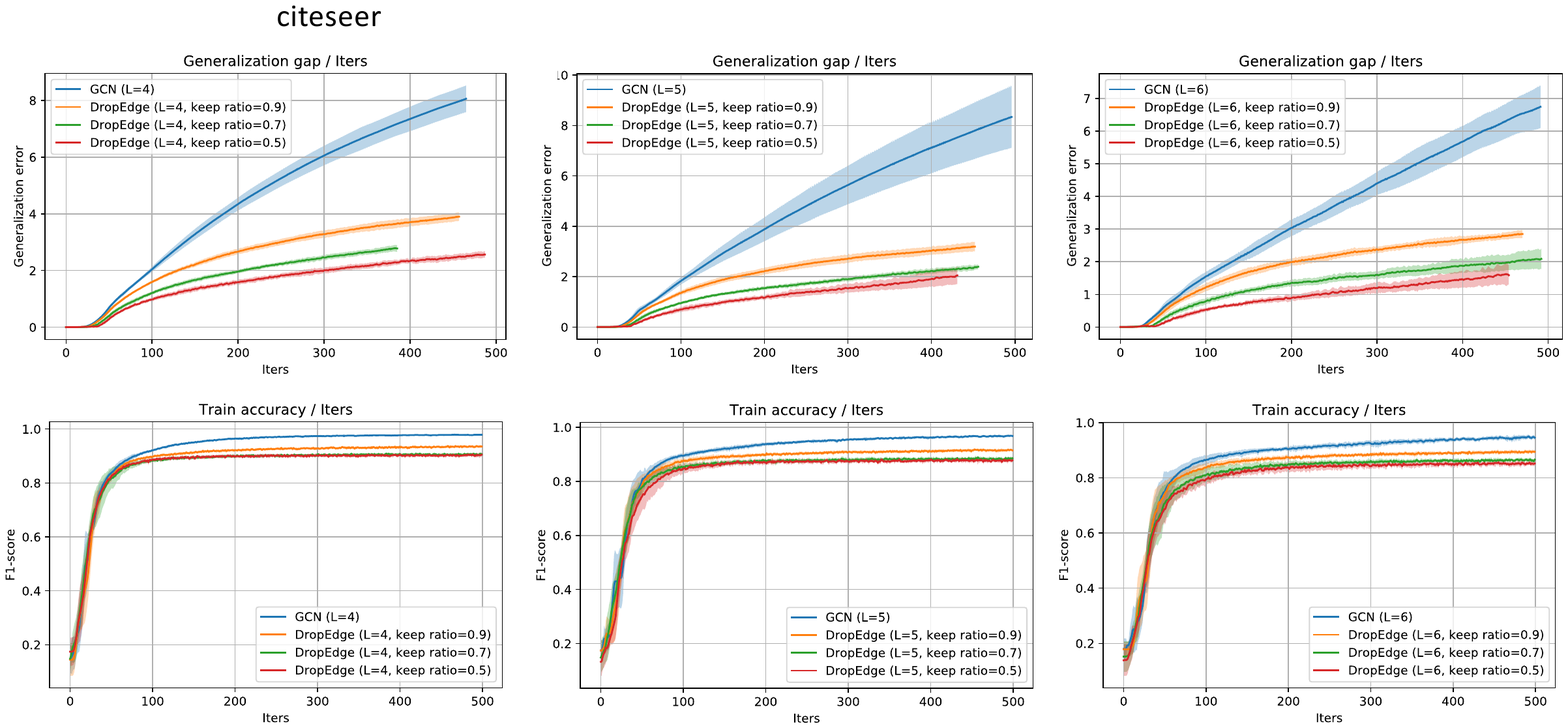}
    \caption{Comparison of generalization error of \textit{DropEdge} on \textit{Citeseer} dataset. The curve stops early at the largest training accuracy iteration.}
    \label{fig:dropedge_citeseer_456}
\end{figure}

% ###############################################################
% ###############################################################
% ###############################################################
% \clearpage
\subsection{Effect of PairNorm on generalization} \label{supp:pairnorm}
In the following, we explore the effect of node embedding augmentation technique PairNorm~\cite{zhao2019pairnorm} on the generalization of GCNs.
PairNorm proposes to normalized node embeddings by
$\mathbf{H}^{(\ell)} = \texttt{PN}(\mathbf{H}^{(\ell)}) = \gamma \frac{\mathbf{H}^{(\ell)} - \bm{\mu}(\mathbf{H}^{(\ell)})}{\sigma(\mathbf{H}^{(\ell)})}$ where the average node embedding is computed as $ \bm{\mu}(\mathbf{H}^{(\ell)})= \frac{1}{N} \sum_{i=1}^N \mathbf{h}_i^{(\ell)}$, the variance of node embeddings is computed as $\sigma^2(\mathbf{H}^{(\ell)}) = \frac{1}{N}\sum_{i=1}^N \| \mathbf{h}_i^{(\ell)} - \bm{\mu}(\mathbf{H}^{(\ell)}) \|_2^2$, and $\gamma\geq 0$ controls the scale of node embeddings.

We choose hidden dimension as $64$, and no dropout or weight decay is used during training.
As shown in Figure~\ref{fig:pairnorm_cora_456} and Figure~\ref{fig:pairnorm_citeseer_456}, a larger scale ratio $\gamma$ can improve the discriminativeness of node embeddings, but will hurt the generalization error. 
A smaller scale ratio leads to a small generalization error, but it makes the node embeddings harder to discriminate, therefore over-smoothing happens (e.g., using $\gamma=0.1$ can be think of as creating over-smoothing effect on node embeddings).

\begin{figure}[h]
    \centering
    \includegraphics[width=1.0\textwidth]{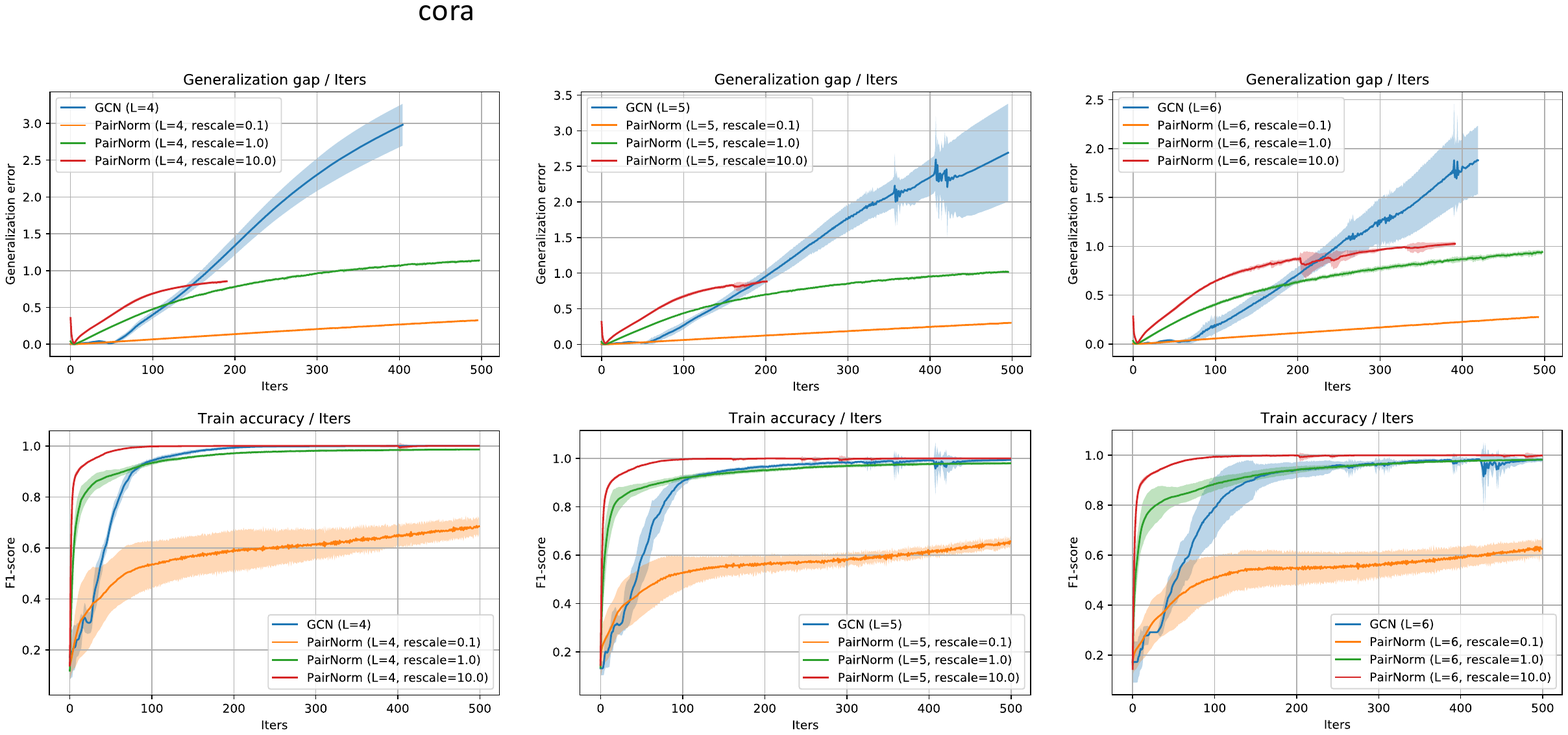}
    \caption{Comparison of generalization error of \textit{PairNorm} on \textit{Cora} dataset. The curve stops early at the largest training accuracy iteration.}
    \label{fig:pairnorm_cora_456}
\end{figure}

\begin{figure}[h]
    \centering
    \includegraphics[width=1.0\textwidth]{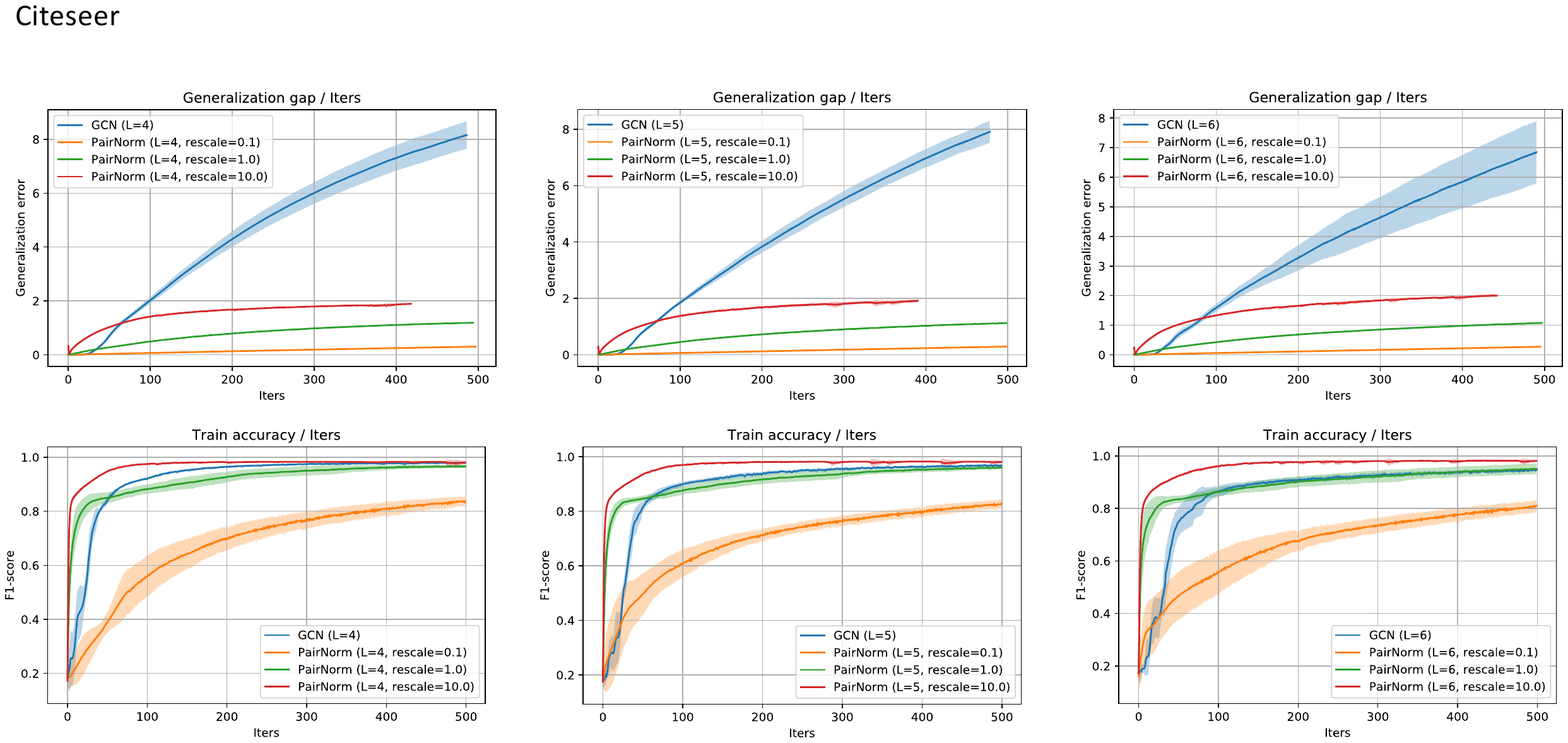}
    \caption{Comparison of generalization error of \textit{PairNorm} on \textit{Citeseer} dataset. The curve stops early at the largest training accuracy iteration.}
    \label{fig:pairnorm_citeseer_456}
\end{figure}

% ###############################################################
% ###############################################################
% ###############################################################
% \clearpage
\subsection{Illustrating the gradient instability issue during GCN training}\label{supp:gradient_instability}

Gradient instability refers to a phenomenon that the gradient changes significantly at every iteration.
The main reason for gradient instability is because the scale of weight matrices is large, which causes the calculated gradients become large.
Notice that the gradient instability issue is more significant on vanilla GCN than other sequential GCN structures such as ResGCN and GCNII, which is one of the key factors that impacts the training phase of vanilla GCNs.

To see this, let $\mathbf{W}^{(\ell)} \leftarrow \mathbf{W}^{(\ell)} - \eta \mathbf{G}^{(\ell)}$ denote the gradient descent update of the $\ell$th layer weight matrix, where $\mathbf{G}^{(\ell)}$ is the gradient with respect to the $\ell$th layer weight matrix $\mathbf{W}^{(\ell)}$.
The upper bounds of the gradient for GCN, ResGCN, and GCNII are computed as 
\begin{equation}\label{eq:exploding_grad_gcn}
    \begin{aligned}
    \|\mathbf{G}^{(\ell)}\|_2 &= \mathcal{O} \Big( (\max\{1, \sqrt{d} B_w\})^L\Big), && \text{GCN~(Eq.~\ref{eq:gradient_norm_GCN})}\\
    \|\mathbf{G}^{(\ell)}\|_2 &= \mathcal{O} \Big( (1+\sqrt{d} B_w)^L\Big), && \text{ResGCN~(Eq.~\ref{eq:gradient_norm_resgcn})}\\
    \|\mathbf{G}^{(\ell)}\|_2 &= \mathcal{O} \Big( \beta (\max\{1, \alpha \sqrt{d} B_w\})^L\Big), && \text{GCNII~(Eq.~\ref{eq:gradient_norm_GCNII})}\\
    \end{aligned}
\end{equation}
where $d$ is the largest number of node degree, $L$ is the number of layers, and $\|\mathbf{W}^{(\ell)}\|_2 \leq B_w$ is the largest singular value of weight matrix. Details please refer to the derivative of Eq.~\ref{eq:gradient_norm_GCN}, Eq.~\ref{eq:gradient_norm_resgcn} , and Eq.~\ref{eq:gradient_norm_GCNII}.

From Eq.~\ref{eq:exploding_grad_gcn}, we know that the largest singular value of weight matrices is the key factor that affects the scale of the gradient. 
However, upper bound can be vacuous if we simply ignore the impact of network structure on $B_w$. 

\begin{figure}[h]
    \centering
    \includegraphics[width=1.0\textwidth]{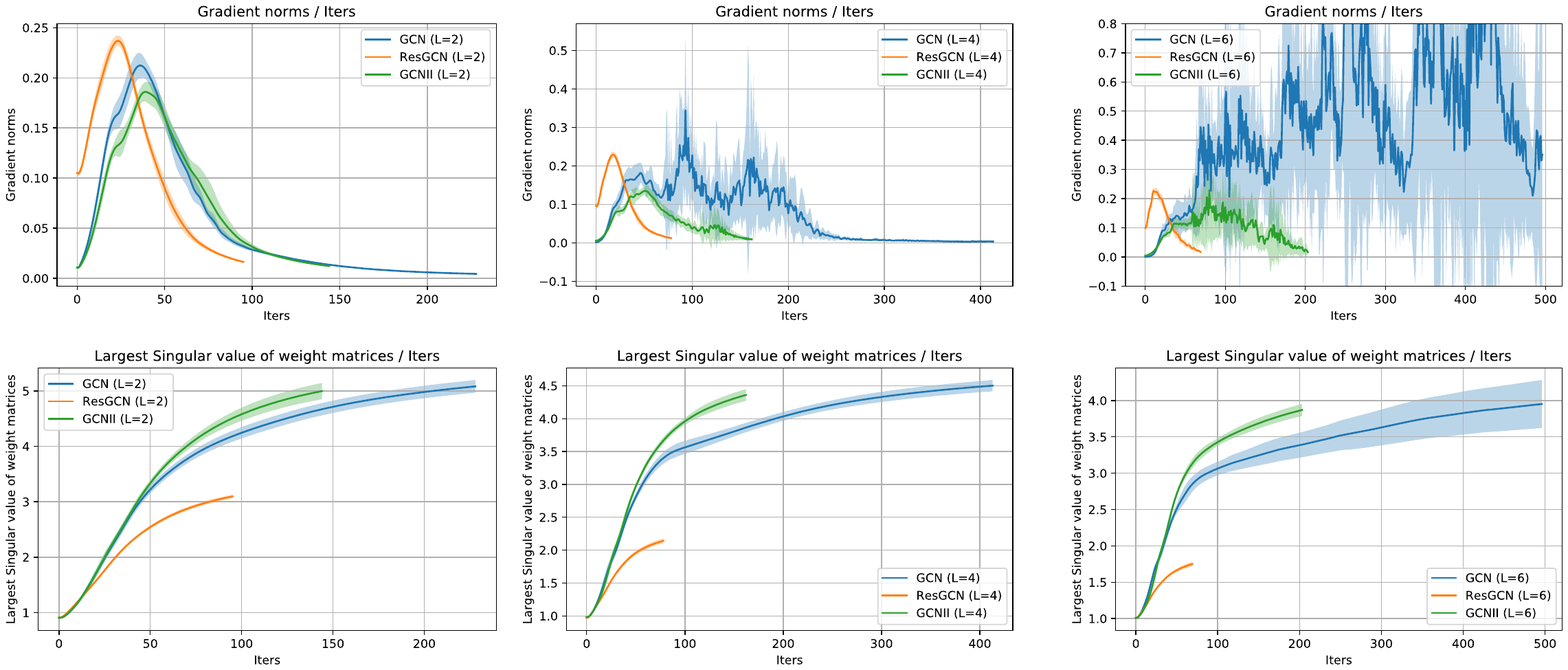}
    \caption{Comparison of gradient norm on \textit{Cora} dataset. The curve stops early at the largest training accuracy iteration.}
    \label{fig:cora_gradient_norm}
\end{figure}

\begin{figure}[h]
    \centering
    \includegraphics[width=1.0\textwidth]{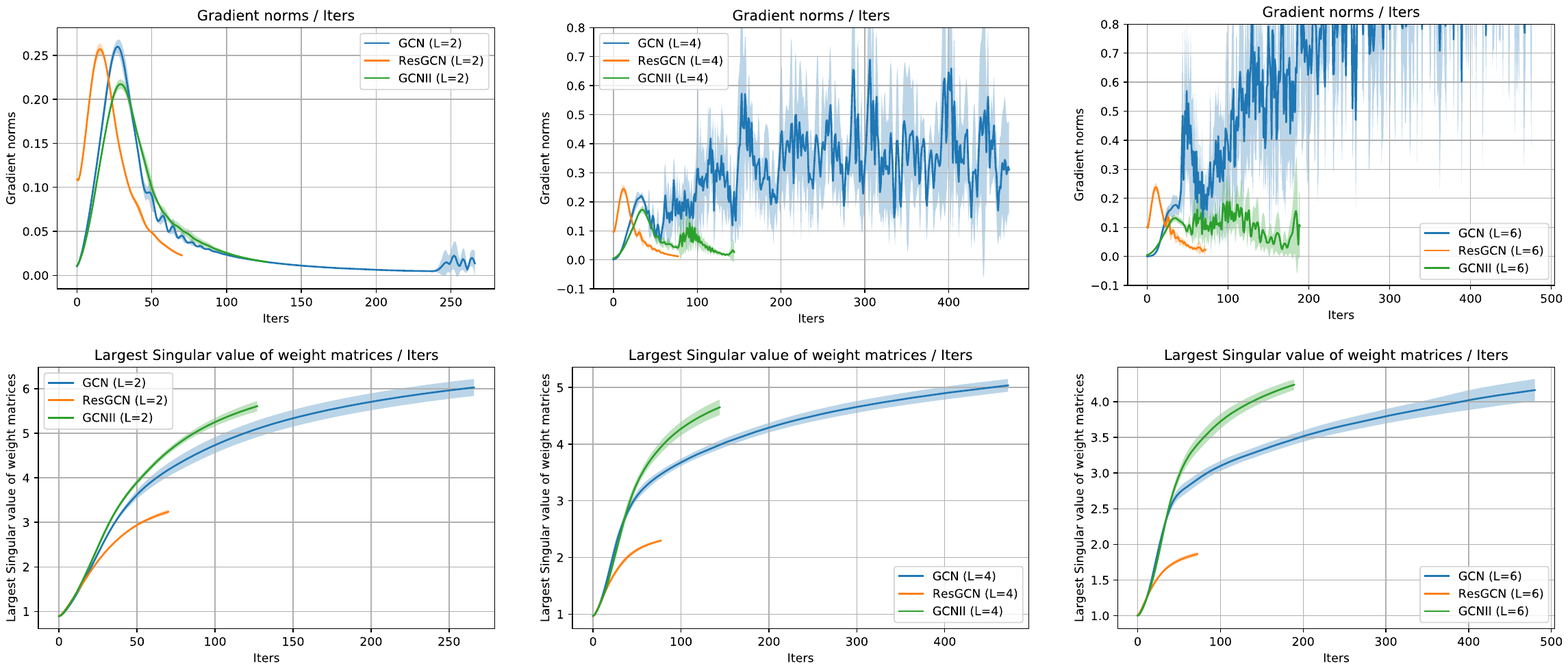}
    \caption{Comparison of gradient norm on \textit{Citeseer} dataset. The curve stops early at the largest training accuracy iteration.}
    \label{fig:citeseer_gradient_norm}
\end{figure}

From Figure~\ref{fig:cora_gradient_norm} and Figure~\ref{fig:citeseer_gradient_norm}, we can observe that the residual connection has implicit regularization on the weight matrices, which makes the weight matrices in ResGCN has a smaller largest singular values than GCN.
As a result, the ResGCN does not suffer from gradient instability even its gradient norm upper bound in Eq.~\ref{eq:exploding_grad_gcn} is larger than GCN.
Furthermore, although the largest singular value of the weight matrices for GCNII is larger than GCN, 
% since the gradient norm upper bound of GCNII is strictly smaller than GCN regardless of the number of layers $L$ given $0 < \alpha, \beta < 1$~\footnote{In practice, GCNII~\cite{chen2020simple} choose $\alpha=0.9$ and choose $\beta = 0.5 / \ell \leq 0.5$ on the aforementioned dataset}. Therefore, 
by selecting a small enough $\beta_\ell$, GCNII can be less impacted by gradient instability than vanilla GCN.

% ###############################################################
% ###############################################################
% ###############################################################
\subsection{Illustrating how more training leads to high training F1-score}

As a compliment to Figure~\ref{fig:train_acc_gcn_resgcn_appnp_gcnii}, we provide training and validation F1-score of the baseline models. 
During training, we chose hidden dimension as $64$, Adam optimizer with learning rate $0.001$, without any dropout or weight decay. 
Please note that removing dropout and weight decay is necessary because both operations are designed to prevent neural networks from overfitting, and will hurt the best training accuracy that a model can achieve.
As shown in Figure~\ref{fig:f1_score_iters_cora} and Figure~\ref{fig:f1_score_iters_citeseer}, all methods can achieve high training F1-score regardless the number of layers, which indicates node embeddings are distinguishable. 

\begin{figure}[h]
    \centering
    \includegraphics[width=0.8\textwidth]{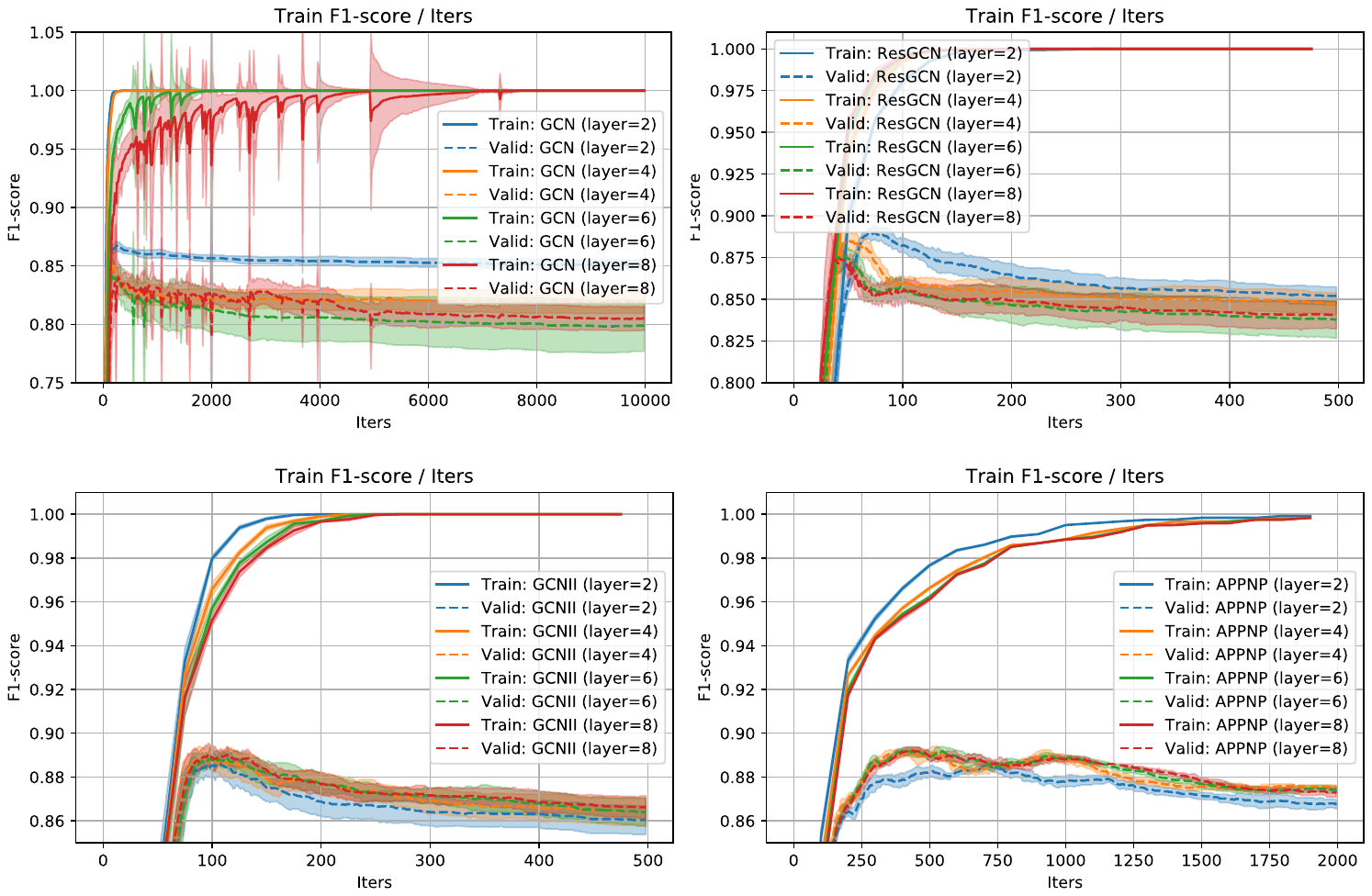}
    \caption{Comparison of training F1-score and number of iterations on \textit{Cora} dataset.}
    \label{fig:f1_score_iters_cora}
\end{figure}

\begin{figure}[h]
    \centering
    \includegraphics[width=0.8\textwidth]{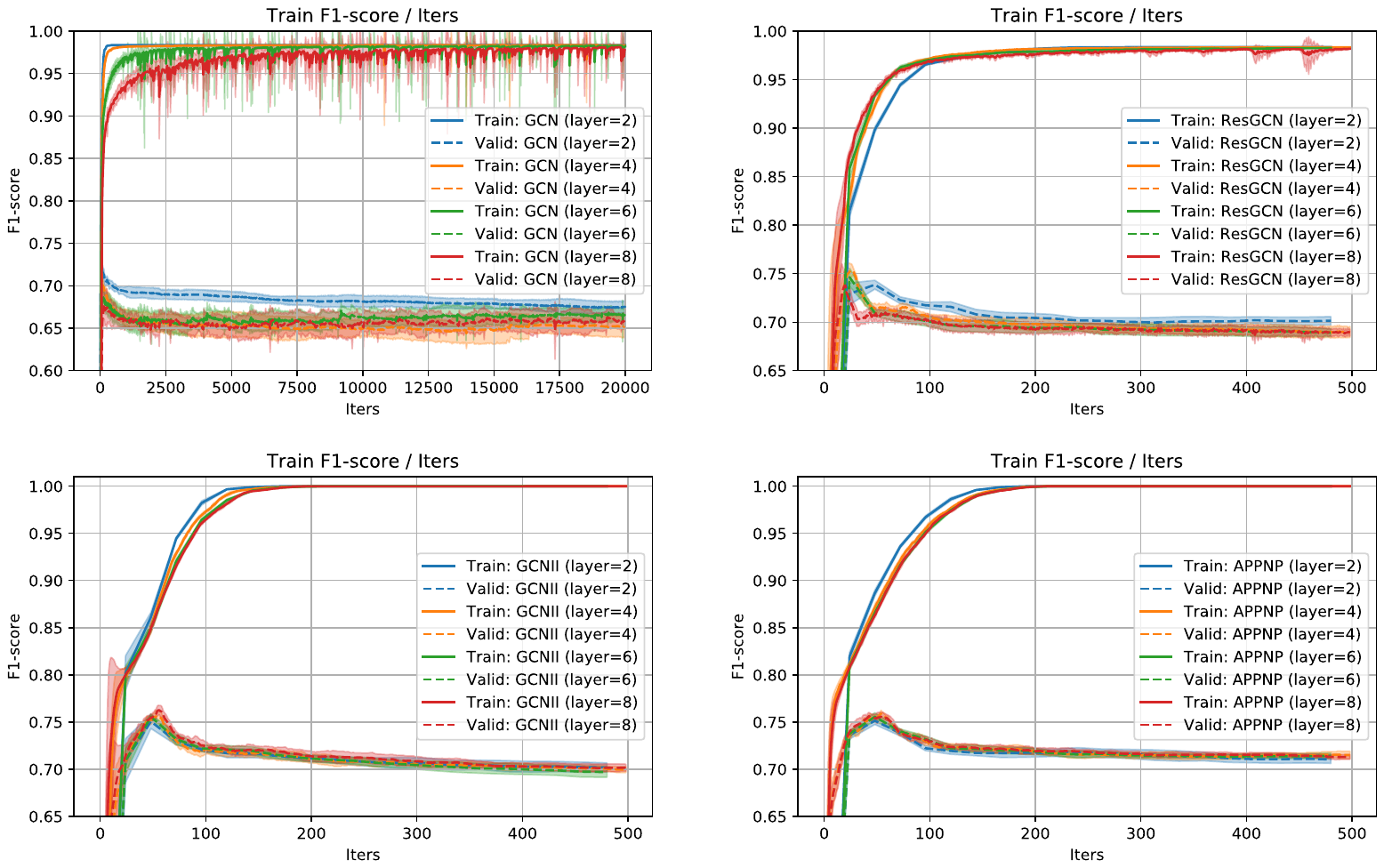}
    \caption{Comparison of training F1-score and number of iterations on \textit{Citeseer} dataset. }
    \label{fig:f1_score_iters_citeseer}
\end{figure}

% ###############################################################
% ###############################################################
% ###############################################################
\subsection{Effect of number of layers on real-word datasets}

In the following, we demonstrate the effect of the number of layers and hyper-parameters on the performance of the model on OGB Arxiv~\cite{hu2020open}.
We follow the default hyper-parameter setup of GCN on the leaderboard,\footnote{\url{https://github.com/snap-stanford/ogb/blob/master/examples/nodeproppred/arxiv/gnn.py}. 
Due to the memory limitation, we choose hidden dimension as $128$ instead of the default $256$ hidden dimension for all models.} i.e., we choose hidden dimension as $128$, dropout ratio as $0.5$, Adam optimizer with learning rate as $0.01$, and applying batch normalization after each graph convolutional layer.
As shown in Table~\ref{table:ogbn_arxiv_layers}, the number of layers and the choice of hyper-parameters can largely impact the performance of the models. 
Since DGCN can automatically adjust the $\alpha_\ell$ and $\beta_\ell$ to better adapt to the change of model depth,
% (e.g., on a large dataset, the depth of a model is limited due to computation cost and memory issue.),
it achieves a comparable and more stable performance than most baseline models.

\begin{table}[h]
\centering
\caption{Comparison of F1-score on OGB-Arxiv dataset for different number of layers}
\label{table:ogbn_arxiv_layers}
\scalebox{0.9}{
\begin{tabular}{@{}lcccccc@{}}
\toprule
\textbf{Model}  & $\alpha$  & \textbf{2 Layers}    & \textbf{4 Layers} & \textbf{8 Layers}  &  \textbf{12 Layers}    & \textbf{16 Layers}\\ \midrule
\textbf{GCN} & $-$ & $71.02\% \pm 0.14$ 	& $71.56\% \pm 0.19$ 	& $71.28\% \pm 0.33$ 	& $70.28\% \pm 0.23$ 	& $69.37\% \pm 0.46$ \\
\textbf{ResGCN} & $-$	& $70.66\% \pm 0.48$ 	& $72.41\% \pm 0.31$ 	& $72.56\% \pm 0.31$ 	& $72.46\% \pm 0.23$ 	& $72.11\% \pm 0.28$ \\
\textbf{GCNII} & $0.9$	& $71.35\% \pm 0.21$ 	& $72.57\% \pm 0.23$ 	& $72.06\% \pm 0.42$ 	& $71.31\% \pm 0.62$ 	& $69.99\% \pm 0.80$ \\
\textbf{GCNII} & $0.8$	& $71.14\% \pm 0.27$ 	& $72.32\% \pm 0.19$ 	& $71.90\% \pm 0.41$ 	& $71.21\% \pm 0.23$ 	& $70.56\% \pm 0.72$ \\
\textbf{GCNII} & $0.5$	& $70.54\% \pm 0.30$ 	& $72.09\% \pm 0.25$ 	& $71.92\% \pm 0.32$ 	& $71.24\% \pm 0.47$ 	& $71.02\% \pm 0.58$ \\
\textbf{APPNP} & $0.9$	& $67.38\% \pm 0.34$ 	& $68.02\% \pm 0.55$ 	& $66.62\% \pm 0.48$ 	& $67.43\% \pm 0.50$ 	& $67.42\% \pm 1.00$ \\
\textbf{APPNP} & $0.8$	& $66.71\% \pm 0.32$ 	& $68.25\% \pm 0.43$ 	& $66.40\% \pm 0.89$ 	& $66.51\% \pm 2.09$ 	& $66.56\% \pm 0.74$ \\
\textbf{DGCN} & $-$	& $71.21\% \pm 0.25$ 	& $72.29\% \pm 0.18$ 	& $72.39\% \pm 0.21$ 	& $\mathbf{72.63\% \pm 0.12}$ 	& $72.41\% \pm 0.07$ \\ \bottomrule
\end{tabular}
}
% \vspace{-12pt}
\end{table}

% \clearpage
\section{Generalization bound for \textbf{GCN}}\label{supp:proof_gcn}

In this section, we provide detailed proof on the generalization bound of GCN.
Recall that the update rule of GCN is defined as
\begin{equation}
    \mathbf{H}^{(\ell)} = \sigma(\mathbf{L} \mathbf{H}^{(\ell-1)} \mathbf{W}^{(\ell)}),
\end{equation}
where $\sigma(\cdot)$ is the ReLU activation function. 
Note that although ReLU function $\sigma(x)$ is not differentiable when $x=0$, for analysis purpose we suppose the $\sigma^\prime(0) = 0$.\footnote{Widely used deep learning frameworks, including PyTorch and Tensorflow, also set the subgradient of ReLU as zero when its input is zero.}

The training of GCN is an empirical risk minimization with respect to a set of parameters $\bm{\theta} = \{ \mathbf{W}^{(1)}, \ldots, \mathbf{W}^{(L)}, \mathbf{v} \}$, i.e.,
\begin{equation}
    \mathcal{L}(\bm{\theta}) = \frac{1}{m} \sum_{i=1}^m \Phi_\gamma (-p(f(\mathbf{h}_i^{(L)}), y_i)),~
    f(\mathbf{h}_i^{(L)}) = \tilde{\sigma}(\mathbf{v}^\top \mathbf{h}_i^{(L)}),~
\end{equation}
 
where $\mathbf{h}_i^{(L)}$ is the node representation of the $i$th node at the final layer,  $f(\mathbf{h}_i^{(L)})$ is the predicted label for the $i$th node,
$\tilde{\sigma}(x) = \frac{1}{\exp(-x) + 1}$ is the sigmoid function, and loss function $\Phi_\gamma (-p(z, y))$ is $\frac{2}{\gamma}$-Lipschitz continuous with respect to its first input $z$ with $p(z, y)$ as defined in Section~\ref{section:from_stability}.
For simplification, we will use $\text{Loss}(z,y)$ which represents $\Phi_\gamma (-p(z, y))$ in the proof.
% , and $\nabla \text{Loss}(z,y) = \frac{\partial \Phi_\gamma (-p(z, y))}{\partial z}$ is $\frac{2}{\gamma}$-Lipschitz continuous with respect to its first input $z$.

To establish the generalization of GCN as stated in Theorem~\ref{thm:uniform_stability_base}, we utilize  the following result on transductive uniform stability from~\cite{el2006stable}.

\begin{theorem} [Transductive uniform stability bound~\cite{el2006stable}]\label{thm:uniform_stability_base_supp}
Let $f$ be a $\epsilon$-uniformly stable transductive learner and $\gamma, \delta >0$, and define $Q=mu/(m+u)$. Then, with probability at least $1-\delta$ over all training and testing partitions, we have 
\begin{equation*}
    \mathcal{R}_u(f) \leq \mathcal{R}_{m}^\gamma(f) + \frac{2}{\gamma}\mathcal{O}\Big( \epsilon \sqrt{Q\ln (\delta^{-1})} \Big) +\mathcal{O}\Big(\frac{\ln(\delta^{-1})}{\sqrt{Q}}\Big).
\end{equation*}
\end{theorem}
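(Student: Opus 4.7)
The plan is to prove the transductive uniform stability bound via a concentration-of-measure argument adapted to sampling without replacement, coupled with the $\epsilon$-stability hypothesis that controls how the learner reacts to swapping one training example with one test example. Throughout, the random object is a uniform partition $\pi$ of the fixed pool $X_{m+u}$ into a training set $S_m$ of size $m$ and a test set $X_u$ of size $u$, inducing a data-dependent learner $f_\pi$. Define the gap
\begin{equation*}
g(\pi) \;=\; \mathcal{R}_u(f_\pi) - \mathcal{R}_m^\gamma(f_\pi),
\end{equation*}
and the goal is to show that $g(\pi)$ is small with high probability by (i) showing $g$ concentrates around $\mathbb{E}_\pi[g]$ and (ii) bounding $\mathbb{E}_\pi[g]$.

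First I would establish a bounded-differences property for $g$ under transposing one index in $S_m$ with one index in $X_u$. Such a transposition changes $\mathcal{R}_m^\gamma$ through two channels: a \emph{direct} change of order $1/m$ from replacing a single summand in the empirical average, and an \emph{indirect} change from the perturbation of the trained parameters. By $\epsilon$-uniform stability together with the $(2/\gamma)$-Lipschitz continuity of $\Phi_\gamma$, each of the $m$ summands shifts by at most $2\epsilon/\gamma$, and the same argument applies to $\mathcal{R}_u$ with $1/u$ replacing $1/m$. Thus $g$ is a function on the symmetric group whose coordinate oscillations are of order $\epsilon/\gamma + 1/m + 1/u$.

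Second, I would invoke a concentration inequality tailored to uniformly random partitions, such as a Bobkov-type log-Sobolev or a McDiarmid-for-sampling-without-replacement inequality. Summing squared coordinate oscillations over the $mu$ coupled swap directions that generate the partition distribution produces the effective scale $Q=mu/(m+u)$ (rather than the naive $\min(m,u)$), and yields, with probability at least $1-\delta$,
\begin{equation*}
\bigl|g(\pi) - \mathbb{E}_\pi[g(\pi)]\bigr| \;\leq\; \tfrac{2}{\gamma}\,\mathcal{O}\!\bigl(\epsilon\sqrt{Q\ln(\delta^{-1})}\bigr).
\end{equation*}
Then, to control $\mathbb{E}_\pi[g(\pi)]$, I would use the exchangeability of the roles of training and test indices under a uniformly random partition: an averaging/symmetrization argument shows that $\mathbb{E}_\pi[\mathcal{R}_u(f_\pi)]$ and $\mathbb{E}_\pi[\mathcal{R}_m^\gamma(f_\pi)]$ differ only through the residual stability slack, contributing the $\mathcal{O}(\ln(\delta^{-1})/\sqrt{Q})$ term that appears in the statement.

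The hard part will be the concentration step. The random partition is not an i.i.d. sample, so standard McDiarmid does not apply, and a direct reduction to i.i.d. loses the tight $Q=mu/(m+u)$ scaling. One must carefully invoke a permutation-based inequality (Bobkov's isoperimetric bound, or a Serfling--Hoeffding-style martingale bound for sampling without replacement) and verify that the coordinate-oscillation sum balances correctly across the two asymmetric sides $m$ and $u$; getting the right effective sample size $Q$ out of this calculation, rather than something looser, is the main technical subtlety.
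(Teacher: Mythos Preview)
The paper does not prove this theorem. It is quoted verbatim from El-Yaniv and Pechyony~\cite{el2006stable} and used as a black box: the paper's own work consists of computing the stability constant $\epsilon$ for each GCN variant (Lemma~\ref{lemma:useful_lemma_for_main_theorems} and the architecture-specific Lemmas in Appendices~\ref{supp:proof_gcn}--\ref{supp:proof_dgcn}) and then plugging that $\epsilon$ into this cited inequality. So there is nothing in the paper to compare your sketch against.

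For what it is worth, your outline is essentially the strategy of the original reference: bounded differences under a single train/test swap, a permutation-type concentration inequality that produces the harmonic scale $Q=mu/(m+u)$, and a symmetrization to handle the mean. One point is off, though: you attribute the $\mathcal{O}(\ln(\delta^{-1})/\sqrt{Q})$ term to the expectation bound, but an expectation cannot depend on the confidence level $\delta$. In the actual argument both displayed terms arise from the concentration step---the $\epsilon\sqrt{Q}$ piece from the stability-induced indirect oscillation and the $1/\sqrt{Q}$ piece from the $1/m+1/u$ direct-replacement oscillation you already identified---while the expectation of the gap is controlled separately by $O(\epsilon/\gamma)$ and absorbed into the first big-$\mathcal{O}$.
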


% \begin{theorem} \label{thm:uniform_stable_gcn} 
% We say GCN model $f$ is $\epsilon_\texttt{GCN}$-uniformly stable under the transductive semi-supervised learning setting if for any $\gamma, \delta >0$, with probability at least $1-\delta$ over all training and testing partitions, we have 
% \begin{equation}
%     \mathcal{R}_u(f) - \mathcal{R}_{m}^\gamma(f) \leq \frac{2}{\gamma}\mathcal{O}\Big( \epsilon_\texttt{GCN} \sqrt{\frac{mu\ln(1/\delta)}{m+u}} \Big) + \mathcal{O}\Big(\sqrt{\frac{1}{m}+\frac{1}{u}}\ln(1/\delta)\Big)
% \end{equation}
% where uniform stability constant is shown in Lemma~\ref{lemma:uniform_stable_gcn}.
% \end{theorem}

% \mm{where does this come from, please cite? if it is general and not for GCN in original source, we might keep the original and then adapt to GCN. Talk to me Weilin}

Then, in Lemma~\ref{lemma:uniform_stable_gcn}, we derive the uniform stability constant for GCN, i.e., $\epsilon_\text{GCN}$. 

\begin{lemma}\label{lemma:uniform_stable_gcn} 
The uniform stability constant  for GCN is computed as $\epsilon_\texttt{GCN} = \frac{2 \eta  \rho_f G_f}{m} \sum_{t=1}^T (1+\eta L_F)^{t-1} $ where
\begin{equation}
    \begin{aligned}
    \rho_f &= C_1^L C_2,~
    G_f =  \frac{2}{\gamma} (L+1) C_1^L C_2,~
    L_f =  \frac{2}{\gamma} (L+1) C_1^L C_2  \Big( (L+2) C_1^L C_2 + 2 \Big), \\
    C_1 &= \max\{ 1, \sqrt{d} B_w \},~ C_2 = \sqrt{d} (1+B_x).
    \end{aligned}
\end{equation}
\end{lemma}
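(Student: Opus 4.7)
The plan is to derive the three constants $\rho_f$, $G_f$, $L_f$ through a careful layer-wise induction on the depth of the GCN, and then invoke Lemma~\ref{lemma:useful_lemma_for_main_theorems} to assemble them into the claimed uniform-stability bound. The central quantities to control are the node-wise embedding norms $\|\mathbf{h}_i^{(\ell)}\|_2$, their first-order derivatives with respect to each weight matrix $\mathbf{W}^{(\ell')}$, and the mixed second-order derivatives. Crucially, the $\sqrt{d}$ factor in $C_1$ arises from the spectral structure of the normalized Laplacian: for any row $i$, $\sum_j |L_{ij}| \leq \sqrt{\deg(i)} \leq \sqrt{d}$ by Cauchy--Schwarz applied to $L_{ij} = A_{ij}/\sqrt{\deg(i)\deg(j)}$, which converts a neighborhood aggregation into a single scalar factor each time a norm is propagated one layer.

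First, I would establish by induction on $\ell$ that $\|\mathbf{h}_i^{(\ell)}\|_2 \leq (\sqrt{d}\, B_w)^\ell B_x$, using $1$-Lipschitzness of ReLU, submultiplicativity of operator norms with $\|\mathbf{W}^{(\ell)}\|_2 \leq B_w$, and the Laplacian bound above. Taking the max over nodes yields the uniform bound $\|\mathbf{h}^{(L)}\|_2 \leq C_1^L B_x$. For $\rho_f$, I would use $f(\mathbf{h}^{(L)}) = \tilde{\sigma}(\mathbf{v}^\top \mathbf{h}^{(L)})$ with $\tilde{\sigma}$ being $1/4$-Lipschitz, so $|\partial f/\partial \mathbf{v}| \lesssim \|\mathbf{h}^{(L)}\|_2$ and $\|\partial f/\partial \mathbf{W}^{(\ell)}\| \leq \|\mathbf{v}\|_2 \cdot \|\partial \mathbf{h}^{(L)}/\partial \mathbf{W}^{(\ell)}\|_2$. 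A second induction (backpropagating through the chain rule) yields $\|\partial \mathbf{h}^{(L)}/\partial \mathbf{W}^{(\ell)}\|_2 \lesssim C_1^{L-\ell} \sqrt{d}\, \|\mathbf{h}^{(\ell-1)}\|_2$, and combining these over the $L+1$ parameter blocks produces the claimed $C_1^L C_2$ form; the $(1+B_x)$ in $C_2$ arises from jointly covering the contribution of the head parameter $\mathbf{v}$ (where the $\|\mathbf{h}\|$ factor dominates, proportional to $B_x$) and the convolutional parameters (where the additional $1$ accounts for the Jacobian tail).

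For $G_f$, the derivative of $\mathrm{Loss}(f(\mathbf{h}^{(L)}), y)$ with respect to any parameter equals the $2/\gamma$-Lipschitz constant of $\Phi_\gamma$ times a derivative of $f$; summing across the $L+1$ blocks and taking the max explains the leading $\frac{2}{\gamma}(L+1) C_1^L C_2$. The smoothness constant $L_f$ requires bounding $\|\nabla^2_{\boldsymbol\theta} \mathcal{L}\|$. Differentiating the gradient a second time produces two kinds of terms: a Gauss--Newton part $(\nabla f)^\top (\nabla f)$ times the loss-Hessian, and a curvature part $(\nabla^2 f)$ times the loss-gradient. The first scales as $(C_1^L C_2)^2 \cdot 2/\gamma$; for the second I would derive the mixed partials $\partial^2 \mathbf{h}^{(L)}/\partial \mathbf{W}^{(\ell)} \partial \mathbf{W}^{(\ell')}$ via a third layer-wise induction, showing that each pair contributes order $C_1^L C_2$ and summing across the $\binom{L+2}{2}$-many pairs gives the $(L+2) C_1^L C_2$ coefficient inside $L_f$; the additive $+2$ absorbs the bounded second derivative of the sigmoid and the constant part of the chain rule.

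The main obstacle will be the smoothness step: tracking how simultaneous perturbations of two distinct weight matrices propagate through the remaining layers requires carefully bookkeeping both branches of the product rule while ensuring that each $C_1$ factor counts only once per traversed layer. A loose argument here would produce a spurious $C_1^{2L}$ or extra $L$-factor that breaks the stated form $(L+2) C_1^L C_2 + 2$. The ReLU nondifferentiability is handled by the convention $\sigma'(0)=0$ (already adopted in the excerpt), which keeps all Jacobian bounds valid almost everywhere and extends to the full parameter space by monotone limits. Once $\rho_f, G_f, L_f$ are in hand with the stated constants, substitution into Lemma~\ref{lemma:useful_lemma_for_main_theorems} is mechanical and yields the claimed $\epsilon_{\texttt{GCN}} = \frac{2 \eta \rho_f G_f}{m} \sum_{t=1}^T (1+\eta L_f)^{t-1}$.
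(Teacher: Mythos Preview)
Your induction for $\rho_f$ and $G_f$, using the row-sum bound $\sum_j|L_{ij}|\le\sqrt{d}$ and layer-wise propagation of embedding and Jacobian norms, matches the paper's approach, and the final assembly via Lemma~\ref{lemma:useful_lemma_for_main_theorems} is the same.

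The gap is the smoothness step. You propose to obtain $L_f$ by bounding $\|\nabla^2_{\boldsymbol{\theta}}\mathcal{L}\|$, computing mixed partials $\partial^2\mathbf{h}^{(L)}/\partial\mathbf{W}^{(\ell)}\partial\mathbf{W}^{(\ell')}$ through a third induction. For a ReLU network this route does not deliver what is needed: the chain rule for second derivatives introduces $\sigma''$, and $\sigma'$ is a step function. The convention $\sigma'(0)=0$ fixes only first derivatives; an almost-everywhere Hessian bound cannot control $\|\nabla\mathcal{L}(\boldsymbol{\theta})-\nabla\mathcal{L}(\tilde{\boldsymbol{\theta}})\|$ when the gradient itself jumps across activation boundaries. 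The paper never touches second derivatives. Instead it runs a second copy of the network with perturbed parameters $\tilde{\boldsymbol{\theta}}$ and tracks, by induction in $\ell$, the finite differences $\Delta h_{\max}^{(\ell)}=\max_i\|\mathbf{h}_i^{(\ell)}-\tilde{\mathbf{h}}_i^{(\ell)}\|_2$, the pre-activation differences $\Delta z_{\max}^{(\ell)}$, and the backpropagated-gradient differences $\Delta d_{\max}^{(\ell)}$, each obeying a first-order linear recursion in $\|\Delta\boldsymbol{\theta}\|_2$. The three are then combined via
\[
\|\mathbf{G}^{(\ell)}-\tilde{\mathbf{G}}^{(\ell)}\|_2\le\sqrt{d}\,\Big(\Delta h_{\max}^{(\ell-1)}d_{\max}^{(\ell+1)}+h_{\max}^{(\ell-1)}\Delta d_{\max}^{(\ell+1)}+h_{\max}^{(\ell-1)}d_{\max}^{(\ell+1)}\Delta z_{\max}^{(\ell)}\Big),
\]
and it is the $\Delta d_{\max}$ recursion (which accumulates an $(L{+}1)$-term sum) together with the remaining two contributions that produces the factor $(L+2)C_1^LC_2+2$. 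To complete your plan you should replace the Hessian computation with this two-trajectory difference bookkeeping.
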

By plugging the result in Lemma~\ref{lemma:uniform_stable_gcn} back to Theorem~\ref{thm:uniform_stability_base_supp}, we establish the generalization bound for GCN.

The key idea of the proof is to decompose the change of the GCN output into two terms (in Lemma~\ref{lemma:universal_boound_on_model_output}) which depend on 
\begin{itemize}
    \item (Lemma~\ref{lemma:gcn_h_max_bound_1}) The maximum change of node embeddings, i.e., $\Delta h_{\max}^{(\ell)} = \max_i \| [\mathbf{H}^{(\ell)} - \tilde{\mathbf{H}}^{(\ell)}]_{i,:}\|_2$,
    \item (Lemma~\ref{lemma:gcn_delta_h_max_bound_2}) The maximum node embeddings, i.e., $h_{\max}^{(\ell)} = \max_i \|[ \mathbf{H}^{(\ell)}]_{i,:}\|_2$.
\end{itemize}

\begin{lemma} \label{lemma:universal_boound_on_model_output}
Let $f(\mathbf{h}_i^{(L)}) = \tilde{\sigma}(\mathbf{v}^\top \mathbf{h}^{(L)}_i), \tilde{f}(\tilde{\mathbf{h}}^{(L)}_i) = \tilde{\sigma}(\tilde{\mathbf{v}}^\top \tilde{\mathbf{h}}^{(L)}_i)$ denote the prediction of node $i$ using parameters $\bm{\theta} = \{ \mathbf{W}^{(1)}, \ldots, \mathbf{W}^{(L)}, \mathbf{v} \}, \tilde{\bm{\theta}} = \{ \tilde{\mathbf{W}}^{(1)}, \ldots, \tilde{\mathbf{W}}^{(L)}, \tilde{\mathbf{v}} \}$ (i.e., the two set of parameters trained on the original and the perturbed dataset) respectively. Then we have
\begin{equation}
    \begin{aligned}
    \max_i |f(\mathbf{h}_i^{(L)}) - \tilde{f}(\tilde{\mathbf{h}}_i^{(L)})| &\leq  \Delta h_{\max}^{(L)} + h_{\max}^{(L)} \| \Delta \mathbf{v} \|_2, \\
    \max_i \Big\|\frac{\partial f(\mathbf{h}_i^{(L)}) }{\partial \mathbf{h}_i^{(L)}} - \frac{\partial \tilde{f}(\tilde{\mathbf{h}}_i^{(L)})}{\partial \tilde{\mathbf{h}}_i^{(L)}} \Big\|_2 &\leq \Delta h_{\max}^{(L)} + (  h_{\max}^{(L)} + 1) \| \Delta \mathbf{v} \|_2,
    \end{aligned}
\end{equation}
where $\Delta \mathbf{v} = \mathbf{v} - \tilde{\mathbf{v}}$.
\end{lemma}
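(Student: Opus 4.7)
The plan is to prove both bounds by a standard ``add and subtract'' argument that isolates the change in the final-layer embedding from the change in the classifier vector $\mathbf{v}$. The only two analytic facts needed are (i) the sigmoid $\tilde{\sigma}$ and its derivative $\tilde{\sigma}'$ are both $1$-Lipschitz with values bounded by $1$ (in fact by $1/4$, but the looser constant is already sufficient for the stated bound), and (ii) Assumption~\ref{assumption:norm_bound} gives $\|\mathbf{v}\|_2, \|\tilde{\mathbf{v}}\|_2 \leq 1$. With these, everything reduces to Cauchy--Schwarz and the triangle inequality applied to the pre-activation $\mathbf{v}^\top \mathbf{h}_i^{(L)}$.

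For the first inequality, I would start from $|\tilde{\sigma}(\mathbf{v}^\top \mathbf{h}_i^{(L)}) - \tilde{\sigma}(\tilde{\mathbf{v}}^\top \tilde{\mathbf{h}}_i^{(L)})|$, use the $1$-Lipschitz property of $\tilde{\sigma}$ to pass inside, and then insert the hybrid term $\mathbf{v}^\top \tilde{\mathbf{h}}_i^{(L)}$ to split into $|\mathbf{v}^\top(\mathbf{h}_i^{(L)} - \tilde{\mathbf{h}}_i^{(L)})| + |(\mathbf{v} - \tilde{\mathbf{v}})^\top \tilde{\mathbf{h}}_i^{(L)}|$. Applying Cauchy--Schwarz in each piece, using $\|\mathbf{v}\|_2 \leq 1$, and then taking $\max_i$ produces $\Delta h_{\max}^{(L)} + h_{\max}^{(L)} \|\Delta \mathbf{v}\|_2$ (where $h_{\max}^{(L)}$ is understood as a uniform upper bound on $\|\mathbf{h}_i^{(L)}\|_2$ across both parameter sets, controlled separately in Lemma~\ref{lemma:gcn_h_max_bound_1}).

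For the gradient inequality, I would compute $\partial f(\mathbf{h}_i^{(L)})/\partial \mathbf{h}_i^{(L)} = \tilde{\sigma}'(\mathbf{v}^\top \mathbf{h}_i^{(L)})\,\mathbf{v}$ and similarly for $\tilde{f}$, then split $\|\tilde{\sigma}'(\mathbf{v}^\top\mathbf{h}_i^{(L)})\mathbf{v} - \tilde{\sigma}'(\tilde{\mathbf{v}}^\top\tilde{\mathbf{h}}_i^{(L)})\tilde{\mathbf{v}}\|_2$ via the hybrid term $\tilde{\sigma}'(\tilde{\mathbf{v}}^\top\tilde{\mathbf{h}}_i^{(L)})\mathbf{v}$. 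The first resulting piece equals $|\tilde{\sigma}'(\mathbf{v}^\top \mathbf{h}_i^{(L)}) - \tilde{\sigma}'(\tilde{\mathbf{v}}^\top\tilde{\mathbf{h}}_i^{(L)})|\cdot\|\mathbf{v}\|_2$, which by the $1$-Lipschitz property of $\tilde{\sigma}'$ reduces to exactly the scalar difference already bounded in the first part, i.e.\ $\Delta h_{\max}^{(L)} + h_{\max}^{(L)}\|\Delta \mathbf{v}\|_2$; the second piece is $|\tilde{\sigma}'(\tilde{\mathbf{v}}^\top\tilde{\mathbf{h}}_i^{(L)})|\cdot \|\Delta \mathbf{v}\|_2 \leq \|\Delta \mathbf{v}\|_2$ since $|\tilde{\sigma}'|\leq 1$. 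Summing yields the claimed $\Delta h_{\max}^{(L)} + (h_{\max}^{(L)}+1)\|\Delta \mathbf{v}\|_2$.

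There is no real obstacle here: the lemma is essentially a bookkeeping step that peels off the last layer so that the recursive bounds on $\Delta h_{\max}^{(L)}$ and $h_{\max}^{(L)}$ (proved in the subsequent lemmas) can be invoked. The only mild care needed is to confirm that the chosen Lipschitz constants $1$ for $\tilde{\sigma}$ and $\tilde{\sigma}'$, together with $\|\mathbf{v}\|_2\leq 1$, match the constants that appear elsewhere in the calculation of $\rho_f$, $G_f$, and $L_f$ in Lemma~\ref{lemma:uniform_stable_gcn}; this is straightforward since the sigmoid's actual Lipschitz constants are even smaller, so the resulting bounds remain valid.
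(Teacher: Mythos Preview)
Your proposal is correct and matches the paper's own proof essentially line for line: the same add-and-subtract decomposition of the pre-activation, the same use of the $1$-Lipschitz property of $\tilde{\sigma}$ and $\tilde{\sigma}'$, and the bound $\|\mathbf{v}\|_2\leq 1$ from Assumption~\ref{assumption:norm_bound}. The only cosmetic difference is which hybrid term you insert in the gradient inequality (you use $\tilde{\sigma}'(\tilde{\mathbf{v}}^\top\tilde{\mathbf{h}}_i^{(L)})\mathbf{v}$ while the paper uses $\tilde{\sigma}'(\mathbf{v}^\top\mathbf{h}_i^{(L)})\tilde{\mathbf{v}}$), which yields the identical bound.
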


\begin{lemma} [Upper bound of $h_{\max}^{(\ell)}$ for \textbf{GCN}] \label{lemma:gcn_h_max_bound_1} 
Let suppose Assumption~\ref{assumption:norm_bound} hold. Then, the maximum node embeddings for any node at the $\ell$th layer is bounded by
\begin{equation}
    h_{\max}^{(\ell)} 
    \leq  B_x (\max\{1, \sqrt{d} B_w\})^\ell.
\end{equation}
% where $B_x$ is the upper bound on the node features (i.e., $\max_{i\in\mathcal{V}} \|\mathbf{x}_i\|_2 \leq B_x$), $d$ is the largest node degree, and $B_w$ is the upper bound on the largest singular value of weight matrices (i.e., $\max_{\ell\in[L]} \|\mathbf{W}^{(\ell)}\|_2 \leq B_w$).
\end{lemma}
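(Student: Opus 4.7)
The plan is to proceed by induction on the layer index $\ell$. The base case is immediate: since $\mathbf{H}^{(0)} = \mathbf{X}$, Assumption~\ref{assumption:norm_bound} gives $h_{\max}^{(0)} = \max_i \|\mathbf{x}_i\|_2 \leq B_x$, which matches the claim at $\ell = 0$.

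For the inductive step I would fix an arbitrary node $i$, peel off one layer of the GCN recurrence, and dispatch the two factors separately. Writing $[\mathbf{H}^{(\ell)}]_{i,:} = \sigma\bigl([\mathbf{P}\mathbf{H}^{(\ell-1)}\mathbf{W}^{(\ell)}]_{i,:}\bigr)$ and using that ReLU is $1$-Lipschitz with $\sigma(0)=0$, its Euclidean norm is dominated by $\bigl\|[\mathbf{P}\mathbf{H}^{(\ell-1)}]_{i,:}\mathbf{W}^{(\ell)}\bigr\|_2 \le \|[\mathbf{P}\mathbf{H}^{(\ell-1)}]_{i,:}\|_2 \cdot \|\mathbf{W}^{(\ell)}\|_2 \le B_w\, \|[\mathbf{P}\mathbf{H}^{(\ell-1)}]_{i,:}\|_2$, so the remaining task is a per-row estimate on $\mathbf{P}\mathbf{H}^{(\ell-1)}$.

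The main obstacle is extracting the $\sqrt{d}$ factor, rather than the crude $d$ that a naive triangle inequality would yield. My plan is to expand $[\mathbf{P}\mathbf{H}^{(\ell-1)}]_{i,:} = \sum_{j\in\mathcal{N}(i)} P_{i,j}\,[\mathbf{H}^{(\ell-1)}]_{j,:}$, pull the Euclidean norm inside the sum, factor out $h_{\max}^{(\ell-1)}$, and then bound the residual scalar quantity $\sum_{j\in\mathcal{N}(i)} P_{i,j} = \deg(i)^{-1/2} \sum_{j\in\mathcal{N}(i)} \deg(j)^{-1/2}$ by one application of Cauchy-Schwarz. Since the graph is self-connected, $\deg(j)\ge 1$, so this produces $\sum_{j\in\mathcal{N}(i)} P_{i,j} \le \sqrt{\deg(i)} \le \sqrt{d}$ and hence the crucial per-row estimate $\|[\mathbf{P}\mathbf{H}^{(\ell-1)}]_{i,:}\|_2 \le \sqrt{d}\, h_{\max}^{(\ell-1)}$.

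Combining these ingredients gives the one-step recurrence $h_{\max}^{(\ell)} \le \sqrt{d}\, B_w\, h_{\max}^{(\ell-1)}$, and iterating from the base case yields $h_{\max}^{(\ell)} \le B_x\,(\sqrt{d}\,B_w)^\ell$. Replacing $\sqrt{d}\,B_w$ by $\max\{1,\sqrt{d}\,B_w\}$ only loosens the bound in the regime $\sqrt{d}\,B_w<1$, so the stated estimate $h_{\max}^{(\ell)} \le B_x\,(\max\{1,\sqrt{d}\,B_w\})^\ell$ follows; this form is convenient because it is monotone nondecreasing in $\ell$ and plugs directly into the subsequent lemmas used to derive $\rho_f$, $G_f$, and $L_f$ for GCN.
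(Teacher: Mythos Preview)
Your proof is correct and follows essentially the same route as the paper: peel off ReLU and the weight matrix, then bound the row sum $\sum_{j\in\mathcal{N}(i)} P_{i,j}$ by $\sqrt{\deg(i)}\le\sqrt{d}$ using $\deg(j)\ge 1$ (the paper packages this as a separate lemma, Lemma~\ref{lemma:laplacian_1_norm_bound}). One cosmetic remark: the step you call ``Cauchy--Schwarz'' is really just the trivial bound $\deg(j)^{-1/2}\le 1$ summed over $\deg(i)$ neighbors, exactly as the paper does it---no Cauchy--Schwarz is needed.
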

\begin{lemma} [Upper bound of $\Delta h_{\max}^{(\ell)}$ for \textbf{GCN}] \label{lemma:gcn_delta_h_max_bound_2}
Let suppose Assumption~\ref{assumption:norm_bound} hold. 
Then, the maximum change between the node embeddings  on two different set of weight parameters for any node at the $\ell$th layer is bounded by
\begin{equation}
    \begin{aligned}
    \Delta h_{\max}^{(\ell)} 
    &\leq \sqrt{d} B_x (\max\{1,  \sqrt{d} B_w\} )^{\ell-1} (\|\Delta \mathbf{W}^{(1)}\|_2 + \ldots + \|\Delta \mathbf{W}^{(\ell)}\|_2),
    \end{aligned}
\end{equation}
where $\Delta \mathbf{W}^{(\ell)} = \mathbf{W}^{(\ell)} - \tilde{\mathbf{W}}^{(\ell)}$.
% , $B_x$ is the upper bound on the node features (i.e., $\max_{i\in\mathcal{V}} \|\mathbf{x}_i\|_2 \leq B_x$), $d$ is the largest node degree, and $B_w$ is the upper bound on the largest singular value of weight matrices (i.e., $\max_{\ell\in[L]} \|\mathbf{W}^{(\ell)}\|_2 \leq B_w$).
\end{lemma}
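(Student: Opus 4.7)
The plan is to prove the lemma by induction on the layer index $\ell$, with the base case $\Delta h_{\max}^{(0)} = 0$ (since $\mathbf{H}^{(0)} = \tilde{\mathbf{H}}^{(0)} = \mathbf{X}$). The key recursive step will use a standard add-and-subtract decomposition to split the change at layer $\ell$ into a ``weight change'' piece and a ``propagated embedding change'' piece, then apply Lemma~\ref{lemma:gcn_h_max_bound_1} to bound the former.

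Concretely, for the induction step I would start from the GCN rule $\mathbf{H}^{(\ell)} = \sigma(\mathbf{L}\mathbf{H}^{(\ell-1)}\mathbf{W}^{(\ell)})$, use that $\sigma$ is $1$-Lipschitz (so the outer ReLU can be dropped from a norm bound), and write
\[
[\mathbf{H}^{(\ell)} - \tilde{\mathbf{H}}^{(\ell)}]_{i,:}
\;=\; [\mathbf{L}\mathbf{H}^{(\ell-1)}\Delta\mathbf{W}^{(\ell)}]_{i,:}
\;+\; [\mathbf{L}(\mathbf{H}^{(\ell-1)}-\tilde{\mathbf{H}}^{(\ell-1)})\tilde{\mathbf{W}}^{(\ell)}]_{i,:}
\]
after one add-and-subtract. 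Taking $\|\cdot\|_2$ of each row, submultiplicativity gives a factor of $\|\Delta\mathbf{W}^{(\ell)}\|_2$ on the first term and a factor of $\|\tilde{\mathbf{W}}^{(\ell)}\|_2 \le B_w$ on the second. The remaining object to control in both terms is a row norm of $\mathbf{L}\mathbf{M}$, which I would bound by $\|[\mathbf{L}\mathbf{M}]_{i,:}\|_2 \le (\sum_j |L_{ij}|) \max_j \|[\mathbf{M}]_{j,:}\|_2$; since $L_{ij}=A_{ij}/\sqrt{d_i d_j}$, the degree sum satisfies $\sum_j |L_{ij}| \le \sqrt{d_i} \le \sqrt{d}$. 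This yields the recursion
\[
\Delta h_{\max}^{(\ell)} \;\le\; \sqrt{d}\,h_{\max}^{(\ell-1)}\,\|\Delta\mathbf{W}^{(\ell)}\|_2 \;+\; \sqrt{d}\,B_w\,\Delta h_{\max}^{(\ell-1)}.
\]

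To close the induction I would plug in Lemma~\ref{lemma:gcn_h_max_bound_1}, giving $h_{\max}^{(\ell-1)} \le B_x C^{\ell-1}$ where $C := \max\{1,\sqrt{d}B_w\}$, and crucially use $\sqrt{d}B_w \le C$ to upgrade the coefficient on $\Delta h_{\max}^{(\ell-1)}$ in the recursion to $C$. Unrolling the resulting inequality $\Delta h_{\max}^{(\ell)} \le \sqrt{d}B_x C^{\ell-1}\|\Delta\mathbf{W}^{(\ell)}\|_2 + C\,\Delta h_{\max}^{(\ell-1)}$ telescopes cleanly: a quick check at $\ell=1,2$ confirms each new layer contributes one additional $\|\Delta\mathbf{W}^{(k)}\|_2$ carrying the common prefactor $\sqrt{d}B_x C^{\ell-1}$, which yields exactly the claimed sum $\sum_{k=1}^{\ell}\|\Delta\mathbf{W}^{(k)}\|_2$.

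I do not anticipate a real obstacle here, since the structure is a textbook weight-perturbation recursion; the only delicate point is consistently using $\max\{1,\sqrt{d}B_w\}$ rather than $\sqrt{d}B_w$ so that the bound remains valid in the regime $\sqrt{d}B_w<1$ without losing the factor needed to absorb $\sqrt{d}h_{\max}^{(\ell-1)}$ into $C^{\ell-1}$. The other item to be careful about is the row-norm bound for $\mathbf{L}\mathbf{M}$: one must use the $\ell_1$ row sum of $\mathbf{L}$ rather than the spectral norm $\|\mathbf{L}\|_2$, because the latter would couple all rows and not yield a per-node max bound.
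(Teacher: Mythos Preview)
Your proposal is correct and follows essentially the same approach as the paper: the $1$-Lipschitzness of $\sigma$, the add-and-subtract decomposition, the row-wise $\ell_1$ bound $\sum_j|L_{ij}|\le\sqrt{d}$, and the recursion $\Delta h_{\max}^{(\ell)}\le \sqrt{d}B_w\,\Delta h_{\max}^{(\ell-1)}+\sqrt{d}\,h_{\max}^{(\ell-1)}\|\Delta\mathbf{W}^{(\ell)}\|_2$ all match the paper's proof. The only cosmetic difference is that you upgrade $\sqrt{d}B_w$ to $C=\max\{1,\sqrt{d}B_w\}$ \emph{before} unrolling, whereas the paper unrolls with $(\sqrt{d}B_w)^k$ and only relaxes to $C^{\ell-1}$ in the final line; both orderings give the same bound.
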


Besides, in Lemma~\ref{lemma:GCN_delta_z_5}, we derive the upper bound on the maximum change of node embeddings before the activation function, i.e., $\Delta z_{\max}^{(\ell)} = \max_i \| [\mathbf{Z}^{(\ell)} - \tilde{\mathbf{Z}}^{(\ell)}]_{i,:}\|_2 $. $\Delta z_{\max}^{(\ell)}$ will be used in the computation of the gradient related upper bounds.
\begin{lemma} [Upper bound of $\Delta z_{\max}^{(\ell)}$ for \textbf{GCN}] \label{lemma:GCN_delta_z_5}
Let suppose Assumption~\ref{assumption:norm_bound} hold. 
Then, the maximum change between the node embeddings before the activation function  on two different set of weight parameters for any node at the $\ell$th layer is bounded by
\begin{equation}
    \Delta z_{\max}^{(\ell)} \leq \sqrt{d} B_x (\max\{1, \sqrt{d} B_w\})^{\ell-1} \Big( \| \Delta \mathbf{W}^{(1)} \|_2 + \ldots + \| \Delta \mathbf{W}^{(\ell)} \|_2\Big),
\end{equation}
where $\Delta \mathbf{W}^{(\ell)} = \mathbf{W}^{(\ell)} - \tilde{\mathbf{W}}^{(\ell)}$.
% , $B_x$ is the upper bound on the node features (i.e., $\max_{i\in\mathcal{V}} \|\mathbf{x}_i\|_2 \leq B_x$), $d$ is the largest node degree, and $B_w$ is the upper bound on the largest singular value of weight matrices (i.e., $\max_{\ell\in[L]} \|\mathbf{W}^{(\ell)}\|_2 \leq B_w$).
\end{lemma}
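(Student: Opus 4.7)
The plan is to mirror the approach used for Lemma~\ref{lemma:gcn_delta_h_max_bound_2}, but stop one layer short of the activation. First I would write
\[
\mathbf{Z}^{(\ell)} - \tilde{\mathbf{Z}}^{(\ell)} = \mathbf{L}\mathbf{H}^{(\ell-1)}\mathbf{W}^{(\ell)} - \mathbf{L}\tilde{\mathbf{H}}^{(\ell-1)}\tilde{\mathbf{W}}^{(\ell)},
\]
and then insert the cross term $\mathbf{L}\tilde{\mathbf{H}}^{(\ell-1)}\mathbf{W}^{(\ell)}$ to decompose the difference into
\[
\mathbf{L}\bigl(\mathbf{H}^{(\ell-1)} - \tilde{\mathbf{H}}^{(\ell-1)}\bigr)\mathbf{W}^{(\ell)} \;+\; \mathbf{L}\tilde{\mathbf{H}}^{(\ell-1)}\Delta\mathbf{W}^{(\ell)}.
\]
The triangle inequality applied row-wise then reduces the problem to bounding these two terms in the $\max_i \|[\cdot]_{i,:}\|_2$ sense.

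Next I would exploit the structure of the normalized propagation matrix $\mathbf{P}=\mathbf{D}^{-1/2}\mathbf{A}\mathbf{D}^{-1/2}$ (denoted $\mathbf{L}$ in this lemma): for any matrix $\mathbf{M}$, the $i$th row of $\mathbf{L}\mathbf{M}$ is $\sum_{j\sim i} \tfrac{1}{\sqrt{\deg(i)\deg(j)}}[\mathbf{M}]_{j,:}$, so by Cauchy--Schwarz its Euclidean norm is at most $\sqrt{\deg(i)}\,\max_j\|[\mathbf{M}]_{j,:}\|_2 \le \sqrt{d}\,\max_j\|[\mathbf{M}]_{j,:}\|_2$. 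Using this observation together with $\|\mathbf{W}^{(\ell)}\|_2 \le B_w$ from Assumption~\ref{assumption:norm_bound}, the two pieces above become
\[
\sqrt{d}\,B_w\,\Delta h_{\max}^{(\ell-1)} \quad\text{and}\quad \sqrt{d}\,\tilde h_{\max}^{(\ell-1)}\,\|\Delta\mathbf{W}^{(\ell)}\|_2,
\]
respectively, giving the recursion $\Delta z_{\max}^{(\ell)} \le \sqrt{d}B_w\,\Delta h_{\max}^{(\ell-1)} + \sqrt{d}\,\tilde h_{\max}^{(\ell-1)}\|\Delta\mathbf{W}^{(\ell)}\|_2$.

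Finally I would substitute the bounds from Lemma~\ref{lemma:gcn_h_max_bound_1} (applied to $\tilde{\mathbf{H}}^{(\ell-1)}$) and Lemma~\ref{lemma:gcn_delta_h_max_bound_2}, yielding
\[
\Delta z_{\max}^{(\ell)} \le \sqrt{d}\,B_w\cdot\sqrt{d}B_x(\max\{1,\sqrt{d}B_w\})^{\ell-2}\!\!\sum_{k=1}^{\ell-1}\!\|\Delta\mathbf{W}^{(k)}\|_2 + \sqrt{d}B_x(\max\{1,\sqrt{d}B_w\})^{\ell-1}\|\Delta\mathbf{W}^{(\ell)}\|_2,
\]
and then factor out $\sqrt{d}B_x(\max\{1,\sqrt{d}B_w\})^{\ell-1}$. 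The only nonroutine step is checking that the coefficient of the sum of the first $\ell-1$ terms can be absorbed: this reduces to the elementary inequality $\sqrt{d}B_w \le \max\{1,\sqrt{d}B_w\}$, after which the two sums combine into $\sum_{k=1}^{\ell}\|\Delta\mathbf{W}^{(k)}\|_2$ and the claim follows.

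The main obstacle I anticipate is essentially bookkeeping rather than a deep difficulty: one must be careful that the $\max\{1,\sqrt{d}B_w\}$ factor is handled uniformly across the two regimes $B_w \le 1/\sqrt{d}$ and $B_w > 1/\sqrt{d}$, since the recursion is not quite tight in the small-$B_w$ regime and the bound on $\tilde h_{\max}^{(\ell-1)}$ comes with a different exponent than the bound on $\Delta h_{\max}^{(\ell-1)}$. Keeping the row-norm formulation throughout (rather than passing to spectral norms) is what makes the $\sqrt{d}$ factor --- as opposed to a looser $d$ coming from $\|\mathbf{L}\|_2 \cdot \sqrt{N}$-type estimates --- appear cleanly.
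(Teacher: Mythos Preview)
Your proposal is correct and follows essentially the same approach as the paper: the same add-and-subtract decomposition of $\mathbf{Z}^{(\ell)}-\tilde{\mathbf{Z}}^{(\ell)}$, the same row-wise $\sqrt{d}$ bound on $\mathbf{L}$, and the same use of Lemma~\ref{lemma:gcn_h_max_bound_1} to control $h_{\max}^{(\ell-1)}$. The only cosmetic difference is that the paper writes the recursion directly in $\Delta z_{\max}^{(\ell)}$ (implicitly using $\Delta h_{\max}^{(\ell-1)}\le\Delta z_{\max}^{(\ell-1)}$ via the $1$-Lipschitzness of ReLU) and unrolls it from scratch, whereas you substitute the already-proven Lemma~\ref{lemma:gcn_delta_h_max_bound_2} for $\Delta h_{\max}^{(\ell-1)}$ and then absorb $\sqrt{d}B_w\le\max\{1,\sqrt{d}B_w\}$; both routes land on the identical bound.
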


Then, in Lemma~\ref{lemma:gcn_d_max_bound_3}, we decompose the change of the model parameters into two terms which depend on
\begin{itemize}
    \item The maximum change of gradient passing from the $(\ell+1)$th layer to the $\ell$th layer $\Delta d_{\max}^{(\ell)} = \max_i \Big\| \Big[\frac{\partial \sigma(\mathbf{L} \mathbf{H}^{(\ell-1)} \mathbf{W}^{(\ell)})}{\partial \mathbf{H}^{(\ell-1)}} - \frac{\partial \sigma(\mathbf{L} \tilde{\mathbf{H}}^{(\ell-1)} \tilde{\mathbf{W}}^{(\ell)})}{\partial \tilde{\mathbf{H}}^{(\ell-1)}} \Big]_{i,:} \Big\|_2$,
    \item The maximum gradient passing from the $(\ell+1)$th layer to the $\ell$th layer $d_{\max}^{(\ell)} = \max_i \Big\| \Big[\frac{\partial \sigma(\mathbf{L} \mathbf{H}^{(\ell-1)} \mathbf{W}^{(\ell)})}{\partial \mathbf{H}^{(\ell-1)}} \Big]_{i,:} \Big\|_2$.
\end{itemize}

\begin{lemma} [Upper bound of $d_{\max}^{(\ell)}, \Delta d_{\max}^{(\ell)}$ for \textbf{GCN}] \label{lemma:gcn_d_max_bound_3}
Let suppose Assumption~\ref{assumption:norm_bound} hold. 
Then, the upper bound on the maximum gradient passing from layer $\ell+1$ to layer $\ell$  for any node is
\begin{equation}
    d_{\max}^{(\ell)} \leq \frac{2}{\gamma} (\max\{1, \sqrt{d} B_w\})^{L-\ell+1},
\end{equation}
and the upper bound on the maximum change between the gradient passing from layer $\ell+1$ to layer $\ell$  on two different set of weight parameters for any node is
\begin{equation}
    \begin{aligned}
    % d_{\max}^{(\ell)} &\leq \frac{2}{\gamma} (\max\{1, \sqrt{d} B_w\})^{L-\ell+1} \\
    \Delta d_{\max}^{(\ell)} &\leq  (\max\{1, \sqrt{d} B_w\} )^{L-\ell} \frac{2}{\gamma} \Big(  (L+1) \sqrt{d} (B_x+1) (\max\{1, \sqrt{d} B_w\} )^L  + 1  \Big) \|\Delta \bm{\theta}\|_2,
    \end{aligned}
\end{equation}
where $\|\Delta \bm{\theta}\|_2 = \|\mathbf{v} - \tilde{\mathbf{v}}\|_2 + \sum_{\ell-1}^L \| \mathbf{W}^{(\ell)} - \tilde{\mathbf{W}}^{(\ell)}\|_2$ denotes the change of two set of parameters.
% $B_x$ is the upper bound on the node features (i.e., $\max_{i\in\mathcal{V}} \|\mathbf{x}_i\|_2 \leq B_x$), $d$ is the largest node degree, and $B_w$ is the upper bound on the largest singular value of weight matrices (i.e., $\max_{\ell\in[L]} \|\mathbf{W}^{(\ell)}\|_2 \leq B_w$).
\end{lemma}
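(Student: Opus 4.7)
The plan is to derive both bounds by recognizing that $d^{(\ell)}$ is the gradient of the loss flowing from the output back to the $\ell$-th layer, so it can be expressed as the ordered product of layer Jacobians $J^{(L)} J^{(L-1)} \cdots J^{(\ell)}$ multiplied by an output factor $c$ that encapsulates the classifier $\mathbf{v}^\top$, the sigmoid $\tilde\sigma$, and the $\tfrac{2}{\gamma}$-Lipschitz constant of the margin loss $\Phi_\gamma$. For each $J^{(j)} = \mathbf{L}\,\mathrm{diag}(\sigma'(\mathbf{z}^{(j)}))\,\mathbf{W}^{(j)}$, two structural facts drive everything: each row of $\mathbf{L}$ has $\ell_2$-norm at most $\sqrt{d}$ because it is supported on at most $d$ neighbors with entries bounded by $1$, and $\sigma$ is $1$-Lipschitz so $\mathrm{diag}(\sigma'(\cdot))$ has operator norm at most $1$. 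Combined with $\|\mathbf{W}^{(j)}\|_2 \leq B_w$, each $J^{(j)}$ contributes at most $\sqrt{d}\,B_w$; taking the maximum with $1$ absorbs the output factor $c$ whose norm is at most $\tfrac{2}{\gamma}\|\mathbf{v}\|_2 \leq \tfrac{2}{\gamma}$, and the product over $L-\ell+1$ factors yields the first bound.

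For $\Delta d_{\max}^{(\ell)}$, I would use the standard telescoping identity
\begin{equation*}
\Bigl(\prod_{j=\ell}^{L} J^{(j)}\Bigr) c \;-\; \Bigl(\prod_{j=\ell}^{L} \tilde J^{(j)}\Bigr) \tilde c \;=\; \sum_{j=\ell}^{L} \Bigl(\prod_{k>j}\tilde J^{(k)}\Bigr)\bigl(J^{(j)} - \tilde J^{(j)}\bigr)\Bigl(\prod_{k<j} J^{(k)}\Bigr) c \;+\; \Bigl(\prod_{j=\ell}^{L} \tilde J^{(j)}\Bigr)(c-\tilde c),
\end{equation*}
and estimate term by term. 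The factor $J^{(j)} - \tilde J^{(j)}$ itself splits into a weight-perturbation part $\mathbf{L}\,\mathrm{diag}(\sigma'(\mathbf{z}^{(j)}))\,\Delta\mathbf{W}^{(j)}$, whose norm is bounded by $\sqrt{d}\|\Delta\mathbf{W}^{(j)}\|_2$, and an activation-mask-change part $\mathbf{L}\bigl(\mathrm{diag}(\sigma'(\mathbf{z}^{(j)})) - \mathrm{diag}(\sigma'(\tilde{\mathbf{z}}^{(j)}))\bigr)\tilde{\mathbf{W}}^{(j)}$. The latter is the delicate ingredient because the ReLU derivative is discontinuous, but I would bound it by the standard surrogate controlled by $\Delta z_{\max}^{(j)}$, which Lemma~\ref{lemma:GCN_delta_z_5} already estimates in terms of $B_x$, $\sqrt{d}B_w$ powers, and $\sum_{k\leq j}\|\Delta\mathbf{W}^{(k)}\|_2$. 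The output-factor change $c-\tilde c$ then accounts simultaneously for the $\|\Delta\mathbf{v}\|_2$ contribution and for the extra additive $+1$ coming from the sigmoid's role, in parallel with Lemma~\ref{lemma:universal_boound_on_model_output}.

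Summing the $L+1$ telescoped contributions, extracting the common prefactor $(\max\{1,\sqrt{d}B_w\})^{L-\ell}$ from the untouched portions of the product, invoking Lemma~\ref{lemma:gcn_h_max_bound_1} to substitute $h_{\max}^{(L)} \leq B_x(\max\{1,\sqrt{d}B_w\})^L$, and collapsing $\sum_j \|\Delta\mathbf{W}^{(j)}\|_2 + \|\Delta\mathbf{v}\|_2 \leq \|\Delta\bm{\theta}\|_2$ gives the advertised bound. The main obstacle is the bookkeeping: the $(L+1)$ in the statement must arise exactly as $L$ layer-Jacobian perturbations plus one output-factor perturbation, and the factor $(B_x+1)$ must be recovered by combining the feature-norm bound $B_x$ inside $h_{\max}^{(L)}$ with the additive $+1$ from the classifier/sigmoid. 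Handling the non-smoothness of the ReLU derivative via the $\Delta z_{\max}^{(j)}$ surrogate is the only nontrivial analytic point; once that is in place, collecting constants to match the exact form stated in the lemma is routine.
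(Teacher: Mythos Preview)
Your proposal is correct and follows essentially the same approach as the paper. The paper carries out the argument as a layer-by-layer recursion---writing $d_{\max}^{(\ell)} \leq \sqrt{d}B_w\, d_{\max}^{(\ell+1)}$ and $\Delta d_{\max}^{(\ell)} \leq \sqrt{d}B_w\,\Delta d_{\max}^{(\ell+1)} + d_{\max}^{(\ell+1)}\bigl(\sqrt{d}\|\Delta\mathbf{W}^{(\ell)}\|_2 + \sqrt{d}B_w\,\Delta z_{\max}^{(\ell)}\bigr)$ and then unrolling---whereas you phrase it as an ordered product of Jacobians together with a telescoping identity; these are equivalent formulations, and all the key ingredients (the $\sqrt{d}$ row-norm bound on $\mathbf{L}$, the $\Delta z_{\max}^{(j)}$ surrogate for the ReLU mask change via Lemma~\ref{lemma:GCN_delta_z_5}, and the separate treatment of the classifier factor $\Delta d_{\max}^{(L+1)}$ via Lemma~\ref{lemma:universal_boound_on_model_output}) coincide.
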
 

Finally, based on the previous result, in Lemma~\ref{lemma:gcn_G_bound_4}, we decompose the change of the model parameters into two terms which depend on
\begin{itemize}
    \item The change of gradient with respect to the $\ell$th layer weight parameters $\|\Delta \mathbf{G}^{(\ell)}\|_2$,
    \item The gradient with respect to the $\ell$th layer weight parameters $\|\mathbf{G}^{(\ell)} \|_2$,
\end{itemize}
where $\mathbf{G}^{(L+1)}$ denotes the gradient with respect to the weight $\mathbf{v}$ of the binary classifier and $\mathbf{G}^{(\ell)}$ denotes the gradient with respect to the weight $\mathbf{W}^{(\ell)}$ of the $\ell$th graph convolutional layer.
Notice that $\|\Delta \mathbf{G}^{(\ell)}\|_2$ reflects the smoothness of GCN model and $\|\mathbf{G}^{(\ell)} \|_2$ corresponds to the upper bound of gradient.

\begin{lemma} [Upper bound of $\| \mathbf{G}^{(\ell)} \|_2, \|\Delta \mathbf{G}^{(\ell)} \|_2$  for \textbf{GCN}] \label{lemma:gcn_G_bound_4}
Let suppose Assumption~\ref{assumption:norm_bound} hold and let $C_1 = \max\{1, \sqrt{d}B_w\}$ and $C_2 = \sqrt{d}(B_x+1)$.
Then, the gradient and the maximum change between gradients computed on two different set of weight parameters are bounded by
\begin{equation}
    \begin{aligned}
    \sum_{\ell=1}^{L+1} \| \mathbf{G}^{(\ell)} \|_2
    % &\leq \sqrt{d} h_{\max}^{(\ell-1)} d_{\max}^{(\ell)} \\
    &\leq  \frac{2}{\gamma} (L+1) C_1^L C_2 \|\Delta \bm{\theta}\|_2, \\ 
    \sum_{\ell=1}^{L+1} \|\Delta \mathbf{G}^{(\ell)} \|_2 
    % &\leq \Big( \sqrt{d} \Delta h_{\max}^{(\ell-1)} d_{\max}^{(\ell+1)} + \sqrt{d} h_{\max}^{(\ell-1)} \Delta d_{\max}^{(\ell+1)} \Big) \\
    &\leq  \frac{2}{\gamma} (L+1) C_1^L C_2  \Big( (L+2) C_1^L C_2 + 2 \Big) \|\Delta \bm{\theta}\|_2,
    \end{aligned}
\end{equation}
where $\|\Delta \bm{\theta}\|_2 = \|\mathbf{v} - \tilde{\mathbf{v}}\|_2 + \sum_{\ell-1}^L \| \mathbf{W}^{(\ell)} - \tilde{\mathbf{W}}^{(\ell)}\|_2$ denotes the change of two set of parameters.
% , $B_x$ is the upper bound on the node features (i.e., $\max_{i\in\mathcal{V}} \|\mathbf{x}_i\|_2 \leq B_x$), $d$ is the largest node degree, and $B_w$ is the upper bound on the largest singular value of weight matrices (i.e., $\max_{\ell\in[L]} \|\mathbf{W}^{(\ell)}\|_2 \leq B_w$).
\end{lemma}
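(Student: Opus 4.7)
The plan is to combine the structural bounds from Lemmas~\ref{lemma:gcn_h_max_bound_1}--\ref{lemma:gcn_d_max_bound_3} with the basic identity $ab-\tilde a\tilde b = a(b-\tilde b)+(a-\tilde a)\tilde b$ applied at every layer. Concretely, writing the per-sample loss $\text{Loss}_i = \Phi_\gamma(-p(f(\mathbf{h}_i^{(L)}), y_i))$ and applying the chain rule to $\mathbf{H}^{(\ell)} = \sigma(\mathbf{P}\mathbf{H}^{(\ell-1)}\mathbf{W}^{(\ell)})$, one obtains, for $\ell\le L$, an expression of the form
\begin{equation*}
\mathbf{G}^{(\ell)} \;=\; \frac{1}{m}\sum_{i=1}^m \big[\mathbf{P}\mathbf{H}^{(\ell-1)}\big]_{i,:}^\top \cdot \mathbf{r}_i^{(\ell)},
\end{equation*}
where $\mathbf{r}_i^{(\ell)}$ is the backward signal arriving at the pre-activation of node $i$ at layer $\ell$. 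The spectral norm of $\mathbf{G}^{(\ell)}$ is therefore bounded by $h_{\max}^{(\ell-1)}\cdot d_{\max}^{(\ell)}$ (up to the normalization absorbed into $d_{\max}^{(\ell)}$), and for the classifier layer $\ell=L+1$ one has $\|\mathbf{G}^{(L+1)}\|_2\le \frac{2}{\gamma}h_{\max}^{(L)}$ since $\tilde\sigma'\le 1$.

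First I would establish the gradient-norm bound. Plugging $h_{\max}^{(\ell-1)}\le B_x\,C_1^{\ell-1}$ from Lemma~\ref{lemma:gcn_h_max_bound_1} and $d_{\max}^{(\ell)}\le \frac{2}{\gamma}C_1^{L-\ell+1}$ from Lemma~\ref{lemma:gcn_d_max_bound_3} yields $\|\mathbf{G}^{(\ell)}\|_2 \le \frac{2}{\gamma}B_x\,C_1^L$ for every $\ell\in[L]$; combining with the classifier bound and using $\sqrt{d}(1+B_x)=C_2$ to absorb the extra constant term, summation over $\ell=1,\dots,L+1$ gives the advertised $\frac{2}{\gamma}(L+1)C_1^L C_2$ (the $\|\Delta\boldsymbol\theta\|_2$ factor actually drops here since $G_f$ is an absolute bound on the gradient, not a stability term—so I would double-check the statement and interpret $\rho_f,G_f$ as the pure upper bound while the $\|\Delta\boldsymbol\theta\|_2$-linear growth is reserved for $\|\Delta\mathbf{G}^{(\ell)}\|_2$, consistent with the smoothness constant $L_f$).

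Next I would tackle the smoothness bound $\|\Delta\mathbf{G}^{(\ell)}\|_2$. Applying the outer-product identity
\begin{equation*}
\mathbf{G}^{(\ell)}-\tilde{\mathbf{G}}^{(\ell)} \;=\; \underbrace{\mathbf{r}^{(\ell)}\cdot\big(\mathbf{P}(\mathbf{H}^{(\ell-1)}-\tilde{\mathbf{H}}^{(\ell-1)})\big)^\top}_{\text{forward perturbation}}\;+\;\underbrace{(\mathbf{r}^{(\ell)}-\tilde{\mathbf{r}}^{(\ell)})\cdot(\mathbf{P}\tilde{\mathbf{H}}^{(\ell-1)})^\top}_{\text{backward perturbation}},
\end{equation*}
gives $\|\Delta\mathbf{G}^{(\ell)}\|_2\le d_{\max}^{(\ell)}\,\Delta h_{\max}^{(\ell-1)} + \Delta d_{\max}^{(\ell)}\,h_{\max}^{(\ell-1)}$. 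Feeding in Lemmas~\ref{lemma:gcn_h_max_bound_1}--\ref{lemma:gcn_d_max_bound_3}, the first term contributes $\mathcal{O}\bigl(\frac{1}{\gamma}C_1^L\cdot C_2\cdot\|\Delta\boldsymbol\theta\|_2\bigr)$ per layer, while the second term contributes $\mathcal{O}\bigl(\frac{1}{\gamma}C_1^L\cdot\bigl((L+1)C_1^LC_2+1\bigr)\cdot\|\Delta\boldsymbol\theta\|_2\bigr)$ per layer. The classifier-layer piece $\mathbf{G}^{(L+1)}-\tilde{\mathbf{G}}^{(L+1)}$ is handled analogously, using $\|\Delta f\|\le h_{\max}^{(L)}\|\Delta\mathbf{v}\|_2+\Delta h_{\max}^{(L)}$ (Lemma~\ref{lemma:universal_boound_on_model_output}) to pick up the extra ``$+2$'' constant. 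Summing over $\ell=1,\dots,L+1$ and collecting the $C_1^L C_2$ powers yields the claimed factor $(L+1)C_1^L C_2\bigl((L+2)C_1^L C_2+2\bigr)$.

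The main obstacle I anticipate is purely bookkeeping rather than a new mathematical idea: the bound on $\Delta d_{\max}^{(\ell)}$ from Lemma~\ref{lemma:gcn_d_max_bound_3} already carries an $(L+1)C_1^L C_2$ factor, and multiplying by $h_{\max}^{(\ell-1)}$ (which carries $C_1^{\ell-1}$) before summing over all $\ell$ has to telescope to $(L+2)C_1^L C_2$ rather than, say, $(L+1)^2 C_1^{2L}$—so the analysis must exploit the fact that $\max\{1,\sqrt d B_w\}^{\ell-1}\cdot\max\{1,\sqrt d B_w\}^{L-\ell+1}=C_1^L$ independently of $\ell$, making the sum linear rather than quadratic in $L$. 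Once this geometric cancellation is observed, the remaining steps are straightforward triangle-inequality combinations and the ``$+2$'' offset can be traced to the single non-recursive contribution from the binary-classifier head.
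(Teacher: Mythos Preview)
Your overall strategy—chain rule, outer-product factorization, then feeding in the layerwise bounds from Lemmas~\ref{lemma:gcn_h_max_bound_1}--\ref{lemma:gcn_d_max_bound_3}—is exactly the one the paper uses. You are also right that the $\|\Delta\bm\theta\|_2$ factor on $\sum_\ell\|\mathbf G^{(\ell)}\|_2$ is spurious: the paper's own derivation ends at $\frac{2}{\gamma}(L{+}1)C_1^LC_2$ without it.

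The gap is in your two-term split for $\Delta\mathbf G^{(\ell)}$. Writing $\mathbf G^{(\ell)}=(\mathbf P\mathbf H^{(\ell-1)})^\top\big(\mathbf D^{(\ell+1)}\odot\sigma'(\mathbf Z^{(\ell)})\big)$, the difference $\mathbf G^{(\ell)}-\tilde{\mathbf G}^{(\ell)}$ has \emph{three} sources of variation, not two: the forward embedding $\mathbf H^{(\ell-1)}$, the incoming backward signal $\mathbf D^{(\ell+1)}$, \emph{and} the ReLU mask $\sigma'(\mathbf Z^{(\ell)})$ at the current layer. Your ``backward perturbation'' term lumps the last two together into $\mathbf r^{(\ell)}-\tilde{\mathbf r}^{(\ell)}$ and then bounds it by $\Delta d_{\max}^{(\ell)}$, but $\Delta d_{\max}^{(\ell)}$ as supplied by Lemma~\ref{lemma:gcn_d_max_bound_3} controls $\mathbf D^{(\ell)}-\tilde{\mathbf D}^{(\ell)}$, not the masked quantity $\mathbf D^{(\ell+1)}\odot\sigma'(\mathbf Z^{(\ell)})-\tilde{\mathbf D}^{(\ell+1)}\odot\sigma'(\tilde{\mathbf Z}^{(\ell)})$. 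The paper therefore keeps a third term,
\[
(C)\;=\;\sqrt d\,h_{\max}^{(\ell-1)}\,d_{\max}^{(\ell+1)}\,\Delta z_{\max}^{(\ell)},
\]
and this is precisely why Lemma~\ref{lemma:GCN_delta_z_5} (the $\Delta z_{\max}^{(\ell)}$ bound) is stated separately. Plugging the lemmas in, $(A)$ contributes $\frac{2}{\gamma}C_1^LC_2$, $(B)$ contributes $\frac{2}{\gamma}C_1^LC_2\big((L{+}1)C_1^LC_2+1\big)$, and $(C)$ contributes $\frac{2}{\gamma}(C_1^LC_2)^2$; summing gives $\frac{2}{\gamma}C_1^LC_2\big((L{+}2)C_1^LC_2+2\big)$ per layer. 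So the ``$+2$'' you were hunting for does not come from the classifier head—the classifier term is dominated by the per-layer bound and simply gets absorbed into the $(L{+}1)$ count—while the jump from $(L{+}1)$ to $(L{+}2)$ in the inner bracket is exactly the contribution of the mask term $(C)$ that your two-term decomposition drops.
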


Equipped with above intermediate results, we now proceed to prove  Lemma~\ref{lemma:uniform_stable_gcn}.
\begin{proof}[Proof of Lemma~\ref{lemma:uniform_stable_gcn}]
Recall that our goal is to explore the impact of different GCN structures on the uniform stability constant $\epsilon_\text{GCN}$, which is a function of $\rho_f$, $G_f$, and $L_f$. 
Let $C_1 = \max\{1, \sqrt{d}B_w\}$ and $C_2 = \sqrt{d}(B_x+1)$. Firstly, by plugging Lemma~\ref{lemma:gcn_delta_h_max_bound_2} and Lemma~\ref{lemma:gcn_h_max_bound_1} into Lemma~\ref{lemma:universal_boound_on_model_output}, we have
\begin{equation}
    \begin{aligned}
    \max_i |f(\mathbf{h}_i^{(L)}) - \tilde{f}(\tilde{\mathbf{h}}_i^{(L)})| &\leq \Delta h_{\max}^{(L)} + h_{\max}^{(L)} \| \Delta \mathbf{v} \|_2 \\
    &\leq \sqrt{d} B_x \cdot (\max\{1, \sqrt{d} B_w\} )^{L} \| \Delta \bm{\theta} \|_2 \\
    &\leq C_1^L C_2 \| \Delta \bm{\theta} \|_2.
    \end{aligned}
\end{equation}
Therefore, we know that the function $f$ is $\rho_f$-Lipschitz continuous, with $\rho_f = C_1^L C_2 $.
Then, by Lemma~\ref{lemma:gcn_G_bound_4}, we know that the function $f$ is $L_f$-smoothness, and the gradient of each weight matrix is bounded by $G_f$, with
\begin{equation}
    G_f =  \frac{2}{\gamma} (L+1) C_1^L C_2,~
    L_f =  \frac{2}{\gamma} (L+1) C_1^L C_2  \Big( (L+2) C_1^L C_2 + 2 \Big).
\end{equation}

% We borrow the following theorem on the transductive uniform stability bound~\cite{el2006stable} (in Theorem~\ref{thm:uniform_stability_base}) and try to compute the uniform stability constant $\epsilon_\text{GCN}$.

By plugging $\epsilon_\text{GCN}$ into Theorem~\ref{thm:uniform_stability_base}, we obtain the generalization bound of GCN.
\end{proof}

\subsection{Proof of Lemma~\ref{lemma:universal_boound_on_model_output}}
By the definition of $f(\mathbf{h}_i^{(L)})$ and $\tilde{f}(\tilde{\mathbf{h}}_i^{(L)})$, we have
\begin{equation}
    \begin{aligned}
    \max_i |f(\mathbf{h}_i^{(L)}) - \tilde{f}(\tilde{\mathbf{h}}_i^{(L)})|
    &= \max_i |\tilde{\sigma}(\mathbf{v}^\top \mathbf{h}_i^{(L)} ) - \tilde{\sigma}(\tilde{\mathbf{v}}^\top \tilde{\mathbf{h}}_i^{(L)} )| \\
    &\underset{(a)}{\leq} \max_i \| \mathbf{v}^\top \mathbf{h}_i^{(L)} - \tilde{\mathbf{v}}^\top \tilde{\mathbf{h}}_i^{(L)} \|_2 \\
    &\leq \max_i \| \mathbf{v}^\top (\mathbf{h}_i^{(L)} - \tilde{\mathbf{h}}_i^{(L)} ) \|_2 + \max_i \| \tilde{\mathbf{h}}_i^{(L)} (\mathbf{v} - \tilde{\mathbf{v}})\|_2 \\
    &\leq  \Delta h_{\max}^{(L)} + h_{\max}^{(L)} \Delta \mathbf{v},
    \end{aligned}
\end{equation}
where $(a)$ is due to the fact that sigmoid function is $1$-Lipschitz continuous.

\begin{equation}
    \begin{aligned}
    \max_i \left\|\frac{\partial f(\mathbf{h}_i^{(L)}) }{\partial \mathbf{h}_i^{(L)}} - \frac{\partial \tilde{f}(\tilde{\mathbf{h}}_i^{(L)})}{\partial \tilde{\mathbf{h}}_i^{(L)}} \right\|_2 
    &= \max_i \| \tilde{\sigma}^\prime( \mathbf{v}^\top \mathbf{h}_i^{(L)}) \mathbf{v}^\top - \tilde{\sigma}^\prime( \tilde{\mathbf{v}}^\top \tilde{\mathbf{h}}_i^{(L)}) \tilde{\mathbf{v}}^\top \|_2\\
    &\leq \max_i \| \tilde{\sigma}^\prime( \mathbf{v}^\top \mathbf{h}_i^{(L)}) \mathbf{v}^\top - \tilde{\sigma}^\prime( \mathbf{v}^\top \mathbf{h}_i^{(L)}) \tilde{\mathbf{v}}^\top \|_2 \\
    &\quad + \max_i \| \tilde{\sigma}^\prime( \mathbf{v}^\top \mathbf{h}_i^{(L)}) \tilde{\mathbf{v}}^\top  - \tilde{\sigma}^\prime( \tilde{\mathbf{v}}^\top \tilde{\mathbf{h}}_i^{(L)}) \tilde{\mathbf{v}}^\top\|_2 \\
    &\underset{(a)}{\leq} \Delta \mathbf{v} + \Big( \Delta h_{\max}^{(L)} + h_{\max}^{(L)} \Delta \mathbf{v} \Big) \\
    &= \Delta h_{\max}^{(L)} + (  h_{\max}^{(L)} + 1)\Delta \mathbf{v},
    \end{aligned}
\end{equation}
where $(a)$ is due to the fact that sigmoid function and its gradient are $1$-Lipschitz continuous.

\subsection{Proof of Lemma~\ref{lemma:gcn_h_max_bound_1}}
By the definition of $h_{\max}^{(\ell)}$, we have
\begin{equation}
    \begin{aligned}
    h_{\max}^{(\ell)} 
    &= \max_i \| [\sigma(\mathbf{L} \mathbf{H}^{(\ell-1)} \mathbf{W}^{(\ell)}) ]_{i,:}\|_2 \\
    &\underset{(a)}{\leq}  \max_i \| [\mathbf{L} \mathbf{H}^{(\ell-1)} \mathbf{W}^{(\ell)} ]_{i,:}\|_2 \\
    &\leq \max_i \| [ \mathbf{L} \mathbf{H}^{(\ell-1)} ]_{i,:}\|_2 \|\mathbf{W}^{(\ell)} \|_2 \\
    &=  \max_i \left\| \sum_{j=1}^N L_{i,j} \mathbf{h}_j^{(\ell-1)} \right\|_2 \|\mathbf{W}^{(\ell)} \|_2 \\
    &\leq \max_i \|\sum_{j=1}^N L_{i,j} \|_2  \cdot \max_j \| \mathbf{h}_j^{(\ell-1)} \|_2 \cdot \|\mathbf{W}^{(\ell)} \|_2 \\
    &\underset{(b)}{\leq} \sqrt{d} \|\mathbf{W}^{(\ell)} \|_2 \cdot h_{\max}^{(\ell-1)} \\
    % \leq \sqrt{d^\ell} B_x \prod_{\ell^\prime=1}^\ell \|\mathbf{W}^{(\ell)} \|_2
    &\leq \sqrt{d} B_w \cdot h_{\max}^{(\ell-1)} \leq (\max\{ 1, \sqrt{d} B_w))^\ell B_x,
    \end{aligned}
\end{equation}
where $(a)$ is due to $\| \sigma(\mathbf{x}) \|_2 \leq \|\mathbf{x}\|_2$ and $(b)$ is due to Lemma~\ref{lemma:laplacian_1_norm_bound}.

\subsection{Proof of Lemma~\ref{lemma:gcn_delta_h_max_bound_2}}
By the definition of $\Delta h_{\max}^{(\ell)}$, we have
\begin{equation}
    \begin{aligned}
    \Delta h_{\max}^{(\ell)}
    &= \max_i \| \mathbf{h}_i^{(\ell)} - \tilde{\mathbf{h}}_i^{(\ell)}\|_2 \\
    &= \max_i \| [\sigma(\mathbf{L} \mathbf{H}^{(\ell-1)} \mathbf{W}^{(\ell)}) - \sigma(\mathbf{L} \tilde{\mathbf{H}}^{(\ell-1)} \tilde{\mathbf{W}}^{(\ell)}) ]_{i,:} \|_2 \\
    &= \max_i \| [ \mathbf{L} \mathbf{H}^{(\ell-1)} \mathbf{W}^{(\ell)} - \mathbf{L} \tilde{\mathbf{H}}^{(\ell-1)} \tilde{\mathbf{W}}^{(\ell)} ]_{i,:}\|_2 \\
    &\leq \max_i \| [ \mathbf{L} \mathbf{H}^{(\ell-1)} (\mathbf{W}^{(\ell)} - \tilde{\mathbf{W}}^{(\ell)}) + \mathbf{L} (\mathbf{H}^{(\ell-1)} - \tilde{\mathbf{H}}^{(\ell-1)}) \tilde{\mathbf{W}}^{(\ell)} ]_{i,:}\|_2 \\
    &\leq \max_i \left\|\sum_{j=1}^N L_{i,j} \mathbf{h}_j^{(\ell-1)} \right\|_2 \|\Delta \mathbf{W}^{(\ell)}\|_2 + \max_i \left\| \sum_{j=1}^N L_{i,j} \Delta \mathbf{h}_j^{(\ell-1)} \right\|_2 \|\tilde{\mathbf{W}}^{(\ell)}\|_2 \\
    &\leq \sqrt{d} \Delta h_{\max}^{(\ell-1)} \|\tilde{\mathbf{W}}^{(\ell)}\|_2 + \sqrt{d} h_{\max}^{(\ell-1)} \|\Delta\mathbf{W}^{(\ell)}\|_2 \\
    &\leq \sqrt{d}B_w \Delta h_{\max}^{(\ell-1)} + \sqrt{d} h_{\max}^{(\ell-1)} \|\Delta\mathbf{W}^{(\ell)}\|_2.
    \end{aligned}
\end{equation}

By induction, we have
\begin{equation}
    \begin{aligned}
    \Delta h_{\max}^{(\ell)} 
    &\leq \sqrt{d} B_w \Delta h_{\max}^{(\ell-1)} + \sqrt{d} h_{\max}^{(\ell-1)} \| \Delta \mathbf{W}^{(\ell)}\|_2 \\
    &\leq (\sqrt{d} B_w)^{2} \Delta h_{\max}^{(\ell-2)} + \sqrt{d} \Big( h_{\max}^{(\ell-1)} \| \Delta \mathbf{W}^{(\ell)}\|_2 +  (\sqrt{d} B_w) h_{\max}^{(\ell-2)} \| \Delta \mathbf{W}^{(\ell-1)}\|_2 \Big) \\
    &\ldots\\
    &\leq (\sqrt{d} B_w)^{\ell} \Delta h_{\max}^{(0)} + \sqrt{d}\Big( h_{\max}^{(\ell-1)} \| \Delta \mathbf{W}^{(\ell)}\|_2 +  (\sqrt{d} B_w) h_{\max}^{(\ell-2)} \| \Delta \mathbf{W}^{(\ell-1)}\|_2 +\ldots + (\sqrt{d} B_w)^{\ell-1} h_{\max}^{(0)} \| \Delta \mathbf{W}^{(1)}\|_2 \Big) \\
    &\underset{(a)}{=} \sqrt{d}\Big( h_{\max}^{(\ell-1)} \| \Delta \mathbf{W}^{(\ell)}\|_2 +  (\sqrt{d} B_w) h_{\max}^{(\ell-2)} \| \Delta \mathbf{W}^{(\ell-1)}\|_2 +\ldots + (\sqrt{d} B_w)^{\ell-1} h_{\max}^{(0)} \| \Delta \mathbf{W}^{(1)}\|_2 \Big),
    \end{aligned}
\end{equation}
where $(a)$ is due to $\Delta h_{\max}^{(0)} = 0$.
Plugging in the upper bound of $h_{\max}^{(\ell)}$ in Lemma~\ref{lemma:gcn_h_max_bound_1}, yields
\begin{equation}
    \begin{aligned}
    \Delta h_{\max}^{(\ell)} 
    &\leq \sqrt{d} B_x(\sqrt{d} B_w)^{\ell-1} \Big( \| \Delta \mathbf{W}^{(\ell)}\|_2 + \ldots +  \| \Delta \mathbf{W}^{(1)}\|_2 ) \\
    &\leq \sqrt{d} B_x(\max\{1, \sqrt{d} B_w\} )^{\ell-1} \Big( \| \Delta \mathbf{W}^{(\ell)}\|_2 + \ldots +  \| \Delta \mathbf{W}^{(1)}\|_2 ). 
    \end{aligned}
\end{equation}

\subsection{Proof of Lemma~\ref{lemma:GCN_delta_z_5}}
By the definition of $\mathbf{Z}^{(\ell)}$, we have
\begin{equation}
    \begin{aligned}
    \mathbf{Z}^{(\ell)} - \Tilde{\mathbf{Z}}^{(\ell)} 
    &= \mathbf{L} \mathbf{H}^{(\ell-1)} \mathbf{W}^{(\ell)} -   \mathbf{L} \Tilde{\mathbf{H}}^{(\ell-1)} \Tilde{\mathbf{W}}^{(\ell)}  \\
    &= \mathbf{L} \Big( \mathbf{H}^{(\ell-1)} \mathbf{W}^{(\ell)} - \Tilde{\mathbf{H}}^{(\ell-1)} \mathbf{W}^{(\ell)} + \Tilde{\mathbf{H}}^{(\ell-1)} \mathbf{W}^{(\ell)} - \Tilde{\mathbf{H}}^{(\ell-1)} \Tilde{\mathbf{W}}^{(\ell)} \Big) \\
    &= \mathbf{L} \big( \mathbf{H}^{(\ell-1)}  - \Tilde{\mathbf{H}}^{(\ell-1)} \big) \mathbf{W}^{(\ell)}  + \mathbf{L} \Tilde{\mathbf{H}}^{(\ell-1)} \big( \mathbf{W}^{(\ell)}  - \tilde{\mathbf{W}}^{(\ell)} \big).
    \end{aligned}
\end{equation}

By taking norm on the both side of the equation, we have
\begin{equation}
    \begin{aligned}
    \Delta z_{\max}^{(\ell)}
    &= \max_i \| [\mathbf{Z}^{(\ell)} - \Tilde{\mathbf{Z}}^{(\ell)}]_{i,:} \|_2 \\
    &\leq \sqrt{d} B_w \cdot \max_i \|[\mathbf{Z}^{(\ell-1)} - \Tilde{\mathbf{Z}}^{(\ell-1)}]_{i,:}  \|_2 + \sqrt{d} h_{\max}^{(\ell-1)} \| \Delta \mathbf{W}^{(\ell)} \|_2 \\
    &= \sqrt{d} B_w \cdot z_{\max}^{(\ell-1)} + \sqrt{d} h_{\max}^{(\ell-1)} \| \Delta \mathbf{W}^{(\ell)} \|_2. \\
    \end{aligned}
\end{equation}

By induction, we have
\begin{equation}
    \begin{aligned}
    \Delta z_{\max}^{(\ell)} 
    &\leq \sqrt{d} B_w \cdot \Delta z_{\max}^{(\ell-1)} + \sqrt{d} h_{\max}^{(\ell-1)} \| \Delta \mathbf{W}^{(\ell)} \|_2 \\
    &\underset{(a)}{\leq} \sqrt{d} B_w \cdot \Delta z_{\max}^{(\ell-1)} + \sqrt{d} B_x (\sqrt{d} B_w)^{\ell-1} \| \Delta \mathbf{W}^{(\ell)} \|_2 \\
    &\leq \sqrt{d} B_w \Big( \sqrt{d} B_w \cdot \Delta z_{\max}^{(\ell-2)} + \sqrt{d} B_x (\sqrt{d} B_w)^{\ell-2} \| \Delta \mathbf{W}^{(\ell-1)} \|_2 \Big) + \sqrt{d} B_x (\sqrt{d} B_w)^{\ell-1} \| \Delta \mathbf{W}^{(\ell)} \|_2 \\
    &=(\sqrt{d} B_w)^2  \cdot \Delta z_{\max}^{(\ell-2)} + \sqrt{d} B_x (\sqrt{d} B_w)^{\ell-1} \Big( \| \Delta \mathbf{W}^{(\ell-1)} \|_2 + \| \Delta \mathbf{W}^{(\ell)} \|_2\Big) \\
    &\leq \ldots \\
    &\leq \sqrt{d} B_x (\sqrt{d} B_w)^{\ell-1} \Big( \| \Delta \mathbf{W}^{(1)} \|_2 + \ldots + \| \Delta \mathbf{W}^{(\ell)} \|_2\Big),
    \end{aligned}
\end{equation}
where (a) is due to $h_{\max}^{(\ell-1)} \leq (\sqrt{d} B_w)^{\ell-1} B_x$.

\subsection{Proof of Lemma~\ref{lemma:gcn_d_max_bound_3}}
For notation simplicity, let $\mathbf{D}^{(\ell)}$ denote the gradient passing from $\ell$th to $(\ell-1)$th layer.
By the definition of $d_{\max}^{(\ell)}$, we have
\begin{equation}
    \begin{aligned}
    d_{\max}^{(\ell)} 
    &= \max_i \|\mathbf{d}_i^{(\ell)}\|_2 \\
    &= \max_i \| [\mathbf{L}^\top \sigma^\prime(\mathbf{Z}^{(\ell)}) \odot \mathbf{D}^{(\ell+1)} \mathbf{W}^{(\ell)}  ]_{i,:} \|_2  \\
    &\underset{(a)}{\leq} \max_i \| [ \mathbf{L}^\top \mathbf{D}^{(\ell+1)} ]_{i,:} \|_2 \|\mathbf{W}^{(\ell)}\|_2 \\
    &\leq \max_i \left\|\sum_{j=1}^N L_{i,j}^\top \mathbf{d}_j^{(\ell+1)} \right\|_2 \|\mathbf{W}^{(\ell)}\|_2 \\
    &\leq \sqrt{d} \|\mathbf{W}^{(\ell)}\|_2 \cdot d_{\max}^{(\ell+1)} \\
    &\leq \sqrt{d} B_w \cdot d_{\max}^{(\ell+1)} \\
    &\underset{(b)}{\leq} \frac{2}{\gamma} (\sqrt{d} B_w)^{L-\ell+1} \leq \frac{2}{\gamma} (\max\{1, \sqrt{d} B_w\} )^{L-\ell+1}
    \end{aligned}
\end{equation}
where $(a)$ is due to each element in $\sigma^\prime(\mathbf{Z}^{(\ell)})$ is either $0$ or $1$ depending on $\mathbf{Z}^{(\ell)}$, and $(b)$ is due to $\| \frac{\partial \text{Loss}(f(\mathbf{h}_i^{(L)}), y_i)}{\partial \mathbf{h}_i^{(L)}} \|_2 \leq 2/\gamma$.
By taking norm on the both sides, we have
\begin{equation}
    \begin{aligned}
    \Delta d_{\max}^{(\ell)} 
    &= \max_i \| [\mathbf{D}^{(\ell)} - \tilde{\mathbf{D}}^{(\ell)}]_{i,:} \|_2 \\
    &= \max_i \| [\mathbf{L}^\top (\mathbf{D}^{(\ell+1)} \odot \sigma^\prime(\mathbf{Z}^{(\ell)})) [\mathbf{W}^{(\ell)}]^\top - \mathbf{L}^\top (\tilde{\mathbf{D}}^{(\ell+1) } \odot \sigma^\prime(\tilde{\mathbf{Z}}^{(\ell)})) [\tilde{\mathbf{W}}^{(\ell)}]^\top ]_{i,:}\|_2 \\
    &\leq \max_i \| [\mathbf{L}^\top (\mathbf{D}^{(\ell+1)} \odot \sigma^\prime(\mathbf{Z}^{(\ell)}) ) [\mathbf{W}^{(\ell)}]^\top - \mathbf{L}^\top (\tilde{\mathbf{D}}^{(\ell+1)} \odot \sigma^\prime(\mathbf{Z}^{(\ell)}) ) [\mathbf{W}^{(\ell)}]^\top]_{i,:} \|_2 \\
    &\quad + \max_i \| [ \mathbf{L}^\top (\tilde{\mathbf{D}}^{(\ell+1)} \odot \sigma^\prime(\mathbf{Z}^{(\ell)}) ) [\mathbf{W}^{(\ell)}]^\top - \mathbf{L}^\top (\tilde{\mathbf{D}}^{(\ell+1)} \odot \sigma^\prime(\tilde{\mathbf{Z}}^{(\ell)}) ) [\mathbf{W}^{(\ell)}]^\top ]_{i,:} \|_2 \\
    &\quad + \max_i \| [ \mathbf{L}^\top (\tilde{\mathbf{D}}^{(\ell+1)} \odot \sigma^\prime(\tilde{\mathbf{Z}}^{(\ell)}) ) [\mathbf{W}^{(\ell)}]^\top - \mathbf{L}^\top (\tilde{\mathbf{D}}^{(\ell+1)} \odot \sigma^\prime(\tilde{\mathbf{Z}}^{(\ell)}) ) [\tilde{\mathbf{W}}^{(\ell)}]^\top ]_{i,:} \\
    % ###########
    &\underset{(a)}{\leq} \max_i \| [\mathbf{L}^\top ((\mathbf{D}^{(\ell+1)} - \tilde{\mathbf{D}}^{(\ell+1)}) ) [\mathbf{W}^{(\ell)}]^\top]_{i,:} \|_2 \\
    &\quad + \max_i \| [\mathbf{L}^\top (\tilde{\mathbf{D}}^{(\ell+1)}  [\mathbf{W}^{(\ell)} - \tilde{\mathbf{W}}^{(\ell)}]^\top]_{i,:} \|_2 \\
    &\quad + \max_i \| [\mathbf{L} \tilde{\mathbf{D}}^{(\ell+1)} (\mathbf{Z}^{(\ell)} - \tilde{\mathbf{Z}}^{(\ell)}) \mathbf{W}^{(\ell)} ]_{i,:} \|_2 \\
    &\leq \sqrt{d} B_w \Delta d_{\max}^{(\ell+1)} + \sqrt{d} d_{\max}^{(\ell+1)} \|\Delta \mathbf{W}^{(\ell)}\|_2 + \sqrt{d} B_w d_{\max}^{(\ell+1)} \Delta z_{\max}^{(\ell)} \\
    &\leq \sqrt{d} B_w \Delta d_{\max}^{(\ell+1)} + \underbrace{d_{\max}^{(\ell+1)} \Big( \sqrt{d} \|\Delta \mathbf{W}^{(\ell)}\|_2 + \sqrt{d} B_w \Delta z_{\max}^{(\ell)} \Big) }_{(A)},
    \end{aligned}
\end{equation}
where inequality $(a)$ is due to the gradient of ReLU activation function is either $1$ or $0$.

Knowing that $d_{\max}^{(\ell+1)} \leq \frac{2}{\gamma} (\sqrt{d} B_w)^{L-\ell}$ and using the upper bound of $z_{\max}^{(\ell)}$ as derived in Lemma~\ref{lemma:GCN_delta_z_5}, we can upper bound $(A)$ as
\begin{equation}
    \begin{aligned}
    & d_{\max}^{(\ell+1)} \Big( \sqrt{d} \|\Delta \mathbf{W}^{(\ell)}\|_2 + \sqrt{d} B_w \Delta z_{\max}^{(\ell)} \Big) \\
    &\quad \leq \frac{2}{\gamma} (\sqrt{d} B_w)^{L-\ell} \Big( \sqrt{d} \|\Delta \mathbf{W}^{(\ell)}\|_2 + \sqrt{d} B_x (\sqrt{d} B_w)^\ell \big( \| \Delta \mathbf{W}^{(1)} \|_2 + \ldots + \| \Delta \mathbf{W}^{(\ell)} \|_2 \big)  \Big) \\
    &\quad \leq  \frac{2}{\gamma}  (\sqrt{d} B_w)^{L-\ell} \Big( \sqrt{d} + \sqrt{d} B_x (\sqrt{d} B_w)^\ell  \Big) \big( \| \Delta \mathbf{W}^{(1)} \|_2 + \ldots + \| \Delta \mathbf{W}^{(\ell)} \|_2 \big) \\
    &\quad \leq \frac{2}{\gamma}  (\sqrt{d} B_w)^{L-\ell} \sqrt{d} ( 1 + B_x) (\max\{1, \sqrt{d} B_w\} )^\ell  \| \Delta \bm{\theta} \|_2 \\
    &\quad \leq \frac{2}{\gamma} \sqrt{d} (1+B_x) (\max\{1, \sqrt{d} B_w\})^{L} \| \Delta \bm{\theta} \|_2.
    \end{aligned}
\end{equation}
Plugging it back, we have
\begin{equation}
    \begin{aligned}
    \Delta d_{\max}^{(\ell)} 
    &\leq \sqrt{d} B_w \Delta d_{\max}^{(\ell+1)} + \frac{2}{\gamma} \sqrt{d} (B_x+1) (\max\{1, \sqrt{d} B_w\} )^{L} \| \Delta \bm{\theta} \|_2 \\
    &\leq \sqrt{d} B_w \Big( \sqrt{d} B_w \Delta d_{\max}^{(\ell+2)} + \frac{2}{\gamma} \sqrt{d} (B_x+1) (\max\{1, \sqrt{d} B_w\} )^{L} \| \Delta \bm{\theta} \|_2 \Big) \\
    &\quad + \frac{2}{\gamma} \sqrt{d} (B_x+1) (\max\{1, \sqrt{d} B_w\} )^{L} \| \Delta \bm{\theta} \|_2 \\
    &\leq \Big( 1 + \sqrt{d} B_w + \ldots + (\max\{1, \sqrt{d} B_w\} )^{L-\ell-1} \Big) \cdot \frac{2}{\gamma} \sqrt{d} (B_x+1) (\max\{1, \sqrt{d} B_w\} )^{L} \| \Delta \bm{\theta} \|_2 \\
    &\quad + (\max\{1, \sqrt{d} B_w\} )^{L-\ell+1} \Delta d_{\max}^{L+1} \\
    &\leq (\max\{1, \sqrt{d} B_w\} )^{L-\ell} \cdot \frac{2}{\gamma} L \sqrt{d} (B_x+1) (\max\{1, \sqrt{d} B_w\} )^L \| \Delta \bm{\theta} \|_2 + (\max\{1, \sqrt{d} B_w\} )^{L-\ell} \Delta d_{\max}^{L+1} \\
    &= (\max\{1, \sqrt{d} B_w\} )^{L-\ell} \Big(  \frac{2}{\gamma} L \sqrt{d} (B_x+1) (\max\{1, \sqrt{d} B_w\} )^L \| \Delta \bm{\theta} \|_2 + \Delta d_{\max}^{L+1}  \Big).
    \end{aligned}
\end{equation}
Then, our next step is to explicit upper bound $\Delta d_{\max}^{(L+1)}$. By definition, we can write $\Delta d_{\max}^{(L+1)}$ as \begin{equation}
    \begin{aligned}
    \Delta d_{\max}^{(L+1)} 
    &= \max_i \|\mathbf{d}_i^{(L+1)} - \tilde{\mathbf{d}}_i^{(L+1)}\|_2 \\
    &= \max_i \left\|\frac{\partial \text{Loss}(f(\mathbf{h}_i^{(L)}), y_i)}{\partial \mathbf{h}_i^{(L)}} - \frac{\partial  \text{Loss}(\tilde{f}(\tilde{\mathbf{h}}_i^{(L)}), y_i)}{\partial \tilde{\mathbf{h}}_i^{(L)}} \right\|_2 \\
    &\underset{(a)}{\leq} \frac{2}{\gamma} \max_i \left\| \frac{\partial f(\mathbf{h}_i^{(L)})}{\partial \mathbf{h}_i^{(L)}} - \frac{\partial f(\tilde{\mathbf{h}}_i^{(L)})}{\partial \tilde{\mathbf{h}}_i^{(L)}} \right\|_2 \\
    &\underset{(b)}{\leq} \frac{2}{\gamma}\Big( \sqrt{d} B_x (\max\{1, \sqrt{d} B_w\} )^{L-1} (\|\Delta \mathbf{W}^{(1)}\|_2 + \ldots + \|\Delta \mathbf{W}^{(L)}\|_2) + \big( B_x (\max\{1, \sqrt{d} B_w\} )^L+1 \big) \Delta \mathbf{v} \Big) \\
    &\leq \frac{2}{\gamma} \Big( \sqrt{d} B_x (\max\{1, \sqrt{d} B_w\} )^L +1 \Big) \Big( \|\Delta\mathbf{W}^{(1)}\|_2 + \ldots + \|\Delta\mathbf{W}^{(L)}\|_2 + \|\Delta\mathbf{v}\|_2 \Big) \\
    &= \frac{2}{\gamma} \Big( \sqrt{d} B_x (\max\{1, \sqrt{d} B_w\} )^L +1 \Big) \|\Delta \bm{\theta}\|_2,
    \end{aligned}
\end{equation}
where $(a)$ is due to fact that $\nabla \text{Loss}(z,y)$ is $2/\gamma$-Lipschitz continuous with respect to $z$, $(b)$ follows from Lemma~\ref{lemma:gcn_h_max_bound_1} and Lemma~\ref{lemma:gcn_delta_h_max_bound_2}.

Therefore, we have
\begin{equation}
    \begin{aligned}
    \Delta d_{\max}^{(\ell)} 
    &\leq (\max\{1, \sqrt{d} B_w\} )^{L-\ell} \Big(  \frac{2}{\gamma} L \sqrt{d} (B_x+1) (\max\{1, \sqrt{d} B_w\} )^L \| \Delta \bm{\theta} \|_2 + \Delta d_{\max}^{L+1}  \Big) \\
    &\leq (\max\{1, \sqrt{d} B_w\} )^{L-\ell} \frac{2}{\gamma} \Big(  L \sqrt{d} (B_x+1) (\max\{1, \sqrt{d} B_w\} )^L + \sqrt{d} B_x (\max\{1, \sqrt{d} B_w\} )^L +1  \Big) \|\Delta \bm{\theta}\|_2 \\
    &\leq (\max\{1, \sqrt{d} B_w\} )^{L-\ell} \frac{2}{\gamma} \Big(  (L+1) \sqrt{d} (B_x+1) (\max\{1, \sqrt{d} B_w\} )^L  + 1  \Big) \|\Delta \bm{\theta}\|_2.
    \end{aligned}
\end{equation}

\subsection{Proof of Lemma~\ref{lemma:gcn_G_bound_4}}
By the definition of $\mathbf{G}^{(\ell)},~\ell\in[L]$, we have
\begin{equation}
    \begin{aligned}
    \| \mathbf{G}^{(\ell)} \|_2 
    &= \frac{1}{m}\|[\mathbf{L} \mathbf{H}^{(\ell-1)}]^\top \mathbf{D}^{(\ell)} \odot \sigma^\prime(\mathbf{Z}^{(\ell)}) \|_2 \\
    &= \frac{1}{m} \|[\mathbf{L} \mathbf{H}^{(\ell-1)}]^\top \mathbf{D}^{(\ell)} \|_2 \\
    &\leq \sqrt{d} h_{\max}^{(\ell-1)} d_{\max}^{(\ell)}.
    \end{aligned}
\end{equation}

By plugging in the result from Lemma~\ref{lemma:gcn_h_max_bound_1} and Lemma~\ref{lemma:gcn_d_max_bound_3}, we have
\begin{equation}\label{eq:gradient_norm_GCN}
    \begin{aligned}
    \| \mathbf{G}^{(\ell)} \|_2 \leq \frac{2}{\gamma}\sqrt{d} (B_x+1) (\max\{ 1, \sqrt{d} B_w\} )^L.
    \end{aligned}
\end{equation}
Besides, recall that $\mathbf{G}^{(L+1)}$ denotes the gradient with respect to the weight of binary classifier.
Therefore, we have
\begin{equation}
    \| \mathbf{G}^{(L+1)} \|_2 \leq \frac{2}{\gamma} h_{\max}^{(L)} \leq \frac{2}{\gamma} (\max\{1, \sqrt{d} B_w\})^L B_x,
\end{equation}
which is smaller than the right hand side of the Eq.~\ref{eq:gradient_norm_GCN}. Therefore, we have
\begin{equation}
    \sum_{\ell=1}^{L+1}  \| \mathbf{G}^{(\ell)} \|_2 \leq (L+1) \frac{2}{\gamma}\sqrt{d} (B_x+1) (\max\{ 1, \sqrt{d} B_w\} )^L.
\end{equation}
Similarly, we can upper bound the difference between gradients computed on two different set of weight parameters for the $\ell$th layer as 
\begin{equation}
    \begin{aligned}
    \|\mathbf{G}^{(\ell)} - \tilde{\mathbf{G}}^{(\ell)}\|_2
    &= \frac{1}{m} \| [\mathbf{L} \mathbf{H}^{(\ell-1)}]^\top \mathbf{D}^{(\ell+1)} \odot \sigma^\prime(\mathbf{Z}^{(\ell)})  - [\mathbf{L} \tilde{\mathbf{H}}^{(\ell-1)}]^\top \tilde{\mathbf{D}}^{(\ell+1)} \odot \sigma^\prime(\tilde{\mathbf{Z}}^{(\ell)}) \|_2\\
    &\leq \frac{1}{m} \| [\mathbf{L} \mathbf{H}^{(\ell-1)}]^\top \mathbf{D}^{(\ell+1)} \odot \sigma^\prime(\mathbf{Z}^{(\ell)}) - [\mathbf{L} \tilde{\mathbf{H}}^{(\ell-1)}]^\top \mathbf{D}^{(\ell+1)} \odot \sigma^\prime(\mathbf{Z}^{(\ell)}) \|_2 \\
    &\quad + \frac{1}{m} \| [\mathbf{L} \tilde{\mathbf{H}}^{(\ell-1)}]^\top \mathbf{D}^{(\ell+1)} \odot \sigma^\prime(\mathbf{Z}^{(\ell)}) - [\mathbf{L} \tilde{\mathbf{H}}^{(\ell-1)}]^\top \tilde{\mathbf{D}}^{(\ell+1)} \odot \sigma^\prime(\mathbf{Z}^{(\ell)}) \|_2\\
    &\quad + \frac{1}{m} \| [\mathbf{L} \tilde{\mathbf{H}}^{(\ell-1)}]^\top \tilde{\mathbf{D}}^{(\ell+1)} \odot \sigma^\prime(\mathbf{Z}^{(\ell)}) - [\mathbf{L} \tilde{\mathbf{H}}^{(\ell-1)}]^\top \tilde{\mathbf{D}}^{(\ell+1)} \odot \sigma^\prime(\tilde{\mathbf{Z}}^{(\ell)}) \|_2\\
    &\underset{(a)}{\leq} \frac{1}{m} \| [\mathbf{L} (\mathbf{H}^{(\ell-1)} - \tilde{\mathbf{H}}^{(\ell-1)})]^\top \mathbf{D}^{(\ell+1)}  \|_2 + \frac{1}{m} \| [\mathbf{L} \tilde{\mathbf{H}}^{(\ell-1)}]^\top (\mathbf{D}^{(\ell+1)} - \tilde{\mathbf{D}}^{(\ell+1)})  \|_2\\
    &\quad + \frac{1}{m} \| [\mathbf{L} \tilde{\mathbf{H}}^{(\ell-1)}]^\top \tilde{\mathbf{D}}^{(\ell+1)} \odot \big( \sigma^\prime(\mathbf{Z}^{(\ell)}) - \sigma^\prime(\tilde{\mathbf{Z}}^{(\ell)}) \big)  \|_2 \\
    % ###########
    &\underset{(b)}{\leq}   \max_i \|[ [\mathbf{L} (\mathbf{H}^{(\ell-1)} - \tilde{\mathbf{H}}^{(\ell-1)})]^\top \mathbf{D}^{(\ell+1)} ]_{i,:}\|_2 \\
    &\quad +   \max_i \|[ [\mathbf{L} \tilde{\mathbf{H}}^{(\ell-1)}]^\top (\mathbf{D}^{(\ell+1)} - \tilde{\mathbf{D}}^{(\ell+1)}) ]_{i,:}\|_2 \\
    &\quad +  \max_i \| [\mathbf{L} \tilde{\mathbf{H}}^{(\ell-1)}]^\top \tilde{\mathbf{D}}^{(\ell+1)} \|_2 \|\mathbf{Z}^{(\ell)} - \tilde{\mathbf{Z}}^{(\ell)} \|_2   \\
    % ###########
    &\leq  \sqrt{d} \Big( \underbrace{\Delta h_{\max}^{(\ell-1)} d_{\max}^{(\ell+1)}}_{(A)} +  \underbrace{h_{\max}^{(\ell-1)} \Delta d_{\max}^{(\ell+1)}}_{(B)} + \underbrace{h_{\max}^{(\ell-1)} d_{\max}^{(\ell+1)} \Delta z_{\max}^{(\ell)}}_{(C)} \Big),
    \end{aligned}
\end{equation}
where inequalities $(a)$ and $(b)$ are due to the fact that the gradient of ReLU is either $0$ or $1$.

We now proceed to upper bound the three terms on the right hand side.  By plugging the result from Lemma~\ref{lemma:gcn_h_max_bound_1}, Lemma~\ref{lemma:gcn_delta_h_max_bound_2}, Lemma~\ref{lemma:gcn_d_max_bound_3}, and letting $C_1 = \max\{1, \sqrt{d} B_w\}$ and $C_2 = \sqrt{d}(B_x + 1)$, we can upper bound the term $(A)$ as
\begin{equation}
    \begin{aligned}
    \Delta h_{\max}^{(\ell-1)} d_{\max}^{(\ell+1)} 
    &\leq \sqrt{d} B_x (\max\{1, \sqrt{d}B_w\})^{\ell-2} \cdot \frac{2}{\gamma} (\max\{1, \sqrt{d}B_w\})^{L-\ell} \\
    & \leq \frac{2}{\gamma} \sqrt{d}B_x (\max\{1, \sqrt{d}B_w\})^L \|\Delta \bm{\theta}  \|_2 \\
    &\leq \frac{2}{\gamma} C_1^L C_2 \|\Delta \bm{\theta}  \|_2,
    \end{aligned}
\end{equation}
upper bound $(B)$ as
\begin{equation}
    \begin{aligned}
    h_{\max}^{(\ell-1)} \Delta d_{\max}^{(\ell+1)} 
    &\leq B_x (\max\{1, \sqrt{d} B_w\} )^L \frac{2}{\gamma} \Big(  (L+1) \sqrt{d} (B_x+1) (\max\{1, \sqrt{d} B_w\} )^L  + 1  \Big) \|\Delta \bm{\theta}\|_2 \\
    &\leq \frac{2}{\gamma} C_1^L C_2 \Big( (L+1) C_1^L C_2 + 1 \Big) \|\Delta \bm{\theta}\|_2,
    \end{aligned}
\end{equation}
and upper bound $(C)$ as
\begin{equation}
    \begin{aligned}
    h_{\max}^{(\ell-1)} d_{\max}^{(\ell+1)} \Delta z_{\max}^{(\ell)} 
    &\leq B_x (\max\{1, \sqrt{d}B_w\})^{\ell-1} \cdot \frac{2}{\gamma} (\max\{1, \sqrt{d}B_w\})^{L-\ell} \cdot \sqrt{d} B_x (\max\{1, \sqrt{d}B_w\})^{\ell-1} \|\Delta \bm{\theta}  \|_2 \\
    &\leq \frac{2}{\gamma} \sqrt{d} B_x^2 (\max\{1, \sqrt{d}B_w\})^{2L} \|\Delta \bm{\theta}  \|_2 \\
    &\leq \frac{2}{\gamma} (C_1^L C_2)^2 \|\Delta \bm{\theta}\|_2.
    \end{aligned}
\end{equation}
By combining the results above, we can upper bound $\Delta \mathbf{G}^{(\ell)}$ as
\begin{equation}
    \begin{aligned}
    \|\mathbf{G}^{(\ell)} - \tilde{\mathbf{G}}^{(\ell)} \|_2 
    &\leq  \sqrt{d} \Big( \Delta h_{\max}^{(\ell-1)} d_{\max}^{(\ell+1)} + h_{\max}^{(\ell-1)} \Delta d_{\max}^{(\ell+1)} + h_{\max}^{(\ell-1)} d_{\max}^{(\ell+1)} \cdot \Delta z_{\max}^{(\ell)} \Big) \\
    &\leq  \frac{2}{\gamma} C_1^L C_2  \Big( (L+2) C_1^L C_2 + 2 \Big) \|\Delta \bm{\theta}\|_2.
    \end{aligned}
\end{equation}
Similarly, we can upper bound the difference between gradient for the weight of binary classifier as
\begin{equation}
    \begin{aligned}
    \|\mathbf{G}^{(L+1)} - \tilde{\mathbf{G}}^{(L+1)}\|_2 
    &\leq \frac{2}{\gamma} \Delta h_{\max}^{(L)} \\
    &\leq \frac{2}{\gamma} \sqrt{d}B_x (\{1,\sqrt{d} B_w\})^{L-1} \Big( \|\mathbf{W}^{(1)}\|_2 + \ldots + \|\mathbf{W}^{(L)}\|_2 \Big),
    \end{aligned}
\end{equation}
which is smaller than the right hand side of the previous equation.
Therefore, we have
\begin{equation}
    \sum_{\ell=1}^{L+1} \|\mathbf{G}^{(\ell)} - \tilde{\mathbf{G}}^{(\ell)} \|_2 \leq (L+1) \frac{2}{\gamma} C_1^L C_2  \Big( (L+2) C_1^L C_2 + 2 \Big) \|\Delta \bm{\theta}\|_2.
\end{equation}

\section{Generalization bound for ResGCN} \label{supp:proof_resgcn}

In the following, we provide detailed proof on the generalization bound of ResGCN.
Recall that the update rule of ResGCN is defined as
\begin{equation}
    \mathbf{H}^{(\ell)} = \sigma(\mathbf{L} \mathbf{H}^{(\ell-1)} \mathbf{W}^{(\ell)}) + \mathbf{H}^{(\ell-1)},
\end{equation}
where $\sigma(\cdot)$ is ReLU activation function. Please notice that although ReLU function $\sigma(x)$ is not differentiable when $x=0$, for analysis purpose, we suppose the $\sigma^\prime(0) = 0$. 

The training of ResGCN is an empirical risk minimization with respect to a set of parameters $\bm{\theta} = \{ \mathbf{W}^{(1)}, \ldots, \mathbf{W}^{(L)}, \mathbf{v}\}$, i.e.,
\begin{equation}
    \mathcal{L}(\bm{\theta}) = \frac{1}{m} \sum_{i=1}^m \Phi_\gamma (-p(f(\mathbf{h}_i^{(L)}), y_i)),~
    f(\mathbf{h}_i^{(L)}) = \tilde{\sigma}(\mathbf{v}^\top \mathbf{h}_i^{(L)}),
\end{equation}
where $\mathbf{h}_i^{(L)}$ is the node representation of the $i$th node at the final layer,
$f(\mathbf{h}_i^{(L)})$ is the prediction of the $i$th node,
$\tilde{\sigma}(x) = \frac{1}{\exp(-x) + 1}$ is the sigmoid function, and loss function $\Phi_\gamma (-p(z, y))$ is $\frac{2}{\gamma}$-Lipschitz continuous with respect to its first input $z$. For simplification, we will use $\text{Loss}(z,y)$ denote $\Phi_\gamma (-p(z, y))$ in the proof.

To establish the generalization of ResGCN as stated in Theorem~\ref{thm:uniform_stability_base}, we utilize the result on transductive uniform stability from~\cite{el2006stable} (Theorem~\ref{thm:uniform_stability_base_supp} in Appendix~\ref{supp:proof_gcn}).
Then, in Lemma~\ref{lemma:uniform_stable_resgcn}, we derive the uniform stability constant for ResGCN, i.e., $\epsilon_\text{ResGCN}$. 
\begin{lemma}\label{lemma:uniform_stable_resgcn} 
The uniform stability constant for ResGCN is computed as $\epsilon_\texttt{ResGCN} = \frac{2 \eta  \rho_f G_f}{m} \sum_{t=1}^T (1+\eta L_F)^{t-1} $ where
\begin{equation}
    \begin{aligned}
    \rho_f &= C_1^L C_2,~
    G_f =  \frac{2}{\gamma} (L+1) C_1^{L} C_2,~
    L_f = \frac{2}{\gamma} (L+1) C_1^{L} C_2 \Big( (L+2) C_1^L C_2 + 2 \Big) , \\
    C_1 &= 1 + \sqrt{d} B_w ,~ C_2 = \sqrt{d} (B_x+1).
    \end{aligned}
\end{equation}
\end{lemma}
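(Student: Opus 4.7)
\textbf{Proof plan for Lemma~\ref{lemma:uniform_stable_resgcn}.}
The plan is to mirror the proof structure developed for GCN (Lemmas~\ref{lemma:gcn_h_max_bound_1}--\ref{lemma:gcn_G_bound_4} together with Lemma~\ref{lemma:universal_boound_on_model_output}), adapting each recursive bound to the presence of the identity skip connection $\mathbf{H}^{(\ell)} = \sigma(\mathbf{Z}^{(\ell)}) + \mathbf{H}^{(\ell-1)}$ where $\mathbf{Z}^{(\ell)} = \mathbf{L}\mathbf{H}^{(\ell-1)}\mathbf{W}^{(\ell)}$. The key structural change is that the multiplicative per-layer factor $\sqrt{d}B_w$ is replaced by $1+\sqrt{d}B_w$ in every forward, perturbation, and backward recursion, which is precisely the constant $C_1$ claimed in the statement.

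\textbf{Step 1: Forward bounds.} Take norms on both sides of $\mathbf{h}_i^{(\ell)} = \sigma(\mathbf{z}_i^{(\ell)}) + \mathbf{h}_i^{(\ell-1)}$ and use Lemma~\ref{lemma:laplacian_1_norm_bound} (as in the GCN case) to obtain $h_{\max}^{(\ell)} \le (1+\sqrt{d}B_w)\, h_{\max}^{(\ell-1)}$, which unrolls to $h_{\max}^{(\ell)} \le C_1^{\ell} B_x$. A parallel argument on the difference $\mathbf{h}_i^{(\ell)}-\tilde{\mathbf{h}}_i^{(\ell)}$, splitting $\mathbf{Z}^{(\ell)}-\tilde{\mathbf{Z}}^{(\ell)}$ as in the proof of Lemma~\ref{lemma:gcn_delta_h_max_bound_2}, gives the recursion $\Delta h_{\max}^{(\ell)} \le (1+\sqrt{d}B_w)\, \Delta h_{\max}^{(\ell-1)} + \sqrt{d}\, h_{\max}^{(\ell-1)}\|\Delta\mathbf{W}^{(\ell)}\|_2$, which unrolls to a bound of the form $\sqrt{d}B_x C_1^{\ell-1}\sum_{j\le\ell}\|\Delta\mathbf{W}^{(j)}\|_2$. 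The companion bound on $\Delta z_{\max}^{(\ell)}$ follows analogously.

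\textbf{Step 2: Backward bounds.} Differentiating through the skip connection yields $\mathbf{D}^{(\ell)} = \mathbf{L}^\top(\mathbf{D}^{(\ell+1)}\odot \sigma'(\mathbf{Z}^{(\ell)}))[\mathbf{W}^{(\ell)}]^\top + \mathbf{D}^{(\ell+1)}$, so $d_{\max}^{(\ell)} \le (1+\sqrt{d}B_w)\,d_{\max}^{(\ell+1)}$, giving $d_{\max}^{(\ell)} \le \tfrac{2}{\gamma}C_1^{L-\ell+1}$. For $\Delta d_{\max}^{(\ell)}$, the same three-term split used in Lemma~\ref{lemma:gcn_d_max_bound_3} (perturbing $\mathbf{D}^{(\ell+1)}$, $\mathbf{W}^{(\ell)}$, and $\sigma'(\mathbf{Z}^{(\ell)})$ in turn) picks up an extra additive $\Delta d_{\max}^{(\ell+1)}$ from the identity branch; combined with the boundary term $\Delta d_{\max}^{(L+1)} \le \tfrac{2}{\gamma}(h_{\max}^{(L)}+1)\|\Delta\bm\theta\|_2$ that comes from Lemma~\ref{lemma:universal_boound_on_model_output}, this unrolls to $\Delta d_{\max}^{(\ell)} \le \tfrac{2}{\gamma}C_1^{L-\ell}\big((L+1)C_2 C_1^{L}+1\big)\|\Delta\bm\theta\|_2$.

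\textbf{Step 3: Parameter-gradient bounds and assembly.} Plugging the forward and backward bounds into $\mathbf{G}^{(\ell)} = \tfrac{1}{m}[\mathbf{L}\mathbf{H}^{(\ell-1)}]^\top(\mathbf{D}^{(\ell)}\odot\sigma'(\mathbf{Z}^{(\ell)}))$ and performing the three-term decomposition of $\mathbf{G}^{(\ell)}-\tilde{\mathbf{G}}^{(\ell)}$ exactly as in the proof of Lemma~\ref{lemma:gcn_G_bound_4}, and summing the classifier gradient $\mathbf{G}^{(L+1)}$, yields $\sum_\ell \|\mathbf{G}^{(\ell)}\|_2 \le \tfrac{2}{\gamma}(L+1)C_1^L C_2$ and $\sum_\ell \|\Delta\mathbf{G}^{(\ell)}\|_2 \le \tfrac{2}{\gamma}(L+1)C_1^L C_2\big((L+2)C_1^L C_2+2\big)\|\Delta\bm\theta\|_2$. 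Feeding these into Lemma~\ref{lemma:universal_boound_on_model_output} identifies $\rho_f$, $G_f$, $L_f$ with the values in the claim; applying Lemma~\ref{lemma:useful_lemma_for_main_theorems} then gives the advertised $\epsilon_{\mathrm{ResGCN}}$. The main bookkeeping obstacle is Step~2: the identity branch couples layers additively in both the forward perturbation recursion and the backward Jacobian recursion, so one must carefully track that the boundary contribution at layer $L+1$ (growing like $C_1^L C_2$) dominates the telescoped $L$ additive contributions, otherwise the $(L+1)$ prefactor in $L_f$ would be missed.
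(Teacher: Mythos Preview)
Your proposal is correct and follows essentially the same approach as the paper's proof: the paper establishes ResGCN analogues of Lemmas~\ref{lemma:gcn_h_max_bound_1}--\ref{lemma:gcn_G_bound_4} (stated there as Lemmas~\ref{lemma:resgcn_h_max_bound_1}--\ref{lemma:resgcn_G_bound_4}) with the per-layer factor $\sqrt{d}B_w$ replaced by $1+\sqrt{d}B_w$, combines the forward bounds with Lemma~\ref{lemma:universal_boound_on_model_output} to read off $\rho_f$, and invokes Lemma~\ref{lemma:useful_lemma_for_main_theorems} to conclude. Your identification of the extra additive $\Delta d_{\max}^{(\ell+1)}$ term in the backward recursion from the identity branch, and the resulting $(L+1)$ prefactor, matches the paper's derivation exactly.
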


By plugging the result in Lemma~\ref{lemma:uniform_stable_resgcn} back to Theorem~\ref{thm:uniform_stability_base_supp}, we establish the generalization bound for ResGCN.

Similar to the proof on the generalization bound of GCN in Section~\ref{supp:proof_gcn}, the key idea of the proof is to decompose the change if the ResGCN output into two terms (in Lemma~\ref{lemma:universal_boound_on_model_output}) which depend on 
\begin{itemize}
    \item (Lemma~\ref{lemma:resgcn_h_max_bound_1}) The maximum change of node representation, i.e., $\Delta h_{\max}^{(\ell)} = \max_i \| [\mathbf{H}^{(\ell)} - \tilde{\mathbf{H}}^{(\ell)}]_{i,:}\|_2$, 
    \item (Lemma~\ref{lemma:resgcn_delta_h_max_bound_2}) The maximum node representation, i.e., $h_{\max}^{(\ell)} = \max_i \|[ \mathbf{H}^{(\ell)}]_{i,:}\|_2$. 
\end{itemize}
% where $\Delta h_{\max}^{(\ell)}$ reflect the Lipschitz continuity of ResGCN model.

\begin{lemma} [Upper bound of $h_{\max}^{(\ell)}$ for \textbf{ResGCN}] \label{lemma:resgcn_h_max_bound_1}
Let suppose Assumptoon~\ref{assumption:norm_bound} hold. Then, the maximum node embeddings for any node at the $\ell$th layer is bounded by
\begin{equation}
    h_{\max}^{(\ell)} 
    \leq B_x (1+\sqrt{d}B_w)^\ell. 
    % \leq B_x (1+\sqrt{d}B_w)^L
\end{equation}
\end{lemma}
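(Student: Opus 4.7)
The plan is to prove the bound by induction on the layer index $\ell$, mirroring the proof of Lemma~\ref{lemma:gcn_h_max_bound_1} for vanilla GCN, but accounting for the additional skip-connection term $\mathbf{H}^{(\ell-1)}$ that appears in the ResGCN update. The base case $\ell = 0$ is immediate from Assumption~\ref{assumption:norm_bound}, since $h_{\max}^{(0)} = \max_i \|\mathbf{x}_i\|_2 \leq B_x = B_x (1+\sqrt{d}B_w)^0$.

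For the inductive step, I would start by applying the triangle inequality to the ResGCN recursion:
\begin{equation*}
h_{\max}^{(\ell)} = \max_i \|[\sigma(\mathbf{L}\mathbf{H}^{(\ell-1)}\mathbf{W}^{(\ell)}) + \mathbf{H}^{(\ell-1)}]_{i,:}\|_2 \leq \max_i \|[\sigma(\mathbf{L}\mathbf{H}^{(\ell-1)}\mathbf{W}^{(\ell)})]_{i,:}\|_2 + h_{\max}^{(\ell-1)}.
\end{equation*}
Next I would use the $1$-Lipschitz property $\|\sigma(x)\|_2 \leq \|x\|_2$ of ReLU, followed by the row-wise inequality $\max_i \|[\mathbf{L}\mathbf{H}^{(\ell-1)}]_{i,:}\|_2 \leq \max_i \|\sum_j L_{i,j}\|_2 \cdot h_{\max}^{(\ell-1)} \leq \sqrt{d}\, h_{\max}^{(\ell-1)}$ (this is the same step used in the GCN proof, invoking the Laplacian row-norm bound of Lemma~\ref{lemma:laplacian_1_norm_bound} together with $\|\mathbf{W}^{(\ell)}\|_2 \leq B_w$). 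Combining these yields
\begin{equation*}
h_{\max}^{(\ell)} \leq \sqrt{d}\, B_w\, h_{\max}^{(\ell-1)} + h_{\max}^{(\ell-1)} = (1+\sqrt{d}B_w)\, h_{\max}^{(\ell-1)}.
\end{equation*}

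Iterating (or applying the induction hypothesis $h_{\max}^{(\ell-1)} \leq B_x (1+\sqrt{d}B_w)^{\ell-1}$) gives exactly $h_{\max}^{(\ell)} \leq B_x (1+\sqrt{d}B_w)^\ell$, which is the claimed bound. There is no serious obstacle here: unlike the GCN case, the bound does not split into a $\max\{1,\sqrt{d}B_w\}$ expression because the skip-connection injects a $+1$ factor unconditionally, so the recursion has a clean multiplicative form. The only point requiring mild care is making sure the triangle inequality is applied row-wise (on $[\cdot]_{i,:}$) before taking the maximum, rather than on the full matrix, so that the $\max_i$ distributes correctly across the sum and across the subsequent bound on $\|\sigma(\mathbf{L}\mathbf{H}^{(\ell-1)}\mathbf{W}^{(\ell)})\|$.
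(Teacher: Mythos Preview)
Your proposal is correct and follows essentially the same approach as the paper: both apply the triangle inequality row-wise to separate the skip connection from the convolution term, use $\|\sigma(x)\|_2 \leq \|x\|_2$, invoke the Laplacian row-norm bound (Lemma~\ref{lemma:laplacian_1_norm_bound}) together with $\|\mathbf{W}^{(\ell)}\|_2 \leq B_w$, and then iterate the resulting recursion $h_{\max}^{(\ell)} \leq (1+\sqrt{d}B_w)\,h_{\max}^{(\ell-1)}$. Your explicit base case and the remark on why no $\max\{1,\sqrt{d}B_w\}$ split arises are nice additions but do not change the argument.
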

\begin{lemma} [Upper bound of $\Delta h_{\max}^{(\ell)}$ for \textbf{ResGCN}] \label{lemma:resgcn_delta_h_max_bound_2} 
Let suppose Assumption~\ref{assumption:norm_bound} hold. 
Then, the maximum change between the node embeddings  on two different set of weight parameters for any node at the $\ell$th layer is bounded by
\begin{equation}
    \begin{aligned}
    \Delta h_{\max}^{(\ell)} 
    % &\leq (1 + \sqrt{d} B_w) \cdot \Delta h_{\max}^{(\ell-1)}  + \sqrt{d} h_{\max}^{(\ell-1)} \|\Delta\mathbf{W}^{(\ell)}\|_2 \\
    &\leq \sqrt{d} B_x (1+\sqrt{d} B_w)^{\ell-1} (\|\Delta \mathbf{W}^{(1)}\|_2 + \ldots + \|\Delta \mathbf{W}^{(\ell)}\|_2), \\
    \end{aligned}
\end{equation}
where $\Delta \mathbf{W}^{(\ell)} = \mathbf{W}^{(\ell)} - \tilde{\mathbf{W}}^{(\ell)}$.
\end{lemma}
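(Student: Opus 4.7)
The plan is to mirror the proof of Lemma~\ref{lemma:gcn_delta_h_max_bound_2} for the vanilla GCN but carefully track the extra identity term introduced by the skip-connection. First I would expand the difference layerwise using the ResGCN update rule, writing
\begin{equation*}
\mathbf{H}^{(\ell)} - \tilde{\mathbf{H}}^{(\ell)} = \Big(\sigma(\mathbf{L}\mathbf{H}^{(\ell-1)}\mathbf{W}^{(\ell)}) - \sigma(\mathbf{L}\tilde{\mathbf{H}}^{(\ell-1)}\tilde{\mathbf{W}}^{(\ell)})\Big) + \big(\mathbf{H}^{(\ell-1)} - \tilde{\mathbf{H}}^{(\ell-1)}\big),
\end{equation*}
then inside the first bracket do the standard add-and-subtract trick, $\mathbf{L}\mathbf{H}^{(\ell-1)}\mathbf{W}^{(\ell)} - \mathbf{L}\tilde{\mathbf{H}}^{(\ell-1)}\mathbf{W}^{(\ell)} + \mathbf{L}\tilde{\mathbf{H}}^{(\ell-1)}\mathbf{W}^{(\ell)} - \mathbf{L}\tilde{\mathbf{H}}^{(\ell-1)}\tilde{\mathbf{W}}^{(\ell)}$.

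Next I would take the row-wise $\ell_2$ norm and the maximum over nodes, using the $1$-Lipschitz property of ReLU, the spectral-norm bound $\|\mathbf{W}^{(\ell)}\|_2\le B_w$, and Lemma~\ref{lemma:laplacian_1_norm_bound} (via $\|\sum_j L_{i,j} \mathbf{x}_j\|_2 \le \sqrt{d}\max_j\|\mathbf{x}_j\|_2$) to obtain the one-step recursion
\begin{equation*}
\Delta h_{\max}^{(\ell)} \;\le\; \sqrt{d}\,B_w\,\Delta h_{\max}^{(\ell-1)} + \sqrt{d}\,h_{\max}^{(\ell-1)}\,\|\Delta \mathbf{W}^{(\ell)}\|_2 + \Delta h_{\max}^{(\ell-1)} \;=\; (1+\sqrt{d}B_w)\,\Delta h_{\max}^{(\ell-1)} + \sqrt{d}\,h_{\max}^{(\ell-1)}\,\|\Delta \mathbf{W}^{(\ell)}\|_2.
\end{equation*}
The extra $+\Delta h_{\max}^{(\ell-1)}$ term from the skip-connection is what changes the contraction factor from $\sqrt{d}B_w$ (vanilla GCN) to $1+\sqrt{d}B_w$.

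Then I would unroll this recursion starting from the base case $\Delta h_{\max}^{(0)}=0$ (since the inputs $\mathbf{H}^{(0)}=\mathbf{X}$ are identical on both runs), giving
\begin{equation*}
\Delta h_{\max}^{(\ell)} \;\le\; \sqrt{d}\sum_{k=1}^{\ell} (1+\sqrt{d}B_w)^{\ell-k}\, h_{\max}^{(k-1)}\,\|\Delta\mathbf{W}^{(k)}\|_2.
\end{equation*}
Substituting the ResGCN bound $h_{\max}^{(k-1)} \le B_x(1+\sqrt{d}B_w)^{k-1}$ from Lemma~\ref{lemma:resgcn_h_max_bound_1}, the powers telescope to a common factor $(1+\sqrt{d}B_w)^{\ell-1}$, yielding the stated inequality. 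I do not anticipate a major obstacle here, as the argument is essentially the GCN proof with the base of the geometric factor shifted; the only subtlety to double-check is the clean telescoping of $(1+\sqrt{d}B_w)^{\ell-k}(1+\sqrt{d}B_w)^{k-1}$ and ensuring the skip-connection's identity contribution is bounded tightly rather than through a looser $\|\mathbf{I}\|_2$ plus Lipschitz argument that would introduce a coefficient $2$ (analogous to the loose Dirichlet-energy bound for ResGCN noted in the appendix).
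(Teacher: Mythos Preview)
Your proposal is correct and follows essentially the same approach as the paper: derive the one-step recursion $\Delta h_{\max}^{(\ell)} \le (1+\sqrt{d}B_w)\Delta h_{\max}^{(\ell-1)} + \sqrt{d}\,h_{\max}^{(\ell-1)}\|\Delta\mathbf{W}^{(\ell)}\|_2$ via the $1$-Lipschitz property of ReLU, the add-and-subtract trick, and Lemma~\ref{lemma:laplacian_1_norm_bound}, then unroll from $\Delta h_{\max}^{(0)}=0$ and substitute the bound from Lemma~\ref{lemma:resgcn_h_max_bound_1} so that the powers telescope. The only cosmetic difference is which intermediate term you add and subtract (you use $\mathbf{L}\tilde{\mathbf{H}}^{(\ell-1)}\mathbf{W}^{(\ell)}$, the paper uses $\mathbf{L}\mathbf{H}^{(\ell-1)}\tilde{\mathbf{W}}^{(\ell)}$), which is immaterial after taking norms.
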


Besides, in Lemma~\ref{lemma:resgcn_delta_z_max_bound_5}, we derive the upper bound on the maximum change of node embeddings before the activation function, i.e., $\Delta z_{\max}^{(\ell)} = \max_i \| [\mathbf{Z}^{(\ell)} - \tilde{\mathbf{Z}}^{(\ell)} ]_{i,:} \|_2$, which will be used for the proof of gradient related upper bounds.
\begin{lemma} [Upper bound of $\Delta z_{\max}^{(\ell)}$ for \textbf{ResGCN}] \label{lemma:resgcn_delta_z_max_bound_5}
Let suppose Assumption~\ref{assumption:norm_bound} hold. 
Then, the maximum change between the node embeddings before the activation function  on two different set of weight parameters for any node at the $\ell$th layer is bounded by
\begin{equation}
    \begin{aligned}
    \Delta z_{\max}^{(\ell)} 
    % &\leq (1 + \sqrt{d} B_w) \cdot \Delta h_{\max}^{(\ell-1)}  + \sqrt{d} h_{\max}^{(\ell-1)} \|\Delta\mathbf{W}^{(\ell)}\|_2 \\
    &\leq \sqrt{d} B_x (1+\sqrt{d} B_w)^{\ell-1} (\|\Delta \mathbf{W}^{(1)}\|_2 + \ldots + \|\Delta \mathbf{W}^{(\ell)}\|_2),
    \end{aligned}
\end{equation}
where $\Delta \mathbf{W}^{(\ell)} = \mathbf{W}^{(\ell)} - \tilde{\mathbf{W}}^{(\ell)}$.
\end{lemma}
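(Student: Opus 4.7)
The plan is to mirror the strategy used for the GCN case (Lemma~\ref{lemma:GCN_delta_z_5}), but pay attention to the fact that in ResGCN the residual shortcut already lives inside $\mathbf{H}^{(\ell-1)}$. Since $\mathbf{Z}^{(\ell)} = \mathbf{L}\mathbf{H}^{(\ell-1)}\mathbf{W}^{(\ell)}$, the extra skip term $+\mathbf{H}^{(\ell-1)}$ does not appear explicitly in $\mathbf{Z}^{(\ell)}$; it only influences the bound through $h_{\max}^{(\ell-1)}$ and $\Delta h_{\max}^{(\ell-1)}$, which are already furnished by Lemma~\ref{lemma:resgcn_h_max_bound_1} and Lemma~\ref{lemma:resgcn_delta_h_max_bound_2}. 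So the proof reduces to a one-step decomposition followed by plugging in the inductive bounds.

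First I would write the standard ``add and subtract'' identity
\begin{equation*}
\mathbf{Z}^{(\ell)} - \tilde{\mathbf{Z}}^{(\ell)} = \mathbf{L}\bigl(\mathbf{H}^{(\ell-1)} - \tilde{\mathbf{H}}^{(\ell-1)}\bigr)\mathbf{W}^{(\ell)} + \mathbf{L}\tilde{\mathbf{H}}^{(\ell-1)}\bigl(\mathbf{W}^{(\ell)} - \tilde{\mathbf{W}}^{(\ell)}\bigr),
\end{equation*}
take the row $i$ of both sides, and apply the row-sum bound $\|\sum_j L_{i,j}\mathbf{x}_j\|_2 \leq \sqrt{d}\max_j\|\mathbf{x}_j\|_2$ (Lemma~\ref{lemma:laplacian_1_norm_bound}) together with Assumption~\ref{assumption:norm_bound}. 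This yields
\begin{equation*}
\Delta z_{\max}^{(\ell)} \leq \sqrt{d}\,B_w\,\Delta h_{\max}^{(\ell-1)} + \sqrt{d}\,h_{\max}^{(\ell-1)}\,\|\Delta\mathbf{W}^{(\ell)}\|_2.
\end{equation*}

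Next I would substitute the two inductive bounds from the previous lemmas, namely $h_{\max}^{(\ell-1)} \leq B_x(1+\sqrt{d}B_w)^{\ell-1}$ and $\Delta h_{\max}^{(\ell-1)} \leq \sqrt{d}B_x(1+\sqrt{d}B_w)^{\ell-2}\sum_{k=1}^{\ell-1}\|\Delta\mathbf{W}^{(k)}\|_2$, obtaining
\begin{equation*}
\Delta z_{\max}^{(\ell)} \leq \sqrt{d}\,B_x(1+\sqrt{d}B_w)^{\ell-2}\Bigl[\sqrt{d}B_w\sum_{k=1}^{\ell-1}\|\Delta\mathbf{W}^{(k)}\|_2 + (1+\sqrt{d}B_w)\|\Delta\mathbf{W}^{(\ell)}\|_2\Bigr].
\end{equation*}
The final step is the trivial inequality $\sqrt{d}B_w \leq 1+\sqrt{d}B_w$, which lets me factor $(1+\sqrt{d}B_w)$ out of the bracket and collapse it with the $(1+\sqrt{d}B_w)^{\ell-2}$ prefactor into $(1+\sqrt{d}B_w)^{\ell-1}$, giving exactly the claimed bound.

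There is essentially no hard step here: the decomposition, the Laplacian row-sum estimate, and the two plug-ins are all routine once Lemmas~\ref{lemma:resgcn_h_max_bound_1} and~\ref{lemma:resgcn_delta_h_max_bound_2} are in hand. The only thing worth double-checking is that the residual shortcut does not produce an extra additive $\Delta h_{\max}^{(\ell-1)}$ term in $\Delta z_{\max}^{(\ell)}$ (it does not, because $\mathbf{Z}^{(\ell)}$ is defined as the pre-activation before the skip is added); this is what makes the ResGCN bound look identical in form to the GCN case while the $C_1$ constant changes from $\max\{1,\sqrt{d}B_w\}$ to $1+\sqrt{d}B_w$ — the latter arising naturally from propagating the inductive bound for $h_{\max}$ and $\Delta h_{\max}$ through the residual layer.
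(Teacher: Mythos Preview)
Your proposal is correct. The decomposition, the use of Lemma~\ref{lemma:laplacian_1_norm_bound}, and the direct substitution of the ResGCN bounds from Lemmas~\ref{lemma:resgcn_h_max_bound_1} and~\ref{lemma:resgcn_delta_h_max_bound_2} all go through exactly as you describe, and the final inequality $\sqrt{d}B_w \leq 1+\sqrt{d}B_w$ closes the argument cleanly.

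Your route differs slightly from the paper's. The paper does not plug in $\Delta h_{\max}^{(\ell-1)}$ directly; instead it sets up a recursion $\Delta z_{\max}^{(\ell)} \leq \sqrt{d}B_w\,\Delta z_{\max}^{(\ell-1)} + \sqrt{d}\,h_{\max}^{(\ell-1)}\|\Delta\mathbf{W}^{(\ell)}\|_2$ and unrolls it. As written, that recursion step is actually borrowed verbatim from the GCN case: after the add--subtract identity one has $\Delta h_{\max}^{(\ell-1)}$ (not $\Delta z_{\max}^{(\ell-1)}$) in the first term, and for ResGCN these are \emph{not} interchangeable because $\mathbf{H}^{(\ell-1)} = \sigma(\mathbf{Z}^{(\ell-1)}) + \mathbf{H}^{(\ell-2)}$. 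The paper's induction also invokes $h_{\max}^{(\ell-1)} \leq (\sqrt{d}B_w)^{\ell-1}B_x$, which is the GCN bound rather than the ResGCN bound $(1+\sqrt{d}B_w)^{\ell-1}B_x$. Your approach sidesteps both of these issues by using the already-proved ResGCN estimate for $\Delta h_{\max}^{(\ell-1)}$ in one shot, and it lands exactly on the $(1+\sqrt{d}B_w)^{\ell-1}$ constant in the lemma statement. So your argument is both cleaner and more faithful to the ResGCN structure than the paper's own proof.
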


Then, in Lemma~\ref{lemma:resgcn_d_max_bound_3}, we decompose the change of the model parameters into two terms which depend on
\begin{itemize}
    \item The maximum change of gradient passing from the $(\ell+1)$th layer to the $\ell$th layer
    \begin{equation*}
        \Delta d_{\max}^{(\ell)} = \max_i \left\| \left[\frac{\partial ( \sigma(\mathbf{L} \mathbf{H}^{(\ell-1)} \mathbf{W}^{(\ell)}) + \mathbf{H}^{(\ell-1)} ) }{\partial \mathbf{H}^{(\ell-1)}} - \frac{\partial ( \sigma(\mathbf{L} \tilde{\mathbf{H}}^{(\ell-1)} \tilde{\mathbf{W}}^{(\ell)}) + \tilde{\mathbf{H}}^{(\ell-1)}) }{\partial \tilde{\mathbf{H}}^{(\ell-1)}} \right]_{i,:} \right\|_2.
    \end{equation*}
    \item The maximum gradient passing from the $(\ell+1)$th layer to the $\ell$th layer 
    \begin{equation*}
        d_{\max}^{(\ell)} = \max_i \left\| \left[\frac{\partial ( \sigma(\mathbf{L} \mathbf{H}^{(\ell-1)} \mathbf{W}^{(\ell)}) + \mathbf{H}^{(\ell-1)} ) }{\partial \mathbf{H}^{(\ell-1)}}\right]_{i,:} \right\|_2.
    \end{equation*}
\end{itemize}

\begin{lemma} [Upper bound of $d_{\max}^{(\ell)}, \Delta d_{\max}^{(\ell)}$ for \textbf{ResGCN}] \label{lemma:resgcn_d_max_bound_3}
Let suppose Assumption~\ref{assumption:norm_bound} hold. 
Then, the maximum gradient passing from layer $\ell+1$ to layer $\ell$  for any node is bounded by
\begin{equation}
    d_{\max}^{(\ell)} \leq \frac{2}{\gamma} (1+\sqrt{d} B_w)^{L-\ell+1},
\end{equation}
and the maximum change between the gradient passing from layer $\ell+1$ to layer $\ell$  on two different set of weight parameters for any node is bounded by
\begin{equation}
    \begin{aligned}
    % d_{\max}^{(\ell)} &\leq \frac{2}{\gamma} (\max\{1, \sqrt{d} B_w\})^{L-\ell+1} \\
    \Delta d_{\max}^{(\ell)} &\leq  \frac{2}{\gamma} (1+\sqrt{d}B_w)^{L-\ell} \Big( (L+1) \sqrt{d} (B_x+1) (1+\sqrt{d} B_w)^L + 1 \Big) \| \Delta \bm{\theta}\|_2,
    \end{aligned}
\end{equation}
where $\|\Delta \bm{\theta}\|_2 = \|\mathbf{v} - \tilde{\mathbf{v}}\|_2 + \sum_{\ell-1}^L \| \mathbf{W}^{(\ell)} - \tilde{\mathbf{W}}^{(\ell)}\|_2$ denotes the change of two set of parameters.
\end{lemma}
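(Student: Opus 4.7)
The proof will closely parallel the GCN analysis in Lemma~\ref{lemma:gcn_d_max_bound_3}, with the key adjustment that the per-layer Jacobian now contains an additive identity term coming from the skip connection, which is what converts the factor $\sqrt{d}B_w$ into $1+\sqrt{d}B_w$. Writing the backward message as $\mathbf{D}^{(\ell)} = \mathbf{L}^\top\bigl(\mathbf{D}^{(\ell+1)} \odot \sigma'(\mathbf{Z}^{(\ell)})\bigr)[\mathbf{W}^{(\ell)}]^\top + \mathbf{D}^{(\ell+1)}$, I would first bound the incoming contribution row-wise using Lemma~\ref{lemma:laplacian_1_norm_bound} to obtain $d_{\max}^{(\ell)} \leq \sqrt{d}\,B_w\, d_{\max}^{(\ell+1)} + d_{\max}^{(\ell+1)} = (1+\sqrt{d}B_w)\, d_{\max}^{(\ell+1)}$, and then unroll starting from the base case $d_{\max}^{(L+1)} \leq 2/\gamma$ (the Lipschitz constant of the margin loss composed with the sigmoid read-out). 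This gives the first inequality immediately.

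For the second inequality, I would telescope $\mathbf{D}^{(\ell)} - \tilde{\mathbf{D}}^{(\ell)}$ into four pieces: the change in $\mathbf{D}^{(\ell+1)}$ passing through both the convolutional branch and the identity branch, the change in $\mathbf{W}^{(\ell)}$, and the change in $\sigma'(\mathbf{Z}^{(\ell)})$. The first and fourth pieces produce terms $(1+\sqrt{d}B_w)\,\Delta d_{\max}^{(\ell+1)}$ and $\sqrt{d}B_w\, d_{\max}^{(\ell+1)}\,\Delta z_{\max}^{(\ell)}$ respectively, where handling the discontinuous $\sigma'$ by absorbing it into $\Delta z_{\max}^{(\ell)}$ reuses the same trick as in Lemma~\ref{lemma:gcn_d_max_bound_3}; the middle piece contributes $\sqrt{d}\, d_{\max}^{(\ell+1)} \,\|\Delta \mathbf{W}^{(\ell)}\|_2$. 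Combining yields the recursion
\begin{equation*}
\Delta d_{\max}^{(\ell)} \leq (1+\sqrt{d}B_w)\,\Delta d_{\max}^{(\ell+1)} + d_{\max}^{(\ell+1)}\Bigl(\sqrt{d}\|\Delta \mathbf{W}^{(\ell)}\|_2 + \sqrt{d}B_w\,\Delta z_{\max}^{(\ell)}\Bigr).
\end{equation*}

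Plugging in the first-part bound on $d_{\max}^{(\ell+1)}$ and the bound $\Delta z_{\max}^{(\ell)} \leq \sqrt{d}B_x(1+\sqrt{d}B_w)^{\ell-1}\|\Delta \bm{\theta}\|_2$ from Lemma~\ref{lemma:resgcn_delta_z_max_bound_5}, the forcing term in the recursion collapses to at most $\tfrac{2}{\gamma}\sqrt{d}(B_x+1)(1+\sqrt{d}B_w)^{L}\|\Delta\bm{\theta}\|_2$, uniformly in $\ell$. Unrolling $L-\ell$ times then produces an $L$ factor in front from summing a constant-valued forcing term multiplied by a geometric factor, and adds a residual boundary term $(1+\sqrt{d}B_w)^{L-\ell}\,\Delta d_{\max}^{(L+1)}$. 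Finally, $\Delta d_{\max}^{(L+1)}$ is the change of the loss gradient at the head, which Lemma~\ref{lemma:universal_boound_on_model_output} combined with Lemmas~\ref{lemma:resgcn_h_max_bound_1} and \ref{lemma:resgcn_delta_h_max_bound_2} bounds by $\tfrac{2}{\gamma}\bigl(\sqrt{d}B_x(1+\sqrt{d}B_w)^L+1\bigr)\|\Delta\bm{\theta}\|_2$. Collecting the $L$ interior terms with the boundary term yields the factor $(L+1)\sqrt{d}(B_x+1)(1+\sqrt{d}B_w)^L + 1$ claimed.

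The main obstacle is the bookkeeping needed to keep the geometric growth at $(1+\sqrt{d}B_w)^{L-\ell}$ rather than an inflated $(1+\sqrt{d}B_w)^{2L-\ell}$: each layerwise forcing term is itself $\mathcal{O}((1+\sqrt{d}B_w)^L)$ because it multiplies $h_{\max}^{(\ell-1)}$ by $d_{\max}^{(\ell+1)}$, and one has to ensure the $L-\ell$ factors accrued while unrolling the recursion do not compound with those already inside the forcing term. The identity term from the skip connection is what makes this tight: it shifts every convolutional factor from $\sqrt{d}B_w$ to $1+\sqrt{d}B_w$ without introducing any extra multiplicative growth per layer beyond that already absorbed into the geometric factor, so no additional $B_w$ powers appear in the final expression.
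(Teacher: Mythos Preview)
Your proposal is correct and follows essentially the same approach as the paper's proof: the paper derives the identical recursion $\Delta d_{\max}^{(\ell)} \leq (1+\sqrt{d}B_w)\,\Delta d_{\max}^{(\ell+1)} + d_{\max}^{(\ell+1)}\bigl(\sqrt{d}\|\Delta \mathbf{W}^{(\ell)}\|_2 + \sqrt{d}B_w\,\Delta z_{\max}^{(\ell)}\bigr)$, bounds the forcing term uniformly by $\tfrac{2}{\gamma}\sqrt{d}(1+B_x)(1+\sqrt{d}B_w)^L\|\Delta\bm{\theta}\|_2$, unrolls to pick up the factor $L$, and then bounds the boundary term $\Delta d_{\max}^{(L+1)}$ exactly as you describe via Lemmas~\ref{lemma:universal_boound_on_model_output}, \ref{lemma:resgcn_h_max_bound_1}, and \ref{lemma:resgcn_delta_h_max_bound_2}. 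Your observation about the bookkeeping needed to prevent a spurious $(1+\sqrt{d}B_w)^{2L-\ell}$ is precisely the point the paper handles by noting that the product $d_{\max}^{(\ell+1)}\cdot\Delta z_{\max}^{(\ell)}$ contributes powers $(L-\ell)+(\ell-1)$ that combine to $L-1$, independent of $\ell$.
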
 

Finally, based on the previous result, in Lemma~\ref{lemma:resgcn_G_bound_4}, we decompose the change of the model parameters into two terms which depend on
\begin{itemize}
    \item The change of gradient with respect to the $\ell$th layer weight matrix $\|\Delta \mathbf{G}^{(\ell)}\|_2$, 
    \item The gradient with respect to the $\ell$th layer weight matrix $\|\mathbf{G}^{(\ell)} \|_2$, 
\end{itemize}
where $\mathbf{G}^{(L+1)}$ denotes the gradient with respect to the weight $\mathbf{v}$ of the binary classifier and $\mathbf{G}^{(\ell)}$ denotes the gradient with respect to the weight $\mathbf{W}^{(\ell)}$ of the $\ell$th layer graph convolutional layer.
Notice that $\|\Delta \mathbf{G}^{(\ell)}\|_2$ reflect the smoothness of ResGCN model and $\|\mathbf{G}^{(\ell)} \|_2$ correspond the upper bound of gradient.

\begin{lemma} [Upper bound of $\|\mathbf{G}^{(\ell)}\|_2, \| \Delta \mathbf{G}^{(\ell)}\|_2$ for \textbf{ResGCN}] \label{lemma:resgcn_G_bound_4}
Let suppose Assumption~\ref{assumption:norm_bound} hold and let $C_1 = 1 + \sqrt{d}B_w$ and $C_2 = \sqrt{d}(B_x+1)$.
Then, the gradient and the maximum change between gradients on two different set of weight parameters are bounded by
\begin{equation}
    \begin{aligned}
    \sum_{\ell=1}^{L+1} \| \mathbf{G}^{(\ell)} \|_2 
    % &\leq \sqrt{d} h_{\max}^{(\ell-1)} d_{\max}^{(\ell)} \\
    &\leq  \frac{2}{\gamma} (L+1) C_1^L C_2 \|\Delta \bm{\theta}\|_2, \\ 
    \sum_{\ell=1}^{L+1} \|\Delta \mathbf{G}^{(\ell)} \|_2, 
    % &\leq \Big( \sqrt{d} \Delta h_{\max}^{(\ell-1)} d_{\max}^{(\ell+1)} + \sqrt{d} h_{\max}^{(\ell-1)} \Delta d_{\max}^{(\ell+1)} \Big) \\
    &\leq  \frac{2}{\gamma} (L+1) C_1^L C_2  \Big( (L+2) C_1^L C_2 + 2 \Big) \|\Delta \bm{\theta}\|_2,
    \end{aligned}
\end{equation}
where $\|\Delta \bm{\theta}\|_2 = \|\mathbf{v} - \tilde{\mathbf{v}}\|_2 + \sum_{\ell-1}^L \| \mathbf{W}^{(\ell)} - \tilde{\mathbf{W}}^{(\ell)}\|_2$ denotes the change of two set of parameters.
\end{lemma}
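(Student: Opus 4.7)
}
The plan is to mirror the structure of the GCN analysis in Lemma~\ref{lemma:gcn_G_bound_4}, but using the ResGCN specific recursions whose effective Lipschitz factor per layer is $C_1 = 1+\sqrt{d}B_w$ (from the skip connection) rather than $\max\{1,\sqrt{d}B_w\}$. For $\ell\in[L]$, I will start from the explicit form of the gradient with respect to the $\ell$-th weight matrix,
\begin{equation*}
\mathbf{G}^{(\ell)} \;=\; \tfrac{1}{m}[\mathbf{L}\mathbf{H}^{(\ell-1)}]^{\top}\bigl(\mathbf{D}^{(\ell)}\odot \sigma'(\mathbf{Z}^{(\ell)})\bigr),
\end{equation*}
noting that the skip term $\mathbf{H}^{(\ell-1)}$ in the ResGCN update does not contribute to $\partial/\partial \mathbf{W}^{(\ell)}$. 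Using $|\sigma'|\le 1$, Lemma~\ref{lemma:laplacian_1_norm_bound}-style row bounds on $\mathbf{L}$, and the pointwise bounds $h_{\max}^{(\ell-1)}\le B_x C_1^{\ell-1}$ (Lemma~\ref{lemma:resgcn_h_max_bound_1}) and $d_{\max}^{(\ell)}\le (2/\gamma) C_1^{L-\ell+1}$ (Lemma~\ref{lemma:resgcn_d_max_bound_3}), I obtain $\|\mathbf{G}^{(\ell)}\|_2 \le \sqrt{d}\, h_{\max}^{(\ell-1)} d_{\max}^{(\ell)} \le (2/\gamma)\,C_1^L C_2$. The classifier gradient $\mathbf{G}^{(L+1)}$ is bounded by $(2/\gamma)h_{\max}^{(L)}\le (2/\gamma) B_x C_1^L$, which is dominated by the same expression. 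Summing over $\ell\in[L+1]$ yields the first claim.

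For the smoothness bound, I will perform the standard three-term triangle-inequality decomposition
\begin{equation*}
\|\mathbf{G}^{(\ell)}-\tilde{\mathbf{G}}^{(\ell)}\|_2 \;\le\; \sqrt{d}\Bigl(\Delta h_{\max}^{(\ell-1)} d_{\max}^{(\ell+1)} \;+\; h_{\max}^{(\ell-1)}\Delta d_{\max}^{(\ell+1)} \;+\; h_{\max}^{(\ell-1)} d_{\max}^{(\ell+1)} \Delta z_{\max}^{(\ell)}\Bigr),
\end{equation*}
separating the contribution from perturbing $\mathbf{H}^{(\ell-1)}$, from perturbing the downstream gradient $\mathbf{D}^{(\ell+1)}$, and from the discrete jump in $\sigma'(\mathbf{Z}^{(\ell)})$ (which I will pass through using the upper bound $|\sigma'(z)-\sigma'(\tilde z)|\le \mathbf{1}\{\operatorname{sign}\text{ changes}\}$ together with the linear growth bound $\Delta z_{\max}^{(\ell)}\le \sqrt{d}B_x C_1^{\ell-1}\|\Delta\bm\theta\|_2$ from Lemma~\ref{lemma:resgcn_delta_z_max_bound_5}). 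Plugging in $\Delta h_{\max}^{(\ell-1)}$ from Lemma~\ref{lemma:resgcn_delta_h_max_bound_2} and $\Delta d_{\max}^{(\ell+1)}$ from Lemma~\ref{lemma:resgcn_d_max_bound_3}, each term is of the form $(2/\gamma)C_1^L C_2 \cdot (\text{at most } (L+1)C_1^L C_2 + \text{const})\|\Delta\bm\theta\|_2$, and summing the three gives the per-layer smoothness bound $(2/\gamma) C_1^L C_2 ((L+2) C_1^L C_2 + 2)\|\Delta\bm\theta\|_2$. The classifier coordinate $\|\mathbf{G}^{(L+1)}-\tilde{\mathbf{G}}^{(L+1)}\|_2 \le (2/\gamma)\Delta h_{\max}^{(L)}$ is strictly smaller, so it is absorbed; a final sum over $\ell\in[L+1]$ multiplies by $(L+1)$ and delivers the stated bound.

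The main obstacle I anticipate is tightness in the third term of the smoothness decomposition, where the non-smooth $\sigma'$ kink has to be controlled. In the GCN proof this was handled by the same-gradient-mask trick (both $\sigma'(\mathbf{Z}^{(\ell)})$ and $\sigma'(\tilde{\mathbf{Z}}^{(\ell)})$ are $\{0,1\}$-masks, and their difference is absorbed into $\Delta z_{\max}^{(\ell)}$). The subtlety for ResGCN is that $\mathbf{Z}^{(\ell)}=\mathbf{L}\mathbf{H}^{(\ell-1)}\mathbf{W}^{(\ell)}$ is measured \emph{pre-skip}, so Lemma~\ref{lemma:resgcn_delta_z_max_bound_5} must be invoked rather than $\Delta h_{\max}^{(\ell)}$; the skip only shows up implicitly through the $C_1=1+\sqrt{d}B_w$ factor that the recursion on $\Delta h_{\max}^{(\ell-1)}$ carries into $\Delta z_{\max}^{(\ell)}$. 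Once that factor is correctly tracked, the rest of the proof is a routine bookkeeping replacement of $\max\{1,\sqrt{d}B_w\}$ by $1+\sqrt{d}B_w$ throughout the GCN argument.
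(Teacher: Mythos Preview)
Your proposal is correct and follows essentially the same route as the paper's own proof: the paper likewise writes $\|\mathbf{G}^{(\ell)}\|_2\le\sqrt{d}\,h_{\max}^{(\ell-1)} d_{\max}^{(\ell)}$ and uses exactly the three-term decomposition $\sqrt{d}(\Delta h_{\max}^{(\ell-1)} d_{\max}^{(\ell+1)} + h_{\max}^{(\ell-1)}\Delta d_{\max}^{(\ell+1)} + h_{\max}^{(\ell-1)} d_{\max}^{(\ell+1)}\Delta z_{\max}^{(\ell)})$, invoking Lemmas~\ref{lemma:resgcn_h_max_bound_1}--\ref{lemma:resgcn_d_max_bound_3} with $C_1=1+\sqrt{d}B_w$ in place of $\max\{1,\sqrt{d}B_w\}$ throughout. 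Your observation that the skip connection does not enter $\partial/\partial\mathbf{W}^{(\ell)}$ and that $\mathbf{Z}^{(\ell)}$ is pre-skip (so Lemma~\ref{lemma:resgcn_delta_z_max_bound_5} is the right perturbation bound) is exactly the key bookkeeping point, and the paper handles it identically.
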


Equipped with above intermediate results, we now proceed to prove  Lemma~\ref{lemma:uniform_stable_resgcn}.

\begin{proof}
Recall that our goal is to explore the impact of different GCN structures on the uniform stability constant $\epsilon_\text{ResGCN}$, which is a function of $\rho_f$, $G_f$, and $L_f$. 
Let $C_1 = 1 + \sqrt{d} B_w $, $C_2 = \sqrt{d} (B_x+1)$.
Firstly, by plugging Lemma~\ref{lemma:resgcn_delta_h_max_bound_2} and Lemma~\ref{lemma:resgcn_h_max_bound_1} into Lemma~\ref{lemma:universal_boound_on_model_output}, we have that
\begin{equation}
    \begin{aligned}
    \max_i |f(\mathbf{h}_i^{(L)}) - \tilde{f}(\tilde{\mathbf{h}}_i^{(L)})| &\leq \Delta h_{\max}^{(L)} + h_{\max}^{(L)} \| \Delta \mathbf{v} \|_2 \\
    &\leq \sqrt{d} B_x \cdot (\max\{1, \sqrt{d} B_w\} )^{L} \| \Delta \bm{\theta} \|_2 \\
    &\leq C_1^L C_2 \| \Delta \bm{\theta} \|_2.
    \end{aligned}
\end{equation}
Therefore, we know that the function $f$ is $\rho_f$-Lipschitz continuous, with $\rho_f = C_1^L C_2 $.
Then, by Lemma~\ref{lemma:resgcn_G_bound_4}, we know that the function $f$ is $L_f$-smoothness, and the gradient of each weight matrices is bounded by $G_f$, with
\begin{equation}
    G_f =  \frac{2}{\gamma} (L+1) C_1^L C_2,~
    L_f =  \frac{2}{\gamma} (L+1) C_1^L C_2  \Big( (L+2) C_1^L C_2 + 2 \Big).
\end{equation}
By plugging $\epsilon_\text{ResGCN}$ into Theorem~\ref{thm:uniform_stability_base}, we obtain the generalization bound of ResGCN.

\end{proof}

\subsection{Proof of Lemma~\ref{lemma:resgcn_h_max_bound_1}}
By the definition of $h_{\max}^{(\ell)}$, we have
\begin{equation}
    \begin{aligned}
    h_{\max}^{(\ell)} 
    &= \max_i \| [\sigma(\mathbf{L} \mathbf{H}^{(\ell-1)} \mathbf{W}^{(\ell)}) + \mathbf{H}^{(\ell-1)}]_{i,:}\|_2 \\
    &\leq  \max_i \| [\mathbf{L} \mathbf{H}^{(\ell-1)} \mathbf{W}^{(\ell)} ]_{i,:}\|_2  + \max_i \|\mathbf{h}_i^{(\ell-1)}\|_2\\
    &\leq \max_i \| [ \mathbf{L} \mathbf{H}^{(\ell-1)} ]_{i,:}\|_2 \|\mathbf{W}^{(\ell)} \|_2 + h_{\max}^{(\ell-1)}\\
    &=  \max_i \left\| \sum_{j=1}^N L_{i,j} \mathbf{h}_j^{(\ell-1)} \right\|_2 \|\mathbf{W}^{(\ell)} \|_2 + h_{\max}^{(\ell-1)}\\
    &\leq \max_i \left\|\sum_{j=1}^N L_{i,j} \right\|_2  \cdot \max_j \| \mathbf{h}_j^{(\ell-1)} \|_2 \cdot \|\mathbf{W}^{(\ell)} \|_2 + h_{\max}^{(\ell-1)}\\
    &\underset{(a)}{\leq} (1+\sqrt{d} \|\mathbf{W}^{(\ell)} \|_2) \cdot h_{\max}^{(\ell-1)} \\
    % \leq B_x \prod_{\ell^\prime=1}^\ell (1+\sqrt{d} \|\mathbf{W}^{(\ell)} \|_2) 
    &\leq (1+\sqrt{d} B_w) \cdot h_{\max}^{(\ell-1)} \leq B_x (1+\sqrt{d}B_w)^\ell
    \end{aligned}
\end{equation}
where inequality $(a)$ follows from Lemma~\ref{lemma:laplacian_1_norm_bound}.

\subsection{Proof of Lemma~\ref{lemma:resgcn_delta_h_max_bound_2}}
By the definition of $\Delta h_{\max}^{(\ell)}$, we have
\begin{equation}
    \begin{aligned}
    \Delta h_{\max}^{(\ell)}
    &= \max_i \| \mathbf{h}_i^{(\ell)} - \tilde{\mathbf{h}}_i^{(\ell)}\|_2 \\
    &= \max_i \| [\sigma(\mathbf{L} \mathbf{H}^{(\ell-1)} \mathbf{W}^{(\ell)}) - \sigma(\mathbf{L} \tilde{\mathbf{H}}^{(\ell-1)} \tilde{\mathbf{W}}^{(\ell)}) + \mathbf{H}^{(\ell)} - \tilde{\mathbf{H}}^{(\ell)} ]_{i,:} \|_2 \\
    &\leq \max_i \| [\mathbf{L} \mathbf{H}^{(\ell-1)} \mathbf{W}^{(\ell)} - \mathbf{L} \tilde{\mathbf{H}}^{(\ell-1)} \tilde{\mathbf{W}}^{(\ell)} ]_{i,:}\|_2 + \max_i \| [\mathbf{H}^{(\ell)} - \tilde{\mathbf{H}}^{(\ell)}]_{i,:} \|_2\\
    &\leq \max_i \| [ \mathbf{L} \mathbf{H}^{(\ell-1)} (\mathbf{W}^{(\ell)} - \tilde{\mathbf{W}}^{(\ell)}) - \mathbf{L} (\mathbf{H}^{(\ell-1)} - \tilde{\mathbf{H}}^{(\ell-1)}) \tilde{\mathbf{W}}^{(\ell)} ]_{i,:}\|_2 + \Delta h_{\max}^{(\ell-1)} \\
    &\underset{(a)}{\leq} \max_i \left\|\sum_{j=1}^N L_{i,j} \mathbf{h}_j^{(\ell-1)} \right\|_2 \|\Delta \mathbf{W}^{(\ell)}\|_2 + \max_i \left\| \sum_{j=1}^N L_{i,j} \Delta \mathbf{h}_j^{(\ell-1)} \right\|_2 \|\tilde{\mathbf{W}}^{(\ell)}\|_2 + \Delta h_{\max}^{(\ell-1)}\\
    &\leq (1 + \sqrt{d} B_w ) \cdot \Delta h_{\max}^{(\ell-1)}  + \sqrt{d} h_{\max}^{(\ell-1)} \|\Delta\mathbf{W}^{(\ell)}\|_2,
    \end{aligned}
\end{equation}
where inequality $(a)$ follows from Lemma~\ref{lemma:laplacian_1_norm_bound}.

By induction, we have
\begin{equation}
    \begin{aligned}
    \Delta h_{\max}^{(\ell)}
    &\leq (1 + \sqrt{d} B_w ) \cdot \Delta h_{\max}^{(\ell-1)}  + \sqrt{d} h_{\max}^{(\ell-1)} \|\Delta\mathbf{W}^{(\ell)}\|_2 \\
    &\leq (1 + \sqrt{d} B_w )^2 \cdot \Delta h_{\max}^{(\ell-2)} \\
    &\quad + \sqrt{d}\Big( h_{\max}^{(\ell-1)} \|\Delta\mathbf{W}^{(\ell)}\|_2 + (1+\sqrt{d} B_w) h_{\max}^{(\ell-2)} \| \Delta \mathbf{W}^{(\ell-1)}\|_2 \Big) \\
    &\ldots \\
    &\leq (1 + \sqrt{d} B_w )^\ell \cdot \Delta h_{\max}^{(0)} \\
    &\quad + \sqrt{d} \Big( h_{\max}^{(\ell-1)} \| \Delta \mathbf{W}^{(\ell)}\|_2 + (1+\sqrt{d} B_w) h_{\max}^{(\ell-2)} \| \Delta \mathbf{W}^{(\ell-1)}\|_2 + \ldots + (1+\sqrt{d} B_w)^{\ell-1} h_{\max}^{(0)} \| \Delta \mathbf{W}^{(1)}\|_2 ) \\
    &= \sqrt{d} \Big( h_{\max}^{(\ell-1)} \| \Delta \mathbf{W}^{(\ell)}\|_2 + (1+\sqrt{d} B_w) h_{\max}^{(\ell-2)} \| \Delta \mathbf{W}^{(\ell-1)}\|_2 + \ldots + (1+\sqrt{d} B_w)^{\ell-1} h_{\max}^{(0)} \| \Delta \mathbf{W}^{(1)}\|_2 ),
    \end{aligned}
\end{equation}
where the last equality is due to $\Delta h_{\max}^{(0)} = 0$.

Plugging in the upper bound of $h_{\max}^{(\ell)}$ in Lemma~\ref{lemma:resgcn_delta_h_max_bound_2}, we have
\begin{equation}
    \Delta h_{\max}^{(\ell)} \leq \sqrt{d} B_x (1+\sqrt{d} B_w)^{\ell-1} (\|\mathbf{W}^{(1)}\|_2 + \ldots + \|\mathbf{W}^{(\ell)}\|_2).
\end{equation}

\subsection{Proof of Lemma~\ref{lemma:resgcn_delta_z_max_bound_5}}
By the definition of $\mathbf{Z}^{(\ell)}$, we have
\begin{equation}
    \begin{aligned}
    \mathbf{Z}^{(\ell)} - \Tilde{\mathbf{Z}}^{(\ell)} 
    &= \mathbf{L} \mathbf{H}^{(\ell-1)} \mathbf{W}^{(\ell)} -   \mathbf{L} \Tilde{\mathbf{H}}^{(\ell-1)} \Tilde{\mathbf{W}}^{(\ell)}  \\
    &= \mathbf{L} \Big( \mathbf{H}^{(\ell-1)} \mathbf{W}^{(\ell)} - \Tilde{\mathbf{H}}^{(\ell-1)} \mathbf{W}^{(\ell)} + \Tilde{\mathbf{H}}^{(\ell-1)} \mathbf{W}^{(\ell)} - \Tilde{\mathbf{H}}^{(\ell-1)} \Tilde{\mathbf{W}}^{(\ell)} \Big) \\
    &= \mathbf{L} \big( \mathbf{H}^{(\ell-1)}  - \Tilde{\mathbf{H}}^{(\ell-1)} \big) \mathbf{W}^{(\ell)}  + \mathbf{L} \Tilde{\mathbf{H}}^{(\ell-1)} \big( \mathbf{W}^{(\ell)}  - \tilde{\mathbf{W}}^{(\ell)} \big).
    \end{aligned}
\end{equation}
Then, by taking the norm of both sides, we have
\begin{equation}
    \begin{aligned}
    \Delta z_{\max}^{(\ell)}
    &= \max_i \| [\mathbf{Z}^{(\ell)} - \Tilde{\mathbf{Z}}^{(\ell)}]_{i,:} \|_2 \\
    &\leq \sqrt{d} B_w \cdot \max_i \|[\mathbf{Z}^{(\ell-1)} - \Tilde{\mathbf{Z}}^{(\ell-1)}]_{i,:}  \|_2 + \sqrt{d} h_{\max}^{(\ell-1)} \| \Delta \mathbf{W}^{(\ell)} \|_2 \\
    &= \sqrt{d} B_w \cdot z_{\max}^{(\ell-1)} + \sqrt{d} h_{\max}^{(\ell-1)} \| \Delta \mathbf{W}^{(\ell)} \|_2. \\
    \end{aligned}
\end{equation}
By induction, we have
\begin{equation}
    \begin{aligned}
    \Delta z_{\max}^{(\ell)} 
    &\leq \sqrt{d} B_w \cdot \Delta z_{\max}^{(\ell-1)} + \sqrt{d} h_{\max}^{(\ell-1)} \| \Delta \mathbf{W}^{(\ell)} \|_2 \\
    &\underset{(a)}{\leq} \sqrt{d} B_w \cdot \Delta z_{\max}^{(\ell-1)} + \sqrt{d} B_x (\sqrt{d} B_w)^{\ell-1} \| \Delta \mathbf{W}^{(\ell)} \|_2 \\
    &\leq \sqrt{d} B_w \Big( \sqrt{d} B_w \cdot \Delta z_{\max}^{(\ell-2)} + \sqrt{d} B_x (\sqrt{d} B_w)^{\ell-2} \| \Delta \mathbf{W}^{(\ell-1)} \|_2 \Big) + \sqrt{d} B_x (1+\sqrt{d} B_w)^{\ell-1} \| \Delta \mathbf{W}^{(\ell)} \|_2 \\
    &=(\sqrt{d} B_w)^2  \cdot \Delta z_{\max}^{(\ell-2)} + \sqrt{d} B_x (\sqrt{d} B_w)^{\ell-1} \Big( \| \Delta \mathbf{W}^{(\ell-1)} \|_2 + \| \Delta \mathbf{W}^{(\ell)} \|_2\Big) \\
    &\leq \ldots \\
    &\leq \sqrt{d} B_x (\sqrt{d} B_w)^{\ell-1} \Big( \| \Delta \mathbf{W}^{(1)} \|_2 + \ldots + \| \Delta \mathbf{W}^{(\ell)} \|_2\Big),
    \end{aligned}
\end{equation}
where (a) is due to $h_{\max}^{(\ell-1)} \leq (\sqrt{d} B_w)^{\ell-1} B_x$.

\subsection{Proof of Lemma~\ref{lemma:resgcn_d_max_bound_3}}
For notation simplicity, let $\mathbf{D}^{(\ell)}$ denote the gradient passing from the $\ell$th to the $(\ell-1)$th layer.  
By the definition of $d_{\max}^{(\ell)}$, we have
\begin{equation}
    \begin{aligned}
    d_{\max}^{(\ell)} 
    &= \max_i \|\mathbf{d}_i^{(\ell)}\|_2 \\
    &= \max_i [ \| \mathbf{L}^\top \sigma^\prime(\mathbf{Z}^{(\ell)}) \odot \mathbf{D}^{(\ell+1)} \mathbf{W}^{(\ell)} + \mathbf{D}^{(\ell+1)}\| ]_{i,:} \\
    &\leq \max_i [\| \mathbf{L}^\top \mathbf{D}^{(\ell+1)} \|_2]_{i,:} \|\mathbf{W}^{(\ell)}\|_2 + d_{\max}^{(\ell+1)}\\
    &\leq \max_i \left\|\sum_{j=1}^N L_{i,j}^\top \mathbf{d}_j^{(\ell+1)} \right\|_2 \|\mathbf{W}^{(\ell)}\|_2 + d_{\max}^{(\ell+1)}\\
    &\leq (1+\sqrt{d} \|\mathbf{W}^{(\ell)}\|_2) d_{\max}^{(\ell+1)} \\
    &\underset{(a)}{\leq} (1+\sqrt{d} B_w) \cdot d_{\max}^{(\ell+1)} \leq \frac{2}{\gamma} (1+\sqrt{d} B_w)^{L-\ell+1},
    % &\leq \frac{2}{\gamma} \prod_{\ell^\prime=\ell}^L (1+\sqrt{d} \|\mathbf{W}^{(\ell)}\|_2) 
    \end{aligned}
\end{equation}
where inequality $(a)$ follows from Lemma~\ref{lemma:laplacian_1_norm_bound}.

By the definition of $\Delta d_{\max}^{(\ell)}$, we have
\begin{equation}
    \begin{aligned}
    \Delta d_{\max}^{(\ell)} 
    &= \max_i \| [\mathbf{D}^{(\ell)} - \tilde{\mathbf{D}}^{(\ell)}]_{i,:} \|_2 \\
    &= \max_i \| [\mathbf{L}^\top (\mathbf{D}^{(\ell+1)} \odot \sigma^\prime(\mathbf{Z}^{(\ell)})) [\mathbf{W}^{(\ell)}]^\top + \mathbf{D}^{(\ell+1)} - \mathbf{L}^\top (\tilde{\mathbf{D}}^{(\ell+1) } \odot \sigma^\prime(\tilde{\mathbf{Z}}^{(\ell)})) [\tilde{\mathbf{W}}^{(\ell)}]^\top - \tilde{\mathbf{D}}^{(\ell+1)}]_{i,:}\|_2 \\
    &\leq \max_i \| [\mathbf{L}^\top (\mathbf{D}^{(\ell+1)} \odot \sigma^\prime(\mathbf{Z}^{(\ell)}) ) [\mathbf{W}^{(\ell)}]^\top - \mathbf{L}^\top (\tilde{\mathbf{D}}^{(\ell+1)} \odot \sigma^\prime(\mathbf{Z}^{(\ell)}) ) [\mathbf{W}^{(\ell)}]^\top]_{i,:} \|_2 \\
    &\quad + \max_i \| [\mathbf{L}^\top (\tilde{\mathbf{D}}^{(\ell+1)} \odot \sigma^\prime(\mathbf{Z}^{(\ell)}) )  [\mathbf{W}^{(\ell)}]^\top - \mathbf{L}^\top (\tilde{\mathbf{D}}^{(\ell+1)} \odot \sigma^\prime(\mathbf{Z}^{(\ell)}) ) [\tilde{\mathbf{W}}^{(\ell)}]^\top]_{i,:} \|_2 \\
    &\quad + \max_i \| [ \mathbf{L}^\top (\tilde{\mathbf{D}}^{(\ell+1)} \odot \sigma^\prime(\mathbf{Z}^{(\ell)}) ) [\tilde{\mathbf{W}}^{(\ell)}]^\top - \mathbf{L}^\top (\tilde{\mathbf{D}}^{(\ell+1)} \odot \sigma^\prime(\tilde{\mathbf{Z}}^{(\ell)}) ) [\tilde{\mathbf{W}}^{(\ell)}]^\top ]_{i,:}\|_2 + \max_i \|[ \mathbf{D}^{(\ell+1)} - \tilde{\mathbf{D}}^{(\ell+1)} ]_{i,:}\|_2\\
    &\underset{(a)}{\leq} \max_i \| [\mathbf{L}^\top ((\mathbf{D}^{(\ell+1)} - \tilde{\mathbf{D}}^{(\ell+1)}) ) [\mathbf{W}^{(\ell)}]^\top]_{i,:} \|_2 + \max_i \| [\mathbf{L}^\top (\tilde{\mathbf{D}}^{(\ell+1)}  [\mathbf{W}^{(\ell)} - \tilde{\mathbf{W}}^{(\ell)}]^\top]_{i,:} \|_2 \\
    &\quad + \max_i \| [ \mathbf{L}^\top \tilde{\mathbf{D}}^{(\ell+1)} [\tilde{\mathbf{W}}^{(\ell)}]^\top ]_{i,:} \|_2 \max_i \|[\mathbf{Z}^{(\ell)} - \tilde{\mathbf{Z}}^{(\ell)}]_{i,:} \|+ \max_i \|[ \mathbf{D}^{(\ell+1)} - \tilde{\mathbf{D}}^{(\ell+1)} ]_{i,:}\|_2\\
    &\leq (1+\sqrt{d} B_w) \Delta d_{\max}^{(\ell+1)} + \sqrt{d} d_{\max}^{(\ell+1)} \|\Delta \mathbf{W}^{(\ell)}\|_2 + \sqrt{d} B_w d_{\max}^{(\ell+1)} \Delta z_{\max}^{(\ell)} \\
    &= (1+\sqrt{d} B_w) \Delta d_{\max}^{(\ell+1)} + \underbrace{d_{\max}^{(\ell+1)} \Big( \sqrt{d} \|\Delta \mathbf{W}^{(\ell)}\|_2 + \sqrt{d} B_w \Delta z_{\max}^{(\ell)} \Big) }_{(A)},
    \end{aligned}
\end{equation}
where inequality $(a)$ is due to the fact that the gradient of ReLU is either $1$ or $0$.

Knowing that $d_{\max}^{(\ell+1)} \leq \frac{2}{\gamma} (1+\sqrt{d} B_w)^{L-\ell}$ and $z_{\max}^{(\ell)} \leq \sqrt{d} B_x (\sqrt{d} B_w)^{\ell-1} \Big( \|\mathbf{W}^{(1)}\|_2 + \ldots + \|\mathbf{W}^{(\ell)}\|_2 \Big)$, we can upper bound $(A)$ by
\begin{equation}
    \begin{aligned}
    & d_{\max}^{(\ell+1)} \Big( \sqrt{d} \|\Delta \mathbf{W}^{(\ell)}\|_2 + \sqrt{d} B_w \Delta z_{\max}^{(\ell)} \Big) \\
    &\leq \frac{2}{\gamma} (1+\sqrt{d} B_w)^{L-\ell} \Big( \sqrt{d} \|\Delta \mathbf{W}^{(\ell)}\|_2 + \sqrt{d} B_x (\sqrt{d} B_w)^\ell \big( \|\mathbf{W}^{(1)}\|_2 + \ldots + \|\mathbf{W}^{(\ell)}\|_2 \big) \Big) \\
    &\leq \frac{2}{\gamma} (1+\sqrt{d} B_w)^{L-\ell} \Big( \sqrt{d} + \sqrt{d} B_x (\sqrt{d} B_w)^\ell \Big) \big( \|\mathbf{W}^{(1)}\|_2 + \ldots + \|\mathbf{W}^{(\ell)}\|_2 \big) \\
    &\leq \frac{2}{\gamma} \sqrt{d} (1+B_x) (1+\sqrt{d} B_w)^L \| \Delta \bm{\theta} \|_2.
    \end{aligned}
\end{equation}
By plugging it back, we have
\begin{equation}
    \begin{aligned}
    \Delta d_{\max}^{(\ell)} 
    &\leq \sqrt{d} B_w \Delta d_{\max}^{(\ell+1)} + \frac{2}{\gamma} \sqrt{d} (B_x+1) (1+\sqrt{d} B_w)^{L} \| \Delta \bm{\theta} \|_2 \\
    &\leq \sqrt{d} B_w \Big( \sqrt{d} B_w \Delta d_{\max}^{(\ell+2)} + \frac{2}{\gamma} \sqrt{d} (B_x+1) (1+\sqrt{d} B_w)^{L} \| \Delta \bm{\theta} \|_2 \Big) \\
    &\quad + \frac{2}{\gamma} \sqrt{d} (B_x+1) (1+\sqrt{d} B_w)^{L} \| \Delta \bm{\theta} \|_2 \\
    &\leq \Big( 1 + \sqrt{d} B_w + \ldots + (1+\sqrt{d} B_w)^{L-\ell-1} \Big) \cdot \frac{2}{\gamma} \sqrt{d} (B_x+1) (1+\sqrt{d} B_w)^{L} \| \Delta \bm{\theta} \|_2 \\
    &\quad + (1+\sqrt{d} B_w)^{L-\ell+1} \Delta d_{\max}^{L+1} \\
    &\leq (1+\sqrt{d} B_w)^{L-\ell} \cdot \frac{2}{\gamma} L \sqrt{d} (B_x+1) (1+\sqrt{d} B_w)^L \| \Delta \bm{\theta} \|_2 + (1+\sqrt{d} B_w)^{L-\ell} \Delta d_{\max}^{L+1} \\
    &= (1+\sqrt{d} B_w)^{L-\ell} \Big(  \frac{2}{\gamma} L \sqrt{d} (B_x+1) (1+\sqrt{d} B_w)^L \| \Delta \bm{\theta} \|_2 + \Delta d_{\max}^{L+1}  \Big).
    \end{aligned}
\end{equation}

Let first explicit upper bound $\Delta d_{\max}^{(L+1)}$. By definition, we can write $\Delta d_{\max}^{(L+1)}$ as \begin{equation}
    \begin{aligned}
    \Delta d_{\max}^{(L+1)} 
    &= \max_i \|\mathbf{d}_i^{(L+1)} - \tilde{\mathbf{d}}_i^{(L+1)}\|_2 \\
    &= \max_i \left\|\frac{\partial \text{Loss}(f(\mathbf{h}_i^{(L)}), y_i)}{\partial \mathbf{h}_i^{(L)}} - \frac{\partial  \text{Loss}(\tilde{f}(\tilde{\mathbf{h}}_i^{(L)}), y_i)}{\partial \tilde{\mathbf{h}}_i^{(L)}}\right\|_2 \\
    &\underset{(a)}{\leq} \frac{2}{\gamma} \max_i \left\| \frac{\partial f(\mathbf{h}_i^{(L)})}{\partial \mathbf{h}_i^{(L)}} - \frac{\partial f(\tilde{\mathbf{h}}_i^{(L)})}{\partial \tilde{\mathbf{h}}_i^{(L)}} \right\|_2 \\
    &\underset{(b)}{\leq} \frac{2}{\gamma}\Big( \sqrt{d} B_x (1+\sqrt{d} B_w)^{L-1} (\|\Delta \mathbf{W}^{(1)}\|_2 + \ldots + \|\Delta \mathbf{W}^{(L)}\|_2) + \big( B_x (1+\sqrt{d} B_w)^L+1 \big) \Delta \mathbf{v} \Big) \\
    &\leq \frac{2}{\gamma} \Big( \sqrt{d} B_x (1+\sqrt{d} B_w)^L +1 \Big) \Big( \|\Delta\mathbf{W}^{(1)}\|_2 + \ldots + \|\Delta\mathbf{W}^{(L)}\|_2 + \|\Delta\mathbf{v}\|_2 \Big) \\
    &= \frac{2}{\gamma} \Big( \sqrt{d} B_x (1+\sqrt{d} B_w)^L +1 \Big) \|\Delta \bm{\theta}\|_2,
    \end{aligned}
\end{equation}
where inequality $(a)$ is due to the fact that $\nabla \text{Loss}(z,y)$ is $\frac{2}{\gamma}$-Lipschitz continuous with respect to $z$, and inequality $(b)$ follows from Lemma~\ref{lemma:resgcn_h_max_bound_1} and Lemma~\ref{lemma:resgcn_delta_h_max_bound_2}.
Therefore, we have
\begin{equation}
    \begin{aligned}
    \Delta d_{\max}^{(\ell)} 
    &\leq (1+\sqrt{d} B_w)^{L-\ell} \Big(  \frac{2}{\gamma} L \sqrt{d} (B_x+1) (1+\sqrt{d} B_w)^L \| \Delta \bm{\theta} \|_2 + \Delta d_{\max}^{L+1}  \Big) \\
    &\leq (1+\sqrt{d} B_w)^{L-\ell} \frac{2}{\gamma} \Big(  L \sqrt{d} (B_x+1) (1+\sqrt{d} B_w)^L + \sqrt{d} B_x (1+\sqrt{d} B_w)^L +1  \Big) \|\Delta \bm{\theta}\|_2 \\
    &\leq (1+\sqrt{d} B_w)^{L-\ell} \frac{2}{\gamma} \Big(  (L+1) \sqrt{d} (B_x+1) (1+\sqrt{d} B_w)^L  + 1  \Big) \|\Delta \bm{\theta}\|_2.
    \end{aligned}
\end{equation}

\subsection{Proof of Lemma~\ref{lemma:resgcn_G_bound_4}}
First By the definition of $\mathbf{G}^{(\ell)},~\ell\in[L]$, we have
\begin{equation}\label{eq:gradient_norm_resgcn}
    \begin{aligned}
    \| \mathbf{G}^{(\ell)} \|_2 
    &= \frac{1}{m} \|[\mathbf{L} \mathbf{H}^{(\ell-1)}]^\top \mathbf{D}^{(\ell)} \odot \sigma^\prime(\bm{Z}^{(\ell)}) \|_2 \\
    &\leq \frac{1}{m}\|[\mathbf{L} \mathbf{H}^{(\ell-1)}]^\top \mathbf{D}^{(\ell)} \|_2 \\
    &\leq \sqrt{d} h_{\max}^{(\ell-1)} d_{\max}^{(\ell)} \\
    &\underset{(a)}{\leq}  \frac{2}{\gamma} \sqrt{d} (B_x+1) (1+\sqrt{d}B_w)^L,
    \end{aligned}
\end{equation}
where inequality $(a)$ follows from Lemma~\ref{lemma:resgcn_h_max_bound_1} and Lemma~\ref{lemma:resgcn_d_max_bound_3}, and the fact that loss function is $\frac{2}{\gamma}$-Lipschitz continuous.

Similarly, by the definition of $\mathbf{G}^{(L+1)}$, we have
\begin{equation}
    \|\mathbf{G}^{(L+1)}\|_2 \underset{(a)}{\leq} \frac{2}{\gamma} h_{\max}^{(L)} \underset{(b)}{\leq} \frac{2}{\gamma} (B_x+1) (1+\sqrt{d} B_w)^L,
\end{equation}
which is smaller then the right hind side of Eq.~\ref{eq:gradient_norm_resgcn}, and inequality $(a)$ is due to the fact that loss function is $\frac{2}{\gamma}$-Lipschitz continuous and inequality $(b)$ follows from Lemma~\ref{lemma:resgcn_h_max_bound_1}.

By combining the above two inequalities together, we have
\begin{equation}
    \sum_{\ell=1}^{L+1}  \| \mathbf{G}^{(\ell)} \|_2 \leq \frac{2}{\gamma} (L+1) \sqrt{d} (B_x+1) (1+\sqrt{d}B_w)^L.
\end{equation}
Furthermore, we can upper bound the difference between gradients computed on two different set of weight parameters for the $\ell$th layer as
\begin{equation}
    \begin{aligned}
    \|\mathbf{G}^{(\ell)} - \tilde{\mathbf{G}}^{(\ell)}\|_2
    &= \frac{1}{m} \| [\mathbf{L} \mathbf{H}^{(\ell-1)}]^\top \mathbf{D}^{(\ell+1)} \odot \sigma^\prime(\mathbf{Z}^{(\ell)})  - [\mathbf{L} \tilde{\mathbf{H}}^{(\ell-1)}]^\top \tilde{\mathbf{D}}^{(\ell+1)} \odot \sigma^\prime(\tilde{\mathbf{Z}}^{(\ell)}) \|_2\\
    &\leq \frac{1}{m} \| [\mathbf{L} \mathbf{H}^{(\ell-1)}]^\top \mathbf{D}^{(\ell+1)} \odot \sigma^\prime(\mathbf{Z}^{(\ell)}) - [\mathbf{L} \tilde{\mathbf{H}}^{(\ell-1)}]^\top \mathbf{D}^{(\ell+1)} \odot \sigma^\prime(\mathbf{Z}^{(\ell)}) \|_2 \\
    &\quad + \frac{1}{m} \| [\mathbf{L} \tilde{\mathbf{H}}^{(\ell-1)}]^\top \mathbf{D}^{(\ell+1)} \odot \sigma^\prime(\mathbf{Z}^{(\ell)}) - [\mathbf{L} \tilde{\mathbf{H}}^{(\ell-1)}]^\top \tilde{\mathbf{D}}^{(\ell+1)} \odot \sigma^\prime(\mathbf{Z}^{(\ell)}) \|_2\\
    &\quad + \frac{1}{m} \| [\mathbf{L} \tilde{\mathbf{H}}^{(\ell-1)}]^\top \tilde{\mathbf{D}}^{(\ell+1)} \odot \sigma^\prime(\mathbf{Z}^{(\ell)}) - [\mathbf{L} \tilde{\mathbf{H}}^{(\ell-1)}]^\top \tilde{\mathbf{D}}^{(\ell+1)} \odot \sigma^\prime(\tilde{\mathbf{Z}}^{(\ell)}) \|_2\\
    &\underset{(a)}{\leq} \frac{1}{m} \| [\mathbf{L} (\mathbf{H}^{(\ell-1)} - \tilde{\mathbf{H}}^{(\ell-1)})]^\top \mathbf{D}^{(\ell+1)}  \|_2 + \frac{1}{m} \| [\mathbf{L} \tilde{\mathbf{H}}^{(\ell-1)}]^\top (\mathbf{D}^{(\ell+1)} - \tilde{\mathbf{D}}^{(\ell+1)})  \|_2\\
    &\quad + \frac{1}{m} \| [\mathbf{L} \tilde{\mathbf{H}}^{(\ell-1)}]^\top \tilde{\mathbf{D}}^{(\ell+1)} \odot \big( \sigma^\prime(\mathbf{Z}^{(\ell)}) - \sigma^\prime(\tilde{\mathbf{Z}}^{(\ell)}) \big)  \|_2 \\
    % ###########
    &\underset{(b)}{\leq}   \max_i \|[ [\mathbf{L} (\mathbf{H}^{(\ell-1)} - \tilde{\mathbf{H}}^{(\ell-1)})]^\top \mathbf{D}^{(\ell+1)} ]_{i,:}\|_2 \\
    &\quad +   \max_i \|[ [\mathbf{L} \tilde{\mathbf{H}}^{(\ell-1)}]^\top (\mathbf{D}^{(\ell+1)} - \tilde{\mathbf{D}}^{(\ell+1)}) ]_{i,:}\|_2 \\
    &\quad +  \max_i \| [\mathbf{L} \tilde{\mathbf{H}}^{(\ell-1)}]^\top \tilde{\mathbf{D}}^{(\ell+1)} \|_2 \|\mathbf{Z}^{(\ell)} - \tilde{\mathbf{Z}}^{(\ell)} \|_2   \\
    % ###########
    &\leq  \sqrt{d} \Big( \underbrace{\Delta h_{\max}^{(\ell-1)} d_{\max}^{(\ell+1)}}_{(A)} +  \underbrace{h_{\max}^{(\ell-1)} \Delta d_{\max}^{(\ell+1)}}_{(B)} + \underbrace{h_{\max}^{(\ell-1)} d_{\max}^{(\ell+1)} \Delta z_{\max}^{(\ell)}}_{(C)} \Big), 
    \end{aligned}
\end{equation}
where inequality $(a)$ and $(b)$ is due to the fact that the gradient of ReLU activation function is element-wise either $0$ or $1$.

By plugging the result from Lemma~\ref{lemma:resgcn_h_max_bound_1}, Lemma~\ref{lemma:resgcn_delta_h_max_bound_2}, Lemma~\ref{lemma:resgcn_d_max_bound_3}, and letting $C_1 = 1 + \sqrt{d} B_w$ and $C_2 = \sqrt{d}(B_x + 1)$ we can upper bound $(A)$ as
\begin{equation}
    \begin{aligned}
    \Delta h_{\max}^{(\ell-1)} d_{\max}^{(\ell+1)} 
    &\leq \sqrt{d} B_x (1+ \sqrt{d}B_w)^{\ell-2} \cdot \frac{2}{\gamma} (1+ \sqrt{d}B_w)^{L-\ell} \\
    & \leq \frac{2}{\gamma} \sqrt{d}B_x (1+ \sqrt{d}B_w)^L \|\Delta \bm{\theta}  \|_2 \\
    &\leq \frac{2}{\gamma} C_1^L C_2 \|\Delta \bm{\theta}  \|_2,
    \end{aligned}
\end{equation}
upper bound $(B)$ as
\begin{equation}
    \begin{aligned}
    h_{\max}^{(\ell-1)} \Delta d_{\max}^{(\ell+1)} 
    &\leq B_x (\max\{1, \sqrt{d} B_w\} )^L \frac{2}{\gamma} \Big(  (L+1) \sqrt{d} (B_x+1) (\max\{1, \sqrt{d} B_w\} )^L  + 1  \Big) \|\Delta \bm{\theta}\|_2 \\
    &\leq \frac{2}{\gamma} C_1^L C_2 \Big( (L+1) C_1^L C_2 + 1 \Big) \|\Delta \bm{\theta}\|_2,
    \end{aligned}
\end{equation}
and upper bound $(C)$ as
\begin{equation}
    \begin{aligned}
    h_{\max}^{(\ell-1)} d_{\max}^{(\ell+1)} \Delta z_{\max}^{(\ell)} 
    &\leq B_x (1+ \sqrt{d}B_w)^{\ell-1} \cdot \frac{2}{\gamma} (1+ \sqrt{d}B_w)^{L-\ell} \cdot \sqrt{d} B_x (1+ \sqrt{d}B_w)^{\ell-1} \|\Delta \bm{\theta}  \|_2 \\
    &\leq \frac{2}{\gamma} \sqrt{d} B_x^2 (1+ \sqrt{d}B_w)^{2L} \|\Delta \bm{\theta}  \|_2 \\
    &\leq \frac{2}{\gamma} (C_1^L C_2)^2 \|\Delta \bm{\theta}\|_2.
    \end{aligned}
\end{equation}
By combining the results above, we can upper bound $\Delta \mathbf{G}^{(\ell)}$ as
\begin{equation}
    \begin{aligned}
    \|\mathbf{G}^{(\ell)} - \tilde{\mathbf{G}}^{(\ell)} \|_2 
    &\leq  \sqrt{d} \Big( \Delta h_{\max}^{(\ell-1)} d_{\max}^{(\ell+1)} + h_{\max}^{(\ell-1)} \Delta d_{\max}^{(\ell+1)} + h_{\max}^{(\ell-1)} d_{\max}^{(\ell+1)} \cdot \Delta z_{\max}^{(\ell)} \Big) \\
    &\leq  \frac{2}{\gamma} C_1^L C_2  \Big( (L+2) C_1^L C_2 + 2 \Big) \|\Delta \bm{\theta}\|_2.
    \end{aligned}
\end{equation}
By plugging the result from Lemma~\ref{lemma:resgcn_h_max_bound_1}, Lemma~\ref{lemma:resgcn_delta_h_max_bound_2}, Lemma~\ref{lemma:resgcn_d_max_bound_3}, we can upper bound $\Delta \mathbf{G}^{(\ell)}$ as
\begin{equation}
    \begin{aligned}
    \|\Delta \mathbf{G}^{(\ell)} \|_2 
    &\leq  \sqrt{d} \Big( \Delta h_{\max}^{(\ell-1)} d_{\max}^{(\ell+1)} + h_{\max}^{(\ell-1)} \Delta d_{\max}^{(\ell+1)} \Big) \\
    &\leq  \frac{2}{\gamma} \Big( \sqrt{d} B_x (1+\sqrt{d} B_w)^{L} + (1+\sqrt{d} B_w) + \sqrt{d} \Big) \|\Delta \bm{\theta}\|_2 \cdot \sqrt{d} B_x (1+\sqrt{d} B_w)^{L}.
    \end{aligned}
\end{equation}
Similarly, we can upper bound the difference between gradient for the weight parameters of the binary classifier as
\begin{equation}
    \begin{aligned}
    \| \mathbf{G}^{(L+1)} - \tilde{\mathbf{G}}^{(L+1)}\|_2 
    &\underset{(a)}{\leq} \frac{2}{\gamma} \Delta h_{\max}^{(L)} \\
    &\leq \frac{2}{\gamma} \sqrt{d} B_x (1+\sqrt{d} B_w)^{L-1} (\|\mathbf{W}^{(1)}\|_2 + \ldots + \|\mathbf{W}^{(L)}\|_2),
    \end{aligned}
\end{equation}
where $(a)$ is due to the fact that loss function is $\frac{2}{\gamma}$-Lipschitz continuous.

Therefore, by combining the above inequalites, we have
\begin{equation}
    \sum_{\ell=1}^L  \|\Delta \mathbf{G}^{(\ell)} \|_2 \leq \frac{2}{\gamma} (L+1) \Big( \sqrt{d} B_x (1+\sqrt{d} B_w)^{L} + (1+\sqrt{d} B_w) + \sqrt{d} \Big) \|\Delta \bm{\theta}\|_2 \cdot \sqrt{d} B_x (1+\sqrt{d} B_w)^{L}.
\end{equation}
\section{Generalization bound for \textbf{APPNP}} \label{supp:proof_appnp}

In the following, we provide detailed proof on the generalization bound of APPNP.
Recall that the update rule of APPNP is defined as
\begin{equation}
    \mathbf{H}^{(\ell)} = \alpha \mathbf{L} \mathbf{H}^{(\ell-1)} + (1-\alpha) \mathbf{H}^{(0)},~\mathbf{H}^{(0)}=\mathbf{W} \mathbf{X}.
\end{equation}
The training of APPNP is an empirical risk minimization with respect to weight parameters $\bm{\theta} = \{ \mathbf{W}^{(1)}, \ldots, \mathbf{W}^{(L)}, \bm{v} \}$, i.e.,
\begin{equation}
    \mathcal{L}(\bm{\theta}) = \frac{1}{m} \sum_{i=1}^m \Phi_\gamma (-p(f(\mathbf{h}_i^{(L)}), y_i)),~
    f(\mathbf{h}_i^{(L)}) = \tilde{\sigma}(\bm{v}^\top \mathbf{h}_i^{(L)}),
\end{equation}
where $\mathbf{h}_i^{(L)}$ is the node representation of the $i$th node at the final layer,  $f(\mathbf{h}_i^{(L)})$ is the predicted label for the $i$th node,
$\tilde{\sigma}(x) = \frac{1}{\exp(-x) + 1}$ is the sigmoid function, and loss function $\Phi_\gamma (-p(z, y))$ is $\frac{2}{\gamma}$-Lipschitz continuous with respect to its first input $z$. For simplification, we will use $\text{Loss}(z,y)$ denote $\Phi_\gamma (-p(z, y))$ in the proof.
% , and $\nabla \text{Loss}(z,y) = \frac{\partial \Phi_\gamma (-p(z, y))}{\partial z}$ is $\frac{2}{\gamma}$-Lipschitz continuous with respect to its first input $z$.

To establish the generalization of APPNP as stated in Theorem~\ref{thm:uniform_stability_base}, we utilize the result on transductive uniform stability from~\cite{el2006stable} (Theorem~\ref{thm:uniform_stability_base_supp} in Appendix~\ref{supp:proof_gcn}).
Then, in Lemma~\ref{lemma:uniform_stable_appnp}, we derive the uniform stability constant for APPNP, i.e., $\epsilon_\text{APPNP}$. 

\begin{lemma}\label{lemma:uniform_stable_appnp} 
The uniform stability constant for APPNP is computed as $\epsilon_\texttt{APPNP} = \frac{2 \eta  \rho_f G_f}{m} \sum_{t=1}^T (1+\eta L_F)^{t-1} $ where
\begin{equation}
    \begin{aligned}
    \rho_f &= C_1 B_w,~
    G_f =  \frac{4}{\gamma} C_1, L_f~ =  \frac{4}{\gamma} C_1 \big( C_1 C_2 + 1 \big), \\
    C_1 &= B_d^\alpha B_x ,~C_2 = \max\{1, B_w\} , ~B_d^\alpha = (1-\alpha) \sum_{\ell=1}^L (\alpha \sqrt{d})^{\ell-1} + (\alpha \sqrt{d})^L.
    \end{aligned}
\end{equation}
\end{lemma}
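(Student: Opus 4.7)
The proof follows the same four-step template used for GCN and ResGCN, but is structurally simpler because APPNP is linear through propagation (no ReLU between layers) and carries only two trainable objects: the single input-side weight $\mathbf{W}$ and the classifier vector $\mathbf{v}$. My plan is to first unroll the APPNP recursion into a closed form, then bound the relevant forward and backward quantities, and finally combine them via Lemma~\ref{lemma:useful_lemma_for_main_theorems}.

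\textbf{Step 1 (unroll).} By induction on $\ell$, one shows $\mathbf{H}^{(L)} = M_L \mathbf{X}\mathbf{W}$, where $M_L = (\alpha\mathbf{L})^L + (1-\alpha)\sum_{\ell=0}^{L-1}(\alpha\mathbf{L})^\ell$. Applying the row-sum bound used to derive Lemma~\ref{lemma:gcn_h_max_bound_1} (namely $\max_i\sum_j |[\mathbf{L}]_{i,j}| \leq \sqrt{d}$) inductively to each power of $\mathbf{L}$ yields $\max_i \|[M_L]_{i,:}\|_1 \leq B_d^\alpha$. Consequently $h_{\max}^{(L)} \leq B_d^\alpha B_x B_w = C_1 B_w$ and, because propagation is linear and $\mathbf{W}$ is the only weight, $\Delta h_{\max}^{(L)} = \max_i\|[M_L \mathbf{X}(\mathbf{W}-\tilde{\mathbf{W}})]_{i,:}\|_2 \leq C_1\|\Delta\mathbf{W}\|_2$.

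\textbf{Step 2 ($\rho_f$).} Plugging the two bounds from Step~1 into Lemma~\ref{lemma:universal_boound_on_model_output} gives $|f(\mathbf{h}_i^{(L)})-\tilde f(\tilde\mathbf{h}_i^{(L)})| \leq C_1\|\Delta\mathbf{W}\|_2 + C_1 B_w\|\Delta\mathbf{v}\|_2 \leq C_1 B_w\|\Delta\bm{\theta}\|_2$, identifying $\rho_f = C_1 B_w$.

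\textbf{Step 3 ($G_f$).} Differentiating $\mathcal{L}$ through the closed form, the two gradient blocks are $\frac{\partial\mathcal{L}}{\partial\mathbf{W}} = \frac{1}{m}(M_L\mathbf{X})^\top\mathbf{D}^{(L+1)}$ and $\frac{\partial\mathcal{L}}{\partial\mathbf{v}} = \frac{1}{m}(\mathbf{H}^{(L)})^\top \bm{g}$, where $\mathbf{D}^{(L+1)}_{i,:}=\partial\text{Loss}_i/\partial\mathbf{h}_i^{(L)}$ and $\bm{g}_i=\partial\text{Loss}_i/\partial f$. Using $\|\partial\text{Loss}/\partial\mathbf{h}\|_2\leq 2/\gamma$ exactly as in Lemma~\ref{lemma:gcn_d_max_bound_3}, together with Step~1, both blocks are bounded by $\tfrac{2}{\gamma}C_1$ in operator norm; summing gives $\|\mathbf{G}^{(\mathbf{W})}\|_2+\|\mathbf{G}^{(\mathbf{v})}\|_2 \leq \tfrac{4}{\gamma}C_1$, which is the stated $G_f$.

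\textbf{Step 4 ($L_f$) and conclusion.} For smoothness I compare $\mathbf{G}$ at $\bm\theta$ and $\tilde{\bm\theta}$ by inserting telescoping terms as in Lemma~\ref{lemma:gcn_G_bound_4}. The change in $\frac{\partial\mathcal{L}}{\partial\mathbf{W}}$ is driven entirely by the change in $\mathbf{D}^{(L+1)}$, which, because $\nabla\text{Loss}$ is $(2/\gamma)$-Lipschitz in its argument and $f$ depends on both $\mathbf{h}^{(L)}$ and $\mathbf{v}$, is controlled by $\Delta h_{\max}^{(L)}+(h_{\max}^{(L)}+1)\|\Delta\mathbf{v}\|_2$; this yields a term of order $\tfrac{1}{\gamma}C_1(C_1\|\Delta\mathbf{W}\|_2+(C_1 B_w+1)\|\Delta\mathbf{v}\|_2)$. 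The change in $\frac{\partial\mathcal{L}}{\partial\mathbf{v}}$ contributes an additional $\mathcal{O}(C_1\|\Delta\mathbf{W}\|_2+\|\Delta\mathbf{v}\|_2)/\gamma$ piece. Absorbing $B_w$-dependence into $C_2=\max\{1,B_w\}$ and bounding $\|\Delta\mathbf{W}\|_2,\|\Delta\mathbf{v}\|_2\leq\|\Delta\bm\theta\|_2$ produces the stated $L_f = \tfrac{4}{\gamma}C_1(C_1 C_2+1)$. Substituting $\rho_f,G_f,L_f$ into Lemma~\ref{lemma:useful_lemma_for_main_theorems} gives $\epsilon_\texttt{APPNP}$.

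The main obstacle I anticipate is Step~4: even though there is no recursive back-propagation, the smoothness argument must track how the full gradient operator changes under a \emph{simultaneous} perturbation of $\mathbf{W}$ and $\mathbf{v}$, and must separate the contribution of $\|\Delta\mathbf{W}\|_2$ (entering via $\Delta h_{\max}^{(L)}$) from that of $\|\Delta\mathbf{v}\|_2$ (entering both directly and through the sigmoid derivative of $f$) so the final bound factorizes cleanly into $\|\Delta\bm{\theta}\|_2$ times the advertised constant $C_1(C_1 C_2+1)$ without picking up extra exponential-in-$L$ factors, thereby crystallizing APPNP's key advantage over GCN.
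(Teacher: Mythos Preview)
Your proposal is correct and mirrors the paper's proof: unroll the APPNP recursion into the closed form $\mathbf{H}^{(L)}=M_L\mathbf{X}\mathbf{W}$, derive the forward bounds $h_{\max}^{(L)}\le C_1 B_w$ and $\Delta h_{\max}^{(L)}\le C_1\|\Delta\mathbf{W}\|_2$ (the paper's Lemmas~\ref{lemma:appnp_h_max_bound_1}--\ref{lemma:appnp_delta_h_max_bound_2}), bound the two gradient blocks and their perturbations (Lemma~\ref{lemma:appnp_G_bound_4}), and assemble $\rho_f,G_f,L_f$ via Lemma~\ref{lemma:useful_lemma_for_main_theorems}. One bookkeeping remark: in Step~2 your inequality $C_1\|\Delta\mathbf{W}\|_2+C_1 B_w\|\Delta\mathbf{v}\|_2\le C_1 B_w\|\Delta\bm\theta\|_2$ tacitly needs $B_w\ge 1$, and the paper's own proof in fact records $\rho_f=C_1$ at this point, so this is an inconsistency in the lemma statement rather than a flaw in your plan.
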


By plugging the result in Lemma~\ref{lemma:uniform_stable_appnp} back to Theorem~\ref{thm:uniform_stability_base_supp}, we establish the generalization bound for APPNP.

The key idea of the proof is to decompose the change of the APPNP output into two terms (in Lemma~\ref{lemma:universal_boound_on_model_output}) which depend on 
\begin{itemize}
    \item (Lemma~\ref{lemma:appnp_h_max_bound_1}) The maximum change of node representation $\Delta h_{\max}^{(L)} = \max_i \| [\mathbf{H}^{(L)} - \tilde{\mathbf{H}}^{(L)}]_{i,:}\|_2$, 
    \item (Lemma~\ref{lemma:appnp_delta_h_max_bound_2}) The maximum node representation $h_{\max}^{(L)} = \max_i \|[ \mathbf{H}^{(L)}]_{i,:}\|_2$. 
\end{itemize}
% where $\Delta h_{\max}^{(L)}$ reflect the Lipschitz continuity of APPNP model.

\begin{lemma} [Upper bound of $h_{\max}^{(L)}$ for \textbf{APPNP}] \label{lemma:appnp_h_max_bound_1} 
Let suppose Assumption~\ref{assumption:norm_bound} hold. Then, the maximum node embeddings for any node at the $\ell$th layer is bounded by
\begin{equation}
    h_{\max}^{(L)} 
    \leq  B_d^\alpha B_x B_w,
\end{equation}
where $B_d^\alpha = (1-\alpha) \sum_{\ell=1}^L (\alpha \sqrt{d})^{\ell-1} + (\alpha \sqrt{d})^L $.
\end{lemma}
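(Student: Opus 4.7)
The plan is to unroll the APPNP recurrence into a closed form and then bound the resulting expression row-by-row using the normalized-Laplacian row-sum estimate.

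First I would expand the recurrence $\mathbf{H}^{(\ell)} = \alpha \mathbf{L} \mathbf{H}^{(\ell-1)} + (1-\alpha) \mathbf{H}^{(0)}$ by induction on $\ell$ to obtain the telescoped representation
\begin{equation*}
\mathbf{H}^{(L)} = (\alpha \mathbf{L})^L \mathbf{H}^{(0)} + (1-\alpha) \sum_{k=0}^{L-1} (\alpha \mathbf{L})^k \mathbf{H}^{(0)}.
\end{equation*}
This is the standard personalized-PageRank-style expansion; verifying it for $L=1,2$ and then doing a one-line induction step suffices.

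Next I would pass to row norms. Fix an arbitrary node $i$. By the triangle inequality applied to the row-$i$ slice of the displayed identity,
\begin{equation*}
\|[\mathbf{H}^{(L)}]_{i,:}\|_2 \le \alpha^L \|[\mathbf{L}^L \mathbf{H}^{(0)}]_{i,:}\|_2 + (1-\alpha)\sum_{k=0}^{L-1} \alpha^k \|[\mathbf{L}^k \mathbf{H}^{(0)}]_{i,:}\|_2.
\end{equation*}
For the two base ingredients I would use (i) $\max_j \|[\mathbf{H}^{(0)}]_{j,:}\|_2 \le B_w B_x$, which follows from $\mathbf{H}^{(0)}$ being a linear image of $\mathbf{X}$ under $\mathbf{W}$ together with Assumption~\ref{assumption:norm_bound}, and (ii) the iterated row-sum bound $\max_i \sum_{j} |[\mathbf{L}^k]_{i,j}| \le (\sqrt{d})^k$, obtained by $k$-fold application of Lemma~\ref{lemma:laplacian_1_norm_bound} (the same estimate used in the GCN proofs): since $[\mathbf{L}^k \mathbf{H}^{(0)}]_{i,:} = \sum_{j} [\mathbf{L}^k]_{i,j} [\mathbf{H}^{(0)}]_{j,:}$, another triangle inequality yields $\|[\mathbf{L}^k \mathbf{H}^{(0)}]_{i,:}\|_2 \le (\sqrt{d})^k B_w B_x$.

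Plugging these into the previous display gives
\begin{equation*}
h_{\max}^{(L)} \le B_w B_x \Big[(\alpha\sqrt{d})^L + (1-\alpha)\sum_{k=0}^{L-1}(\alpha\sqrt{d})^k\Big] = B_w B_x \Big[(\alpha\sqrt{d})^L + (1-\alpha)\sum_{\ell=1}^{L}(\alpha\sqrt{d})^{\ell-1}\Big],
\end{equation*}
which is exactly $B_d^\alpha B_w B_x$ after reindexing. There is no real obstacle; the only nontrivial piece is recognizing that the iterated Laplacian bound is simply the $k$-fold composition of the one-step bound already used for GCN/ResGCN, and being careful that the geometric sum lines up with the definition of $B_d^\alpha$ rather than being absorbed into a cruder $(1-\alpha^L)/(1-\alpha)$ estimate.
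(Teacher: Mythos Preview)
Your proposal is correct and essentially the same as the paper's proof. The only cosmetic difference is order: the paper applies the one-step Laplacian row bound to obtain the scalar recursion $h_{\max}^{(\ell)} \le \alpha\sqrt{d}\, h_{\max}^{(\ell-1)} + (1-\alpha) B_w B_x$ and then unrolls it, whereas you unroll the matrix recurrence first and then apply the iterated row-sum bound; both yield exactly $B_d^\alpha B_x B_w$.
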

\begin{lemma} [Upper bound of $\Delta h_{\max}^{(L)}$ for \textbf{APPNP}] \label{lemma:appnp_delta_h_max_bound_2} 
Let suppose Assumption~\ref{assumption:norm_bound} hold. 
Then, the maximum change between the node embeddings  on two different set of weight parameters for any node at the $\ell$th layer is bounded by
\begin{equation}
    \Delta h_{\max}^{(L)} \leq B_d^\alpha B_x \| \Delta \mathbf{W} \|_2,
\end{equation}
where $B_d^\alpha = (1-\alpha) \sum_{\ell=1}^L (\alpha \sqrt{d})^{\ell-1} + (\alpha \sqrt{d})^L $ and $\Delta \mathbf{W}^{(\ell)} = \mathbf{W}^{(\ell)} - \tilde{\mathbf{W}}^{(\ell)}$.
\end{lemma}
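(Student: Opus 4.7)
The approach I would take is a clean induction on the layer index $\ell$, exploiting the fact that APPNP's recursion is linear in the features and that only a single weight matrix $\mathbf{W}$ is present. Since both parameter sets share the same graph $\mathbf{L}$, the features $\mathbf{X}$, and the mixing constant $\alpha$, the discrepancy $\Delta\mathbf{H}^{(\ell)} := \mathbf{H}^{(\ell)} - \tilde{\mathbf{H}}^{(\ell)}$ satisfies the same affine recursion as $\mathbf{H}^{(\ell)}$ itself, which makes the bound essentially a geometric-series computation.

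Concretely, first I would establish the base case: $\Delta\mathbf{H}^{(0)} = \mathbf{X}\Delta\mathbf{W}$, so using Assumption~\ref{assumption:norm_bound} we get $\Delta h_{\max}^{(0)} \le B_x\,\|\Delta\mathbf{W}\|_2$. Then for the inductive step, subtracting the two recursions gives
\begin{equation*}
\Delta\mathbf{H}^{(\ell)} \;=\; \alpha\mathbf{L}\,\Delta\mathbf{H}^{(\ell-1)} \;+\; (1-\alpha)\,\Delta\mathbf{H}^{(0)}.
\end{equation*}
Taking the $i$th row, applying the triangle inequality, and invoking Lemma~\ref{lemma:laplacian_1_norm_bound} (which gives $\max_i\sum_j |L_{ij}| \le \sqrt{d}$) in exactly the same way as in the GCN proofs, I obtain the one-step recursion
\begin{equation*}
\Delta h_{\max}^{(\ell)} \;\le\; \alpha\sqrt{d}\,\Delta h_{\max}^{(\ell-1)} \;+\; (1-\alpha)\,\Delta h_{\max}^{(0)}.
\end{equation*}

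Unrolling this linear recurrence from $\ell=0$ up to $\ell=L$ yields the closed form
\begin{equation*}
\Delta h_{\max}^{(L)} \;\le\; \Big[(\alpha\sqrt{d})^L + (1-\alpha)\sum_{k=0}^{L-1}(\alpha\sqrt{d})^k\Big]\,B_x\,\|\Delta\mathbf{W}\|_2,
\end{equation*}
and re-indexing the sum $k \mapsto \ell-1$ identifies the bracketed quantity as exactly $B_d^\alpha$, completing the proof.

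I do not anticipate a serious obstacle here: the linearity of APPNP eliminates the ReLU-handling and the cross-terms between $\Delta\mathbf{H}^{(\ell-1)}$ and $\Delta\mathbf{W}^{(\ell)}$ that made Lemma~\ref{lemma:gcn_delta_h_max_bound_2} for vanilla GCN much messier. The only care needed is (i) being explicit that only $\mathbf{W}$ is perturbed so the $(1-\alpha)\mathbf{H}^{(0)}$ term does contribute a $\Delta\mathbf{H}^{(0)}$ (not zero), and (ii) verifying the $\|\cdot\|_2$ row bound for $\mathbf{X}\Delta\mathbf{W}$ uses $\|\mathbf{x}_i\|_2 \le B_x$ rather than a Frobenius norm of $\mathbf{X}$. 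The main ``trick,'' if there is one, is just recognizing the geometric series so that the answer matches $B_d^\alpha$ as defined in Theorem~\ref{thm:generalization}.
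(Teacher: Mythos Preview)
Your proposal is correct and follows essentially the same approach as the paper: both set up the one-step recursion $\Delta h_{\max}^{(\ell)} \le \alpha\sqrt{d}\,\Delta h_{\max}^{(\ell-1)} + (1-\alpha)B_x\|\Delta\mathbf{W}\|_2$ via the triangle inequality and Lemma~\ref{lemma:laplacian_1_norm_bound}, then unroll it into the geometric series defining $B_d^\alpha$. The only cosmetic difference is that you isolate the base case $\Delta h_{\max}^{(0)} \le B_x\|\Delta\mathbf{W}\|_2$ explicitly before inducting, whereas the paper substitutes it directly into the recursion.
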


Then, in Lemma~\ref{lemma:appnp_G_bound_4}, we decompose the change of the model parameters into two terms which depend on
\begin{itemize}
    \item The change of gradient with respect to the $\ell$th layer weight matrix $\|\Delta \mathbf{G}^{(\ell)}\|_2$, 
    \item The gradient with respect to the $\ell$th layer weight matrix $\|\mathbf{G}^{(\ell)}\|_2$, 
\end{itemize}
where $\|\Delta \mathbf{G}^{(\ell)}\|_2$ reflects the smoothness of APPNP model and $\|\mathbf{G}^{(\ell)} \|_2$ corresponds the upper bound of gradient.

\begin{lemma} [Upper bound of $\| \mathbf{G}^{(\ell)} \|_2, \|\Delta \mathbf{G}^{(\ell)} \|_2$  for \textbf{APPNP}] \label{lemma:appnp_G_bound_4}
Let suppose Assumption~\ref{assumption:norm_bound} hold.
Then, the gradient and the maximum change between gradients on two different set of weight parameters are bounded by
\begin{equation}
    \begin{aligned}
    \sum_{\ell=1}^{2} \| \mathbf{G}^{(\ell)} \|_2 &\leq \frac{4}{\gamma} B_d^\alpha B_x, \\
    \sum_{\ell=1}^{2} \|\Delta \mathbf{G}^{(1)} \|_2 &\leq \frac{4}{\gamma}  B_d^\alpha B_x \Big( B_d^\alpha B_x \max\{1, B_w\} + 1 \Big) \| \Delta \bm{\theta} \|_2, \\
    \end{aligned}
\end{equation}
where $\|\Delta \bm{\theta}\|_2 = \|\bm{v} - \tilde{\bm{v}}\|_2 + \| \mathbf{W} - \tilde{\mathbf{W}}\|_2$ denotes the change of two set of parameters.
\end{lemma}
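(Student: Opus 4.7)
The plan is to exploit the fact that APPNP has only two trainable objects — the shared feature transformation $\mathbf{W}$ (gradient $\mathbf{G}^{(1)}$) and the classifier vector $\bm{v}$ (gradient $\mathbf{G}^{(2)}$) — and the fact that the recursion $\mathbf{H}^{(\ell)} = \alpha \mathbf{L}\mathbf{H}^{(\ell-1)} + (1-\alpha)\mathbf{H}^{(0)}$ with $\mathbf{H}^{(0)}=\mathbf{X}\mathbf{W}$ is \emph{linear} in $\mathbf{W}$, so that the entire forward map reduces to $\mathbf{H}^{(L)} = \mathbf{P}_{\mathrm{eff}}\mathbf{X}\mathbf{W}$ with effective propagation operator $\mathbf{P}_{\mathrm{eff}} = (\alpha\mathbf{L})^L + (1-\alpha)\sum_{\ell=0}^{L-1}(\alpha\mathbf{L})^\ell$. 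The row-wise $\ell_1$ norm of $\mathbf{P}_{\mathrm{eff}}$ is controlled by $B_d^\alpha$ via Lemma~\ref{lemma:laplacian_1_norm_bound}, exactly reproducing the $B_d^\alpha$ factor in Lemma~\ref{lemma:appnp_h_max_bound_1} and Lemma~\ref{lemma:appnp_delta_h_max_bound_2}.

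For the gradient magnitudes, I would first write out per-sample gradients. Because $f(\mathbf{h}_i^{(L)}) = \tilde{\sigma}(\bm{v}^\top \mathbf{h}_i^{(L)})$ and the margin loss is $\frac{2}{\gamma}$-Lipschitz, the back-propagated signal $\mathbf{d}_i = \partial\mathrm{Loss}/\partial \mathbf{h}_i^{(L)}$ satisfies $\|\mathbf{d}_i\|_2 \leq \frac{2}{\gamma}\|\bm{v}\|_2 \leq \frac{2}{\gamma}$ using $\|\bm{v}\|_2\leq 1$. Then $\mathbf{G}^{(1)} = \frac{1}{m}\mathbf{X}^\top \mathbf{P}_{\mathrm{eff}}^\top \mathbf{D}$ gives $\|\mathbf{G}^{(1)}\|_2 \leq B_x\cdot B_d^\alpha \cdot \frac{2}{\gamma}$, and $\mathbf{G}^{(2)} = \frac{1}{m}\sum_i \tilde{\sigma}'(\cdot)\,\mathbf{h}_i^{(L)}$ is bounded via Lemma~\ref{lemma:appnp_h_max_bound_1} by $\frac{2}{\gamma}h_{\max}^{(L)} \leq \frac{2}{\gamma} B_d^\alpha B_x$ after absorbing the $\|\bm{v}\|_2 \leq 1$ style constraint into $B_w$ (or invoking the $\mathcal{O}(\cdot)$ bookkeeping used in Table~\ref{table:theorem_results_summary}). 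Summing the two contributions produces the claimed $\frac{4}{\gamma}B_d^\alpha B_x$ bound.

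For the smoothness piece $\|\Delta\mathbf{G}^{(\ell)}\|_2$, I will expand each difference through a telescoping decomposition in analogy with the GCN proof (Lemma~\ref{lemma:gcn_G_bound_4}). For $\Delta\mathbf{G}^{(1)}$, write
\[
\mathbf{G}^{(1)}-\tilde{\mathbf{G}}^{(1)} = \tfrac{1}{m}\mathbf{X}^\top\mathbf{P}_{\mathrm{eff}}^\top\bigl(\mathbf{D}-\tilde{\mathbf{D}}\bigr),
\]
so that the task reduces to bounding $\max_i\|\mathbf{d}_i - \tilde{\mathbf{d}}_i\|_2$. Using the $\frac{2}{\gamma}$-smoothness of the loss and the 1-Lipschitz continuity of both $\tilde\sigma$ and $\tilde\sigma'$, this in turn decomposes into a term proportional to $\|\Delta\bm{v}\|_2$ and one proportional to $\|\bm{v}\|_2\Delta h_{\max}^{(L)}$; Lemma~\ref{lemma:appnp_delta_h_max_bound_2} converts the latter into $B_d^\alpha B_x\|\Delta\mathbf{W}\|_2$. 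Multiplying by the $B_x\,B_d^\alpha$ factor from $\mathbf{X}^\top\mathbf{P}_{\mathrm{eff}}^\top$ yields a contribution of order $\frac{4}{\gamma}B_d^\alpha B_x\bigl(B_d^\alpha B_x + 1\bigr)\|\Delta\bm{\theta}\|_2$. For $\Delta\mathbf{G}^{(2)} = \tfrac{1}{m}\sum_i(\tilde\sigma'(\mathbf{v}^\top\mathbf{h}_i^{(L)})\mathbf{h}_i^{(L)} - \tilde\sigma'(\tilde{\mathbf{v}}^\top\tilde{\mathbf{h}}_i^{(L)})\tilde{\mathbf{h}}_i^{(L)})$, the standard split gives a term of order $\Delta h_{\max}^{(L)}$ plus one of order $h_{\max}^{(L)}(h_{\max}^{(L)}+1)\|\Delta\bm{v}\|_2$, producing the $B_d^\alpha B_x\max\{1,B_w\}$ factor in the final expression after using Lemmas~\ref{lemma:appnp_h_max_bound_1}--\ref{lemma:appnp_delta_h_max_bound_2}. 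Summing the two pieces matches $\frac{4}{\gamma}B_d^\alpha B_x(B_d^\alpha B_x\max\{1,B_w\}+1)\|\Delta\bm{\theta}\|_2$.

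The main obstacle will be accounting carefully for the $\max\{1,B_w\}$ factor in the smoothness bound: this term arises because $h_{\max}^{(L)}$ itself scales like $B_d^\alpha B_x B_w$ while some contributions to $\Delta\mathbf{d}$ scale only like $B_d^\alpha B_x$, so a clean uniform bound requires pulling out $\max\{1,B_w\}$. Routine but delicate telescoping over the four cross-terms arising from perturbing $(\mathbf{v},\mathbf{W})\to(\tilde{\mathbf{v}},\tilde{\mathbf{W}})$ inside $\tilde{\sigma}'(\mathbf{v}^\top\mathbf{h}_i^{(L)})\mathbf{v}$ — and an analogous decomposition inside $\mathbf{h}_i^{(L)}$ — will yield all constants as advertised, with each factor traceable to either Lemma~\ref{lemma:appnp_h_max_bound_1}, Lemma~\ref{lemma:appnp_delta_h_max_bound_2}, or the Lipschitz properties of $\tilde{\sigma}$.
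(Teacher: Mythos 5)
Your proposal takes essentially the same route as the paper: expand $\mathbf{H}^{(L)} = \mathbf{P}_{\mathrm{eff}}\mathbf{X}\mathbf{W}$ linearly in $\mathbf{W}$, control the row-wise norm of $\mathbf{P}_{\mathrm{eff}}$ by $B_d^\alpha$, bound $\|\mathbf{G}^{(1)}\|_2$ via the loss Lipschitz constant and $\|\mathbf{G}^{(2)}\|_2$ via $h_{\max}^{(L)}$, and for the smoothness piece decompose $\Delta\mathbf{G}^{(1)}$ through the derivative-of-$f$ bound in Lemma~\ref{lemma:universal_boound_on_model_output} together with Lemmas~\ref{lemma:appnp_h_max_bound_1}--\ref{lemma:appnp_delta_h_max_bound_2}. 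Your closing observation that reconciling the stated bound with the $B_w$ factor in $h_{\max}^{(L)} \leq B_d^\alpha B_x B_w$ requires either the $\max\{1,B_w\}$ bookkeeping or the $\mathcal{O}(\cdot)$ absorption from Table~\ref{table:theorem_results_summary} is accurate — the paper's proof of the first inequality actually produces $\frac{4}{\gamma}B_d^\alpha B_x B_w$, so that $B_w$ is being implicitly absorbed.
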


Equipped with above intermediate results, we now proceed to prove  Lemma~\ref{lemma:uniform_stable_appnp}.

\begin{proof}
Recall that our goal is to explore the impact of different GCN structures on the uniform stability constant $\epsilon_\text{APPNP}$, which is a function of $\rho_f$, $G_f$, and $L_f$. 
By Lemma~\ref{lemma:appnp_delta_h_max_bound_2}, we know that the function $f$ is $\rho_f$-Lipschitz continuous, with $\rho_f = B_d^\alpha B_x$.

By Lemma~\ref{lemma:appnp_G_bound_4}, we know that the function $f$ is $L_f$-smoothness, and the gradient of each weight matrices is bounded by $G_f$, with 
\begin{equation*}
    G_f \leq \frac{4}{\gamma} B_d^\alpha B_x,~
    L_f \leq \frac{4}{\gamma}  B_d^\alpha B_x \Big( B_d^\alpha B_x \max\{1, B_w\} + 1 \Big).
\end{equation*}
By plugging $\epsilon_\text{APPNP}$ into Theorem~\ref{thm:uniform_stability_base}, we obtain the generalization bound of APPNP.
\end{proof}

\subsection{Proof of Lemma~\ref{lemma:appnp_h_max_bound_1}}
By the definition of $h_{\max}^{(L)}$, we have
\begin{equation}
    \begin{aligned}
    h_{\max}^{(L)} 
    &= \max_i \| [ \alpha \mathbf{L} \mathbf{H}^{(L-1)} + (1-\alpha) \mathbf{H}^{(0)} ]_{i,:}\|_2 \\
    &\leq \alpha \max_i \| [ \mathbf{L} \mathbf{H}^{(L-1)} ]_{i,:}\|_2 + (1-\alpha) \max_i \| [\mathbf{H}^{(0)}]_{i,:} \|_2 \\
    &=  \alpha \max_i \left\| \sum_{j=1}^N L_{i,j} \mathbf{h}_j^{(L-1)} \right\|_2 + (1-\alpha) B_w B_x \\
    &\underset{(a)}{\leq} \alpha \sqrt{d} h_{\max}^{(L-1)} + (1-\alpha) B_w B_x \\
    &\underset{(b)}{\leq} \Big( (1-\alpha) \sum_{\ell=1}^L (\alpha \sqrt{d})^{\ell-1} + (\alpha \sqrt{d})^L \Big) B_x B_w,
    \end{aligned}
\end{equation}
where inequality $(a)$ is follows from Lemma~\ref{lemma:laplacian_1_norm_bound} and inequality $(b)$ can be obtained by recursively applying $(a)$.

Let denote $B_d^\alpha = (1-\alpha) \sum_{\ell=1}^L (\alpha \sqrt{d})^{\ell-1} + (\alpha \sqrt{d})^L $, then we have
\begin{equation}
    h_{\max}^{(L)} \leq B_d^\alpha B_x B_w.
\end{equation}

\subsection{Proof of Lemma~\ref{lemma:appnp_delta_h_max_bound_2}}
By the definition of $\Delta h_{\max}^{(L)}$, we have
\begin{equation}
    \begin{aligned}
    \Delta h_{\max}^{(L)}
    &= \max_i \| \mathbf{h}_i^{(L)} - \tilde{\mathbf{h}}_i^{(L)}\|_2 \\
    &= \max_i \| [\big( \alpha \mathbf{L} \mathbf{H}^{(L-1)} + (1-\alpha) \mathbf{H}^{(0)} \big) - \big(\alpha \mathbf{L} \tilde{\mathbf{H}}^{(L-1)} + (1-\alpha) \tilde{\mathbf{H}}^{(0)} \big) ]_{i,:} \|_2 \\
    &\leq \alpha \max_i  \| [\mathbf{L}(\mathbf{H}^{(L-1)} - \tilde{\mathbf{H}}^{(L-1)}) ]_{i,:}\|_2 + (1-\alpha) \max_i \| [\mathbf{H}^{(0)} - \tilde{\mathbf{H}}^{(0)}]_{i,:}\|_2 \\
    &\underset{(a)}{\leq} \alpha \sqrt{d} \Delta h_{\max}^{(L-1)} + (1-\alpha) B_x \| \Delta \mathbf{W} \|_2 \\
    &\underset{(b)}{\leq} \Big(  (1-\alpha) \sum_{L=1}^L (\alpha \sqrt{d})^{\ell-1} + (\alpha \sqrt{d})^\ell \Big) B_x \| \Delta \mathbf{W} \|_2,
    \end{aligned}
\end{equation}
where inequality $(a)$ is follows from Lemma~\ref{lemma:laplacian_1_norm_bound} and inequality $(b)$ can be obtained by recursively applying $(a)$.

Let denote $B_d^\alpha = (1-\alpha) \sum_{\ell=1}^L (\alpha \sqrt{d})^{\ell-1} + (\alpha \sqrt{d})^L $, then we have
\begin{equation}
    \Delta h_{\max}^{(L)} \leq B_d^\alpha B_x \| \Delta \mathbf{W} \|_2.
\end{equation}

\subsection{Proof of Lemma~\ref{lemma:appnp_G_bound_4}}
By definition, we know that $\mathbf{H}^{(L)}$ is computed as
\begin{equation}
    \mathbf{H}^{(L)} = \Big( (\alpha \mathbf{L})^L +(1-\alpha) \big( 1+ (\alpha \mathbf{L}) + \ldots + (\alpha \mathbf{L})^{L-1}\big)\Big) \mathbf{X} \mathbf{W}.
\end{equation}

Therefore, the gradient with respect to weight $\mathbf{W}$ is computed as
\begin{equation}
    \begin{aligned}
    \| \mathbf{G}^{(1)} \|_2 
    &\underset{(a)}{\leq} \frac{2}{\gamma}  \max_i \left\| \frac{\partial f(\mathbf{h}_i)}{\partial \mathbf{W}} \right\|_2  \\
    &= \frac{2}{\gamma} \max_i \left\| \left[\Big( (\alpha \mathbf{L})^L +(1-\alpha) \big( 1+ (\alpha \mathbf{L}) + \ldots + (\alpha \mathbf{L})^{L-1}\big)\Big) \mathbf{X} \right]_{i,:} \right\|_2 \\
    &\leq \frac{2}{\gamma}  \Big( (\alpha \sqrt{d})^L + (1-\alpha)\big( 1 + (\alpha\sqrt{d}) + \ldots + (\alpha\sqrt{d})^{L-1} \big) \Big) B_x \\
    &\leq \frac{2}{\gamma}  B_d^\alpha B_x,
    \end{aligned}
\end{equation}
where inequality $(a)$ is due to the fact that the loss function is $\frac{2}{\gamma}$-Lipschitz continuous, and $B_d^\alpha = (1-\alpha) \sum_{\ell=1}^L (\alpha \sqrt{d})^{\ell-1} + (\alpha \sqrt{d})^L $.
Besides, the gradient with respect to weight $\bm{v}$ is computed as
\begin{equation}
    \begin{aligned}
    \| \mathbf{G}^{(2)} \|_2 &= \left\| \frac{\partial \Phi_\gamma (-p(f(\mathbf{h}_i^{(L)}), y_i)) }{\partial \bm{v}} \right\|_2 \\
    &\leq \frac{2}{\gamma} h_{\max}^{(L)} \\
    &\leq \frac{2}{\gamma} B_d^\alpha B_x B_w.
    \end{aligned}
\end{equation}

Combine the result above, we have $\sum_{\ell=1}^2 \| \mathbf{G}^{(\ell)} \|_2 \leq \frac{4}{\gamma}  B_d^\alpha B_x B_w $.
Then, the difference between two gradient computed on two different set of weight parameters is computed as
\begin{equation}
    \begin{aligned}
    \|\mathbf{G}^{(1)}- \tilde{\mathbf{G}}^{(1)} \|_2 &\leq \frac{2}{\gamma} \max_i \left\| \frac{\partial f(\mathbf{h}^{(L)}_i)}{\partial \mathbf{h}^{(L)}_i } \frac{\partial \mathbf{h}^{(L)}_i }{\partial \mathbf{W}}-  \frac{\partial f(\tilde{\mathbf{h}}^{(L)}_i ) }{\partial \tilde{\mathbf{h}}^{(L)}_i } \frac{\partial \tilde{\mathbf{h}}_i^{(L)}}{\partial \tilde{\mathbf{W}}} \right\|_2 \\
    &= \frac{2}{\gamma}  B_x B_d^\alpha  \left\| \frac{\partial f(\mathbf{h}^{(L)}_i)}{\partial \mathbf{h}^{(L)}_i } -  \frac{\partial f(\tilde{\mathbf{h}}^{(L)}_i ) }{\partial \tilde{\mathbf{h}}^{(L)}_i } \right\|_2 \\
    &\underset{(a)}{\leq} \frac{2}{\gamma}  B_x B_d^\alpha \Big(\Delta h_{\max}^{(L)} + (h_{\max}^{(L)} + 1) \| \Delta \bm{v} \|_2 \Big) \\
    &\underset{(b)}{\leq} \frac{2}{\gamma}  B_x B_d^\alpha \Big( B_x B_d^\alpha \| \Delta \mathbf{W} \|_2 + (B_x B_d^\alpha B_w + 1) \| \Delta \bm{v} \|_2 \Big) \\
    &\leq \frac{2}{\gamma}  B_x B_d^\alpha \Big(B_x B_d^\alpha \max\{1, B_w\} + 1\Big) \| \Delta \bm{\theta} \|_2,
    \end{aligned}
\end{equation}
where inequality $(a)$ follows from Lemma~\ref{lemma:universal_boound_on_model_output} and inequality $(b)$ follows from Lemma~\ref{lemma:appnp_delta_h_max_bound_2} and Lemma~\ref{lemma:appnp_h_max_bound_1}.

Similarly, the difference of gradient w.r.t. weight $\bm{v}$ is computed as
\begin{equation}
    \begin{aligned}
    \| \mathbf{G}^{(2)} - \tilde{\mathbf{G}}^{(2)} \|_2
    &= \left\| \frac{\partial \Phi_\gamma (-p(f(\mathbf{h}_i^{(L)}), y_i)) }{\partial \bm{v}} - \frac{\partial \Phi_\gamma (-p(\tilde{f}(\tilde{\mathbf{h}}_i^{(L)}), y_i)) }{\partial \tilde{\bm{v}}} \right\|_2 \\
    &\leq \frac{2}{\gamma} \Delta h_{\max}^{(L)} \\
    &\leq \frac{2}{\gamma} B_d^\alpha B_x \|\Delta \mathbf{W}\|_2.
    \end{aligned}
\end{equation}
Combining result above yields $\sum_{\ell=1}^2 \|\mathbf{G}^{(\ell)} - \tilde{\mathbf{G}}^{(\ell)}\|_2 \leq \frac{4}{\gamma}  B_x B_d^\alpha \Big(B_x B_d^\alpha \max\{1, B_w\} + 1\Big) \| \Delta \bm{\theta} \|_2$ as desired.
\section{Generalization bound for GCNII}\label{supp:proof_gcnii}

In the following, we provide generalization bound of the GCNII.
Recall that the update rule of GCNII is defined as
\begin{equation}
    \mathbf{H}^{(\ell)} = \sigma\Big( \big( \alpha\mathbf{L} \mathbf{H}^{(\ell)} + (1-\alpha) \mathbf{X} \big) \big( \beta \mathbf{W}^{(\ell)} + (1-\beta) \mathbf{I}_N \big)\Big),
\end{equation}
where $\sigma(\cdot)$ is ReLU activation function. Although ReLU function $\sigma(x)$ is not differentiable when $x=0$, for analysis purpose, we suppose the $\sigma^\prime(0) = 0$ which is also used in optimization.

The training of GCNII is an empirical risk minimization with respect to a set of parameters $\bm{\theta} = \{ \mathbf{W}^{(1)}, \ldots, \mathbf{W}^{(L)}, \bm{v} \}$, i.e.,
\begin{equation}
    \mathcal{L}(\bm{\theta}) = \frac{1}{m} \sum_{i=1}^m \Phi_\gamma (-p(f(\mathbf{h}_i^{(L)}), y_i)),~
    f(\mathbf{h}_i^{(L)}) = \tilde{\sigma}(\bm{v}^\top \mathbf{h}_i^{(L)}),
\end{equation}
where $\mathbf{h}_i^{(L)}$ is the node representation of the $i$th node at the final layer,  $f(\mathbf{h}_i^{(L)})$ is the predicted label for the $i$th node,
$\tilde{\sigma}(x) = \frac{1}{\exp(-x) + 1}$ is the sigmoid function, and loss function $\Phi_\gamma (-p(z, y))$ is $\frac{2}{\gamma}$-Lipschitz continuous with respect to its first input $z$. For simplification, let $\text{Loss}(z,y)$ denote $\Phi_\gamma (-p(z, y))$ in the proof.

To establish the generalization of GCNII as stated in Theorem~\ref{thm:uniform_stability_base}, we utilize the result on transductive uniform stability from~\cite{el2006stable} (Theorem~\ref{thm:uniform_stability_base_supp} in Appendix~\ref{supp:proof_gcn}).
Then, in Lemma~\ref{lemma:uniform_stable_gcnii}, we derive the uniform stability constant for GCNII, i.e., $\epsilon_\text{GCNII}$. 

\begin{lemma}\label{lemma:uniform_stable_gcnii} 
The uniform stability constant for GCNII is computed as $\epsilon_\texttt{GCNII} = \frac{2 \eta  \rho_f G_f}{m} \sum_{t=1}^T (1+\eta L_F)^{t-1} $ where
\begin{equation}
    \begin{aligned}
    \rho_f &= \beta C_1^L C_2,~
    G_f = \beta  \frac{2}{\gamma} (L+1) C_1^{L} C_2, \\
    L_f &= \alpha \beta  \frac{2}{\gamma} (L+1) \sqrt{d} C_1^L C_2 \Big( (\alpha \beta L + \beta B_w + 2) C_1^L C_2 + 2\beta \Big),  \\
    C_1 &= \max\{ 1, \alpha \sqrt{d} B_w^\beta \},~
    C_2 = \sqrt{d} + B_{\ell,d}^{\alpha,\beta} B_x ,~\\
    B_w^\beta&=\beta B_w + (1-\beta),~
    B_{\ell,d}^{\alpha,\beta} = \max\Big\{ \beta \big( (1-\alpha)L + \alpha\sqrt{d} \big), (1-\alpha) L B_w^\beta + 1 \Big\}.
    \end{aligned}
\end{equation}
\end{lemma}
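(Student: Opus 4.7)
My plan is to follow the same five-step template used for GCN (Lemmas~\ref{lemma:gcn_h_max_bound_1}--\ref{lemma:gcn_G_bound_4}) and ResGCN (Lemmas~\ref{lemma:resgcn_h_max_bound_1}--\ref{lemma:resgcn_G_bound_4}), with the key twist that every per-layer linear map is now the ``GCNII effective weight'' $\bar{\mathbf{W}}^{(\ell)} = \beta \mathbf{W}^{(\ell)} + (1-\beta) \mathbf{I}_N$. Two elementary facts will be used repeatedly: $\|\bar{\mathbf{W}}^{(\ell)}\|_2 \leq \beta B_w + (1-\beta) = B_w^\beta$, and $\bar{\mathbf{W}}^{(\ell)} - \tilde{\bar{\mathbf{W}}}^{(\ell)} = \beta \Delta \mathbf{W}^{(\ell)}$. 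The second identity is precisely what will produce the $\beta$ factor appearing in $\rho_f$, $G_f$ and the $\alpha\beta$ factor in $L_f$.

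First I would bound the forward quantities. Writing $\mathbf{H}^{(\ell)} = \sigma(\mathbf{M}^{(\ell)} \bar{\mathbf{W}}^{(\ell)})$ with $\mathbf{M}^{(\ell)} = \alpha \mathbf{L} \mathbf{H}^{(\ell-1)} + (1-\alpha)\mathbf{H}^{(0)}$ and using Lemma~\ref{lemma:laplacian_1_norm_bound} in the same way as for GCN, the recursion
$h_{\max}^{(\ell)} \leq B_w^\beta \bigl( \alpha \sqrt{d}\, h_{\max}^{(\ell-1)} + (1-\alpha) h_{\max}^{(0)} \bigr)$
unfolds to $h_{\max}^{(\ell)} \leq B_{\ell,d}^{\alpha,\beta} B_x$ (the two branches of the max in $B_{\ell,d}^{\alpha,\beta}$ precisely cover the cases $\alpha\sqrt{d} B_w^\beta \leq 1$ and $\alpha\sqrt{d} B_w^\beta > 1$, and $h_{\max}^{(0)} \leq B_x$ since $\mathbf{H}^{(0)} = \mathbf{X}$). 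The analogous recursion for $\Delta h_{\max}^{(\ell)}$ has three contributions: (i) propagation of $\Delta h_{\max}^{(\ell-1)}$ through $\alpha\sqrt{d} B_w^\beta$, (ii) a term $\alpha \sqrt{d}\, h_{\max}^{(\ell-1)} \beta \|\Delta \mathbf{W}^{(\ell)}\|_2$ from $\beta \Delta \mathbf{W}^{(\ell)}$, and (iii) no residual contribution from $\mathbf{H}^{(0)}$ since the initial features are unchanged. Unrolling as in Lemma~\ref{lemma:gcn_delta_h_max_bound_2} gives $\Delta h_{\max}^{(\ell)} \leq \beta \, C_1^\ell \, C_2 \sum_{j\le \ell}\|\Delta \mathbf{W}^{(j)}\|_2$, with $C_1 = \max\{1, \alpha\sqrt{d} B_w^\beta\}$ absorbing the recursion geometric factor and $C_2$ collecting the $\sqrt{d}$ and $B_{\ell,d}^{\alpha,\beta} B_x$ contributions; an identical computation yields the same shape for $\Delta z_{\max}^{(\ell)}$.

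Next, on the backward side the gradient passing from layer $\ell+1$ to layer $\ell$ goes through $\alpha \mathbf{L}^\top$ and $(\bar{\mathbf{W}}^{(\ell)})^\top$, giving the recursion $d_{\max}^{(\ell)} \leq \alpha \sqrt{d} B_w^\beta \, d_{\max}^{(\ell+1)}$ and hence $d_{\max}^{(\ell)} \lesssim \tfrac{2}{\gamma} C_1^{L-\ell+1}$. The decomposition for $\Delta d_{\max}^{(\ell)}$ mirrors Lemma~\ref{lemma:gcn_d_max_bound_3}: three terms arising from perturbations of $\mathbf{D}^{(\ell+1)}$, of $\bar{\mathbf{W}}^{(\ell)}$ (contributing a factor $\beta$), and of $\sigma'(\mathbf{Z}^{(\ell)})$ (absorbed by $\Delta z_{\max}^{(\ell)}$), plus the base case $\Delta d_{\max}^{(L+1)}$ coming from Lemma~\ref{lemma:universal_boound_on_model_output}. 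Finally, $\|\mathbf{G}^{(\ell)}\|_2 \leq \beta \sqrt{d}\, h_{\max}^{(\ell-1)} d_{\max}^{(\ell)}$ (the extra $\beta$ because $\partial \bar{\mathbf{W}}^{(\ell)} / \partial \mathbf{W}^{(\ell)} = \beta$), and the three-term smoothness decomposition for $\|\Delta \mathbf{G}^{(\ell)}\|_2$ brings in an additional $\beta$ inside the $\alpha$-weighted gradient branch; summing over $\ell$ and inserting the resulting $\rho_f, G_f, L_f$ into Lemma~\ref{lemma:useful_lemma_for_main_theorems} yields the stated $\epsilon_{\texttt{GCNII}}$.

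The main obstacle I expect is bookkeeping rather than any deep difficulty: the coupling of $\alpha, \beta, B_w, d, L, B_x$ is tight and one must be careful not to lose a $\beta$ (which would corrupt $\rho_f, G_f, L_f$) or inflate $\max\{1, \alpha\sqrt{d} B_w^\beta\}$ to the wrong base. In particular, the two branches in the definition of $B_{\ell,d}^{\alpha,\beta}$ have to be kept symbolic throughout the induction on $h_{\max}^{(\ell)}$ so that the final bound holds simultaneously in the propagation-dominated regime $\alpha\sqrt{d} B_w^\beta > 1$ and the identity-dominated regime $\alpha\sqrt{d} B_w^\beta \leq 1$; the cleanest way to do this is to first prove the recursion in terms of $B_w^\beta$ and $\alpha\sqrt{d}$ and only collapse to $C_1, C_2$ at the very end.
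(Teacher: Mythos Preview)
Your five-lemma plan is exactly the paper's route (Lemmas~\ref{lemma:gcnii_h_max_bound_1}--\ref{lemma:gcnii_G_bound_4} fed into Lemma~\ref{lemma:useful_lemma_for_main_theorems}), and the two identities $\|\bar{\mathbf{W}}^{(\ell)}\|_2\le B_w^\beta$ and $\bar{\mathbf{W}}^{(\ell)}-\tilde{\bar{\mathbf{W}}}^{(\ell)}=\beta\Delta\mathbf{W}^{(\ell)}$ are precisely what produce the $\beta$ and $\alpha\beta$ prefactors.

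Two bookkeeping corrections to the sketch. First, the forward recursion unrolls to $h_{\max}^{(\ell)}\le\big((1-\alpha)\ell B_w^\beta+1\big)B_x\, C_1^{\ell}$, not merely $B_{\ell,d}^{\alpha,\beta}B_x$; the factor $C_1^{\ell}$ is essential because it is what cancels against the $C_1^{L-\ell}$ coming from $d_{\max}^{(\ell)}$ when you assemble $\|\mathbf G^{(\ell)}\|_2$ and $\|\Delta\mathbf G^{(\ell)}\|_2$. Second, your item~(iii) is only half right: $\mathbf H^{(0)}-\tilde{\mathbf H}^{(0)}=0$ kills the skip contribution inside $\Delta\mathbf M^{(\ell)}$, but the skip still enters the weight-perturbation piece, since $\mathbf M^{(\ell)}\beta\Delta\mathbf W^{(\ell)}$ contains $(1-\alpha)\mathbf X\,\beta\Delta\mathbf W^{(\ell)}$. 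Thus (ii) should read $\big(\alpha\sqrt{d}\,h_{\max}^{(\ell-1)}+(1-\alpha)B_x\big)\beta\|\Delta\mathbf W^{(\ell)}\|_2$, and this extra $(1-\alpha)B_x$ is exactly what generates the $(1-\alpha)L$ inside $B_{\ell,d}^{\alpha,\beta}$. Relatedly, the two branches of $B_{\ell,d}^{\alpha,\beta}$ are not a case split on whether $\alpha\sqrt{d}B_w^\beta\lessgtr1$: the branch $(1-\alpha)LB_w^\beta+1$ is the coefficient from the $h_{\max}$ bound (Lemma~\ref{lemma:gcnii_h_max_bound_1}) and $\beta\big((1-\alpha)L+\alpha\sqrt{d}\big)$ is the coefficient from the $\Delta h_{\max}$ bound (Lemma~\ref{lemma:gcnii_delta_h_max_bound_2}); $C_2$ is defined with their max so that a single constant serves both the Lipschitz ($\rho_f$) and smoothness ($L_f$) estimates.
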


By plugging the result in Lemma~\ref{lemma:uniform_stable_gcnii} back to Theorem~\ref{thm:uniform_stability_base_supp}, we establish the generalization bound for GCNII.

The key idea of the proof is to decompose the change of the GCNII output into two terms (in Lemma~\ref{lemma:universal_boound_on_model_output}) which depend on 
\begin{itemize}
    \item (Lemma~\ref{lemma:gcnii_h_max_bound_1}) The maximum change of node representation $\Delta h_{\max}^{(\ell)} = \max_i \| [\mathbf{H}^{(\ell)} - \tilde{\mathbf{H}}^{(\ell)}]_{i,:}\|_2$, 
    \item (Lemma~\ref{lemma:gcnii_delta_h_max_bound_2}) The maximum node representation $h_{\max}^{(\ell)} = \max_i \|[ \mathbf{H}^{(\ell)}]_{i,:}\|_2$.
\end{itemize}
% where $\Delta h_{\max}^{(\ell)}$ reflect the Lipschitz continuity of GCN model.

\begin{lemma} [Upper bound of $h_{\max}^{(\ell)}$ for \textbf{GCNII}] \label{lemma:gcnii_h_max_bound_1}
Let suppose Assumption~\ref{assumption:norm_bound} hold. Then, the maximum node embeddings for any node at the $\ell$th layer is bounded by
\begin{equation}
    h_{\max}^{(\ell)} \leq \big((1-\alpha)\ell B_w^\beta +1\big)B_x  \cdot \big( \max\{1, \alpha \sqrt{d} B_w^\beta \} \big)^\ell,
\end{equation}
where $B_w^\beta =  \beta B_w + (1-\beta )$ and $\Delta \mathbf{W}^{(\ell)} = \mathbf{W}^{(\ell)} - \tilde{\mathbf{W}}^{(\ell)}$.
\end{lemma}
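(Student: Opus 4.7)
My plan is to reduce the GCNII update to a simple linear recursion on $h_{\max}^{(\ell)}$ and then unroll it. Let $\bar{\mathbf{W}}^{(\ell)} = \beta \mathbf{W}^{(\ell)} + (1-\beta)\mathbf{I}_N$. A first step is to bound the spectral norm of this combined matrix by the triangle inequality: $\|\bar{\mathbf{W}}^{(\ell)}\|_2 \le \beta \|\mathbf{W}^{(\ell)}\|_2 + (1-\beta)\|\mathbf{I}_N\|_2 \le \beta B_w + (1-\beta) = B_w^\beta$, using Assumption~\ref{assumption:norm_bound}. Since ReLU is $1$-Lipschitz with $\sigma(0)=0$, we have the per-row bound $\|[\sigma(\mathbf{M})]_{i,:}\|_2 \le \|[\mathbf{M}]_{i,:}\|_2$, and therefore
\begin{equation}
h_{\max}^{(\ell)} \le \max_i \big\| \big[\alpha \mathbf{L}\mathbf{H}^{(\ell-1)} + (1-\alpha)\mathbf{X}\big]_{i,:} \big\|_2 \cdot \|\bar{\mathbf{W}}^{(\ell)}\|_2.
\end{equation}

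Next, by triangle inequality and the row-wise Laplacian bound (Lemma~\ref{lemma:laplacian_1_norm_bound}, invoked earlier in Section~\ref{supp:proof_gcn}), $\max_i \|[\mathbf{L}\mathbf{H}^{(\ell-1)}]_{i,:}\|_2 \le \sqrt{d}\, h_{\max}^{(\ell-1)}$, and $\max_i\|[\mathbf{X}]_{i,:}\|_2 \le B_x$. Combining gives the linear recursion
\begin{equation}
h_{\max}^{(\ell)} \le a\, h_{\max}^{(\ell-1)} + b, \qquad a := \alpha\sqrt{d}\,B_w^\beta,\quad b := (1-\alpha)B_w^\beta B_x,
\end{equation}
with base case $h_{\max}^{(0)} = \max_i \|\mathbf{x}_i\|_2 \le B_x$. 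Unrolling yields $h_{\max}^{(\ell)} \le a^\ell B_x + b\sum_{k=0}^{\ell-1} a^k$.

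Finally, I split on whether $a\le 1$ or $a>1$ and match against $\big(\max\{1,a\}\big)^\ell$. If $a\le 1$, then $\sum_{k=0}^{\ell-1}a^k \le \ell$ and $a^\ell \le 1$, giving $h_{\max}^{(\ell)} \le B_x + \ell(1-\alpha)B_w^\beta B_x = \big(1+(1-\alpha)\ell B_w^\beta\big)B_x \cdot 1^\ell$. If $a > 1$, then $\sum_{k=0}^{\ell-1}a^k \le \ell a^{\ell-1} \le \ell a^\ell$, so $h_{\max}^{(\ell)} \le a^\ell B_x + \ell a^\ell (1-\alpha)B_w^\beta B_x = \big(1+(1-\alpha)\ell B_w^\beta\big)B_x\, a^\ell$. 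Both cases collapse into the advertised bound with $\big(\max\{1,a\}\big)^\ell$.

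The main subtlety is not a deep mathematical obstacle but a bookkeeping one: the bound must be phrased uniformly in the two regimes $a \le 1$ and $a > 1$ so that the $\max\{1,a\}^\ell$ factor appears cleanly, and one has to verify (via $\ell a^{\ell-1} \le \ell a^\ell$ when $a \ge 1$) that the slightly loose geometric-sum estimate is compatible with the linear-in-$\ell$ prefactor $(1+(1-\alpha)\ell B_w^\beta)$ rather than a tighter $a$-dependent one. Beyond that, the argument is a standard Lipschitz-plus-recursion calculation, parallel to the proofs of Lemma~\ref{lemma:gcn_h_max_bound_1} and Lemma~\ref{lemma:resgcn_h_max_bound_1}.
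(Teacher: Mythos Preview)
Your proposal is correct and follows essentially the same approach as the paper: both derive the recursion $h_{\max}^{(\ell)} \le \alpha\sqrt{d}B_w^\beta\, h_{\max}^{(\ell-1)} + (1-\alpha)B_w^\beta B_x$ via the ReLU contraction, the spectral bound $\|\beta\mathbf{W}^{(\ell)}+(1-\beta)\mathbf{I}\|_2\le B_w^\beta$, and the row-wise Laplacian bound, then unroll and case-split on $\alpha\sqrt{d}B_w^\beta \lessgtr 1$. The only cosmetic difference is that you introduce the shorthand $a,b$ and make the step $\ell a^{\ell-1}\le \ell a^\ell$ explicit, whereas the paper absorbs it directly into the final factoring.
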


\begin{lemma} [Upper bound of $\Delta h_{\max}^{(\ell)}$ for \textbf{GCNII}] \label{lemma:gcnii_delta_h_max_bound_2}
Let suppose Assumption~\ref{assumption:norm_bound} hold. 
Then, the maximum change between the node embeddings  on two different set of weight parameters for any node at the $\ell$th layer is bounded by
\begin{equation}
    \begin{aligned}
    \Delta h_{\max}^{(\ell)}\leq \beta B_x \big( (1-\alpha) \ell + \alpha \sqrt{d} \big) \big(\max\{1, \alpha\sqrt{d} B_w^\beta\} \big)^{\ell-1} \Big( \|\Delta \mathbf{W}^{(1)}\|_2 + \ldots + \|\Delta \mathbf{W}^{(\ell)}\|_2 \Big),
    \end{aligned}
\end{equation}
where $B_w^\beta =  \beta B_w + (1-\beta )$ and $\Delta \mathbf{W}^{(\ell)} = \mathbf{W}^{(\ell)} - \tilde{\mathbf{W}}^{(\ell)}$.
\end{lemma}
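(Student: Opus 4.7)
The plan is to mirror the layerwise recursion techniques used for GCN (Lemma~\ref{lemma:gcn_delta_h_max_bound_2}) and ResGCN (Lemma~\ref{lemma:resgcn_delta_h_max_bound_2}), adapted to the GCNII update, and to close the recursion by induction on $\ell$ using the already-established layer-$\ell-1$ magnitude bound (Lemma~\ref{lemma:gcnii_h_max_bound_1}). I will write $\bar{\mathbf{W}}^{(\ell)}=\beta\mathbf{W}^{(\ell)}+(1-\beta)\mathbf{I}$, noting that $\|\bar{\mathbf{W}}^{(\ell)}\|_2\le B_w^\beta=\beta B_w+(1-\beta)$ and $\|\bar{\mathbf{W}}^{(\ell)}-\tilde{\bar{\mathbf{W}}}^{(\ell)}\|_2=\beta\|\Delta\mathbf{W}^{(\ell)}\|_2$, and define $\mathbf{A}^{(\ell)}=\alpha\mathbf{L}\mathbf{H}^{(\ell-1)}+(1-\alpha)\mathbf{X}$ so the recursion is $\mathbf{H}^{(\ell)}=\sigma(\mathbf{A}^{(\ell)}\bar{\mathbf{W}}^{(\ell)})$.

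First I would apply the $1$-Lipschitz property of $\sigma$ (as in the GCN/ResGCN proofs) to reduce to the pre-activation difference, then add and subtract $\tilde{\mathbf{A}}^{(\ell)}\bar{\mathbf{W}}^{(\ell)}$ to obtain, for every row $i$,
\begin{equation*}
\big\|[\mathbf{H}^{(\ell)}-\tilde{\mathbf{H}}^{(\ell)}]_{i,:}\big\|_2
\le\big\|[(\mathbf{A}^{(\ell)}-\tilde{\mathbf{A}}^{(\ell)})\bar{\mathbf{W}}^{(\ell)}]_{i,:}\big\|_2+\big\|[\tilde{\mathbf{A}}^{(\ell)}(\bar{\mathbf{W}}^{(\ell)}-\tilde{\bar{\mathbf{W}}}^{(\ell)})]_{i,:}\big\|_2.
\end{equation*}
Because the $(1-\alpha)\mathbf{X}$ summand in $\mathbf{A}^{(\ell)}$ is parameter-independent, it cancels, giving $\mathbf{A}^{(\ell)}-\tilde{\mathbf{A}}^{(\ell)}=\alpha\mathbf{L}(\mathbf{H}^{(\ell-1)}-\tilde{\mathbf{H}}^{(\ell-1)})$. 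Applying the row-wise Laplacian inequality $\max_i\|[\mathbf{L}\mathbf{M}]_{i,:}\|_2\le\sqrt{d}\max_i\|[\mathbf{M}]_{i,:}\|_2$ (Lemma~\ref{lemma:laplacian_1_norm_bound}) to this term, and bounding $\max_i\|[\tilde{\mathbf{A}}^{(\ell)}]_{i,:}\|_2\le\alpha\sqrt{d}\,h_{\max}^{(\ell-1)}+(1-\alpha)B_x$ by the triangle inequality, yields the master recursion
\begin{equation*}
\Delta h_{\max}^{(\ell)}\ \le\ \alpha\sqrt{d}\,B_w^\beta\,\Delta h_{\max}^{(\ell-1)}+\beta\bigl[\alpha\sqrt{d}\,h_{\max}^{(\ell-1)}+(1-\alpha)B_x\bigr]\|\Delta\mathbf{W}^{(\ell)}\|_2.
\end{equation*}

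Next I would plug in the layer-$(\ell-1)$ magnitude bound from Lemma~\ref{lemma:gcnii_h_max_bound_1}, $h_{\max}^{(\ell-1)}\le\bigl((1-\alpha)(\ell-1)B_w^\beta+1\bigr)B_x C_1^{\ell-1}$ with $C_1=\max\{1,\alpha\sqrt{d}B_w^\beta\}$, so that the coefficient of $\|\Delta\mathbf{W}^{(\ell)}\|_2$ is dominated by $\beta B_x\bigl((1-\alpha)\ell+\alpha\sqrt{d}\bigr)C_1^{\ell-1}$ (combining the $\alpha\sqrt{d}$ and $(1-\alpha)$ contributions). Then I would carry out induction on $\ell$, with base $\Delta h_{\max}^{(0)}=0$ (since $\mathbf{H}^{(0)}=\mathbf{X}$ is parameter-free), writing $S_\ell=\sum_{j=1}^\ell\|\Delta\mathbf{W}^{(j)}\|_2$ and the ansatz $\Delta h_{\max}^{(\ell)}\le\beta B_x\bigl((1-\alpha)\ell+\alpha\sqrt{d}\bigr)C_1^{\ell-1}S_\ell$. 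The inductive step reduces to checking $\alpha\sqrt{d}\,B_w^\beta\cdot\bigl((1-\alpha)(\ell-1)+\alpha\sqrt{d}\bigr)C_1^{\ell-2}\le\bigl((1-\alpha)\ell+\alpha\sqrt{d}\bigr)C_1^{\ell-1}$ for the propagation part.

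I expect the main obstacle to be precisely this last coefficient-comparison step, because $C_1$ is defined via a $\max$ and behaves differently in the two regimes. When $\alpha\sqrt{d}B_w^\beta\ge 1$ we have $\alpha\sqrt{d}B_w^\beta\cdot C_1^{\ell-2}=C_1^{\ell-1}$, and the inequality collapses to $(1-\alpha)(\ell-1)+\alpha\sqrt{d}\le(1-\alpha)\ell+\alpha\sqrt{d}$, which is trivial; when $\alpha\sqrt{d}B_w^\beta<1$ then $C_1=1$ and the propagation factor $\alpha\sqrt{d}B_w^\beta$ is itself strictly less than $1$, so the same inequality again holds since the linear-in-$\ell$ coefficient on the right is larger. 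Once both cases close, the inductive step is complete and the claimed bound follows, noting that the $(1-\alpha)B_x\|\Delta\mathbf{W}^{(\ell)}\|_2$ contribution from the new layer is absorbed into the $\beta B_x((1-\alpha)\ell)$ coefficient of the target, and the $\alpha\sqrt{d}\,h_{\max}^{(\ell-1)}$ contribution into the $\beta B_x(\alpha\sqrt{d})C_1^{\ell-1}$ coefficient.
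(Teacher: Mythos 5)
Your overall strategy --- reduce via the $1$-Lipschitz property of $\sigma$, add and subtract the cross term to split into a weight-difference piece and an embedding-difference piece, cancel the parameter-free $(1-\alpha)\mathbf{X}$ summand, apply Lemma~\ref{lemma:laplacian_1_norm_bound}, and arrive at the master recursion
\begin{equation*}
\Delta h_{\max}^{(\ell)} \le \alpha\sqrt{d}\,B_w^\beta\,\Delta h_{\max}^{(\ell-1)} + \beta\big(\alpha\sqrt{d}\,h_{\max}^{(\ell-1)} + (1-\alpha)B_x\big)\|\Delta\mathbf{W}^{(\ell)}\|_2
\end{equation*}
--- is precisely what the paper does, and your case analysis of the \emph{propagation} half of the inductive step in both regimes of $C_1:=\max\{1,\alpha\sqrt{d}B_w^\beta\}$ is correct.

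The gap is in your final sentence, where you claim the new-layer piece $\beta\alpha\sqrt{d}\,h_{\max}^{(\ell-1)}\|\Delta\mathbf{W}^{(\ell)}\|_2$ is absorbed into $\beta B_x(\alpha\sqrt{d})C_1^{\ell-1}\|\Delta\mathbf{W}^{(\ell)}\|_2$. That absorption would require $h_{\max}^{(\ell-1)}\le B_x\,C_1^{\ell-1}$, but Lemma~\ref{lemma:gcnii_h_max_bound_1} only gives $h_{\max}^{(\ell-1)}\le\big((1-\alpha)(\ell-1)B_w^\beta+1\big)B_x\,C_1^{\ell-1}$, which carries an extra factor strictly larger than $1$ whenever $\ell\ge2$, $\alpha<1$, and $B_w^\beta>0$. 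Tracing through, the new-layer coefficient comparison becomes (after cancelling the $\alpha\sqrt{d}C_1^{\ell-1}$ terms and dividing by $1-\alpha$), with $c:=\alpha\sqrt{d}B_w^\beta$,
\begin{equation*}
(\ell-1)\,c\,C_1^{\ell-1} + 1 \ \le\ \ell\,C_1^{\ell-1},
\end{equation*}
which holds when $c\le1$ (so $C_1=1$) but \emph{fails} when $c>1$: for $\ell=2$ it reduces to $(c-1)^2\le0$. So the ansatz with exponent $C_1^{\ell-1}$ does not close in the $\alpha\sqrt{d}B_w^\beta>1$ regime; an ansatz with $C_1^{\ell}$ would absorb the extra $\big((1-\alpha)(\ell-1)B_w^\beta+1\big)$ factor and close both cases. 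The paper's own proof unrolls the recursion and elides exactly this coefficient comparison in its $\alpha\sqrt{d}B_w^\beta>1$ branch, so the gap is not yours alone; but a carefully-executed induction, as you propose, surfaces it and indicates the stated bound needs the exponent $\ell$ rather than $\ell-1$, or else a sharper estimate in Lemma~\ref{lemma:gcnii_h_max_bound_1}.
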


Besides, in Lemma~\ref{lemma:gcnii_delta_z_max_bound_5}, we derive the upper bound on the maximum change of node embeddings before the activation function, i.e., $\Delta z_{\max}^{(\ell)} = \max_i \| [\mathbf{Z}^{(\ell)} - \tilde{\mathbf{Z}}^{(\ell)} ]_{i,:} \|_2$, which will be used to derive the gradient related upper bounds.
\begin{lemma} [Upper bound of $\Delta z_{\max}^{(\ell)}$ for \textbf{GCNII}] \label{lemma:gcnii_delta_z_max_bound_5}
Let suppose Assumption~\ref{assumption:norm_bound} hold. 
Then, the maximum change between the node embeddings before the activation function  on two different set of weight parameters for any node at the $\ell$th layer is bounded by
\begin{equation}
    \Delta z_{\max}^{(\ell)}  \leq 
    \beta B_x \big( (1-\alpha) \ell + \alpha \sqrt{d} \big) (\max\{ 1, \alpha\sqrt{d} B_w^\beta\})^{\ell-1} \Big( \|\Delta \mathbf{W}^{(1)}\|_2 + \ldots + \|\Delta \mathbf{W}^{(\ell)}\|_2 \Big),
\end{equation}
where $B_w^\beta =  \beta B_w + (1-\beta )$ and $\Delta \mathbf{W}^{(\ell)} = \mathbf{W}^{(\ell)} - \tilde{\mathbf{W}}^{(\ell)}$.
\end{lemma}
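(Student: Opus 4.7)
\vspace{3pt}

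The plan is to mirror the strategy used for the vanilla GCN and ResGCN pre-activation bounds (Lemmas~\ref{lemma:GCN_delta_z_5} and~\ref{lemma:resgcn_delta_z_max_bound_5}), while tracking carefully the extra coefficients $\alpha$ and $\beta$ that the GCNII layer introduces. First, I would write the pre-activation as
\begin{equation*}
\mathbf{Z}^{(\ell)} \;=\; \big(\alpha \mathbf{L}\mathbf{H}^{(\ell-1)} + (1-\alpha)\mathbf{X}\big)\bar{\mathbf{W}}^{(\ell)}, \qquad \bar{\mathbf{W}}^{(\ell)} \;\triangleq\; \beta \mathbf{W}^{(\ell)} + (1-\beta)\mathbf{I},
\end{equation*}
and observe the two algebraic facts that drive the whole argument: $\|\bar{\mathbf{W}}^{(\ell)}\|_2 \leq \beta B_w + (1-\beta) = B_w^\beta$, and $\bar{\mathbf{W}}^{(\ell)} - \tilde{\bar{\mathbf{W}}}^{(\ell)} = \beta \Delta \mathbf{W}^{(\ell)}$, so the weight perturbation is effectively scaled by $\beta$.

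Second, I would apply the standard add-and-subtract decomposition:
\begin{equation*}
\mathbf{Z}^{(\ell)} - \tilde{\mathbf{Z}}^{(\ell)} \;=\; \alpha\,\mathbf{L}\big(\mathbf{H}^{(\ell-1)} - \tilde{\mathbf{H}}^{(\ell-1)}\big)\bar{\mathbf{W}}^{(\ell)} \;+\; \beta\,\big(\alpha\mathbf{L}\tilde{\mathbf{H}}^{(\ell-1)} + (1-\alpha)\mathbf{X}\big)\Delta \mathbf{W}^{(\ell)}.
\end{equation*}
Taking the row-wise $\ell_2$ norm, using Lemma~\ref{lemma:laplacian_1_norm_bound} to bound $\max_i\|[\mathbf{L}\mathbf{M}]_{i,:}\|_2 \le \sqrt{d}\,\max_j\|[\mathbf{M}]_{j,:}\|_2$, together with Assumption~\ref{assumption:norm_bound}, yields the one-step recursion
\begin{equation*}
\Delta z_{\max}^{(\ell)} \;\leq\; \alpha\sqrt{d}\,B_w^\beta\,\Delta h_{\max}^{(\ell-1)} \;+\; \beta\big(\alpha\sqrt{d}\,h_{\max}^{(\ell-1)} + (1-\alpha)B_x\big)\|\Delta \mathbf{W}^{(\ell)}\|_2.
\end{equation*}

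Third, I would close the recursion. Since ReLU is $1$-Lipschitz we have $\Delta h_{\max}^{(\ell-1)} \le \Delta z_{\max}^{(\ell-1)}$, so the inequality above becomes a scalar recursion with multiplier $\alpha\sqrt{d}B_w^\beta$ and driving term involving $h_{\max}^{(\ell-1)}$. Unrolling it $\ell$ times (with the base case $\Delta z_{\max}^{(0)}=0$ because $\mathbf{X}$ is unperturbed) gives
\begin{equation*}
\Delta z_{\max}^{(\ell)} \;\leq\; \beta\sum_{k=1}^{\ell}(\alpha\sqrt{d}B_w^\beta)^{\ell-k}\big(\alpha\sqrt{d}\,h_{\max}^{(k-1)} + (1-\alpha)B_x\big)\|\Delta \mathbf{W}^{(k)}\|_2.
\end{equation*}
Substituting the bound on $h_{\max}^{(k-1)}$ from Lemma~\ref{lemma:gcnii_h_max_bound_1}, upper bounding $(\alpha\sqrt{d}B_w^\beta)^{\ell-k}(\max\{1,\alpha\sqrt{d}B_w^\beta\})^{k-1} \le (\max\{1,\alpha\sqrt{d}B_w^\beta\})^{\ell-1}$, and using $\alpha\sqrt{d} \le \max\{1,\alpha\sqrt{d}B_w^\beta\}$ to homogenize the geometric factor, the coefficient in front of $\|\Delta \mathbf{W}^{(k)}\|_2$ simplifies to $\beta B_x\big((1-\alpha)\ell + \alpha\sqrt{d}\big)(\max\{1,\alpha\sqrt{d}B_w^\beta\})^{\ell-1}$ (independent of $k$), and the claim follows after summing over $k$.

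The main obstacle is bookkeeping: the $(1-\alpha)\ell$ factor has to come out of a sum of $\ell$ contributions that each carry a $(1-\alpha)$ from the skip connection plus a layer-indexed contribution from $h_{\max}^{(k-1)}$, and one must simultaneously absorb the stray $\alpha\sqrt{d}$, $\beta$, and $B_w^\beta$ factors into the geometric term $(\max\{1,\alpha\sqrt{d}B_w^\beta\})^{\ell-1}$ without over-counting. Apart from this accounting, every step is of the same type already carried out in the GCN and ResGCN counterparts.
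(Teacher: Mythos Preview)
Your approach is essentially the same as the paper's. The paper's entire proof of this lemma is the single sentence ``The proof is similar to the proof of Lemma~\ref{lemma:gcnii_delta_h_max_bound_2},'' and what you have written is precisely a rerun of that Lemma~\ref{lemma:gcnii_delta_h_max_bound_2} argument applied to the pre-activation: the same add-and-subtract decomposition, the same one-step recursion with multiplier $\alpha\sqrt{d}B_w^\beta$ and driving term $\beta(\alpha\sqrt{d}h_{\max}^{(\ell-1)}+(1-\alpha)B_x)\|\Delta\mathbf{W}^{(\ell)}\|_2$, the same unrolling, and the same substitution of the $h_{\max}$ bound from Lemma~\ref{lemma:gcnii_h_max_bound_1}.

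One small caution on your final simplification: the inequality $\alpha\sqrt{d}\le\max\{1,\alpha\sqrt{d}B_w^\beta\}$ that you invoke to ``homogenize the geometric factor'' is not true in general (take $\alpha\sqrt{d}>1$ and $B_w^\beta<1/(\alpha\sqrt{d})$). The paper's Lemma~\ref{lemma:gcnii_delta_h_max_bound_2} proof does not use this shortcut; instead it splits into the two cases $\alpha\sqrt{d}B_w^\beta\le 1$ and $\alpha\sqrt{d}B_w^\beta>1$ and simplifies each separately before merging them into the single $\max\{1,\alpha\sqrt{d}B_w^\beta\}^{\ell-1}$ form. You should follow the same case split for the final bookkeeping rather than relying on that inequality.
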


Then, in Lemma~\ref{lemma:gcnii_d_max_bound_3}, we decompose the change of the model parameters into two terms which depend on
\begin{itemize}
    \item The maximum change of gradient passing from $(\ell+1)$th layer to the $\ell$th layer $\Delta d_{\max}^{(\ell)} = \max_i \Big\| \Big[\frac{\partial \mathbf{H}^{(\ell)}}{\partial \mathbf{H}^{(\ell-1)}} - \frac{\partial \tilde{\mathbf{H}}^{(\ell)}}{\partial \tilde{\mathbf{H}}^{(\ell-1)}} \Big]_{i,:} \Big\|_2$, 
    \item The maximum gradient passing from $(\ell+1)$th layer to the $\ell$th layer $d_{\max}^{(\ell)} = \max_i \Big\| \Big[\frac{\partial \mathbf{H}^{(\ell)} }{\partial \mathbf{H}^{(\ell-1)}} \Big]_{i,:} \Big\|_2$.
\end{itemize}

\begin{lemma} [Upper bound of $\mathbf{d}^{(\ell)}_{\max}, \Delta \mathbf{d}^{(\ell)}_{\max}$ for \textbf{GCNII}] \label{lemma:gcnii_d_max_bound_3}
Let suppose Assumption~\ref{assumption:norm_bound} hold and let denote $C_1 = \max\{1, \alpha \sqrt{d} B_w^\beta\}$, $C_2 = \sqrt{d} + B_{\ell,d}^{\alpha,\beta} B_x $, $B_w^\beta = \beta B_w + (1-\beta)$, and $B_{\ell,d}^{\alpha,\beta} = \max\big\{ \beta \big( (1-\alpha)L + \alpha\sqrt{d} \big), (1-\alpha) L B_w^\beta + 1 \big\}$. 
Then, the maximum gradient passing from layer $\ell+1$ to layer $\ell$  for any node is bounded by
\begin{equation}
    d_{\max}^{(\ell)} \leq \frac{2}{\gamma} C_1^{L-\ell+1} \|\Delta \bm{\theta} \|_2,
\end{equation}
and the maximum change between the gradient passing from layer $\ell+1$ to layer $\ell$  on two different set of weight parameters for any node is bounded by
\begin{equation}
    \begin{aligned}
    \Delta d_{\max}^{(\ell)} &\leq \frac{2}{\gamma} C_1^{L-\ell} \Big( (\alpha \beta L + \beta B_w + 1) C_1^L C_2 + 1\Big) \| \Delta \bm{\theta} \|_2,
    \end{aligned}
\end{equation}
where $\|\Delta \bm{\theta}\|_2 = \|\bm{v} - \tilde{\bm{v}}\|_2 + \sum_{\ell-1}^L \| \mathbf{W}^{(\ell)} - \tilde{\mathbf{W}}^{(\ell)}\|_2$ denotes the change of two set of parameters.
\end{lemma}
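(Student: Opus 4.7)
The plan is to mimic the structure of the ResGCN/GCN proofs (Lemmas~\ref{lemma:gcn_d_max_bound_3} and~\ref{lemma:resgcn_d_max_bound_3}) but with the effective per-layer weight replaced by $\bar{\mathbf{W}}^{(\ell)} = \beta \mathbf{W}^{(\ell)} + (1-\beta)\mathbf{I}$ and the propagation operator replaced by $\alpha \mathbf{L}$. Concretely, let $\mathbf{Z}^{(\ell)} = \big(\alpha\mathbf{L}\mathbf{H}^{(\ell-1)} + (1-\alpha)\mathbf{X}\big)\bar{\mathbf{W}}^{(\ell)}$, so $\mathbf{H}^{(\ell)}=\sigma(\mathbf{Z}^{(\ell)})$, and denote by $\mathbf{D}^{(\ell)}$ the backpropagated gradient arriving at layer $\ell-1$. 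By the chain rule the recursion is
\begin{equation*}
\mathbf{D}^{(\ell)} \;=\; \alpha\,\mathbf{L}^{\top}\bigl(\mathbf{D}^{(\ell+1)}\odot\sigma'(\mathbf{Z}^{(\ell)})\bigr)[\bar{\mathbf{W}}^{(\ell)}]^{\top},
\end{equation*}
with $\|\bar{\mathbf{W}}^{(\ell)}\|_2 \leq B_w^\beta = \beta B_w + (1-\beta)$ and $\mathbf{L}$ having row-sum norm bounded by $\sqrt{d}$ (Lemma~\ref{lemma:laplacian_1_norm_bound}).

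For the first bound, I would use $|\sigma'(\cdot)|\leq 1$ and the two operator norm estimates above to get $d_{\max}^{(\ell)} \leq \alpha\sqrt{d}\,B_w^\beta \cdot d_{\max}^{(\ell+1)}$, and then terminate the recursion at $d_{\max}^{(L+1)} \leq 2/\gamma$ (coming from the $2/\gamma$-Lipschitz continuity of $\Phi_\gamma$ composed with $\tilde\sigma$ and the unit bound on $\mathbf{v}$). Unrolling and replacing $\alpha\sqrt{d}B_w^\beta$ by $C_1 = \max\{1,\alpha\sqrt{d}B_w^\beta\}$ (to accommodate the regime $\alpha\sqrt{d}B_w^\beta < 1$, where a constant factor dominates) yields $d_{\max}^{(\ell)} \leq \tfrac{2}{\gamma}C_1^{L-\ell+1}$.

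For the second bound, I would form $\mathbf{D}^{(\ell)}-\tilde{\mathbf{D}}^{(\ell)}$ and add/subtract three intermediate terms that sequentially swap $\mathbf{D}^{(\ell+1)}$, $\bar{\mathbf{W}}^{(\ell)}$, and $\sigma'(\mathbf{Z}^{(\ell)})$ to their perturbed counterparts, giving
\begin{equation*}
\Delta d_{\max}^{(\ell)} \;\leq\; \alpha\sqrt{d}\,B_w^\beta\,\Delta d_{\max}^{(\ell+1)} \;+\; \underbrace{\alpha\sqrt{d}\,\beta\,d_{\max}^{(\ell+1)}\|\Delta\mathbf{W}^{(\ell)}\|_2}_{\text{weight perturbation}} \;+\; \underbrace{\alpha\sqrt{d}\,B_w^\beta\,d_{\max}^{(\ell+1)}\,\Delta z_{\max}^{(\ell)}}_{\text{activation pattern shift}},
\end{equation*}
exactly as in the GCN case but with $\alpha\mathbf{L}$ and $\bar{\mathbf{W}}^{(\ell)}$; here I use that the gradient of ReLU is $\{0,1\}$-valued so $|\sigma'(\mathbf{Z}^{(\ell)})-\sigma'(\tilde{\mathbf{Z}}^{(\ell)})|$ can be absorbed into the Lipschitz bound on the pre-activation difference $\Delta z_{\max}^{(\ell)}$. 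Plugging in the $d_{\max}^{(\ell+1)}$ bound just derived and the already-established $\Delta z_{\max}^{(\ell)}$ bound from Lemma~\ref{lemma:gcnii_delta_z_max_bound_5} collapses the two middle terms into $\tfrac{2}{\gamma}\alpha C_1^{L}C_2\,\beta\,\|\Delta\bm{\theta}\|_2$ (up to polynomial-in-$L$ factors produced by summing $\beta\bigl((1-\alpha)\ell+\alpha\sqrt{d}\bigr)$ over $\ell$, which is exactly the definition of $B_{\ell,d}^{\alpha,\beta}$ absorbed into $C_2$).

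The boundary condition $\Delta d_{\max}^{(L+1)}$ must be computed separately: using Lemma~\ref{lemma:universal_boound_on_model_output} together with Lemma~\ref{lemma:gcnii_h_max_bound_1} and Lemma~\ref{lemma:gcnii_delta_h_max_bound_2}, I get a bound of the form $\tfrac{2}{\gamma}\bigl(\beta C_1^L C_2 + 1\bigr)\|\Delta\bm{\theta}\|_2$. Finally, unrolling the recursion $\Delta d_{\max}^{(\ell)} \leq \alpha\sqrt{d}B_w^\beta\,\Delta d_{\max}^{(\ell+1)} + \text{(const)}\cdot\|\Delta\bm{\theta}\|_2$ from $\ell$ to $L+1$ produces $L-\ell+1$ copies of the constant multiplied by geometric weights $(\alpha\sqrt{d}B_w^\beta)^{j}$, which sum to $\mathcal{O}(L C_1^{L-\ell})$ in the constant regime or telescope cleanly in the expanding regime; replacing these factors by $C_1^{L-\ell}$ and collecting gives the claimed $\tfrac{2}{\gamma}C_1^{L-\ell}\bigl((\alpha\beta L + \beta B_w + 1)C_1^L C_2 + 1\bigr)\|\Delta\bm{\theta}\|_2$.

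The main obstacle I anticipate is bookkeeping of the coupled parameters $\alpha,\beta,B_w,d,L$ when unrolling: unlike GCN where $\sqrt{d}B_w$ is a single geometric factor, here the per-layer Lipschitz constant is the product $\alpha\sqrt{d}(\beta B_w+(1-\beta))$, and the ``weight perturbation'' term carries only $\beta$ (since $\|\Delta\bar{\mathbf{W}}^{(\ell)}\|_2 = \beta\|\Delta\mathbf{W}^{(\ell)}\|_2$). Ensuring that the resulting constant matches the stated form $(\alpha\beta L + \beta B_w + 1)$---rather than a weaker bound that loses the $\beta$ improvements---requires being careful to keep $\beta$ outside of every $\Delta\mathbf{W}^{(\ell)}$ factor throughout the telescoping, which is precisely the algebraic step that explains why GCNII generalizes better than vanilla GCN when $\beta$ is small.
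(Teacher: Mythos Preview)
Your proposal is correct and follows essentially the same route as the paper: the same backward recursion $\mathbf{D}^{(\ell)} = \alpha\mathbf{L}^\top(\mathbf{D}^{(\ell+1)}\odot\sigma'(\mathbf{Z}^{(\ell)}))[\bar{\mathbf{W}}^{(\ell)}]^\top$, the same three-term add/subtract decomposition for $\Delta d_{\max}^{(\ell)}$, the same use of Lemma~\ref{lemma:gcnii_delta_z_max_bound_5} for $\Delta z_{\max}^{(\ell)}$, and the same boundary computation of $\Delta d_{\max}^{(L+1)}$ via Lemmas~\ref{lemma:universal_boound_on_model_output},~\ref{lemma:gcnii_h_max_bound_1},~\ref{lemma:gcnii_delta_h_max_bound_2}. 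One cosmetic remark: the paper's own derivation in fact lands on the constant $(\alpha\beta L + 1)$ rather than the $(\alpha\beta L + \beta B_w + 1)$ printed in the lemma statement, so do not be surprised if your bookkeeping produces the former.
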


Finally, in Lemma~\ref{lemma:gcnii_G_bound_4}, we decompose the change of the model parameters into two terms which depend on
\begin{itemize}
    \item The change of gradient with respect to the $\ell$th layer weight matrix $\|\Delta \mathbf{G}^{(\ell)}\|_2$, 
    \item The gradient with respect to the $\ell$th layer weight matrix $\|\mathbf{G}^{(\ell)} \|_2$,
\end{itemize}
where $\mathbf{G}^{(L+1)}$ denotes the gradient with respect to the weight $\bm{v}$ of the binary classifier and $\mathbf{G}^{(\ell)}$ denotes the gradient with respect to the weight matrices $\mathbf{W}^{(\ell)}$ of the graph convolutional layer.
Notice that $\|\Delta \mathbf{G}^{(\ell)}\|_2$ reflect the smoothness of GCN model and $\|\mathbf{G}^{(\ell)} \|_2$ correspond the upper bound of gradient.

\begin{lemma} [Upper bound of $\mathbf{G}^{(\ell)}, \Delta \mathbf{G}^{(\ell)}$ for \textbf{GCNII}] \label{lemma:gcnii_G_bound_4}
Let suppose Assumption~\ref{assumption:norm_bound} hold, and let $C_1 = \max\{1, \alpha \sqrt{d} B_w^\beta\}$, $C_2 = \sqrt{d} + B_{\ell,d}^{\alpha,\beta} B_x $, $B_w^\beta = \beta B_w + (1-\beta)$, and $B_{\ell,d}^{\alpha,\beta} = \max\big\{ \beta \big( (1-\alpha)L + \alpha\sqrt{d} \big), (1-\alpha) L B_w^\beta + 1 \big\}$.
Then, the gradient and the maximum change between gradients on two different set of weight parameters are bounded by
\begin{equation}
    \begin{aligned}
    \sum_{\ell=1}^{L+1} \| \mathbf{G}^{(\ell)} \|_2 
    &\leq  \frac{2}{\gamma} (L+1) \beta C_1^L C_2, \\ 
    \sum_{\ell=1}^{L+1} \|\Delta \mathbf{G}^{(\ell)} \|_2 
    &\leq  \alpha \beta  \frac{2}{\gamma} (L+1) \sqrt{d} C_1^L C_2 \Big( (\alpha \beta L + \beta B_w + 2) C_1^L C_2 + 2\beta \Big) \| \Delta \bm{\theta} \|_2,
    \end{aligned}
\end{equation}
where $\|\Delta \bm{\theta}\|_2 = \|\bm{v} - \tilde{\bm{v}}\|_2 + \sum_{\ell-1}^L \| \mathbf{W}^{(\ell)} - \tilde{\mathbf{W}}^{(\ell)}\|_2$ denotes the change of two set of parameters.
\end{lemma}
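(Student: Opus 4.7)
The plan is to mirror the proof of Lemma~\ref{lemma:gcn_G_bound_4} (and Lemma~\ref{lemma:resgcn_G_bound_4}), but carrying through the extra convex-combination factors that distinguish GCNII. Writing $\bar{\mathbf{W}}^{(\ell)} = \beta \mathbf{W}^{(\ell)} + (1-\beta)\mathbf{I}$, $\mathbf{Z}^{(\ell)} = \bigl(\alpha \mathbf{L} \mathbf{H}^{(\ell-1)} + (1-\alpha)\mathbf{H}^{(0)}\bigr)\bar{\mathbf{W}}^{(\ell)}$, and $\mathbf{H}^{(\ell)} = \sigma(\mathbf{Z}^{(\ell)})$, the chain rule yields
\[
    \mathbf{G}^{(\ell)} = \tfrac{\beta}{m}\bigl[\alpha \mathbf{L}\mathbf{H}^{(\ell-1)} + (1-\alpha)\mathbf{H}^{(0)}\bigr]^\top \bigl(\mathbf{D}^{(\ell+1)} \odot \sigma'(\mathbf{Z}^{(\ell)})\bigr).
\]
Only the $\beta \mathbf{W}^{(\ell)}$ component of $\bar{\mathbf{W}}^{(\ell)}$ produces a gradient contribution, which explains the leading $\beta$ in the stated bound.

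First, for $\|\mathbf{G}^{(\ell)}\|_2$ I would drop $\sigma'(\mathbf{Z}^{(\ell)})$ (entries in $\{0,1\}$), invoke Lemma~\ref{lemma:laplacian_1_norm_bound} to get $\|\mathbf{L}^\top \mathbf{D}^{(\ell+1)}\|_{\text{row-max}} \leq \sqrt{d}\, d_{\max}^{(\ell+1)}$, and conclude
\[
    \|\mathbf{G}^{(\ell)}\|_2 \leq \beta\bigl(\alpha \sqrt{d}\, h_{\max}^{(\ell-1)} + (1-\alpha) B_x\bigr)\, d_{\max}^{(\ell+1)}.
\]
Substituting Lemma~\ref{lemma:gcnii_h_max_bound_1} and Lemma~\ref{lemma:gcnii_d_max_bound_3}, the factor $\alpha \sqrt{d}\, h_{\max}^{(\ell-1)} + (1-\alpha) B_x$ collapses into $C_2 = \sqrt{d} + B_{\ell,d}^{\alpha,\beta} B_x$ up to a $C_1^{\ell-1}$ propagation factor, and $d_{\max}^{(\ell+1)}$ contributes $(2/\gamma) C_1^{L-\ell}$, so the product is dominated by $(2/\gamma)\beta C_1^{L} C_2$ uniformly in $\ell$. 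The classifier term $\mathbf{G}^{(L+1)} = \tfrac{1}{m}(\mathbf{H}^{(L)})^\top \nabla \ell$ obeys $\|\mathbf{G}^{(L+1)}\|_2 \leq (2/\gamma) h_{\max}^{(L)}$, which is already subsumed. Summing over $\ell \in [L+1]$ gives the first claim.

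Second, for $\|\Delta \mathbf{G}^{(\ell)}\|_2$ I would use the now-standard three-term decomposition that splits changes in (i) the forward input $\alpha\mathbf{L}\mathbf{H}^{(\ell-1)} + (1-\alpha)\mathbf{H}^{(0)}$, (ii) the back-propagated signal $\mathbf{D}^{(\ell+1)}$, and (iii) the ReLU mask $\sigma'(\mathbf{Z}^{(\ell)})$. Since $\mathbf{H}^{(0)} = \mathbf{X}$ is data-dependent but weight-independent, the $(1-\alpha)$ piece of (i) vanishes, leaving an $\alpha\beta\sqrt{d}\, \Delta h_{\max}^{(\ell-1)} d_{\max}^{(\ell+1)}$ contribution. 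Term (ii) contributes $\beta\bigl(\alpha\sqrt{d}\, h_{\max}^{(\ell-1)} + (1-\alpha) B_x\bigr)\Delta d_{\max}^{(\ell+1)}$, and term (iii), using that $\sigma'$ is $\{0,1\}$-valued so its change is pointwise bounded by $\mathbf{1}\{\mathbf{Z}^{(\ell)}\neq\tilde{\mathbf{Z}}^{(\ell)}\}$ and absorbed into $\Delta z_{\max}^{(\ell)}$, contributes $\beta B_w\bigl(\alpha\sqrt{d}\, h_{\max}^{(\ell-1)} + (1-\alpha) B_x\bigr) d_{\max}^{(\ell+1)} \Delta z_{\max}^{(\ell)}$. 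Substituting Lemmas~\ref{lemma:gcnii_h_max_bound_1},~\ref{lemma:gcnii_delta_h_max_bound_2},~\ref{lemma:gcnii_delta_z_max_bound_5},~\ref{lemma:gcnii_d_max_bound_3} and collecting, each of the three terms is dominated by $\alpha\beta (2/\gamma) C_1^L C_2$ times a polynomial of degree $1$ in $C_1^L C_2$ with coefficients $\alpha\beta L$, $\beta B_w$, and constants; the $\Delta \mathbf{G}^{(L+1)}$ term, controlled by $(2/\gamma)\Delta h_{\max}^{(L)}$, is again subsumed. Summing over $\ell\in[L+1]$ produces the stated $(L+1)$ prefactor and the bracket $(\alpha \beta L + \beta B_w + 2) C_1^L C_2 + 2\beta$.

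The main obstacle will be the bookkeeping in the $C_1^L C_2$ reduction: because $C_1 = \max\{1, \alpha\sqrt{d}B_w^\beta\}$ takes a $\max$ with $1$ and $C_2$ bundles both the $\sqrt{d}$ propagation factor and the residual-connection contribution $B_{\ell,d}^{\alpha,\beta} B_x$, naively multiplying Lemmas~\ref{lemma:gcnii_h_max_bound_1}--\ref{lemma:gcnii_d_max_bound_3} produces expressions of the form $\bigl((1-\alpha)\ell B_w^\beta + 1\bigr)\bigl((1-\alpha)L + \alpha\sqrt{d}\bigr)$ that must be re-expressed in the tight $C_1^L C_2$ form stated in the lemma. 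In particular, the asymmetric $\beta B_w$ coefficient on $C_1^L C_2$ in the final bound will come specifically from term (iii), where the extra $\|\bar{\mathbf{W}}^{(\ell)}\|_2 \leq B_w^\beta$ factor attached to the ReLU-mask change must be split as $\beta B_w + (1-\beta)$ and only the $\beta B_w$ part is non-trivial; tracking which factor of $\beta$ and which $\max$ branch is active at each step is where the proof requires the most care.
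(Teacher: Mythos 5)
Your plan matches the paper's proof essentially step for step: factor out the single $\beta$ coming from $\partial \bar{\mathbf{W}}^{(\ell)}/\partial \mathbf{W}^{(\ell)}$, bound $\|\mathbf{G}^{(\ell)}\|_2 \le \beta(\alpha\sqrt{d}\,h_{\max}^{(\ell-1)} + (1-\alpha)B_x)\,d_{\max}^{(\ell)}$ using Lemma~\ref{lemma:laplacian_1_norm_bound} and the $\{0,1\}$-valuedness of $\sigma'$, and then, for $\|\Delta\mathbf{G}^{(\ell)}\|_2$, use exactly the same three-term split into forward-input change, backward-signal change, and ReLU-mask change, closing with Lemmas~\ref{lemma:gcnii_h_max_bound_1}--\ref{lemma:gcnii_d_max_bound_3} and absorbing the classifier-weight term via $\frac{2}{\gamma}\Delta h_{\max}^{(L)}$. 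The one place you depart from the paper is term (iii): you hypothesize an extra surviving $\|\bar{\mathbf{W}}^{(\ell)}\|_2 \le B_w^\beta$ factor that gets split into $\beta B_w + (1-\beta)$ and produces the $\beta B_w$ coefficient in the stated bound, whereas the paper's computation of its term $(C)$ carries no such factor (and, notably, the paper's own final inline bound does not visibly reproduce the $\beta B_w$ in the lemma statement either, so your reading is a plausible reconstruction of where that coefficient is meant to originate). This is a cosmetic discrepancy about constant bookkeeping rather than a difference in method; the argument as outlined is sound and the identified obstacle (collapsing $((1-\alpha)\ell B_w^\beta + 1)((1-\alpha)L + \alpha\sqrt{d})$-type products into the uniform $C_1^L C_2$ form via the $\max$-branching in $C_1$ and $B_{\ell,d}^{\alpha,\beta}$) is precisely where the paper also spends its effort.
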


Equipped with above intermediate results, we now proceed to prove  Lemma~\ref{lemma:uniform_stable_gcnii}.
\begin{proof}
Recall that our goal is to explore the impact of different GCN structures on the uniform stability constant $\epsilon_\text{GCNII}$, which is a function of $\rho_f$, $G_f$, and $L_f$. 
Let denote $C_1 = \max\{1, \alpha \sqrt{d} B_w^\beta\}$, $C_2 = \sqrt{d} + B_{\ell,d}^{\alpha,\beta} B_x $, $B_w^\beta = \beta B_w + (1-\beta)$, and $B_{\ell,d}^{\alpha,\beta} = \max\big\{ \beta \big( (1-\alpha)L + \alpha\sqrt{d} \big), (1-\alpha) L B_w^\beta + 1 \big\}$.
By Lemma~\ref{lemma:gcnii_delta_h_max_bound_2}, we know that the function $f$ is $\rho_f$-Lipschitz continuous, with
\begin{equation}
    \rho_f = \beta C_1^L C_2.
\end{equation}
By Lemma~\ref{lemma:gcnii_G_bound_4}, we know that the function $f$ is $L_f$-smoothness, and the gradient of each weight matrices is bounded by $G_f$, with
\begin{equation}
    \begin{aligned}
    G_f &=  \frac{2}{\gamma} (L+1) \beta C_1^L C_2, \\
    L_f &= \alpha \beta  \frac{2}{\gamma} (L+1) \sqrt{d} C_1^L C_2 \Big( (\alpha \beta L + \beta B_w + 2) C_1^L C_2 + 2\beta \Big).
    \end{aligned}
\end{equation}
where $B_{\ell,d}^{\alpha,\beta} = \max\Big\{ \beta \big( (1-\alpha)L + \alpha\sqrt{d} \big), (1-\alpha) L B_w^\beta + 1 \Big\}$.

By plugging $\epsilon_\text{GCNII}$ into Theorem~\ref{thm:uniform_stability_base}, we obtain the generalization bound of GCNII.

\end{proof}

%  \frac{2}{\gamma} \beta B_x (B_{\ell,d}^{\alpha,\beta}) \cdot ( \alpha \sqrt{d} B_w^\beta )^{L}

\subsection{Proof of Lemma~\ref{lemma:gcnii_h_max_bound_1}}

By the definition of $h_{\max}^{(\ell)}$, we have
\begin{equation}
    \begin{aligned}
    h_{\max}^{(\ell)} 
    &= \max_i \| [\sigma\Big( \big(\alpha \mathbf{L} \mathbf{H}^{(\ell-1)} + (1-\alpha) \mathbf{X}\big) \big(\beta \mathbf{W}^{(\ell)} + (1-\beta) \mathbf{I}\big)\Big)]_{i,:} \|_2 \\
    &\underset{(a)}{\leq} \max_i \| [\big(\alpha \mathbf{L} \mathbf{H}^{(\ell-1)} + (1-\alpha) \mathbf{X}\big) \big(\beta \mathbf{W}^{(\ell)} + (1-\beta) \mathbf{I} \big)]_{i,:} \|_2 \\
    &\leq \max_i \| [\big(\alpha \mathbf{L} \mathbf{H}^{(\ell-1)} + (1-\alpha) \mathbf{X}\big) ]_{i,:} \|_2 \| \beta \mathbf{W}^{(\ell)} + (1-\beta) \mathbf{I} \|_2 \\
    &\leq  (\beta B_w + (1-\beta)) \cdot \max_i \| [\big(\alpha \mathbf{L} \mathbf{H}^{(\ell-1)} + (1-\alpha) \mathbf{X}\big) ]_{i,:} \|_2 \\
    &\leq  (\beta B_w + (1-\beta)) \cdot \Big( \alpha \max_i \| [ \mathbf{L} \mathbf{H}^{(\ell-1)} ]_{i,:} \|_2 + (1-\alpha) \max_i \| [ \mathbf{X} ]_{i,:} \|_2 \Big) \\
    &\leq (\beta B_w + (1-\beta)) \cdot \Big( \alpha \sqrt{d} h_{\max}^{(\ell-1)} + (1-\alpha) B_x \Big) \\
    &= \alpha \sqrt{d} (\beta B_w + (1-\beta)) h_{\max}^{(\ell-1)} + (1-\alpha) B_x (\beta B_w + (1-\beta)),
    \end{aligned}
\end{equation}
where inequality $(a)$ is due to $\|\sigma(x)\|_2 \leq \|x\|_2$.

Let $B_w^\beta =  \beta B_w + (1-\beta )$.
By induction, we have
\begin{equation}
    \begin{aligned}
    h_{\max}^{(\ell)} 
    &\leq \Big( (1-\alpha) B_x B_w^\beta\Big) \sum_{\ell^\prime=1}^\ell \Big( \alpha \sqrt{d} B_w^\beta  \Big)^{\ell^\prime-1} + \Big( \alpha \sqrt{d} B_w^\beta \Big)^{\ell} B_x. \\
    \end{aligned}
\end{equation}
If $\alpha \sqrt{d} B_w^\beta \leq 1$ we have,
\begin{equation}
    h_{\max}^{(\ell)} \leq (1-\alpha)B_x B_w^\beta \ell + B_x = \Big((1-\alpha)\ell B_w^\beta +1\Big)B_x,
\end{equation}
and if $\alpha \sqrt{d} B_w^\beta > 1$ we have
\begin{equation}
    \begin{aligned}
    h_{\max}^{(\ell)} 
    &\leq \Big( (1-\alpha) B_x B_w^\beta \Big) \ell \Big( \alpha \sqrt{d} B_w^\beta  \Big)^{\ell-1} + \Big( \alpha \sqrt{d} B_w^\beta \Big)^{\ell} B_x \\
    &\leq \Big((1-\alpha)\ell B_w^\beta +1\Big)B_x  \cdot \Big( \alpha \sqrt{d} B_w^\beta \Big)^\ell.
    \end{aligned}
\end{equation}

By combining results above, we have
\begin{equation}
    h_{\max}^{(\ell)} \leq \Big((1-\alpha)\ell B_w^\beta +1\Big)B_x  \cdot \Big( \max\{1, \alpha \sqrt{d} B_w^\beta \} \Big)^\ell.
\end{equation}

\subsection{Proof of Lemma~\ref{lemma:gcnii_delta_h_max_bound_2}}

By the definition of $\Delta h_{\max}^{(\ell)}$, we have
\begin{equation}
    \begin{aligned}
    \Delta h_{\max}^{(\ell)}
    &= \max_i \| \mathbf{h}_i^{(\ell)} - \tilde{\mathbf{h}}_i^{(\ell)}\|_2 \\
    &= \max_i \Big\| \Big[\sigma\Big( \big(\alpha \mathbf{L} \mathbf{H}^{(\ell-1)} + (1-\alpha) \mathbf{X}\big) \big(\beta \mathbf{W}^{(\ell)} + (1-\beta) \mathbf{I}\big)\Big) \\
    &\quad - \sigma\Big( \big(\alpha \mathbf{L} \tilde{\mathbf{H}}^{(\ell-1)} + (1-\alpha) \mathbf{X}\big) \big(\beta \tilde{\mathbf{W}}^{(\ell)} + (1-\beta) \mathbf{I}\big)\Big)\Big]_{i,:} \Big\|_2 \\ 
    &\leq \max_i \Big\| \Big[ \big(\alpha \mathbf{L} \mathbf{H}^{(\ell-1)} + (1-\alpha) \mathbf{X}\big) \big(\beta \mathbf{W}^{(\ell)} + (1-\beta) \mathbf{I}\big) \\
    &\quad - \big(\alpha \mathbf{L} \tilde{\mathbf{H}}^{(\ell-1)} + (1-\alpha) \mathbf{X}\big) \big(\beta \tilde{\mathbf{W}}^{(\ell)} + (1-\beta) \mathbf{I}\big) \Big]_{i,:} \Big\|_2 \\
    &\underset{(a)}{\leq} \max_i \Big\| \Big[ \big(\alpha \mathbf{L} \mathbf{H}^{(\ell-1)} + (1-\alpha) \mathbf{X}\big) \big(\beta (\mathbf{W}^{(\ell)} - \tilde{\mathbf{W}}^{(\ell)}) \big) \Big]_{i,:} \Big\|_2 \\
    &\quad + \max_i \Big\| \Big[ \big(\alpha \mathbf{L} (\mathbf{H}^{(\ell-1)} - \tilde{\mathbf{H}}^{(\ell-1)}) \big) \big(\beta \tilde{\mathbf{W}}^{(\ell)} + (1-\beta) \mathbf{I}\big) \Big]_{i,:} \Big\|_2 \\
    &\leq \max_i \| [\alpha \mathbf{L} \mathbf{H}^{(\ell-1)} + (1-\alpha) \mathbf{X}]_{i,:} \|_2 \cdot \beta \|\Delta \mathbf{W}^{(\ell)}\|_2 + \alpha (\beta B_w + (1-\beta)) \max_i \| [\mathbf{L} (\mathbf{H}^{(\ell-1)} - \tilde{\mathbf{H}}^{(\ell-1)} )]_{i,:} \|_2 \\
    &\leq \Big( \alpha \sqrt{d} h_{\max}^{(\ell-1)} + (1-\alpha) B_x \Big) \cdot \beta \|\Delta \mathbf{W}^{(\ell)}\|_2 + \alpha \big(\beta B_w + (1-\beta)\big) \sqrt{d} \Delta h_{\max}^{(\ell-1)}, \\
    \end{aligned}
\end{equation}
where $(a)$ is due to the fact that $\|\sigma(x)\|_2 \leq \|x\|_2$.

Let $B_w^\beta =  \beta B_w + (1-\beta )$. Then, by induction we have
\begin{equation}
    \begin{aligned}
    \Delta h_{\max}^{(\ell)} 
    &\leq \alpha \sqrt{d} B_w^\beta  \cdot \Delta h_{\max}^{(\ell-1)} + \beta \Big( \alpha \sqrt{d} h_{\max}^{(\ell-1)} + (1-\alpha) B_x \Big) \|\Delta \mathbf{W}^{(\ell)}\|_2 \\
    &\leq (\alpha \sqrt{d} B_w^\beta)^2 \cdot \Delta h_{\max}^{(\ell-2)} + \beta \Big( \alpha \sqrt{d} h_{\max}^{(\ell-1)} + (1-\alpha) B_x \Big) \|\Delta \mathbf{W}^{(\ell)}\|_2 \\
    &\quad + \beta (\alpha \sqrt{d} B_w^\beta) \Big( \alpha \sqrt{d} h_{\max}^{(\ell-2)} + (1-\alpha) B_x \Big) \|\Delta \mathbf{W}^{(\ell-1)}\|_2 \\
    &\ldots \\
    &\leq \beta \Big( \alpha \sqrt{d} h_{\max}^{(\ell-1)} + (1-\alpha) B_x \Big) \|\Delta \mathbf{W}^{(\ell)}\|_2 \\
    &\quad +  \beta (\alpha \sqrt{d} B_w^\beta) \Big( \alpha \sqrt{d} h_{\max}^{(\ell-2)} + (1-\alpha) B_x \Big) \|\Delta \mathbf{W}^{(\ell-1)}\|_2 + \ldots \\
    &\quad + \beta (\alpha \sqrt{d} B_w^\beta)^{\ell-1} \Big( \alpha \sqrt{d} h_{\max}^{(0)} + (1-\alpha) B_x \Big) \|\Delta \mathbf{W}^{(1)}\|_2.
    \end{aligned}
\end{equation}

If $\alpha \sqrt{d} B_w^\beta \leq 1$, using Lemma~\ref{lemma:gcnii_delta_h_max_bound_2} we have
\begin{equation}
    \begin{aligned}
    \Delta h_{\max}^{(\ell)}  
    &\leq \beta \Big( \alpha \sqrt{d} h_{\max}^{(\ell-1)} + (1-\alpha) B_x \Big) \|\Delta \mathbf{W}^{(\ell)}\|_2 + \beta \Big( \alpha \sqrt{d} h_{\max}^{(\ell-2)} + (1-\alpha) B_x \Big) \|\Delta \mathbf{W}^{(\ell-1)}\|_2 + \ldots \\
    &\quad + \beta \Big( \alpha \sqrt{d} h_{\max}^{(0)} + (1-\alpha) B_x \Big) \|\Delta \mathbf{W}^{(1)}\|_2 \\
    &\leq \beta B_x \Big( (1-\alpha)\ell + \alpha \sqrt{d} \Big) \Big( \|\Delta \mathbf{W}^{(1)}\|_2 + \ldots + \|\Delta \mathbf{W}^{(\ell)}\|_2 \Big),
    \end{aligned}
\end{equation}
and if $\alpha \sqrt{d} B_w^\beta > 1$, using Lemma~\ref{lemma:gcnii_delta_h_max_bound_2} we have
\begin{equation}
    \begin{aligned}
    \Delta h_{\max}^{(\ell)}  
    &\leq \beta \Big( \alpha \sqrt{d} \big((1-\alpha)(\ell-1) B_w^\beta +1\big)B_x  \cdot \big( \alpha \sqrt{d} B_w^\beta \big)^{\ell-1} + (1-\alpha) B_x \Big) \|\Delta \mathbf{W}^{(\ell)}\|_2 \\
    &\quad +  \beta (\alpha \sqrt{d} B_w^\beta) \Big( \alpha \sqrt{d} \big((1-\alpha)(\ell-2) B_w^\beta +1\big)B_x  \cdot \big( \alpha \sqrt{d} B_w^\beta \big)^{\ell-2} + (1-\alpha) B_x \Big) \|\Delta \mathbf{W}^{(\ell-1)}\|_2 + \ldots \\
    &\quad + \beta (\alpha \sqrt{d} B_w^\beta)^{\ell-1} \Big( \alpha \sqrt{d} B_x + (1-\alpha) B_x \Big) \|\Delta \mathbf{W}^{(1)}\|_2 \\
    %###############
    &\leq \beta B_x \big( (1-\alpha) \ell + \alpha \sqrt{d} \big) (\alpha\sqrt{d} B_w^\beta)^{\ell-1} \Big( \|\Delta \mathbf{W}^{(1)}\|_2 + \ldots + \|\Delta \mathbf{W}^{(\ell)}\|_2 \Big). \\
    \end{aligned}
\end{equation}

By combining the result above, we have
\begin{equation}
    \Delta h_{\max}^{(\ell)}  \leq 
    \beta B_x \big( (1-\alpha) \ell + \alpha \sqrt{d} \big) (\max\{ 1, \alpha\sqrt{d} B_w^\beta\})^{\ell-1} \Big( \|\Delta \mathbf{W}^{(1)}\|_2 + \ldots + \|\Delta \mathbf{W}^{(\ell)}\|_2 \Big).
\end{equation}

\subsection{Proof of Lemma~\ref{lemma:gcnii_delta_z_max_bound_5}}
The proof is similar to the proof of Lemma~\ref{lemma:gcnii_delta_h_max_bound_2}.

\subsection{Proof of Lemma~\ref{lemma:gcnii_d_max_bound_3}}

Let $B_w^\beta =  \beta B_w + (1-\beta )$.
By the definition of $d_{\max}^{(\ell)}$, we have

\begin{equation}
    \begin{aligned}
    d_{\max}^{(\ell)} 
    &= \max_i \|[\mathbf{D}^{(\ell)}]_{i,:}\|_2 \\
    &= \alpha \max_i \| [\mathbf{L}^\top \sigma^\prime(\mathbf{Z}^{(\ell)}) \odot \mathbf{D}^{(\ell+1)} \big( \beta \mathbf{W}^{(\ell)}+(1-\beta) \mathbf{I} \big) ]_{i,:} \|_2  \\
    &\leq \alpha \max_i \| [\mathbf{L}^\top \sigma^\prime(\mathbf{Z}^{(\ell)}) \odot \mathbf{D}^{(\ell+1)}]_{i,:} \|_2 \cdot B_w^\beta \\
    &\underset{(a)}{\leq} \alpha \sqrt{d} B_w^\beta \cdot d_{\max}^{(\ell+1)} \\
    &\leq \frac{2}{\gamma} (\max\{1, \alpha \sqrt{d} B_w^\beta \})^{L-\ell+1}, 
    \end{aligned}
\end{equation}
where inequality $(a)$ is due to the gradient of ReLU activation is element-wise either $0$ and $1$.

By the definition of $\Delta d_{\max}^{(\ell)}$, we have
\begin{equation}
    \begin{aligned}
    \Delta d_{\max}^{(\ell)} &= \max_i \|[\mathbf{D}^{(\ell)} - \tilde{\mathbf{D}}^{(\ell)}]_{i,:}\|_2 \\
    &\leq \max_i \alpha \| [\mathbf{L}^\top \sigma^\prime(\mathbf{Z}^{(\ell)}) \odot \mathbf{D}^{(\ell+1)} \big( \beta \mathbf{W}^{(\ell)}+(1-\beta) \mathbf{I} \big) - \mathbf{L}^\top \sigma^\prime(\tilde{\mathbf{Z}}^{(\ell)}) \odot \tilde{\mathbf{D}}^{(\ell+1)} \big( \beta \tilde{\mathbf{W}}^{(\ell)} - (1-\beta) \mathbf{I} \big) ]_{i,:} \|_2 \\
    &\underset{(a)}{\leq} \max_i \alpha \| [\mathbf{L}^\top \mathbf{D}^{(\ell+1)} \big( \beta \mathbf{W}^{(\ell)}+(1-\beta) \mathbf{I} \big) - \mathbf{L}^\top \tilde{\mathbf{D}}^{(\ell+1)} \big( \beta \mathbf{W}^{(\ell)}+(1-\beta) \mathbf{I} \big) ]_{i,:} \|_2 \\
    &\quad + \max_i \alpha \| [\mathbf{L}^\top \tilde{\mathbf{D}}^{(\ell+1)} \big( \beta \mathbf{W}^{(\ell)}+(1-\beta) \mathbf{I} \big) ]_{i,:} - \mathbf{L}^\top \tilde{\mathbf{D}}^{(\ell+1)} \big( \beta \tilde{\mathbf{W}}^{(\ell)} - (1-\beta) \mathbf{I} \big)]_{i,:} \|_2 \\
    &\quad + \alpha \max_i \| [ \mathbf{L}^\top \tilde{\mathbf{D}}^{(\ell+1)} \big( \beta \tilde{\mathbf{W}}^{(\ell)} - (1-\beta) \mathbf{I} \big) ]_{i,:}\|_2 \max_i \| [\mathbf{Z}^{(\ell)} - \tilde{\mathbf{Z}}^{(\ell)}]_{i,:}\|_2 \\ 
    &\underset{(b)}{\leq} \max_i \alpha \| [\mathbf{L}^\top (\mathbf{D}^{(\ell+1)} - \tilde{\mathbf{D}}^{(\ell+1)}) \big( \beta \mathbf{W}^{(\ell)} + (1-\beta) \mathbf{I} \big) ]_{i,:} \|_2 \\
    &\quad + \max_i \alpha \| [\mathbf{L}^\top \tilde{\mathbf{D}}^{(\ell+1)} \big( \beta (\mathbf{W}^{(\ell)} - \tilde{\mathbf{W}}^{(\ell)}) \big) ]_{i,:} \|_2  \\
    &\quad + \alpha \max_i \| [ \mathbf{L}^\top \tilde{\mathbf{D}}^{(\ell+1)} \big( \beta \tilde{\mathbf{W}}^{(\ell)} - (1-\beta) \mathbf{I} \big) ]_{i,:}\|_2 \max_i \| [\mathbf{Z}^{(\ell)} - \tilde{\mathbf{Z}}^{(\ell)}]_{i,:}\|_2 \\
    &\leq \alpha \sqrt{d} B_w^\beta \cdot \Delta d_{\max}^{(\ell+1)} + \alpha \beta \sqrt{d} d_{\max}^{(\ell+1)} \| \Delta \mathbf{W}^{(\ell)}\|_2  + \alpha\sqrt{d} B_w^\beta d_{\max}^{(\ell+1)} \Delta z_{\max}^{(\ell)} \\
    &= \alpha \sqrt{d} B_w^\beta \cdot \Delta d_{\max}^{(\ell+1)} + \underbrace{\alpha  d_{\max}^{(\ell+1)} \Big( \beta \sqrt{d} \| \Delta \mathbf{W}^{(\ell)}\|_2  +  \sqrt{d} B_w^\beta  \Delta z_{\max}^{(\ell)} \Big) }_{(A)},
    \end{aligned}
\end{equation}
where inequality $(a)$ and $(b)$ is due to the gradient of ReLU activation is element-wise either $0$ and $1$.

Let $C_1 = \max\{1, \alpha \sqrt{d} B_w^\beta \}$ and $B_{\ell,d}^{\alpha,\beta} = \max\big\{ \beta \big( (1-\alpha)L + \alpha\sqrt{d} \big), (1-\alpha) L B_w^\beta + 1 \big\}$.
Recall that $d_{\max}^{(\ell+1)} \leq \frac{2}{\gamma} (\max\{1, \alpha \sqrt{d} B_w^\beta \})^{L-\ell} $ and the upper bound of $\Delta z_{\max}^{(\ell)}$ in Lemma~\ref{lemma:gcnii_delta_z_max_bound_5}, we have 
\begin{equation}
    d_{\max}^{(\ell+1)} \leq \frac{2}{\gamma} C_1^{L-\ell},~
    \Delta z_{\max}^{(\ell)} \leq \beta B_x B_{\ell,d}^{\alpha,\beta} C_1^{\ell-1} \big( \|\Delta \mathbf{W}^{(1)}\|_2 + \ldots + \|\Delta \mathbf{W}^{(\ell)}\|_2 \big).
\end{equation}
Therefore, we have upper bound $(A)$ by 
\begin{equation}
    \begin{aligned}
    &\alpha \sqrt{d} d_{\max}^{(\ell+1)} \Big( \beta \| \Delta \mathbf{W}^{(\ell)}\|_2  +  B_w^\beta  \Delta z_{\max}^{(\ell)} \Big) \\
    &\leq \alpha \beta  \cdot \frac{2}{\gamma} C_1^{L-\ell} \cdot \Big( \sqrt{d} \| \Delta \mathbf{W}^{(\ell)}\|_2  +  B_x B_{\ell,d}^{\alpha,\beta} C_1^{\ell} \big( \|\Delta \mathbf{W}^{(1)}\|_2 + \ldots + \|\Delta \mathbf{W}^{(\ell)}\|_2 \big) \Big) \\
    &\leq \alpha \beta \frac{2}{\gamma} C_1^{L} \big( \sqrt{d} + B_{\ell,d}^{\alpha,\beta} B_x \big) \|\Delta \bm{\theta} \|_2 \\
    &\leq  \alpha \beta \frac{2}{\gamma} C_1^L C_2 \|\Delta \bm{\theta} \|_2,
    \end{aligned}
\end{equation}
where $C_2 = \sqrt{d} + B_{\ell,d}^{\alpha,\beta} B_x$ and $B_{\ell,d}^{\alpha,\beta} = \max\Big\{ \beta \big( (1-\alpha)L + \alpha\sqrt{d} \big), (1-\alpha) L B_w^\beta + 1 \Big\}$.
By plugging it back and by induction, we have
\begin{equation}
    \begin{aligned}
    \Delta d_{\max}^{(\ell)} 
    % &\leq \alpha \sqrt{d} B_w^\beta \Delta d_{\max}^{(\ell+1)} + \alpha  d_{\max}^{(\ell+1)} \Big( \beta \sqrt{d} \| \Delta \mathbf{W}^{(\ell)}\|_2  +  \sqrt{d} B_w^\beta  \Delta z_{\max}^{(\ell)} \Big) \\
    &\leq C_1 \Delta d_{\max}^{(\ell+1)} + \alpha \beta \frac{2}{\gamma} C_1^{L} C_2 \|\Delta \bm{\theta} \|_2 \\
    &\leq C_1^2 \Delta d_{\max}^{(\ell+2)} + (1 + C_1) \cdot \alpha \beta \frac{2}{\gamma} C_1^{L} C_2 \|\Delta \bm{\theta} \|_2 \\
    &\leq C_1^{L-\ell+1} \Delta d_{\max}^{(L+1)} + (1+C_1 + \ldots + C_1^{L-\ell+1}) \cdot \alpha \beta \frac{2}{\gamma} C_1^{L} C_2 \|\Delta \bm{\theta} \|_2 \\
    &\leq C_1^{L-\ell} \Big( \Delta d_{\max}^{(L+1)} +  \alpha \beta \frac{2}{\gamma} L C_1^L C_2 \|\Delta \bm{\theta} \|_2 \Big).  
    \end{aligned}
\end{equation}
Let first explicit upper bound $\Delta d_{\max}^{(L+1)}$. By definition, $\Delta d_{\max}^{(L+1)}$ as 
\begin{equation}
    \begin{aligned}
    \Delta d_{\max}^{(L+1)} 
    &= \max_i \|\mathbf{d}_i^{(L+1)} - \tilde{\mathbf{d}}_i^{(L+1)}\|_2 \\
    &= \max_i \left\|\frac{\partial \text{Loss}(f(\mathbf{h}_i^{(L)}), y_i)}{\partial \mathbf{h}_i^{(L)}} - \frac{\partial  \text{Loss}(\tilde{f}(\tilde{\mathbf{h}}_i^{(L)}), y_i)}{\partial \tilde{\mathbf{h}}_i^{(L)}}\right\|_2 \\
    &\leq  \frac{2}{\gamma} \max_i \left\| \frac{\partial f(\mathbf{h}_i^{(L)})}{\partial \mathbf{h}_i^{(L)}} - \frac{\partial f(\tilde{\mathbf{h}}_i^{(L)})}{\partial \tilde{\mathbf{h}}_i^{(L)}} \right \|_2 \\
    &\leq \frac{2}{\gamma} \Big( \Delta h_{\max}^{(L)} + (h_{\max}^{(L)} + 1) \Delta \bm{v} \Big) \\
    &= \frac{2}{\gamma} \Big[ B_x B_{\ell,d}^{\alpha,\beta} C_1^{L-1} \big( \|\Delta \mathbf{W}^{(1)}\|_2 + \ldots + \|\Delta \mathbf{W}^{(L)}\|_2 \big) + \big( B_x B_{\ell,d}^{\alpha,\beta} C_1^L + 1 \big) \|\Delta \bm{v} \|_2 \Big] \\
    &\leq \frac{2}{\gamma} \Big( B_{\ell}^{\alpha} B_x C_1^L + 1 \Big) \|\Delta \bm{\theta}\|_2 \\
    &= \frac{2}{\gamma} \Big( B_{\ell}^{\alpha} B_x C_1^L + 1 \Big) \|\Delta \bm{\theta}\|_2,
    \end{aligned}
\end{equation}
where $C_1 = \max\{1, \alpha \sqrt{d} B_w^\beta\}$, $B_w^\beta = \beta B_w + (1-\beta)$ and 
$B_{\ell,d}^{\alpha,\beta} = \max\Big\{ \beta \big( (1-\alpha)L + \alpha\sqrt{d} \big), (1-\alpha) L B_w^\beta + 1 \Big\}$.
By plugging it back, we have
\begin{equation}
    \begin{aligned}
    \Delta d_{\max}^{(\ell)} &\leq 
    C_1^{L-\ell} \Big( \frac{2}{\gamma} \big( B_{\ell,d}^{\alpha,\beta} B_x C_1^L + 1 \big) \|\Delta \bm{\theta}\|_2  +  \alpha \beta \frac{2}{\gamma} L C_1^L (\sqrt{d} + B_{\ell,d}^{\alpha,\beta} B_x) \|\Delta \bm{\theta} \|_2 \Big)  \\
    &\leq C_1^{L-\ell} \frac{2}{\gamma} \Big( B_{\ell,d}^{\alpha,\beta} B_x C_1^L + 1 + \alpha \beta L C_1^L (\sqrt{d} + B_{\ell,d}^{\alpha,\beta} B_x)  \Big) \|\Delta \bm{\theta} \|_2 \\
    &\leq C_1^{L-\ell} \frac{2}{\gamma} \Big( (\alpha \beta L + 1 )(\sqrt{d} + B_{\ell,d}^{\alpha,\beta} B_x) C_1^L + 1 \Big) \| \Delta \bm{\theta} \|_2 \\
    &= C_1^{L-\ell} \frac{2}{\gamma} \Big( (\alpha \beta L + 1 ) C_1^L C_2 + 1 \Big) \| \Delta \bm{\theta} \|_2.
    \end{aligned}
\end{equation}

\subsection{Proof of Lemma~\ref{lemma:gcnii_G_bound_4}}
By the definition of $\mathbf{G}^{(\ell)}$, we have
\begin{equation}
    \begin{aligned}
    \| \mathbf{G}^{(\ell)} \|_2 
    &= \frac{1}{m} \left\|\beta \cdot [\alpha \mathbf{L} \mathbf{H}^{(\ell-1)} + (1-\alpha) \mathbf{X}]^\top \mathbf{D}^{(\ell)} \odot \sigma^\prime(\bm{Z}^{(\ell)}) \right\|_2 \\
    &\leq \frac{1}{m} \beta \left\|[\alpha \mathbf{L} \mathbf{H}^{(\ell-1)} + (1-\alpha) \mathbf{X}]^\top \mathbf{D}^{(\ell)} \right\|_2 \\
    &\leq  \beta \Big( \alpha \sqrt{d} h_{\max}^{(\ell-1)} + (1-\alpha) B_x \Big) d_{\max}^{(\ell)}. \\
    \end{aligned}
\end{equation}
Plugging in the results from Lemma~\ref{lemma:gcnii_h_max_bound_1} and Lemma~\ref{lemma:gcnii_d_max_bound_3}, we have
\begin{equation}\label{eq:gradient_norm_GCNII}
    \begin{aligned}
    \| \mathbf{G}^{(\ell)} \|_2 &\leq 
     \frac{2}{\gamma} \beta B_x \Big( \alpha \sqrt{d} \big((1-\alpha)(\ell-1) B_w^\beta +1\big)  \cdot C_1^{\ell-1} + (1-\alpha) \Big) C_1^{L-\ell+1} \\
    &\leq  \frac{2}{\gamma} \beta B_x \big( (1-\alpha)\ell + \alpha \sqrt{d} \big) C_1^{L} \leq  \frac{2}{\gamma} \beta C_1^L C_2,
    \end{aligned}
\end{equation}
where $C_2=\sqrt{d} + B_{\ell,d}^{\alpha,\beta} B_x$.
Similarly, by the definition of $\mathbf{G}^{(L+1)}$, we have
\begin{equation}
    \| \mathbf{G}^{(L+1)} \|_2 \leq \frac{2}{\gamma} h_{\max}^{(L)} \leq \frac{2}{\gamma} C_1^L C_2.
\end{equation}
Therefore, we have
\begin{equation}
    \sum_{\ell=1}^{L+1} \| \mathbf{G}^{(\ell)} \|_2 \leq  \frac{2}{\gamma} (L+1) \beta C_1^L C_2.
\end{equation}
Furthermore, we can upper bound the difference between gradient for $\ell\in[L]$ as
\begin{equation}
    \begin{aligned}
    \|\mathbf{G}^{(\ell)} - \tilde{\mathbf{G}}^{(\ell)}\|_2
    &= \frac{1}{m} \left\| \beta [\alpha \mathbf{L} \mathbf{H}^{(\ell-1)} + (1-\alpha) \mathbf{X}]^\top \mathbf{D}^{(\ell+1)} \odot \sigma^\prime(\mathbf{Z}^{(\ell)}) - \beta [\alpha \mathbf{L} \tilde{\mathbf{H}}^{(\ell-1)} + (1-\alpha) \mathbf{X}]^\top \tilde{\mathbf{D}}^{(\ell+1)} \odot \sigma^\prime(\tilde{\mathbf{Z}}^{(\ell)}) \right\|_2  \\
    &\underset{(a)}{\leq} \frac{1}{m} \beta \| [\alpha \mathbf{L} \mathbf{H}^{(\ell-1)} + (1-\alpha)\mathbf{X}]^\top \mathbf{D}^{(\ell+1)}  -  [\alpha \mathbf{L} \tilde{\mathbf{H}}^{(\ell-1)} + (1-\alpha) \mathbf{X}]^\top \mathbf{D}^{(\ell+1)}  \|_2\ \\
    &\quad + \frac{1}{m} \beta \| [\alpha \mathbf{L} \tilde{\mathbf{H}}^{(\ell-1)} + (1-\alpha) \mathbf{X}]^\top \mathbf{D}^{(\ell+1)} -  [\alpha \mathbf{L} \tilde{\mathbf{H}}^{(\ell-1)} + (1-\alpha) \mathbf{X}]^\top \tilde{\mathbf{D}}^{(\ell+1)}  \|_2\\
    &\quad + \frac{1}{m} \beta \| [\alpha \mathbf{L} \tilde{\mathbf{H}}^{(\ell-1)} + (1-\alpha) \mathbf{X}]^\top \tilde{\mathbf{D}}^{(\ell+1)}  \odot \big( \sigma^\prime(\mathbf{Z}^{(\ell)})  - \sigma^\prime(\tilde{\mathbf{Z}}^{(\ell)})  \big) \|_2 \\
    % ##############
    &\underset{(b)}{\leq}  \beta \Big( \alpha \max_i \|[ [\mathbf{L} (\mathbf{H}^{(\ell-1)} - \tilde{\mathbf{H}}^{(\ell-1)})]^\top \mathbf{D}^{(\ell+1)} ]_{i,:}\|_2 \\
    &\quad + \max_i \|[ [\alpha \mathbf{L} \tilde{\mathbf{H}}^{(\ell-1)} + (1-\alpha)\mathbf{X}]^\top (\mathbf{D}^{(\ell+1)} - \tilde{\mathbf{D}}^{(\ell+1)}) ]_{i,:}\|_2 + \\
    &\quad + \max_i \|  [[\alpha \mathbf{L} \tilde{\mathbf{H}}^{(\ell-1)} + (1-\alpha) \mathbf{X}]^\top \tilde{\mathbf{D}}^{(\ell+1)} ]_{i,:}\|_2 \max_i \| [\mathbf{Z}^{(\ell)} - \tilde{\mathbf{Z}}^{(\ell)} ]_{i,:}\|_2 \Big) \\
    % ##############
    &\leq  \beta \Big( \underbrace{\alpha \sqrt{d} \Delta h_{\max}^{(\ell-1)} d_{\max}^{(\ell+1)} }_{(A)}+ \underbrace{(\alpha \sqrt{d} h_{\max}^{(\ell-1)} + (1-\alpha) B_x) \Delta d_{\max}^{(\ell+1)} }_{(B)}\\
    &\quad + \underbrace{\big(\alpha \sqrt{d} h_{\max}^{(\ell-1)} + (1-\alpha)B_x \big) d_{\max}^{(\ell+1)} \Delta z_{\max}^{(\ell)}}_{(C)} \Big),
    \end{aligned}
\end{equation}
where inequality $(a)$ and $(b)$ is due to the gradient of ReLU activation is element-wise either $0$ or $1$.

We can bound $(A)$ as
\begin{equation}
    \alpha \sqrt{d} \Delta h_{\max}^{(\ell-1)} d_{\max}^{(\ell+1)} \leq \alpha \beta \sqrt{d} \frac{2}{\gamma} C_1^L C_2 \| \Delta \bm{\theta} \|_2.
\end{equation}
To upper bound $(B)$ and $(C)$, let first consider the upper bound of the following term
\begin{equation}
\begin{aligned}
    \alpha \sqrt{d} h_{\max}^{(\ell-1)} + (1-\alpha) B_x 
    &\leq \alpha \sqrt{d} \Big( (1-\alpha)(\ell-1) B_w^\beta + 1 \Big) B_x C_1^{\ell-1} + (1-\alpha) B_x \\
    &\leq \alpha \sqrt{d} \Big( (1-\alpha)\ell B_w^\beta + 1 \Big) B_x C_1^{\ell-1} \\
    &\leq \alpha \sqrt{d}  C_1^{\ell-1}  C_2.
\end{aligned}
\end{equation}
Therefore, we can bound $(B)$ as
\begin{equation}
    \begin{aligned}
    \Big(\alpha \sqrt{d} h_{\max}^{(\ell-1)} + (1-\alpha) B_x \Big) \Delta d_{\max}^{(\ell+1)} 
    &\leq \alpha \sqrt{d} C_1^{\ell-1}  C_2 \cdot C_1^{L-\ell} \frac{2}{\gamma} \Big( (\alpha \beta L + 1 )C_1^LC_2 + 1 \Big) \| \Delta \bm{\theta} \|_2 \\
    &\leq \alpha \sqrt{d} \frac{2}{\gamma} C_1^L C_2 \Big( (\alpha \beta L + 1 )C_1^LC_2 + 1 \Big) \| \Delta \bm{\theta} \|_2,
    \end{aligned}
\end{equation}
and we can bound $(C)$ by
\begin{equation}
\begin{aligned}
    \big(\alpha \sqrt{d} h_{\max}^{(\ell-1)} + (1-\alpha)B_x \big) d_{\max}^{(\ell+1)} \Delta z_{\max}^{(\ell)} 
    &\leq \alpha \sqrt{d} C_1^{\ell-1}  C_2 \cdot \frac{2}{\gamma} C_1^{L-\ell} \cdot  \beta C_1^{\ell-1} C_2 \| \Delta \bm{\theta} \|_2 \\
    &\leq \alpha \beta \sqrt{d} \frac{2}{\gamma} (C_1^L C_2)^2  \| \Delta \bm{\theta} \|_2.
\end{aligned}
\end{equation}
By combining result above, we have
\begin{equation}
    \begin{aligned}
    \|\mathbf{G}^{(\ell)} - \tilde{\mathbf{G}}^{(\ell)}\|_2 
    &\leq \alpha \beta   \frac{2}{\gamma} \sqrt{d} C_1^L C_2 \Big( 2 \beta + (\alpha \beta L + 1 )C_1^LC_2 \Big). 
    \end{aligned}
\end{equation}
Similarly, we can upper bound the difference between gradient for the weight of the binary classifier as
\begin{equation}
    \|\mathbf{G}^{(L+1)} - \tilde{\mathbf{G}}^{(L+1)}\|_2 \leq \frac{2}{\gamma} \Delta h_{\max}^{(L)} \leq \beta \frac{2}{\gamma} C_1^L C_2 \|\Delta \bm{\theta}\|_2.
\end{equation}
which is upper bound by the right hand side of the previous equation.
Therefore, we have
\begin{equation}
    \sum_{\ell=1}^{L+1} \|\mathbf{G}^{(\ell)} - \tilde{\mathbf{G}}^{(\ell)}\|_2 \leq \alpha \beta   \frac{2}{\gamma} (L+1) \sqrt{d} C_1^L C_2 \Big( 2 \beta + (\alpha \beta L + 1 )C_1^LC_2 \Big). 
\end{equation}

\section{Generalization bound for DGCN}\label{supp:proof_dgcn}

In the following, we provide proof of the generalization bound of DGCN in Theorem~\ref{thm:dgcn_result}.
Recall that the update rule of DGCN is defined as
\begin{equation}
    \mathbf{Z} = \sum_{\ell=1}^L \alpha_\ell \mathbf{H}^{(\ell)},~ \mathbf{H}^{(\ell)} = \mathbf{L}^\ell \mathbf{X} (\beta_\ell \mathbf{W}^{(\ell)} + (1-\beta_\ell) \mathbf{I} ).
\end{equation}

The training of DGCN is an empirical risk minimization with respect to a set of parameters $\bm{\theta} = \{ \mathbf{W}^{(1)}, \ldots, \mathbf{W}^{(L)}, \mathbf{v} \}$, i.e.,
\begin{equation}
    \mathcal{L}(\bm{\theta}) = \frac{1}{m} \sum_{i=1}^m \Phi_\gamma (-p(f(\mathbf{z}_i), y_i)),~
    f(\mathbf{z}_i) = \tilde{\sigma}(\mathbf{v}^\top \mathbf{z}_i),
\end{equation}
where $\mathbf{h}_i^{(L)}$ is the node representation of the $i$th node at the final layer,  $f(\mathbf{h}_i^{(L)})$ is the predicted label for the $i$th node,
$\tilde{\sigma}(x) = \frac{1}{\exp(-x) + 1}$ is the sigmoid function, and loss function $\Phi_\gamma (-p(z, y))$ is $\frac{2}{\gamma}$-Lipschitz continuous with respect to its first input $z$. For simplification, let $\text{Loss}(z,y)$ denote $\Phi_\gamma (-p(z, y))$ in the proof.

For analysis purpose, we suppose $\alpha_\ell$ and $\beta_\ell$ are hyper-parameters that are pre-selected before training and fixed during the training phase. However, in practice, these two parameters are tuned during the training phase. 

To establish the generalization of DGCN as stated in Theorem~\ref{thm:uniform_stability_base}, we utilize the result on transductive uniform stability from~\cite{el2006stable} (Theorem~\ref{thm:uniform_stability_base_supp} in Appendix~\ref{supp:proof_gcn}).
Then, in Lemma~\ref{lemma:uniform_stable_dgcn}, we derive the uniform stability constant for DGCN, i.e., $\epsilon_\text{DGCN}$. 

\begin{lemma}\label{lemma:uniform_stable_dgcn} 
The uniform stability constant for DGCN is computed as $\epsilon_\texttt{DGCN} = \frac{2 \eta  \rho_f G_f}{m} \sum_{t=1}^T (1+\eta L_F)^{t-1} $ where
\begin{equation}
    \begin{aligned}
    \rho_f &= (\sqrt{d})^L B_x,~ G_f = \frac{2}{\gamma} (L+1) (\sqrt{d})^L B_x, \\
    L_f &= \frac{2}{\gamma} (L+1)  (\sqrt{d})^L B_x \Big( (\sqrt{d})^L B_x \max\{1, B_w\} + 1)  \Big).
    \end{aligned}
\end{equation}
\end{lemma}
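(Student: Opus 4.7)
The proof plan follows the template established for GCN, ResGCN, APPNP, and GCNII in Appendices~\ref{supp:proof_gcn}--\ref{supp:proof_gcnii}: I will compute the three constants $\rho_f$, $G_f$, $L_f$ governing uniform stability via Lemma~\ref{lemma:useful_lemma_for_main_theorems}, then plug into Theorem~\ref{thm:uniform_stability_base_supp}. The key simplification relative to GCN/GCNII is that DGCN has no activation and no cascaded multiplication of weights, so the propagation $\mathbf{L}^\ell \mathbf{X}$ can be treated as a fixed input to each branch.

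\textbf{Step 1 (forward bounds).} I would first derive layer-wise estimates $h_{\max}^{(\ell)} = \max_i \|[\mathbf{H}^{(\ell)}]_{i,:}\|_2$ and $\Delta h_{\max}^{(\ell)}$. Applying Lemma~\ref{lemma:laplacian_1_norm_bound} iteratively gives $\max_i \|[\mathbf{L}^\ell \mathbf{X}]_{i,:}\|_2 \leq (\sqrt{d})^\ell B_x$, so that $h_{\max}^{(\ell)} \leq (\sqrt{d})^\ell B_x (\beta_\ell B_w + (1-\beta_\ell))$ and $\Delta h_{\max}^{(\ell)} \leq \beta_\ell (\sqrt{d})^\ell B_x \|\Delta \mathbf{W}^{(\ell)}\|_2$. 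Then for the aggregated output $\mathbf{z}_i = \sum_{\ell=1}^L \alpha_\ell \mathbf{h}_i^{(\ell)}$, using $\alpha_\ell,\beta_\ell \in [0,1]$ and bounding the worst (deepest) branch, I obtain $z_{\max} \leq (\sqrt{d})^L B_x \max\{1,B_w\}$ and $\Delta z_{\max} \leq (\sqrt{d})^L B_x \sum_{\ell=1}^L \|\Delta \mathbf{W}^{(\ell)}\|_2$.

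\textbf{Step 2 ($\rho_f$ and $G_f$).} Applying Lemma~\ref{lemma:universal_boound_on_model_output} (which is stated for a generic output vector and therefore applies verbatim to $\mathbf{z}_i$) yields $\max_i |f(\mathbf{z}_i) - \tilde f(\tilde{\mathbf{z}}_i)| \leq \Delta z_{\max} + z_{\max}\|\Delta \mathbf{v}\|_2 \leq \mathcal{O}((\sqrt{d})^L B_x)\|\Delta \bm{\theta}\|_2$, giving $\rho_f = (\sqrt{d})^L B_x$. For the gradient, since there is no activation, backpropagation is immediate: $\mathbf{G}^{(\ell)} = \frac{\alpha_\ell \beta_\ell}{m} (\mathbf{L}^\ell \mathbf{X})^\top \mathbf{D}$ and $\mathbf{G}^{(L+1)} = \frac{1}{m} \mathbf{Z}^\top \mathbf{d}$, where the classifier gradient is controlled by $\|\mathrm{Loss}'\|\leq 2/\gamma$, $\|\tilde\sigma'\|\leq 1$ and $\|\mathbf{v}\|\leq 1$. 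This gives $\|\mathbf{G}^{(\ell)}\|_2 \leq \tfrac{2}{\gamma}(\sqrt{d})^L B_x$ for every $\ell$, and summing the $L+1$ terms yields $G_f = \tfrac{2}{\gamma}(L+1)(\sqrt{d})^L B_x$.

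\textbf{Step 3 ($L_f$) and obstacle.} The main step is bounding $\sum_\ell \|\mathbf{G}^{(\ell)} - \tilde{\mathbf{G}}^{(\ell)}\|_2$. I would decompose $\mathbf{G}^{(\ell)} - \tilde{\mathbf{G}}^{(\ell)} = \frac{\alpha_\ell\beta_\ell}{m}(\mathbf{L}^\ell \mathbf{X})^\top (\mathbf{D} - \tilde{\mathbf{D}})$, with $\mathbf{D}_i = \mathrm{Loss}'(f(\mathbf{z}_i))\,\tilde\sigma'(\mathbf{v}^\top \mathbf{z}_i)\,\mathbf{v}$. Adding and subtracting to isolate the contributions of $\Delta \mathbf{v}$, of the change in the scalar $\mathrm{Loss}'(f)\tilde\sigma'(\mathbf{v}^\top\mathbf{z})$ (Lipschitz by $2/\gamma$ via Lemma~\ref{lemma:universal_boound_on_model_output}), and of $\Delta \mathbf{z}$ already bounded in Step~1, and combining with $\|[\mathbf{L}^\ell\mathbf{X}]_{i,:}\|_2 \leq (\sqrt{d})^\ell B_x \leq (\sqrt{d})^L B_x$, produces the per-layer bound $\mathcal{O}\!\big((\sqrt{d})^L B_x ((\sqrt{d})^L B_x \max\{1,B_w\} + 1)\big)\|\Delta\bm{\theta}\|_2$; summing over $L+1$ layers gives the claimed $L_f$. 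The substantive point of the whole argument, and the one I would be careful about, is that $\alpha_\ell,\beta_\ell \in [0,1]$ with $\sum_\ell \alpha_\ell = 1$ lets the layer-weighted sums of the branch bounds collapse into a single $(\sqrt{d})^L B_x$ factor instead of the $(\sqrt{d}B_w)^L$-type exponential blow-up of GCN, and $\max\{1,B_w\}$ (rather than $B_w^L$) in the smoothness. Once $\rho_f, G_f, L_f$ are in hand, Lemma~\ref{lemma:useful_lemma_for_main_theorems} delivers $\epsilon_\text{DGCN}$ in the stated form and Theorem~\ref{thm:uniform_stability_base_supp} gives the generalization bound.
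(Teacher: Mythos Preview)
Your proposal is correct and follows essentially the same route as the paper: the paper likewise establishes $z_{\max}$ and $\Delta z_{\max}$ bounds (its Lemmas~\ref{lemma:dgcn_h_max_bound_1} and~\ref{lemma:dgcn_delta_h_max_bound_2}) via Lemma~\ref{lemma:laplacian_1_norm_bound} and the constraint $\sum_\ell \alpha_\ell = 1$, then bounds $\|\mathbf{G}^{(\ell)}\|_2$ and $\|\Delta\mathbf{G}^{(\ell)}\|_2$ (Lemma~\ref{lemma:dgcn_G_bound_4}) exactly by the decomposition you describe, and finishes with Lemma~\ref{lemma:useful_lemma_for_main_theorems}. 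One small clarification: in your Step~1 the phrase ``bounding the worst (deepest) branch'' is not by itself enough for $z_{\max}$---you must also use $\sum_\ell \alpha_\ell = 1$ (which you invoke in Step~3) to collapse the convex combination to a single $(\sqrt{d})^L B_x\max\{1,B_w\}$ term rather than an $L$-fold sum.
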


By plugging the result in Lemma~\ref{lemma:uniform_stable_dgcn} back to Theorem~\ref{thm:uniform_stability_base_supp}, we establish the generalization bound for DGCN.

The key idea of the proof is to decompose the change of the network output into two terms (in Lemma~\ref{lemma:universal_boound_on_model_output}) which depend on 
\begin{itemize}
    \item (Lemma~\ref{lemma:dgcn_h_max_bound_1}) The maximum change of node representation $\Delta z_{\max}^{(\ell)} = \max_i \| [\mathbf{Z} - \tilde{\mathbf{Z}}]_{i,:}\|_2$, 
    \item (Lemma~\ref{lemma:dgcn_delta_h_max_bound_2}) The maximum node representation $z_{\max}^{(\ell)} = \max_i \|[ \mathbf{Z}]_{i,:}\|_2$. 
\end{itemize}
% where $\Delta h_{\max}^{(\ell)}$ reflect the Lipschitz continuity of GCN model.

\begin{lemma} [Upper bound of $z_{\max}$ for \textbf{DGCN}] \label{lemma:dgcn_h_max_bound_1}
Let suppose Assumption~\ref{assumption:norm_bound} hold. Then, the maximum node embeddings for any node at the $\ell$th layer is bounded by
\begin{equation}
    z_{\max} \leq (\sqrt{d})^L B_x \max\{1, B_w\}.
\end{equation}
\end{lemma}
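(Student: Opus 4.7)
The plan is to bound the row norm of the full output $\mathbf{Z}$ by summing row-norm bounds on each layer-wise term $\alpha_\ell \mathbf{H}^{(\ell)}$, and then to use the fact that $\sum_{\ell=1}^L \alpha_\ell = 1$ (as stated in the discussion of DGCN) together with $\sqrt{d}\geq 1$ so the worst layer ($\ell=L$) dominates the convex combination. Concretely, I would write
\begin{equation*}
z_{\max} \;=\; \max_i \Big\| \Big[\sum_{\ell=1}^L \alpha_\ell \mathbf{H}^{(\ell)}\Big]_{i,:}\Big\|_2 \;\leq\; \sum_{\ell=1}^L \alpha_\ell \,\max_i \|[\mathbf{H}^{(\ell)}]_{i,:}\|_2,
\end{equation*}
by the triangle inequality, and then focus on bounding $\max_i \|[\mathbf{H}^{(\ell)}]_{i,:}\|_2$ for each fixed $\ell$.

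The first step is to bound the effect of the propagation operator. Using the same argument as in Lemma~\ref{lemma:gcn_h_max_bound_1} (appeal to Lemma~\ref{lemma:laplacian_1_norm_bound}, which gives $\max_i \sum_j |L_{i,j}| \le \sqrt{d}$), one inductively obtains $\max_i \|[\mathbf{L}^\ell \mathbf{X}]_{i,:}\|_2 \leq (\sqrt{d})^\ell B_x$ under Assumption~\ref{assumption:norm_bound}. The second step is to bound the mixed weight matrix: by the triangle inequality,
\begin{equation*}
\|\beta_\ell \mathbf{W}^{(\ell)} + (1-\beta_\ell)\mathbf{I}\|_2 \;\leq\; \beta_\ell B_w + (1-\beta_\ell) \;\leq\; \max\{1,B_w\},
\end{equation*}
since $\beta_\ell \in [0,1]$ and the right-hand side is a convex combination. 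Combining these two gives $\max_i \|[\mathbf{H}^{(\ell)}]_{i,:}\|_2 \leq (\sqrt{d})^\ell B_x \max\{1,B_w\}$.

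Finally, I would aggregate across layers. Because $\alpha_\ell \geq 0$ with $\sum_{\ell=1}^L \alpha_\ell = 1$, and because $\ell \mapsto (\sqrt{d})^\ell$ is non-decreasing in $\ell$ (assuming the graph contains at least one non-isolated node so $d\ge 1$), we have
\begin{equation*}
\sum_{\ell=1}^L \alpha_\ell (\sqrt{d})^\ell B_x \max\{1,B_w\} \;\leq\; (\sqrt{d})^L B_x \max\{1,B_w\} \sum_{\ell=1}^L \alpha_\ell \;=\; (\sqrt{d})^L B_x \max\{1,B_w\},
\end{equation*}
yielding the desired bound. The argument is entirely routine; the only point that deserves care is keeping the two $[0,1]$-valued hyperparameters $\alpha_\ell$ and $\beta_\ell$ straight, since $\beta_\ell$ enters through the spectral norm of the hybrid weight matrix while $\alpha_\ell$ enters as mixing weights across layers. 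There is no genuine obstacle; this lemma functions as a clean base estimate that will be reused in the subsequent Lipschitz, gradient, and smoothness bounds needed to establish $\epsilon_\texttt{DGCN}$ in Theorem~\ref{thm:dgcn_result}.
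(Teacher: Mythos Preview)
Your proposal is correct and follows essentially the same approach as the paper's proof: triangle inequality across the $\alpha_\ell$-weighted sum, the row-wise Laplacian bound from Lemma~\ref{lemma:laplacian_1_norm_bound} applied $\ell$ times, the convex-combination bound $\beta_\ell B_w + (1-\beta_\ell)\le \max\{1,B_w\}$, and finally $\sum_\ell \alpha_\ell = 1$ with $(\sqrt{d})^\ell \le (\sqrt{d})^L$. Your explicit remark that $d\ge 1$ (guaranteed here since the graph is self-connected) is a nice touch the paper leaves implicit.
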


\begin{lemma} [Upper bound of $\Delta z_{\max}$ for \textbf{DGCN}] \label{lemma:dgcn_delta_h_max_bound_2}
Let suppose Assumption~\ref{assumption:norm_bound} hold. 
Then, the maximum change between the node embeddings  on two different set of weight parameters for any node at the $\ell$th layer is bounded by
\begin{equation}
    \begin{aligned}
    % \Delta h_{\max}^{(\ell)} &\leq (\sqrt{d})^\ell B_x \|\Delta \mathbf{W}^{(\ell)}\|_2,~\\
    \Delta z_{\max} &\leq (\sqrt{d})^L B_x (\|\Delta \mathbf{W}^{(1)}\|_2 + \ldots + \|\Delta \mathbf{W}^{(L)}\|_2),
    \end{aligned}
\end{equation}
where $\Delta \mathbf{W}^{(\ell)} = \mathbf{W}^{(\ell)} - \tilde{\mathbf{W}}^{(\ell)}$.
\end{lemma}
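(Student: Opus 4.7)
The target is the row-wise bound $\Delta z_{\max} \leq (\sqrt{d})^L B_x \sum_{\ell=1}^L \|\Delta \mathbf{W}^{(\ell)}\|_2$, where $\Delta z_{\max} = \max_i \|[\mathbf{Z}-\tilde{\mathbf{Z}}]_{i,:}\|_2$. The plan is to exploit the crucial structural feature that distinguishes DGCN from the earlier architectures: the layerwise functions $f^{(\ell)}(\mathbf{X}) = \mathbf{L}^\ell \mathbf{X} (\beta_\ell \mathbf{W}^{(\ell)} + (1-\beta_\ell)\mathbf{I})$ are \emph{linear} in $\mathbf{X}$ (no ReLU) and each $\mathbf{W}^{(\ell)}$ appears only in the $\ell$th summand. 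This collapses the telescoping argument that was needed for GCN/ResGCN/GCNII into a one-step decomposition.

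First I would write $\mathbf{Z}-\tilde{\mathbf{Z}} = \sum_{\ell=1}^L \alpha_\ell (\mathbf{H}^{(\ell)} - \tilde{\mathbf{H}}^{(\ell)})$ and observe that, because $\alpha_\ell,\beta_\ell$ are fixed and $\mathbf{X}$ is unchanged, $\mathbf{H}^{(\ell)}-\tilde{\mathbf{H}}^{(\ell)} = \beta_\ell\, \mathbf{L}^\ell \mathbf{X}\, \Delta \mathbf{W}^{(\ell)}$. Taking the $i$th row and applying $\|\mathbf{A}\mathbf{B}\|_2$-type submultiplicativity rowwise gives
\begin{equation*}
\Delta z_{\max} \leq \sum_{\ell=1}^L \alpha_\ell \beta_\ell \, \Big(\max_i \|[\mathbf{L}^\ell \mathbf{X}]_{i,:}\|_2\Big) \|\Delta \mathbf{W}^{(\ell)}\|_2.
\end{equation*}

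The next step is the only nontrivial one: bounding $\max_i \|[\mathbf{L}^\ell \mathbf{X}]_{i,:}\|_2$. I would prove by induction on $\ell$ that this quantity is at most $(\sqrt{d})^\ell B_x$, using the same row-sum inequality that was invoked throughout the GCN/ResGCN proofs (the Laplacian row-norm lemma cited as Lemma~\ref{lemma:laplacian_1_norm_bound}): for any matrix $\mathbf{A}$, $\max_i\|[\mathbf{L}\mathbf{A}]_{i,:}\|_2 = \max_i\|\sum_j L_{i,j}\mathbf{a}_j\|_2 \leq \sqrt{d}\max_j\|\mathbf{a}_j\|_2$. The base case $\ell=0$ uses $\|\mathbf{x}_i\|_2\leq B_x$ from Assumption~\ref{assumption:norm_bound}, and each subsequent application of $\mathbf{L}$ inflates the row norm by at most $\sqrt{d}$.

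Finally, I would combine the two bounds, use $\alpha_\ell,\beta_\ell \in [0,1]$ to drop those prefactors, and upper-bound $(\sqrt{d})^\ell \leq (\sqrt{d})^L$ for every $\ell\leq L$ (noting that real-world graphs have $d\geq 1$ since every node is self-connected, so $\sqrt{d}\geq 1$ and this is a valid monotone bound). This yields exactly $\Delta z_{\max} \leq (\sqrt{d})^L B_x \sum_{\ell=1}^L \|\Delta\mathbf{W}^{(\ell)}\|_2$. No real obstacle is expected here; the mild subtlety is justifying the uniform upper envelope $(\sqrt{d})^\ell\leq (\sqrt{d})^L$, which is a design choice to keep the final bound layer-independent and directly comparable to the GCN/ResGCN entries in Table~\ref{table:theorem_results_summary}.
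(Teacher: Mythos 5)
Your proposal is correct and follows essentially the same approach as the paper's proof: decompose $\mathbf{Z}-\tilde{\mathbf{Z}}$ layerwise using linearity, apply the row-norm Laplacian bound of Lemma~\ref{lemma:laplacian_1_norm_bound} to obtain $\max_i\|[\mathbf{L}^\ell\mathbf{X}]_{i,:}\|_2\le(\sqrt{d})^\ell B_x$, and then drop the $\alpha_\ell\beta_\ell$ factors and replace $(\sqrt{d})^\ell$ by $(\sqrt{d})^L$. Your explicit note that $d\ge 1$ (due to self-connections) is needed for the final monotone envelope is a small clarification the paper omits, but otherwise the two proofs coincide.
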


Then, in Lemma~\ref{lemma:dgcn_G_bound_4}, we decompose the change of the model parameters into two terms which depend on
\begin{itemize}
    \item The change of gradient with respect to the $\ell$th layer weight matrix $\|\Delta \mathbf{G}^{(\ell)}\|_2$, 
    \item The gradient with respect to the $\ell$th layer weight matrix $\|\mathbf{G}^{(\ell)} \|_2$, 
\end{itemize}
where $\|\Delta \mathbf{G}^{(\ell)}\|_2$ reflect the smoothness of APPNP model and $\|\mathbf{G}^{(\ell)} \|_2$ correspond the upper bound of gradient.

\begin{lemma} [Upper bound of $\| \mathbf{G}^{(\ell)} \|_2, \|\Delta \mathbf{G}^{(\ell)} \|_2$  for \textbf{DGCN}] \label{lemma:dgcn_G_bound_4}
Let suppose Assumption~\ref{assumption:norm_bound} hold.
Then, the gradient and the maximum change between gradients on two different set of weight parameters are bounded by
\begin{equation}
    \begin{aligned}
    \sum_{\ell=1}^{L+1} \| \mathbf{G}^{(\ell)} \|_2 
    &\leq \frac{2}{\gamma} (L+1) (\sqrt{d})^L B_x, \\
    \sum_{\ell=1}^{L+1} \|\Delta \mathbf{G}^{(\ell)} \|_2 
    &\leq \frac{2}{\gamma} (L+1) (\sqrt{d})^L B_x \Big( (\sqrt{d})^L B_x \max\{1, B_w\} + 1)  \Big) \| \Delta \bm{\theta} \|_2, \\
    \end{aligned}
\end{equation}
where $\|\Delta \bm{\theta}\|_2 = \|\mathbf{v} - \tilde{\mathbf{v}}\|_2 + \sum_{\ell-1}^L \| \mathbf{W}^{(\ell)} - \tilde{\mathbf{W}}^{(\ell)}\|_2$ denotes the change of two set of parameters.
\end{lemma}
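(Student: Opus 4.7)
The proof plan exploits the distinguishing structural feature of DGCN: because $\mathbf{Z}=\sum_{\ell=1}^{L}\alpha_\ell \mathbf{L}^{\ell}\mathbf{X}(\beta_\ell \mathbf{W}^{(\ell)}+(1-\beta_\ell)\mathbf{I})$ is \emph{affine} in each $\mathbf{W}^{(\ell)}$, the Jacobian $\partial \mathbf{Z}/\partial \mathbf{W}^{(\ell)} = \alpha_\ell\beta_\ell\mathbf{L}^{\ell}\mathbf{X}$ is a constant matrix that does not depend on any trainable parameter. Consequently, when comparing gradients under two parameter sets $\bm{\theta}$ and $\tilde{\bm{\theta}}$, the only source of change for $\mathbf{G}^{(\ell)}$ is the gradient of the loss with respect to $\mathbf{z}_i$ (not the Jacobian itself), which collapses most of the smoothness analysis to a single application of the universal output bound (Lemma~\ref{lemma:universal_boound_on_model_output}, re-read with $\mathbf{h}_i^{(L)}$ replaced by $\mathbf{z}_i$) together with Lemmas~\ref{lemma:dgcn_h_max_bound_1} and~\ref{lemma:dgcn_delta_h_max_bound_2}.

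First I would write out the closed forms $\mathbf{G}^{(\ell)}=\frac{1}{m}\sum_{i}[\partial \mathrm{Loss}(f(\mathbf{z}_i),y_i)/\partial \mathbf{z}_i]\cdot\alpha_\ell\beta_\ell[\mathbf{L}^\ell\mathbf{X}]_{i,:}$ for $\ell\in[L]$ and $\mathbf{G}^{(L+1)}=\frac{1}{m}\sum_i[\partial\mathrm{Loss}/\partial f]\tilde{\sigma}'(\mathbf{v}^\top\mathbf{z}_i)\mathbf{z}_i$. Applying Lemma~\ref{lemma:laplacian_1_norm_bound} iteratively gives $\max_i \|[\mathbf{L}^\ell\mathbf{X}]_{i,:}\|_2\le(\sqrt{d})^\ell B_x\le(\sqrt{d})^L B_x$. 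Combined with the $\frac{2}{\gamma}$-Lipschitz property of the loss and $\alpha_\ell,\beta_\ell\in[0,1]$, this yields $\|\mathbf{G}^{(\ell)}\|_2\le\frac{2}{\gamma}(\sqrt{d})^L B_x$ for $\ell\in[L]$. For the classifier layer, $\|\mathbf{G}^{(L+1)}\|_2\le\frac{2}{\gamma}z_{\max}\le\frac{2}{\gamma}(\sqrt{d})^L B_x\max\{1,B_w\}$ via Lemma~\ref{lemma:dgcn_h_max_bound_1}. Summing over $\ell=1,\ldots,L+1$ produces the first claim (absorbing the $\max\{1,B_w\}$ factor into the stated bound, which is tight up to this constant).

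Next, for the smoothness term, I would use the identity $\mathbf{G}^{(\ell)}-\tilde{\mathbf{G}}^{(\ell)}=\frac{\alpha_\ell\beta_\ell}{m}\sum_i\bigl([\partial \mathrm{Loss}/\partial \mathbf{z}_i]-[\partial \widetilde{\mathrm{Loss}}/\partial \tilde{\mathbf{z}}_i]\bigr)[\mathbf{L}^\ell\mathbf{X}]_{i,:}$, where the Jacobian factor $\alpha_\ell\beta_\ell\mathbf{L}^\ell\mathbf{X}$ is identical on both sides and hence does not contribute to the difference. Pulling out $\|[\mathbf{L}^\ell\mathbf{X}]_{i,:}\|_2\le(\sqrt{d})^L B_x$ and invoking Lemma~\ref{lemma:universal_boound_on_model_output} (in the $\mathbf{z}_i$ form) gives
\begin{equation*}
\|\mathbf{G}^{(\ell)}-\tilde{\mathbf{G}}^{(\ell)}\|_2
\le \tfrac{2}{\gamma}(\sqrt{d})^L B_x\bigl(\Delta z_{\max}+(z_{\max}+1)\|\Delta\mathbf{v}\|_2\bigr).
\end{equation*}
Plugging $\Delta z_{\max}\le(\sqrt{d})^L B_x\sum_{\ell}\|\Delta\mathbf{W}^{(\ell)}\|_2$ (Lemma~\ref{lemma:dgcn_delta_h_max_bound_2}) and $z_{\max}\le(\sqrt{d})^L B_x\max\{1,B_w\}$ (Lemma~\ref{lemma:dgcn_h_max_bound_1}) yields the per-layer bound $\frac{2}{\gamma}(\sqrt{d})^L B_x\bigl((\sqrt{d})^L B_x\max\{1,B_w\}+1\bigr)\|\Delta\bm{\theta}\|_2$. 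For $\mathbf{G}^{(L+1)}-\tilde{\mathbf{G}}^{(L+1)}$, the analogous decomposition bounds it by $\frac{2}{\gamma}\Delta z_{\max}$, which is dominated by the per-layer bound above. Summing over $\ell=1,\ldots,L+1$ gives the second claim.

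The main obstacle is not analytic depth but rather ensuring the Jacobian constancy is used at the right step: if one mechanically copies the three-term decomposition (A),(B),(C) from the GCN proof of Lemma~\ref{lemma:gcn_G_bound_4}, the middle term $h_{\max}^{(\ell-1)}\Delta d_{\max}^{(\ell+1)}$ vanishes for DGCN because the backward gradient factor is a fixed constant $\alpha_\ell\beta_\ell\mathbf{L}^\ell\mathbf{X}$ rather than a product of trainable weights. Being explicit about which terms drop to zero is what produces the crucial $L$-independent base $(\sqrt{d})^L B_x$ (with no $B_w^L$) that distinguishes DGCN's bound from those of GCN, ResGCN, and GCNII in Table~\ref{table:theorem_results_summary}.
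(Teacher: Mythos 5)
Your proposal matches the paper's proof step for step: both exploit the constancy of the Jacobian $\partial\mathbf{Z}/\partial\mathbf{W}^{(\ell)}=\alpha_\ell\beta_\ell\mathbf{L}^\ell\mathbf{X}$ to reduce the smoothness analysis to a single application of Lemma~\ref{lemma:universal_boound_on_model_output} together with Lemmas~\ref{lemma:dgcn_h_max_bound_1} and \ref{lemma:dgcn_delta_h_max_bound_2}, then sum the $(L+1)$ per-layer contributions. Your caveat about absorbing $\max\{1,B_w\}$ on $\|\mathbf{G}^{(L+1)}\|_2$ is also warranted: the paper's displayed bound $\|\mathbf{G}^{(L+1)}\|_2\le\frac{2}{\gamma}(\sqrt d)^L B_x$ silently drops that factor from Lemma~\ref{lemma:dgcn_h_max_bound_1} (and mis-cites Lemma~\ref{lemma:dgcn_delta_h_max_bound_2}), so the first stated bound is only tight up to a $\max\{1,B_w\}$ constant, exactly as you noted.
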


Equipped with above intermediate results, we now proceed to prove  Lemma~\ref{lemma:uniform_stable_dgcn}.

\begin{proof}
Recall that our goal is to explore the impact of different GCN structures on the uniform stability constant $\epsilon_\text{DGCN}$, which is a function of $\rho_f$, $G_f$, and $L_f$. 
By Lemma~\ref{lemma:dgcn_delta_h_max_bound_2}, we know that the function $f$ is $\rho_f$-Lipschitz continuous, with $\rho_f = (\sqrt{d})^L B_x$.
By Lemma~\ref{lemma:dgcn_G_bound_4}, we know the function $f$ is $L_f$-smoothness, and the gradient of each weight matrices is bounded by $G_f$, with 
\begin{equation}
    G_f \leq \frac{2}{\gamma} (L+1) (\sqrt{d})^L B_x,~
    L_f \leq \frac{2}{\gamma} (L+1) (\sqrt{d})^L B_x \Big( (\sqrt{d})^L B_x \max\{1, B_w\} + 1)  \Big).
\end{equation}
By plugging $\epsilon_\text{DGCN}$ into Theorem~\ref{thm:uniform_stability_base}, we obtain the generalization bound of DGCN.

\end{proof}

\subsection{Proof of Lemma~\ref{lemma:dgcn_h_max_bound_1}}

By the definition of $z_{\max}$, we have
\begin{equation}
    \begin{aligned}
    z_{\max} &= \max_i \left\| \left[ \sum_{\ell=1}^L \alpha_\ell \mathbf{L}^\ell \mathbf{X} \big( \beta_\ell \mathbf{W}^{(\ell)} + (1-\beta_\ell) \mathbf{I} \big) \right]_{i,:} \right\|_2 \\
    &\leq \sum_{\ell=1}^L \alpha_\ell \max_i \| [\mathbf{L}^\ell \mathbf{X} \big( \beta_\ell \mathbf{W}^{(\ell)} + (1-\beta_\ell) \mathbf{I} \big) ]_{i,:}\|_2 \\
    &\underset{(a)}{\leq} \sum_{\ell=1}^L \alpha_\ell (\sqrt{d})^\ell B_x \big( \beta_\ell B_w + (1-\beta_\ell)\big) \\
    &\underset{(b)}{\leq}  (\sqrt{d})^L B_x \max\{1, B_w\},
    \end{aligned}
\end{equation}
where inequality $(a)$ follows from Lemma~\ref{lemma:laplacian_1_norm_bound} and inequality $(b)$ is due to the fact that $\sum_{\ell=1}^L \alpha_\ell = 1$ and $\alpha_\ell \in [0,1]$.

\subsection{Proof of Lemma~\ref{lemma:dgcn_delta_h_max_bound_2}}

By the definition of $\Delta z_{\max}$, we have

\begin{equation}
    \begin{aligned}
    \Delta z_{\max} &= \max_i \| [\mathbf{Z} - \tilde{\mathbf{Z}}]_{i,:} \|_2 \\
    &= \max_i \Big\| \Big[\sum_{\ell=1}^L \alpha_\ell \mathbf{L}^\ell \mathbf{X} \big( \beta_\ell \mathbf{W}^{(\ell)} + (1-\beta_\ell) \mathbf{I} \big) - \sum_{\ell=1}^L \alpha_\ell \mathbf{L}^\ell \mathbf{X} \big( \beta_\ell \tilde{\mathbf{W}}^{(\ell)} + (1-\beta_\ell) \mathbf{I} \big) \Big]_{i,:} \Big\|_2 \\
    &\leq \sum_{\ell=1}^L \alpha_\ell \max_i \Big\| \Big[ \mathbf{L}^\ell \mathbf{X} \big( \beta_\ell \mathbf{W}^{(\ell)} + (1-\beta_\ell) \mathbf{I}  \big) - \mathbf{L}^\ell \mathbf{X} \big( \beta_\ell \tilde{\mathbf{W}}^{(\ell)} + (1-\beta_\ell) \mathbf{I}  \big) \Big]_{i,:} \Big\|_2 \\
    &= \sum_{\ell=1}^L \alpha_\ell \beta_\ell \max_i \Big\| \Big[ \mathbf{L}^\ell \mathbf{X} (\mathbf{W}^{(\ell)} - \tilde{\mathbf{W}}^{(\ell)} )  \Big]_{i,:} \Big\|_2 \\
    &\underset{(a)}{\leq} \sum_{\ell=1}^L \alpha_\ell \beta_\ell (\sqrt{d})^\ell B_x \|\Delta \mathbf{W}^{(\ell)}\|_2 \\
    &\leq (\sqrt{d})^L B_x \Big( \|\Delta \mathbf{W}^{(1)}\|_2 + \ldots + \|\Delta \mathbf{W}^{(L)}\|_2\Big), 
    \end{aligned}
\end{equation}
where inequality $(a)$ follows from Lemma~\ref{lemma:laplacian_1_norm_bound} and inequality $(b)$ is due to the fact that $\sum_{\ell=1}^L \alpha_\ell = 1$ and $\alpha_\ell \in [0,1]$.

\clearpage
\subsection{Proof of Lemma~\ref{lemma:dgcn_G_bound_4}}
Recall that $\mathbf{G}^{(\ell)}$ is defined as the gradient of loss with respect to $\mathbf{W}^{(\ell)}$, which can be formulated as
\begin{equation}
    \begin{aligned}
    \| \mathbf{G}^{(\ell)} \|_2 &= \Big\| \frac{1}{m}\sum_{i=1}^m \frac{\partial \text{Loss}( f(\mathbf{z}_i), y_i)}{\partial \mathbf{z}_i} \frac{\partial \mathbf{z}_i}{\partial \mathbf{h}_i^{(\ell)}} \frac{\partial \mathbf{h}_i^{(\ell)}}{\partial \mathbf{W}^{(\ell)}} \Big\|_2 \\
    &\leq \max_i \Big\| \frac{\partial \text{Loss}(f(\mathbf{z}_i), y_i)}{\partial \mathbf{z}_i} \frac{\partial \mathbf{z}_i}{\partial \mathbf{h}_i^{(\ell)}} \frac{\partial \mathbf{h}_i^{(\ell)}}{\partial \mathbf{W}^{(\ell)}} \Big\|_2 \\
    &\leq  \alpha_\ell \beta_\ell (\sqrt{d})^\ell B_x \max_i \Big\| \frac{\partial \text{Loss}(f(\mathbf{z}_i), y_i)}{\partial \mathbf{z}_i} \Big\|_2 \\
    &\underset{(a)}{\leq}  \frac{2}{\gamma} \alpha_\ell \beta_\ell (\sqrt{d})^\ell B_x \\
    &\leq \frac{2}{\gamma}(\sqrt{d})^L B_x,
    \end{aligned}
\end{equation}
where $(a)$ is due to the fact that loss function is $\frac{2}{\gamma}$-Lipschitz conitnous.

Besides, recall that $\mathbf{G}^{(L+1)}$ denotes the gradient with respect to the weight of binary classifier. Therefore, we have
\begin{equation}
    \|\mathbf{G}^{(L+1)}\|_2 \leq \frac{2}{\gamma} h_{\max}^{(L)} \underset{(a)}{\leq} \frac{2}{\gamma}(\sqrt{d})^L B_x,
\end{equation}
where inequality $(a)$ follows from Lemma~\ref{lemma:dgcn_delta_h_max_bound_2}.

Therefore, combining the results above, we have
\begin{equation}
    \sum_{\ell=1}^{L+1} \| \mathbf{G}^{(\ell)} \|_2 \leq \frac{2}{\gamma} (L+1) (\sqrt{d})^L B_x.
\end{equation}

Furthermore, by the definition of $\|\mathbf{G}^{(\ell)} - \tilde{\mathbf{G}}^{(\ell)}\|_2 $, we have
\begin{equation}
    \begin{aligned}
    \|\mathbf{G}^{(\ell)} - \tilde{\mathbf{G}}^{(\ell)}\|_2 &\leq \frac{2}{\gamma} \max_i \| \frac{\partial f(\mathbf{z}_i)}{\partial \mathbf{z}_i } \frac{\partial \mathbf{z}_i }{\partial \mathbf{W}}-  \frac{\partial f(\tilde{\mathbf{z}}_i ) }{\partial \tilde{\mathbf{z}}_i } \frac{\partial \tilde{\mathbf{z}}_i}{\partial \tilde{\mathbf{W}}}\|_2 \\
    &=  \frac{2}{\gamma} \alpha_\ell \beta_\ell (\sqrt{d})^\ell B_x \left\| \frac{\partial f(\mathbf{z}_i)}{\partial \mathbf{z}_i } -  \frac{\partial f(\tilde{\mathbf{z}}_i ) }{\partial \tilde{\mathbf{z}}_i } \right\|_2 \\
    &\leq  \frac{2}{\gamma}  \alpha_\ell \beta_\ell B_x (\sqrt{d})^\ell \Big(\Delta z_{\max} + (z_{\max} + 1) \| \Delta \mathbf{v} \|_2 \Big) \\
    &\leq  \frac{2}{\gamma}  (\sqrt{d})^L B_x \Big( (\sqrt{d})^L B_x \max\{1, B_w\} + 1)  \Big) \| \Delta \bm{\theta} \|_2. \\
    \end{aligned}
\end{equation}
Similarly, we can upper bound the difference between gradient for the weight of the binary classifier as
\begin{equation}
    \begin{aligned}
    \|\mathbf{G}^{(L+1)} - \tilde{\mathbf{G}}^{(L+1)}\|_2 
    &\leq \frac{2}{\gamma} \Delta h_{\max}^{(L)} \\
    &\leq \frac{2}{\gamma} (\sqrt{d})^L B_x B_w,
    \end{aligned}
\end{equation}
which is bounded by the right hand side of the last equation. 
Therefore, we have
\begin{equation}
    \sum_{\ell=1}^{L+1} \|\mathbf{G}^{(\ell)} - \tilde{\mathbf{G}}^{(\ell)}\|_2 \leq \frac{2}{\gamma}  (L+1) (\sqrt{d})^L B_x \Big( (\sqrt{d})^L B_x \max\{1, B_w\} + 1)  \Big) \| \Delta \bm{\theta} \|_2.
\end{equation}
\section{Omitted proofs of useful lemmas}\label{supp:omitted_proof_uniform_stability}
\subsection{Proof of Lemma~\ref{lemma:useful_lemma_for_main_theorems}}

Let $f_k(\bm{\theta})$ denote applying function $f$ with parameter $\bm{\theta}$ on the $k$th data point and $\bm{\theta}_t$ as the weight parameter at the $t$th iteration.
Because function $f_k(\bm{\theta})$ is $\rho_f$-Lipschitz with respect to parameter $\bm{\theta}$, we can bound the difference between model output by the difference between parameters $\bm{\theta}_T^{ij}, \bm{\theta}_T$, i.e.,
\begin{equation}\label{eq:gcn_f_lip}
    \epsilon = \max_k | f_k(\bm{\theta}_T^{ij}) - f_k(\bm{\theta}_T) | \leq \rho_f \| \boldsymbol{\theta}_T^{ij} - \boldsymbol{\theta}_T \|_2. 
\end{equation}

Let $\nabla f(\boldsymbol{\theta}) = \frac{1}{m} \sum_{k=1}^m \nabla f_k(\boldsymbol{\theta})$ and $\nabla f(\boldsymbol{\theta}^{ij}) = \frac{1}{m} \sum_{k=1}^m \nabla f_k(\boldsymbol{\theta}^{ij})$ denote the full-batch gradient computed on the original and perturbed dataset respectively.

 Recall that both models have the same initialization $\bm{\theta}_0 = \bm{\theta}_0^{ij}$.
At each iteration the full-batch gradient are computed on $(m-1)$ data points that identical in two dataset, and only $1$ data point that are different in two dataset. Therefore, we can bound 
the change of model parameters $\bm{\theta}_t^{ij}, \bm{\theta}_t$ after one gradient update step as
\begin{equation}
    \begin{aligned}
    \| \boldsymbol{\theta}_{t+1}^{ij} - \boldsymbol{\theta}_{t+1} \|_2 
    &\leq  \Big( 1 + (1-\frac{1}{m}) \eta L_f \Big) \| \boldsymbol{\theta}_{t}^{ij} - \boldsymbol{\theta}_{t} \|_2 + \frac{2 \eta G_f}{m} \\
    &\leq (1+ \eta L_f) \| \boldsymbol{\theta}_{t}^{ij} - \boldsymbol{\theta}_{t} \|_2 + \frac{2 \eta G_f}{m}.
    \end{aligned}
\end{equation}
Then after $T$ iterations, we can bound the different between two parameters as
\begin{equation}
    \| \boldsymbol{\theta}_T^{ij} - \boldsymbol{\theta}_T \|_2 \leq \frac{2\eta G_f}{m} \sum_{t=1}^T (1+\eta L_F)^{t-1}.
\end{equation}
By plugging the result back to Eq.~\ref{eq:gcn_f_lip}, we have
\begin{equation}\label{eq:stability_bound_to_rho_L_G}
    \epsilon = \frac{2 \eta  \rho_f G_f}{m} \sum_{t=1}^T (1+\eta L_F)^{t-1}. 
\end{equation}

\subsection{Upper bound on the $\ell_1$-norm of Laplacian matrix}
\begin{lemma} [Lemma A.3 in~\cite{liao2020pac}] \label{lemma:laplacian_1_norm_bound}
Let $\mathbf{A}$ denote the adjacency matrix of a self-connected undirected graph $\mathcal{G}(\mathcal{V},\mathcal{E})$, and $\mathbf{D}$ denote its corresponding degree matrix, and $d$ denote the maximum number of node degree.
We define the graph Laplacian matrix as $\mathbf{L} = \mathbf{D}^{-1/2} \mathbf{A} \mathbf{D}^{-1/2}$. Then we have
\begin{equation}
    \max_i \| [\mathbf{L}]_{i,:}\|_2\leq \sqrt{d}.
\end{equation}
\end{lemma}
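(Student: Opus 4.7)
The plan is to reduce the bound to a direct calculation using two facts about a self-connected graph: the adjacency entries are in $\{0,1\}$ and every degree satisfies $\deg(i)\ge 1$. First I would expand the entries of the symmetric normalization to $L_{i,j} = A_{i,j}/\sqrt{\deg(i)\deg(j)}$ and observe that $L_{i,j}=0$ whenever $j\notin\mathcal{N}(i)$, so each row sum collapses to a sum of $\deg(i)$ strictly positive terms.

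Next I would evaluate the squared row norm
\[
\|[\mathbf{L}]_{i,:}\|_2^2 \;=\; \sum_{j=1}^{N} \frac{A_{i,j}}{\deg(i)\deg(j)} \;\le\; \sum_{j\in\mathcal{N}(i)} \frac{1}{\deg(i)} \;=\; 1,
\]
using $A_{i,j}^2=A_{i,j}$ and $\deg(j)\ge 1$. Since $d\ge 1$, this already yields $\|[\mathbf{L}]_{i,:}\|_2 \le 1 \le \sqrt{d}$. In parallel I would record the companion row-sum bound that is actually invoked later in the paper, namely
\[
\sum_{j=1}^{N} L_{i,j} \;=\; \sum_{j\in\mathcal{N}(i)} \frac{1}{\sqrt{\deg(i)\deg(j)}} \;\le\; \frac{\deg(i)}{\sqrt{\deg(i)}} \;=\; \sqrt{\deg(i)} \;\le\; \sqrt{d},
\]
again by using $\deg(j)\ge 1$ for every neighbour. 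This second estimate is the one that feeds into steps such as $\max_i\|\sum_j L_{i,j}\mathbf{h}_j^{(\ell-1)}\|_2 \le \sqrt{d}\cdot h_{\max}^{(\ell-1)}$ in Lemma~\ref{lemma:gcn_h_max_bound_1}, so both versions should be highlighted.

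There is no real obstacle: the argument is a two-line calculation once the entry formula is written out, and both the $\sqrt{d}$ bound on the row sum and the tighter bound of $1$ on the $\ell_2$ row norm drop out from the same manipulation. The only step that warrants a brief comment is why $\deg(j)\ge 1$ for every $j$, which follows immediately from the self-connected assumption (every node has at least a self-loop), and why it is safe to restrict the summation to $\mathcal{N}(i)$, which follows from $A_{i,j}\in\{0,1\}$.
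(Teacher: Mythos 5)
Your proposal is correct and, for the quantity that actually gets used downstream, it is the same calculation the paper carries out. In fact you have spotted something the paper glosses over: the lemma is stated as a bound on the Euclidean norm of a row, $\max_i\|[\mathbf{L}]_{i,:}\|_2$, but the paper's own proof bounds the nonnegative row \emph{sum} $\max_i\sum_j L_{i,j}$, i.e.\ the $\ell_1$ norm of the row. These are not the same quantity, and your computation shows they admit different bounds: the $\ell_2$ row norm satisfies
\begin{equation}
\|[\mathbf{L}]_{i,:}\|_2^2 \;=\; \sum_{j\in\mathcal{N}(i)}\frac{1}{\deg(i)\,\deg(j)} \;\le\; \sum_{j\in\mathcal{N}(i)}\frac{1}{\deg(i)} \;=\; 1,
\end{equation}
so the claimed $\sqrt{d}$ is correct but not tight for that quantity, whereas the row sum satisfies the sharp-up-to-$\sqrt{d}$ bound $\sum_j L_{i,j}\le\sqrt{\deg(i)}\le\sqrt d$. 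Since every downstream invocation (e.g.\ bounding $\max_i\|\sum_j L_{i,j}\mathbf{h}_j^{(\ell-1)}\|_2$ by $\sqrt d\cdot h_{\max}^{(\ell-1)}$ via the triangle inequality on a nonnegative combination) really uses the $\ell_1$ row bound, it is the second estimate that is load-bearing, and you prove it by exactly the paper's argument: restrict to $j\in\mathcal{N}(i)$, drop the $1/\sqrt{\deg(j)}$ factor using $\deg(j)\ge1$ (guaranteed by the self-loop), and sum. Presenting both bounds, as you do, is a clean way to reconcile the lemma's statement with its use.
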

\begin{proof}
By the definition of Laplacian matrix, we know that the $i$th row and $j$th column of $\mathbf{L}$ is defined as $L_{i,j} = \frac{1}{\sqrt{\deg(i)} \sqrt{\deg(j)}}$. Therefore, we have
\begin{equation}
    \begin{aligned}
    \max_i \sum_{j=1}^N L_{i,j} 
    &= \max_i \sum_{j\in \mathcal{N}(i)} \frac{1}{\sqrt{\deg(i)} \sqrt{\deg(j)}} \\ 
    &\underset{(a)}{\leq} \max_i \sum_{j\in \mathcal{N}(i)} \frac{1}{\sqrt{\deg(i)}} \\
    &= \max_i \sqrt{\deg(i)} \leq \sqrt{d},
    \end{aligned}
\end{equation}
where $(a)$ is due to $\frac{1}{\sqrt{\deg(i)}} \leq 1$ for any node $i$.

\end{proof}

\end{document}